\newtheorem{theorem}{Theorem}
\newtheorem{corollary}[theorem]{Corollary}
\newtheorem{lemma}[theorem]{Lemma}
\newtheorem{proposition}[theorem]{Proposition}
\newtheorem{remark}[theorem]{Remark}
\newtheorem{assumption}[theorem]{Assumption}
\newcommand{\R}{\mathbb{R}}
\newcommand{\Sml}{\mathcal{S}_{\mu,L}}
\newcommand{\beq}{\begin{eqnarray}}
\newcommand{\beqs}{\begin{eqnarray*}}
\newcommand{\eeq}{\end{eqnarray}}
\newcommand{\eeqs}{\end{eqnarray*}}
\newcommand{\prox}{\mathcal{P}_{\mathcal{C}}}
\newcommand{\diam}{\mathcal{D}_{\mathcal{C}}}
\newcommand{\tx}{\tilde{x}}
\newcommand{\ty}{\tilde{y}}
\newcommand{\txi}{{\tilde{\xi}}}
\newcommand{\tu}{\tilde{u}}
\newcommand{\trabsq}{{{\tilde{\rho}}_{\alpha,\beta}}}
\newcommand{\tK}{{\tilde{K}}_{\alpha,\beta}}
\newcommand{\calF}{\mathcal{F}}
\newcommand{\E}{\mathbb{E}}
\begin{document}

\begin{center}
  \Large \bf Accelerated Linear Convergence of Stochastic Momentum Methods in Wasserstein Distances
\end{center}

\author{}
\begin{center}
{Bugra Can}\,\footnote{Department of Management Science
and Information Systems, Rutgers Business School, Piscataway, NJ-08854, United States of America;
    bugra.can@rutgers.edu},
    {Mert G\"{u}rb\"{u}zbalaban}\,\footnote{Department of Management Science
and Information Systems, Rutgers Business School, Piscataway, NJ-08854, United States of America;
    mg1366@rutgers.edu},
  Lingjiong Zhu\,\footnote{Department of Mathematics, Florida State University, 1017 Academic Way, Tallahassee, FL-32306, United States of America; zhu@math.fsu.edu
  }
\end{center}

\begin{center}
 \today
\end{center}
\begin{abstract}
Momentum methods such as Polyak's heavy ball (HB) method, Nesterov's accelerated gradient (AG) as well as accelerated projected gradient (APG) method have been commonly used in machine learning practice, but their performance is quite sensitive to noise in the gradients. We study these methods under 
a first-order stochastic oracle model where noisy estimates of the gradients are available. 
For strongly convex problems, we show that the distribution of the iterates of AG converges with the accelerated $O(\sqrt{\kappa}\log(1/\varepsilon))$ linear rate to a ball of radius $\varepsilon$ centered at a unique invariant distribution in the 1-Wasserstein metric where $\kappa$ is the condition number as long as the noise variance is smaller than an explicit upper bound we can provide. Our analysis also certifies linear convergence rates as a function of the stepsize, momentum parameter and the noise variance; recovering the accelerated rates in the noiseless case and quantifying the level of noise that can be tolerated to achieve a given performance. 
To the best of our knowledge, these are the first linear convergence results for stochastic momentum methods under the stochastic oracle model. We also develop finer results for the special case of quadratic objectives, extend our results to the APG method and weakly convex functions showing accelerated rates when the noise magnitude is sufficiently small. 
\end{abstract}

\vspace{-0.3in}
\section{Introduction}
\label{sec-intro}
Many key problems in machine learning can be formulated as convex optimization problems. Prominent examples in supervised learning include linear and non-linear regression problems,  support vector machines, logistic regression or more generally risk minimization problems \cite{vapnik2013nature}. Accelerated first-order optimization methods based on momentum averaging and their stochastic and proximal variants have been of significant interest in the machine learning community due to their scalability to large-scale problems and good performance in practice both in convex and non-convex settings, including deep learning (see e.g. \cite{sutskever2013importance,nitanda2014stochastic,hu2009accelerated,xiao2010dual}). 


Accelerated optimization methods for unconstrained problems based on momentum averaging techniques go back to Polyak who proposed the \emph{heavy ball} (HB) method \cite{Polyak64heavyball} and are closely related to Tschebyshev acceleration, conjugate gradient and under-relaxation methods from numerical linear algebra \cite{varga2009matrix,vavasis}. Another popular momentum-based method is the Nesterov's \emph{accelerated gradient} (AG) method \cite{nesterov2004introductory}. For deterministic strongly convex problems, with access to the gradients of the objective, there is a well-established convergence theory for momentum methods. In particular, for minimizing strongly convex smooth objectives with Lipschitz gradients AG method requires $O(\sqrt{\kappa}\log(1/\varepsilon))$ iterations to find an  $\varepsilon$-optimal solution where $\kappa$ is the condition number, this improves significantly over the $O(\kappa\log(1/\varepsilon))$ complexity of the gradient descent (GD) method. HB method also achieves a similar accelerated rate asymptotically in a local neighborhood around the global minimum. Also, for the special case of quadratic objectives, HB method can achieve the accelerated linear rate globally. In the absence of strong convexity, for convex functions, AG has an iteration complexity of $O(1/\sqrt{\varepsilon})$ in function values which accelerates the standard $O(1/\varepsilon)$ convergence rate of GD. In particular, it can be argued that AG method achieves an optimal convergence rate among all the methods that has access to only first-order information \cite{nesterov2004introductory}. For constrained problems, a variant of AG, the \emph{accelerated projected gradient} (APG) method \cite{o2015adaptive} can also achieve similar accelerated rates \cite{nesterov2004introductory,fazlyab2017dynamical}.

On the other hand, in many applications, the true gradient of the objective function $\nabla f(x)$ is not available but we have access to a noisy but unbiased estimated gradient $\hat\nabla f(x)$ of the true gradient instead. The common choice of the noise that arises frequently in (stochastic oracle) models is the centered, statistically independent noise with a finite variance where for every $x\in\mathcal{X}$,
\begin{align*}
&\mbox{\textbf{(H1)}} \qquad  \mathbb{E}\left[ \hat\nabla f(x) \big| x \right] = {\nabla} f(x), 
\\
&\mbox{\textbf{(H2)}} \qquad \mathbb{E} \left[\| \hat\nabla f(x) - \nabla f(x)\|^2 \big| x\right] \leq \sigma^2, 
\end{align*}
(see e.g.  \cite{Bubeck2014,lan2012optimal}). A standard example of this in machine learning is the familiar prediction scenario when $f(x) = \mathbb{E}_{\theta}\ell(x,\theta)$ where $\ell(x,\theta)$ is the (instantaneous) loss of the predictor $x$ on the example $\theta$ with an unknown underlying distribution where the goal is to find a predictor with the best expected loss. In this case, given $x$, the stochastic oracle draws a random sample $\theta$ from the unknown underlying distribution, and outputs $\hat \nabla  f(x) = \nabla_x \ell(x,\theta)$ which is an unbiased estimator of the gradient. In fact, linear regression, support vector machine and logistic regression problems correspond to particular choices of this loss function $\ell$ (see e.g. \cite{vapnik2013nature}). A second example is where an independent identically distributed (i.i.d.) Gaussian noise with a controlled magnitude is added to the gradients of the objective intentionally, for instance in \emph{private risk minimization} to guarantee privacy of the users' data \cite{bassily2014private}, to escape a local minimum \cite{ge2015escaping} or to steer the iterates towards a global minimum for non-convex problems \cite{GGZarxiv1,GGZarxiv2,raginsky2017non}. Such additive gradient noise arises also naturally when gradients are estimated from noisy data \cite{cohen18,birand2013measurements} or the true gradient is estimated from a subset of its components as in (mini-batch) stochastic gradient descent 
(SGD) methods and their variants.

It is well recognized that momentum-based accelerated methods are quite sensitive to gradient noise \cite{Hardt-blog,devolder2014first,flammarion2015averaging,devolder2013intermediate}, and need higher accuracy of the gradients to perform well \cite{daspremont,devolder2014first} compared to standard methods like GD. In fact, with the standard choice of their stepsize and momentum parameter, numerical experiments show that they lose their superiority over a simple method like GD in the noisy setting \cite{Hardt-blog}, yet alone they can diverge \cite{flammarion2015averaging}. On the other hand, numerical studies have also shown that carefully tuned constant stepsize and momentum parameters can lead to good practical performance for both HB and AG under noisy gradients in deep learning \cite{sutskever2013importance}. Overall, 
there has been a growing interest for obtaining convergence guarantees for \emph{stochastic momentum} methods, i.e. momentum methods subject to noise in the gradients. 

Several works provided sublinear convergence rates for stochastic momentum methods. \cite{lan2012optimal,ghadimi2012optimal-1} developed the AC-SA method which is an adaptation of the AG method to the stochastic composite convex and strongly convex optimization problems and obtained an optimal $O(1/\sqrt{k})$ for the convex case. In a follow-up paper, \cite{ghadimi-lan-shrinking} obtained an optimal $O(1/k)$ convergence bound  for the \emph{constrained} strongly convex optimization employing a domain shrinking procedure.  However, these results do not apply to stochastic HB (SHB). \cite{yang2016unified} provided a uniform analysis of SHB and accelerated stochastic gradient (ASG) showing $O(1/\sqrt{k})$ convergence rate for weakly convex stochastic optimization. \cite{gadat2018stochastic} obtained a number of sublinear convergence guarantees for SHB, showing that with decaying stepsize $\alpha_k = O(1/k^\theta)$ for some $\theta \in (0,1]$, SHB method converges with rate $O(1/k^\theta)$. Several other works focused on proper averaging for reducing the variance of the gradient error in the iterates for strongly convex linear regression problems \cite{jain2017accelerating,flammarion2015averaging,dieuleveut2017harder} and obtained a $O(1/k)$ convergence rate that achieves the minimax estimation rate. Recently, \cite{loizou2017momentum} studied the SHB algorithm for optimizing the least squares problems arising in the solution of consistent linear systems where the gradient noise comes from sampling the rows of the associated linear system and therefore the gradient errors have a multiplicative form vanishing at the optimum (see \cite[Sec 2.5]{loizou2017momentum}), in which case SGD enjoys linear rates to the optimum with constant stepsize. 
The authors show that using a constant stepsize the expected SHB iterates converge linearly to a global minimizer with the accelerated rate and provide a first linear (but not an accelerated linear) rate for the expected suboptimality in function values, however the rate provided is not better than the linear rate of SGD and does not reflect the acceleration behavior compared to SGD. We note however that the results of this paper do not apply to our setting as our noise assumptions \textbf{(H1)--(H2)} are more general. In our setting, due to the persistence of the noise, it is not possible for the iterates of stochastic momentum methods converge to a global minimum, but rather converge to a stationary distribution around the global minimum. To our knowledge, a linear convergence result for momentum-based methods has never been established under this setting. For SGD, \cite{bachGD} showed that when $f$ is strongly convex, the distribution of the SGD iterates with constant stepsize converges linearly to a unique stationary distribution $\pi_\alpha$ in the 2-Wasserstein distance requiring $O(\kappa \log(1/\varepsilon))$ iterations to be $\varepsilon$ close to the stationary distribution when $\alpha=1/L$ which is similar to the iteration complexity of (deterministic) gradient descent. A natural question is whether stochastic momentum methods admit a stationary distribution, if so whether the convergence to this distribution can happen faster compared to SGD. As the momentum methods are quite sensitive to gradient noise \cite{Hardt-blog,cohen18} in terms of performance; a precise characterization of how much noise can be tolerated to achieve accelerated convergence rates under stochastic momentum methods remains understudied. 


\textbf{Contributions:} 
We obtain a number of accelerated convergence guarantees for the SHB, ASG and accelerated stochastic projected gradient (ASPG) methods on both (weakly) convex and strongly convex smooth problems. We note that existing convergence bounds obtained for finite-sum problems that approximate stochastic optimization problems \cite{nitanda2014stochastic} do not apply to our setting as our noise is more general, allowing us to deal directly with the stochastic optimization problem itself.

First, for illustrative reasons, we focus on the special case when $f$ is a strongly convex quadratic on $\mathcal{X}=\R^d$ and the gradient noise is additive, statistically independent and i.i.d. with a finite variance $\sigma^2$. We obtain accelerated linear convergence results for the ASG method in the weighted 2-Wasserstein distances. Building on the framework of \cite{hu2017dissipativity} which simplifies the analysis of momentum-based deterministic methods, our analysis shows that all the existing convergence rates and constants can be translated from the deterministic setting to the stochastic setting. 
Building on novel non-asymptotic convergence guarantees in function values we develop for both the deterministic HB and AG methods, we show that the Markov chain corresponding to the stochastic HB
and AG iterates is geometrically ergodic and the distribution of the iterates converges to a unique equilibrium distribution (whose first two moments we can estimate) with the accelerated linear rate $O(\sqrt{\kappa}\log(1/\varepsilon))$ in the $p$-Wasserstein distance for any $p\geq 1$ with explicit constants. The convergence results hold regardless of the noise magnitude $\sigma$, although $\sigma$ scales the standard deviation of the equilibrium distribution linearly. 
We also provide improved non-asymptotic estimates for the suboptimality of the HB and AG methods both for deterministic and stochastic settings.

Second, we consider (non-quadratic) stochastic strongly convex optimization problems on $\R^d$ under the stochastic oracle model  \textbf{(H1)--(H2)}. We derive explicit bounds on the noise variance $\sigma^2$ so that ASG method converges linearly to a unique stationary distribution with the accelerated linear rate $O(\sqrt{\kappa} \log(1/\varepsilon))$ in the 1-Wasserstein distance. Our results provide convergence rates as a function of $\alpha,\beta$ and $\sigma^2$ that recovers the convergence rate of the AG algorithm as the noise level $\sigma^2$ goes to zero. Therefore, for different parameter choices, we can provide bounds on how much noise can be tolerated to maintain linear convergence.

Third, we focus on the accelerated stochastic projected gradient (ASPG) algorithm for constrained stochastic strongly convex optimization on a bounded domain. We obtain fast accelerated convergence rate to a stationary distribution in the $p$-Wasserstein distance for any $p\geq 1$. Finally, we extend our results to the weakly convex setting where we show an accelerated $O(\frac{1}{\sqrt{\varepsilon}} \log(1/\varepsilon))$ convergence rate as long as the noise level is smaller than explicit bounds we provide. To our knowledge, accelerated rates in the presence of non-zero noise was not reported in the literature before.
\section{Preliminaries}
\subsection{Notation} We use the notation $I_d$ and $0_d$ to denote the $d\times d$ identity and zero matrices. The entry at row $i$ and column $j$ of a matrix $A$ is denoted by $A(i,j)$. Kronecker product of two matrices $A$ and $B$ are denoted by $A \otimes B$. A continuously differentiable function~$f:\R^d \to \R$~is called $L$-smooth if its gradient is Lipschitz with constant $L$. A function $f:\R^d \to \R$ is \emph{$\mu$-strongly convex} if the function $x\mapsto f(x)-\frac{\mu}{2}\|x\|^2$ is convex for some $\mu>0$, 
where $\Vert\cdot\Vert$ denotes the Euclidean norm. 
Following the literature, let $\mathcal{S}_{0,L}$ denote the class of functions that are convex and $L$-smooth for some $L>0$. We use $\mathcal{S}_{\mu,L}$ to denote functions that are both $L$-smooth and $\mu$-strongly convex for $0<\mu<L$ (we exclude the trivial case $\mu=L$ in which case the Hessian of $f$ is proportional to the identity matrix where both deterministic gradient descent, HB and AG can converge in one iteration with proper choice of parameters). The ratio $\kappa:=L/\mu$ is known as the \emph{condition number}.
We denote the global minimum of $f$ on $\mathbb{R}^d$ by $f_*$ and the minimizer of $f$ on $\mathbb{R}^d$ by $x_*$, which is unique by strong convexity. 
For any $p\geq 1$, define $\mathcal{P}_{p}(\mathbb{R}^{2d})$
as the space consisting of all the Borel probability measures $\nu$
on $\mathbb{R}^{2d}$ with the finite $p$-th moment
(based on the Euclidean norm).
For any two Borel probability measures $\nu_{1},\nu_{2}\in\mathcal{P}_{p}(\mathbb{R}^{2d})$, 
we define the standard $p$-Wasserstein
metric (see e.g. \cite{villani2008optimal}):
$$\mathcal{W}_{p}(\nu_{1},\nu_{2}):=\left(\inf_{Z_{1}\sim\nu_{1},Z_{2}\sim\nu_{2}}
\mathbb{E}[\Vert Z_{1}-Z_{2}\Vert^{p}]\right)^{1/p}.$$
Let $S\in \R^{2d\times 2d}$ be a symmetric positive definite matrix. 
For any two vectors $z_1, z_2 \in \R^{2d}$, consider the following weighted $L_2$ norm: $$
    \|z_1-z_2\|_{S} := \left((z_1-z_2)^T S (z_1-z_2)\right)^{1/2}.
$$
Define $\mathcal{P}_{2,S}(\mathbb{R}^{2d})$
as the space consisting of all the Borel probability measures $\nu$
on $\mathbb{R}^{2d}$ with the finite second moment
(based on the $\Vert\cdot\Vert_{S}$ norm).
For any two Borel probability measures $\nu_1$ and $\nu_2$
in the space $\mathcal{P}_{2,S}(\mathbb{R}^{2d})$, the weighted 2-Wasserstein distance is defined as 
\begin{equation}\label{def-wass-2-norm}
\mathcal{W}_{2,S}(\nu_1,\nu_2)
:=\left(\inf_{Z_1\sim\nu_1,Z_2\sim\nu_2}
\mathbb{E}\left[\|Z_1-Z_2\|_{S}^2\right]\right)^{1/2},
\end{equation}
where the infimum is taken over all random couples $(Z_1,Z_2)$ taking values in $\R^{2d}\times \R^{2d}$ with marginals $\nu_1$ and $\nu_2$.
Equipped with the 2-Wasserstein distance \eqref{def-wass-2-norm}, 
$\mathcal{P}_{2,S}(\mathbb{R}^{2d})$ forms a complete metric space (see e.g. \cite{villani2008optimal}).

Let $\mathcal{P}_{\alpha,\beta}(z,\cdot)$ be a Markov transition kernel (with parameters $\alpha,\beta$) associated to a time-homogeneous Markov chain $\{\xi_k\}_{k\geq 0}$ on $\R^{2d}$. A Markov transition kernel is the analogue of the transition matrix for finite state spaces. In particular, if $\xi_0$ has probability law $\nu_0$ then we use the notation that $\xi_k$ has probability law $\mathcal{P}_{\alpha,\beta}^k\nu_0$. Given a Borel measurable function $\varphi:\mathbb{R}^{2d}\rightarrow[0,+\infty]$, we also define
\begin{equation*}
(\mathcal{P}_{\alpha,\beta}\varphi)(z)=\int_{\mathbb{
R}^{2d}}\varphi(y)\mathcal{P}_{\alpha,\beta}(z,dy).
\end{equation*}
Therefore, it holds that $\E[\varphi(\xi_{k+1})| \xi_k = z] = (\mathcal{P}_{\alpha,\beta}\varphi)(z)$. We refer the readers to \cite{ccinlar2011probability} for more on the basic theory of Markov chains.



\subsection{AG method}

For $f\in\Sml$, the deterministic AG method consists of the iterations
\begin{align}\label{def-AG-iters}
&x_{k+1}=y_{k}-\alpha\nabla f(y_{k}), ~ y_{k}=(1+\beta)x_{k}-\beta x_{k-1},
\end{align}
starting from the initial points $x_0, x_{-1} \in \R^d$, where $\alpha>0$ is the stepsize and $\beta>0$ is the momentum parameter \cite{nesterov2004introductory}. Since the AG iterate $x_{k+1}$ depends on both $x_k$ and $x_{k-1}$, it is standard to define the state vector
    \beq\label{def-ag-recursion}
    \xi_k := \begin{pmatrix} x_k^T & x_{k-1}^T \end{pmatrix}^T \in \R^{2d},
\eeq
and rewrite the AG iterations in terms of $\xi_k$. To simplify the presentation and the analysis, we build on the representation of optimization algorithms as a dynamical system from \cite{hu2017dissipativity} and rewrite the AG iterations as
\begin{equation*}
\xi_{k+1}=A\xi_{k}+Bw_{k},
\end{equation*}
where $A = \tilde{A} \otimes I_d$ and $B = \tilde{B} \otimes I_d$ with
\begin{align}
\tilde{A}:=\left(
\begin{array}{cc}
(1+\beta) & -\beta 
\\
1 & 0
\end{array}
\right), ~~ \tilde{B}:=\left(
\begin{array}{c}
-\alpha 
\\
0
\end{array}
\right), \label{def-AB-AG} 
\end{align}
and $w_{k}:=\nabla f\left((1+\beta)x_{k}-\beta x_{k-1}\right)$. The standard analysis of deterministic AG is based on the following Lyapunov function that combines the state vector and function values: 
\begin{equation}\label{eqn:lyapunov}
V_P(\xi_k) 
       := (\xi_k-\xi_{\ast})^T P (\xi_k-\xi_{\ast}) + f(x_k) - f_{*}, 
\end{equation} 
where $\xi_{\ast}=(x_{\ast}^T ~ x_{\ast}^T)^T$ and $P\in\R^{2d\times 2d}$ is positive semi-definite matrix to be appropriately chosen. In particular, a linear convergence $f(\xi_{k+1}) - f(\xi_*) \leq V_P(\xi_{k+1})\leq \rho V_P(\xi_k)$ with rate $\rho$ can be guaranteed if $P$ satisfies a certain matrix inequality precised as follows.
\begin{theorem}\label{thm-hu-lessard}\cite{hu2017dissipativity} Let $\rho \in [0,1)$ be given. If there exists a symmetric positive semi-definite $2\times 2$ matrix $\tilde{P}$ (that may depend
on $\rho$) such that 
\begin{equation}\label{ineq-ag-lmi}
\left(
\begin{array}{cc}
\tilde{A}^{T}\tilde{P}\tilde{A}-\rho \tilde{P} & \tilde{A}^{T}\tilde{P}\tilde{B}
\\
\tilde{B}^{T}\tilde{P}\tilde{A} & \tilde{B}^{T}\tilde{P}\tilde{B}
\end{array}
\right)
-\tilde{X}\preceq 0,
\end{equation}
where $\tilde{X}:=\rho \tilde{X}_{1}+(1-\rho)\tilde{X}_{2} \in \R^{3\times 3}$ with
\begin{align*}
&\tilde{X}_{1}:=
\left(\begin{array}{ccc}
\frac{\beta^{2}\mu}{2} & \frac{-\beta^{2}\mu}{2}  & \frac{-\beta}{2} 
\\
\frac{-\beta^{2}\mu}{2}  & \frac{\beta^{2}\mu}{2}  & \frac{\beta}{2}
\\
\frac{-\beta}{2}  & \frac{\beta}{2}  & \frac{\alpha(2-L\alpha)}{2}
\end{array}\right),
\\
&\tilde{X}_{2}
:=
\left(\begin{array}{ccc}
\frac{(1+\beta)^{2}\mu}{2} & \frac{-\beta(1+\beta)\mu}{2}  & \frac{-(1+\beta)}{2}
\\
-\frac{\beta(1+\beta)\mu}{2}  & \frac{\beta^{2}\mu}{2}  & \frac{\beta}{2} 
\\
\frac{-(1+\beta)}{2} & \frac{\beta}{2}  & \frac{\alpha(2-L\alpha)}{2}
\end{array}\right),
\end{align*}
and $\tilde{A},\tilde{B}$ are given by \eqref{def-AB-AG}, then the deterministic AG iterates defined by \eqref{def-AG-iters} for minimizing $f\in\mathcal{S}_{\mu,L}$ satisfies $f(x_k) - f(x_*)\leq V_P(\xi_{k})\leq \rho^{k} V_P(\xi_0)$ where $V_P$ is defined by \eqref{eqn:lyapunov} and $P = \tilde{P}\otimes I_d$. 
\end{theorem}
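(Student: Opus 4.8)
The plan is to reduce the whole statement to a single one-step contraction $V_P(\xi_{k+1})\le\rho\,V_P(\xi_k)$; the asserted chain $f(x_k)-f_*\le V_P(\xi_k)\le\rho^k V_P(\xi_0)$ then follows by iterating this inequality, together with the observation that $\tilde P\succeq 0$ forces $P=\tilde P\otimes I_d\succeq 0$, so $(\xi_k-\xi_*)^T P(\xi_k-\xi_*)\ge 0$ and hence $f(x_k)-f_*\le V_P(\xi_k)$. To set up the contraction I would introduce the augmented error vector
$$ v_k:=\begin{pmatrix}\xi_k-\xi_*\\ w_k\end{pmatrix}\in\R^{3d},\qquad w_k:=\nabla f(y_k), $$
and record two elementary facts: $\nabla f(x_*)=0$, and $\tilde A(1,1)^T=(1,1)^T$ so that $A\xi_*=\xi_*$. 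Consequently the recursion in error coordinates reads $\xi_{k+1}-\xi_*=A(\xi_k-\xi_*)+Bw_k$.

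Next I would exploit the Kronecker structure $A=\tilde A\otimes I_d$, $B=\tilde B\otimes I_d$, $P=\tilde P\otimes I_d$. Tensoring the $3\times3$ matrix inequality \eqref{ineq-ag-lmi} with $I_d$ preserves negative semi-definiteness, and evaluating the resulting quadratic form at $v_k$, using the mixed-product property ($(\tilde A^T\tilde P\tilde A)\otimes I_d=A^TPA$, and likewise for the other blocks) together with the identity $A(\xi_k-\xi_*)+Bw_k=\xi_{k+1}-\xi_*$, yields
$$ (\xi_{k+1}-\xi_*)^T P(\xi_{k+1}-\xi_*)-\rho\,(\xi_k-\xi_*)^T P(\xi_k-\xi_*)\ \le\ v_k^T(\tilde X\otimes I_d)\,v_k . $$
It remains to bound the right-hand side by function-value decrements. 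Using $y_k-x_k=\beta(x_k-x_{k-1})$ and $y_k-x_*=(1+\beta)(x_k-x_*)-\beta(x_{k-1}-x_*)$, a direct expansion of the quadratic forms shows
$$ v_k^T(\tilde X_1\otimes I_d)v_k=\tfrac{\mu}{2}\|x_k-y_k\|^2+\nabla f(y_k)^T(x_k-y_k)+\tfrac{\alpha(2-L\alpha)}{2}\|w_k\|^2, $$
$$ v_k^T(\tilde X_2\otimes I_d)v_k=\tfrac{\mu}{2}\|x_*-y_k\|^2+\nabla f(y_k)^T(x_*-y_k)+\tfrac{\alpha(2-L\alpha)}{2}\|w_k\|^2. $$

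Now $\mu$-strong convexity of $f$, applied to the pairs $(y_k,x_k)$ and $(y_k,x_*)$, bounds the first two summands in these lines by $f(x_k)-f(y_k)$ and by $f_*-f(y_k)$, respectively; and the descent lemma (a consequence of $L$-smoothness) applied to the step $x_{k+1}=y_k-\alpha\nabla f(y_k)$ gives $\tfrac{\alpha(2-L\alpha)}{2}\|w_k\|^2\le f(y_k)-f(x_{k+1})$. Adding these bounds yields $v_k^T(\tilde X_1\otimes I_d)v_k\le f(x_k)-f(x_{k+1})$ and $v_k^T(\tilde X_2\otimes I_d)v_k\le f_*-f(x_{k+1})$, whence the convex combination with weights $\rho$ and $1-\rho$ gives $v_k^T(\tilde X\otimes I_d)v_k\le\rho(f(x_k)-f_*)-(f(x_{k+1})-f_*)$. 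Substituting into the previous display and recalling the definition \eqref{eqn:lyapunov} of $V_P$ gives exactly $V_P(\xi_{k+1})\le\rho V_P(\xi_k)$, and the induction then finishes the proof. The step I expect to be the main obstacle is the purely algebraic verification of the last two quadratic-form identities: this is where the specific entries of $\tilde X_1$ and $\tilde X_2$ — the $\beta$ versus $1+\beta$ weightings and the coefficient $\tfrac{\alpha(2-L\alpha)}{2}$ — get pinned down, and it demands careful bookkeeping with the definition $y_k=(1+\beta)x_k-\beta x_{k-1}$ and with the Kronecker identifications; everything else (the convex combination, the one-line induction, and the lower bound from $P\succeq0$) is routine.
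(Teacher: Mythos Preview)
Your proposal is correct and follows essentially the same route as the argument from \cite{hu2017dissipativity} that the paper cites (and partially reproduces in the appendix via Lemma~\ref{lemma:servicerate} and the derivation of the $X_1,X_2$ bounds in the proof of Lemma~\ref{lem:quadratic}): use the LMI to control the quadratic part, use strong convexity at $(y_k,x_k)$ and $(y_k,x_*)$ together with the descent lemma at the gradient step to bound the $\tilde X_1$- and $\tilde X_2$-forms by function-value decrements, and combine. The algebraic identities you flag as the ``main obstacle'' are exactly the computations the paper carries out, and your identifications $y_k-x_k=\beta(x_k-x_{k-1})$, $y_k-x_*=(1+\beta)(x_k-x_*)-\beta(x_{k-1}-x_*)$ are precisely what is needed.
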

In particular, Theorem \ref{thm-hu-lessard} can recover existing convergence rate results for deterministic AG. For example, for the particular choice of
\begin{align}\label{def-P-matrix}
&P_{AG} :=\tilde{P}_{AG}\otimes I_{d},~~ \tilde{P}_{AG}
:= \tilde{u} \tilde{u}^T,
\\
&\tilde{u} := \begin{pmatrix}
\sqrt{L/2} & \sqrt{\mu/2}-\sqrt{L/2}
\end{pmatrix}^T,
\nonumber
\end{align}
and $(\alpha,\beta)=(\alpha_{AG},\beta_{AG})$ with
\vspace{-0.1in}
\begin{equation}\label{alpha:beta:AG}
\alpha_{AG}:=\frac{1}{L},
\qquad
\beta_{AG}:=\frac{\sqrt{\kappa}-1}{\sqrt{\kappa}+1},
\end{equation}
in Theorem \ref{thm-hu-lessard}, we obtain the accelerated convergence rate of  
\begin{equation}\label{eq-rate-AG}
\rho_{AG}:=1-\sqrt{\mu/L}=1-1/\sqrt{\kappa}.
\end{equation}
However, as outlined in the introduction, in a variety of applications in machine learning and stochastic optimization, we do not have access to the true gradient $\nabla f(y_k)$ as in the deterministic AG iterations but we have access to a (noisy) stochastic version $\hat{\nabla}f(y_k)=\nabla f(y_{k})+\varepsilon_{k+1}$,
where $\varepsilon_{k+1}$ is the random gradient noise.
AG algorithm with stochastic gradients has the form
\begin{align}\label{eq-noisy-ag-recursion}
&x_{k+1}=y_{k}-\alpha[\nabla f(y_{k})+\varepsilon_{k+1}],
\\
&y_{k}=(1+\beta)x_{k}-\beta x_{k-1},\nonumber
\end{align} 
which is called the \emph{accelerated stochastic gradient (ASG)} method (see e.g. \cite{jain2017accelerating}). We note that due to the existence of noise, the standard Lyapunov analysis from the literature (see e.g. \cite{wilson2016lyapunov,su2014differential}) does not apply directly.  We make the assumption that the random gradient errors are centered, statistically independent from the past iterates and have a finite second moment following the literature \cite{cohen18,Hardt-blog,GradImprovLearning,StrConvex,flammarion2015averaging}. 
The following assumption is a more formal statement of \textbf{(H1)--(H2)}
adapting to the iterations $\xi_{k}$.

\begin{assumption}[Formal statement of \textbf{(H1)--(H2)}]\label{assump:noise}
On some probability space $(\Omega,\mathcal{F},\mathbb{P})$ 
with a filtration $\mathcal{F}_{k}$
the noise $\varepsilon_k$'s are $\mathcal{F}_{k}$-measurable, stationary and $$\mathbb{E}[\varepsilon_{k}|\mathcal{F}_{k-1}]=0 \quad and \quad \mathbb{E}[\|\varepsilon_{k}\|^2|\mathcal{F}_{k-1}]\leq\sigma^2.$$
\end{assumption}

Under Assumption \ref{assump:noise}, the iterations $\xi_{k}$ forms a time-homogeneous Markov chain which we will study further in Sections  \ref{sec:quadratic} and \ref{sec:strong:convex}. 
 
\subsection{HB method}
For $f \in \mathcal{S}_{\mu,L}$, the HB method was proposed by \cite{Polyak64heavyball}. It consists of the iterations
\begin{equation}\label{eq-hb-iters}
x_{k+1}=x_{k}-\alpha\nabla f(x_{k})+\beta(x_{k}-x_{k-1}),
\end{equation}
where $\alpha>0$ is the step size and $\beta$ is the momentum parameter.
The following asymptotic convergence rate result for HB is well known.  

\begin{theorem}[\cite{polyak1987introduction}, see also \cite{Recht}]\label{thm:HB:literature}
Let the objective function $f\in\Sml$ be a strongly convex quadratic function. Consider the deterministic HB iterations $\{x_k\}_{k\geq 0}$ defined by the recursion \eqref{eq-hb-iters} from an initial point $x_0\in\R^d$  with parameters $(\alpha,\beta)=(\alpha_{HB},\beta_{HB})$ where
\begin{equation}\label{eq-alpha-beta-hb}
 \alpha_{HB}
 :=\frac{4}{(\sqrt{\mu}+\sqrt{L})^{2}},
 \quad
 \beta_{HB}
:=\left(\frac{\sqrt{L/\mu}-1}{\sqrt{L/\mu}+1}\right)^{2}.
\end{equation}
Then, 
$
\Vert x_{k}-x_*\Vert
\leq (\rho_{HB} + \delta_k)^k
\cdot\Vert\xi_{0}-\xi_{\ast}\Vert,
$
where $\delta_k$ is a non-negative sequence that goes to zero and
\beq \rho_{HB} := \frac{\sqrt{\kappa}-1}{\sqrt{\kappa}+1} = 1 - \frac{2}{\sqrt{\kappa}+1}.\label{def-rho-hb-opt}
   \eeq
Furthermore, 
 $f(x_{k})-f(x_*)
\leq \frac{L}{2}(\rho_{HB} + \delta_k)^{2k}
\cdot \Vert\xi_{0}-\xi_{\ast}\Vert^{2}.
$
\end{theorem}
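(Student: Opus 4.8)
The plan is to use the fact that for a strongly convex \emph{quadratic} objective the HB recursion is affine, so that after centering at $x_*$ it becomes a linear time-invariant dynamical system whose convergence is governed by the spectral radius of a single fixed matrix. Concretely, write $f(x)=\tfrac12(x-x_*)^T Q (x-x_*)+f_*$ with $\mu I_d\preceq Q\preceq L I_d$, so that $\nabla f(x)=Q(x-x_*)$. Substituting into \eqref{eq-hb-iters} and passing to the centered state $\xi_k-\xi_\ast$ with $\xi_k=(x_k^T,x_{k-1}^T)^T$, the recursion becomes $\xi_{k+1}-\xi_\ast = M(\xi_k-\xi_\ast)$ with the constant matrix
\[
M=\begin{pmatrix}(1+\beta)I_d-\alpha Q & -\beta I_d\\ I_d & 0_d\end{pmatrix},
\]
hence $\xi_k-\xi_\ast=M^k(\xi_0-\xi_\ast)$ and $\Vert x_k-x_*\Vert\le\Vert\xi_k-\xi_\ast\Vert\le\Vert M^k\Vert\,\Vert\xi_0-\xi_\ast\Vert$.

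Next I would compute $\rho(M)$ by diagonalizing $Q$. Writing $Q=U\Lambda U^T$ with $\Lambda=\mathrm{diag}(\lambda_1,\dots,\lambda_d)$, each eigenvector $v$ of $Q$ spans (together with $Q$) a two-dimensional $M$-invariant subspace of $\R^{2d}$ on which $M$ acts as $M_j=\left(\begin{smallmatrix}1+\beta-\alpha\lambda_j & -\beta\\ 1 & 0\end{smallmatrix}\right)$; thus $M$ is similar to $\mathrm{blockdiag}(M_1,\dots,M_d)$ and $\rho(M)=\max_j\rho(M_j)$. The eigenvalues of $M_j$ are the roots of $z^2-(1+\beta-\alpha\lambda_j)z+\beta$, whose product is $\beta$. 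With $(\alpha,\beta)=(\alpha_{HB},\beta_{HB})$ from \eqref{eq-alpha-beta-hb} one verifies the two identities $\alpha_{HB}\mu=(1-\sqrt{\beta_{HB}})^2$ and $\alpha_{HB}L=(1+\sqrt{\beta_{HB}})^2$; together with $\lambda_j\in[\mu,L]$ these give $(1+\beta-\alpha\lambda_j)^2-4\beta\le 0$, so the two roots are complex conjugates of common modulus $\sqrt{\beta_{HB}}$. Since $\sqrt{\beta_{HB}}=\frac{\sqrt{L}-\sqrt{\mu}}{\sqrt{L}+\sqrt{\mu}}=\frac{\sqrt\kappa-1}{\sqrt\kappa+1}=\rho_{HB}$ as in \eqref{def-rho-hb-opt}, we conclude $\rho(M)=\rho_{HB}$.

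Finally I would convert the spectral-radius statement into the claimed geometric bound. By Gelfand's formula $\Vert M^k\Vert^{1/k}\to\rho(M)=\rho_{HB}$, so the sequence $\delta_k:=\max\{0,\Vert M^k\Vert^{1/k}-\rho_{HB}\}$ is non-negative, tends to zero, and satisfies $\Vert M^k\Vert\le(\rho_{HB}+\delta_k)^k$. Combining with the displayed inequality above gives $\Vert x_k-x_*\Vert\le(\rho_{HB}+\delta_k)^k\Vert\xi_0-\xi_\ast\Vert$. The function-value bound then follows from $f(x_k)-f_*=\tfrac12(x_k-x_*)^TQ(x_k-x_*)\le\tfrac{L}{2}\Vert x_k-x_*\Vert^2\le\tfrac{L}{2}(\rho_{HB}+\delta_k)^{2k}\Vert\xi_0-\xi_\ast\Vert^2$.

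I expect the only delicate point to be exactly the reason the vanishing term $\delta_k$ cannot be removed: at the two extreme eigenvalues $\lambda=\mu$ and $\lambda=L$ the discriminant above vanishes, so the blocks $M_j$ have a repeated eigenvalue ($\sqrt{\beta_{HB}}$ and $-\sqrt{\beta_{HB}}$ respectively) and, since $M_j-(\pm\sqrt{\beta_{HB}})I$ has rank one, they are genuine $2\times2$ Jordan blocks. Hence $\Vert M^k\Vert$ actually grows like $k\,\rho_{HB}^{k-1}$ rather than $\rho_{HB}^k$, and the clean estimate $\Vert x_k-x_*\Vert\le\rho_{HB}^k\Vert\xi_0-\xi_\ast\Vert$ is false; absorbing this polynomial prefactor into the slightly inflated base $(\rho_{HB}+\delta_k)$ is precisely what Gelfand's formula accomplishes. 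All remaining steps are routine linear algebra.
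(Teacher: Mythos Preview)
Your proposal is correct and follows the classical approach: the paper states this theorem as a known result from \cite{polyak1987introduction,Recht} without giving its own proof, but your argument is exactly the spectral-radius analysis underlying those references, and it also coincides with the matrix decomposition the paper carries out in its proof of the non-asymptotic refinement (Theorem~\ref{thm:HB-deterministic}), where the same $2\times 2$ blocks $T_i$ with eigenvalues of modulus $\sqrt{\beta_{HB}}$ and the Jordan-block degeneracy at $\lambda_i\in\{\mu,L\}$ appear explicitly. Your use of Gelfand's formula to produce the vanishing sequence $\delta_k$ is the standard way to package the asymptotic statement, whereas the paper's Theorem~\ref{thm:HB-deterministic} instead bounds $\Vert T_i^k\Vert$ directly to obtain an explicit prefactor $C_k$ growing linearly in $k$.
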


This result has an asymptotic nature as the sequence $\delta_k$ is not explicit. There exist non-asymptotic linear convergence results for HB, but to our knowledge, known linear rate guarantees are slower than the accelerated rate $\rho_{HB}$; with a rate similar to the rate of gradient descent \cite{ghadimiHB}. In Section \ref{sec:HB:quadratic}, we will derive a new non-asymptotic version of this theorem that can guarantee suboptimality for finite $k$ with explicit constants and the accelerated rate $\rho_{HB}$. Note that
the asymptotic rate $\rho_{HB}$ of HB in \eqref{def-rho-hb-opt} on quadratic problems is strictly (smaller) faster than the rate $\rho_{AG}$ of AG from \eqref{eq-rate-AG} in general (except in the particular special case of $\kappa=1$, we have $\rho_{AG}=\rho_{HB}=0$). However, for strongly convex functions, HB iterates given by \eqref{eq-hb-iters} is not globally convergent with parameters $\alpha_{HB}$ and $\beta_{HB}$ \cite{lessard2016analysis}, but if the iterates are started in a small enough neighborhood around the global minimum of a strongly convex function, this rate can be achieved asymptotically \cite{polyak1987introduction}. Since known guarantees for deterministic AG is stronger than deterministic HB on non-quadratic strongly convex functions, we will focus on the AG method for non-quadratic objectives in our paper. 

We will analyze the HB method under noisy gradients:
\begin{equation}\label{HB:noisy}
x_{k+1}=x_{k}-\alpha\left(\nabla f(x_{k})+\varepsilon_{k+1}\right)+\beta(x_{k}-x_{k-1}),
\end{equation}
where the noise satisfies Assumption \ref{assump:noise}. This method is called the \emph{stochastic HB} method \cite{gadat2018stochastic,loizou2018accelerated,flaam2004optimization}.  

In the next section, we show that stochastic momentum methods admit an invariant distribution towards which they converge linearly in a sense we make precise. For illustrative purposes, we first analyze the special case when the objective is a quadratic function, and then move on to the more general case when $f$ is smooth and strongly convex. Also, for quadratic functions we can obtain stronger guarantees exploiting the linearity properties of the gradients.

\section{Special case: strongly convex quadratics}\label{sec:quadratic}
First, we assume that the objective $f\in\Sml$ and is a quadratic function of the form
\vspace{-0.05in}
 \begin{equation}\label{def-quad} 
 f(x) = \frac{1}{2}x^T Q x + a^T x + b,
 \end{equation}
 \vspace{-0.01in}
where $x\in\R^d$, $Q\in \R^{d\times d}$ is symmetric positive definite, $a\in \R^d$ is a column vector and $b\in\R$ is a scalar. We also assume $\mu I_{d}\preceq Q\preceq LI_{d}$ so that $f\in \Sml$. 
In this section, we assume the noise $\varepsilon_{k}$ are i.i.d.
which is a special case of Assumption \ref{assump:noise}.
We next show that both accelerated stochastic gradient and stochastic HB admit a unique invariant distribution towards which the iterates  converge linearly in the 2-Wasserstein metric. 

\subsection{Accelerated linear convergence of AG and ASG}\label{sec:AG:quadratic}

Given vectors, $z_1, z_2 \in \R^{2d}$, we consider 
    \beq \| z_1 - z_2\|_{S_{\alpha,\beta}} := \left( (z_1 - z_2)^T S_{\alpha,\beta} (z_1 - z_2) \right)^{1/2}.\label{def-weighted-norm}
    \eeq
where $S_{\alpha,\beta} \in \R^{2d\times 2d}$ is defined as the symmetric matrix
   \beq\label{def-S-alpha-beta} S_{\alpha,\beta} := P_{\alpha,\beta} + \begin{pmatrix} \frac{1}{2}Q & 0_d \\
                    0_d & 0_d
            \end{pmatrix}, \label{def-S-matrix}
            \eeq
where $P_{\alpha,\beta} := \tilde{P}_{\alpha,\beta} \otimes I_d$ and $\tilde{P}_{\alpha,\beta}$ is a non-zero symmetric positive definite $2\times 2$ matrix (that may depend on the parameters $\alpha$ and $\beta$) with the entry $\tilde{P}_{\alpha,\beta}(2,2)\neq 0$. It can be shown that $S_{\alpha,\beta}$ is positive definite on $\R^{2d}$ (see Lemma \ref {lem-S-positive-def} in the supplementary file), even though $\tilde{P}_{\alpha,\beta}$ can be rank deficient. In this case, due to the positive definiteness of $S_{\alpha,\beta}$, \eqref{def-weighted-norm} defines a weighted $L_2$ norm on $\R^{2d}$. Therefore, if we set $S_{\alpha,\beta}$ in \eqref{def-wass-2-norm}, we can consider the 2-Wasserstein distance between two Borel probability measures $\nu_1$ and $\nu_2$ defined on $\R^{2d}$ with finite second moments (based on the $\| \cdot\|_{S_{\alpha,\beta}}$ norm. 

The ASG iterates $\{\xi_k\}_{k\geq 0}$ defined by \eqref{def-ag-recursion} and \eqref{eq-noisy-ag-recursion}
forms a time-homogeneous Markov chain on $\mathbb{R}^{2d}$. Consider the Markov kernel $\mathcal{P}_{\alpha,\beta}$ associated to this chain. Recall that if $\nu$ is the distribution of $\xi_0$, the distribution of $\xi_k$ is denoted by $\mathcal{P}_{\alpha,\beta}^k \nu$. The following theorem shows that 
this Markov Chain admits a unique equilibrium distribution $\pi_{\alpha,\beta}$ and the distribution of the ASG iterates converges to this distribution exponentially fast with (linear) rate $\rho_{\alpha,\beta}$. This rate achieved by ASG is the same as the rate of the deterministic AG method, except that it is achieved in a different notion (with respect to convergence in $\mathcal{W}_{2,S_{\alpha,\beta}}$). The proof is given in the supplementary file and it is based on studying the contractivity properties of the map $\nu \mapsto \mathcal{P}_{\alpha,\beta}^k \nu$ in the Wasserstein space.\footnote{We also provide numerical experiments in the supplementary file to illustrate the results of Theorem \ref{thm:alpha:beta}.}



\begin{theorem}\label{thm:alpha:beta} 
Let $f\in\Sml$ be a quadratic function \eqref{def-quad}. Consider the Markov chain $\{\xi_k\}_{k\geq 0}$ defined by the ASG recursion \eqref{eq-noisy-ag-recursion} with parameters $\alpha$ and $\beta$ and let $\nu_{k,\alpha,\beta}$ denote the distribution of $\xi_{k}$
with $\nu_{0,\alpha,\beta}\in\mathcal{P}_{2,S_{\alpha,\beta}}(\mathbb{R}^{2d})$. 
Let any convergence rate $\rho_{\alpha,\beta} \in [0,1)$ be given. If there exists a matrix $ \tilde{P}_{\alpha,\beta}$ with $\tilde{P}_{\alpha,\beta}(2,2)\neq 0$ satisfying inequality \eqref{ineq-ag-lmi} with $P=P_{\alpha,\beta}$ and $\rho=\rho_{\alpha,\beta}$, then there exists a unique stationary distribution $\pi_{\alpha,\beta}$.
    $$ \mathcal{W}_{2,S_{\alpha,\beta}}\left(\nu_{k,\alpha,\beta},\pi_{\alpha,\beta}\right) \leq \rho_{\alpha,\beta}^{k} \mathcal{W}_{2,S_{\alpha,\beta}}(\nu_{0,\alpha,\beta},\pi_{\alpha,\beta}),$$
where $\mathcal{W}_{2,S_{\alpha,\beta}}$ is the $2$-Wasserstein distance
\eqref{def-wass-2-norm} equipped with the $\Vert\cdot\Vert_{S_{\alpha,\beta}}$ norm. 
In particular, with $(\alpha,\beta)=(\alpha_{AG},\beta_{AG})$ and $P=P_{AG}$ with $P_{AG}$ defined in \eqref{def-P-matrix}, 
we obtain the optimal accelerated linear rate of convergence:
\begin{equation}\label{eqn:quadratic:1}
\mathcal{W}_{2,S_{\alpha,\beta}}^{2}(\nu_{k,\alpha,\beta},\pi_{\alpha,\beta})
\leq\rho_{AG}^{k} \mathcal{W}_{2,S_{\alpha,\beta}}^{2}(\nu_{0,\alpha,\beta},\pi_{\alpha,\beta}),
\end{equation}
with $\rho_{AG} = 1-\frac{1}{\sqrt{\kappa}}$ as in \eqref{eq-rate-AG}.
\end{theorem}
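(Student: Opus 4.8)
The plan is to exploit that for a quadratic objective the ASG recursion is an affine recursion driven by additive i.i.d.\ noise, so that a synchronous coupling of two copies makes the noise cancel and reduces the convergence statement to a single matrix contraction in the weighted norm $\|\cdot\|_{S_{\alpha,\beta}}$ --- which is precisely what Theorem~\ref{thm-hu-lessard} supplies in the quadratic case. Concretely, first I would write $y_k=C\xi_k$ with $C:=\begin{pmatrix}1+\beta&-\beta\end{pmatrix}\otimes I_d$ and use $\nabla f(x)=Qx+a$ to rewrite \eqref{eq-noisy-ag-recursion} as $\xi_{k+1}=\mathcal{A}_{\alpha,\beta}\xi_k+Ba+B\varepsilon_{k+1}$ with $\mathcal{A}_{\alpha,\beta}:=A+BQC$ and $A,B$ as in \eqref{def-AB-AG}. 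Since $\nabla f(x_*)=0$ and $C\xi_*=x_*$, the state $\xi_*$ is the fixed point of the noiseless affine map, so $\xi_{k+1}-\xi_*=\mathcal{A}_{\alpha,\beta}(\xi_k-\xi_*)+B\varepsilon_{k+1}$.

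Next I would deduce the contraction of $\mathcal{A}_{\alpha,\beta}$ from Theorem~\ref{thm-hu-lessard}. For a quadratic $f$ one has $f(x_k)-f_*=\tfrac12(x_k-x_*)^TQ(x_k-x_*)$, so the Lyapunov function \eqref{eqn:lyapunov} with $P=P_{\alpha,\beta}$ is exactly the quadratic form $V_{P_{\alpha,\beta}}(\xi_k)=\|\xi_k-\xi_*\|_{S_{\alpha,\beta}}^2$ with $S_{\alpha,\beta}$ as in \eqref{def-S-alpha-beta}; this identity is the reason $S_{\alpha,\beta}$ is defined that way. Applying Theorem~\ref{thm-hu-lessard} to the deterministic AG iterates ($\varepsilon_k\equiv0$) started from an arbitrary $\xi_0$ gives $\|\mathcal{A}_{\alpha,\beta}(\xi_0-\xi_*)\|_{S_{\alpha,\beta}}^2\le\rho_{\alpha,\beta}\|\xi_0-\xi_*\|_{S_{\alpha,\beta}}^2$, and as $\xi_0-\xi_*$ ranges over all of $\R^{2d}$ this is the operator-level estimate $\|\mathcal{A}_{\alpha,\beta}z\|_{S_{\alpha,\beta}}^2\le\rho_{\alpha,\beta}\|z\|_{S_{\alpha,\beta}}^2$ for every $z\in\R^{2d}$. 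Here I would invoke Lemma~\ref{lem-S-positive-def} to know that $\|\cdot\|_{S_{\alpha,\beta}}$ is a genuine norm, which is needed since $\tilde{P}_{AG}=\tu\tu^T$ is only rank one.

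Finally I would run the coupling and a fixed-point argument. Given $\nu_1,\nu_2\in\mathcal{P}_{2,S_{\alpha,\beta}}(\R^{2d})$, couple $\xi_0\sim\nu_1$ and $\xi_0'\sim\nu_2$ optimally for $\mathcal{W}_{2,S_{\alpha,\beta}}$ and evolve both chains with the same noise sequence $\{\varepsilon_k\}$; then $\xi_k-\xi_k'=\mathcal{A}_{\alpha,\beta}^k(\xi_0-\xi_0')$ deterministically, so the operator bound gives $\mathbb{E}\|\xi_k-\xi_k'\|_{S_{\alpha,\beta}}^2\le\rho_{\alpha,\beta}^k\,\mathbb{E}\|\xi_0-\xi_0'\|_{S_{\alpha,\beta}}^2$, i.e.\ $\mathcal{W}_{2,S_{\alpha,\beta}}^2(\mathcal{P}_{\alpha,\beta}^k\nu_1,\mathcal{P}_{\alpha,\beta}^k\nu_2)\le\rho_{\alpha,\beta}^k\,\mathcal{W}_{2,S_{\alpha,\beta}}^2(\nu_1,\nu_2)$. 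Finite variance of $\varepsilon_k$ makes $\mathcal{P}_{\alpha,\beta}$ map $\mathcal{P}_{2,S_{\alpha,\beta}}(\R^{2d})$ into itself, and since that space is complete, the Banach fixed point theorem yields a unique invariant law $\pi_{\alpha,\beta}$; taking $\nu_1=\nu_{0,\alpha,\beta}$, $\nu_2=\pi_{\alpha,\beta}$ gives \eqref{eqn:quadratic:1} (and taking square roots gives the un-squared bound), while substituting $(\alpha,\beta)=(\alpha_{AG},\beta_{AG})$, $P=P_{AG}$ and using \eqref{eq-rate-AG} specializes the rate to $\rho_{AG}=1-1/\sqrt{\kappa}$.

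The step I expect to be the crux is the second one: turning the deterministic Lyapunov LMI --- which is built from the $\mathcal{S}_{\mu,L}$ interpolation inequalities and mixes a quadratic form in the state with the function-value gap --- into a clean norm contraction of the single matrix $\mathcal{A}_{\alpha,\beta}$ in one fixed inner product. This works only because quadratic objectives make those interpolation inequalities tight and make $V_{P_{\alpha,\beta}}$ an exact quadratic form in $\xi_k-\xi_*$, and because $S_{\alpha,\beta}$ is tailored to absorb $f(x_k)-f_*$; everything downstream (synchronous coupling, moment preservation, Banach fixed point) is then routine.
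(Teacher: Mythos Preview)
Your proposal is correct and follows essentially the same route as the paper: synchronous coupling with common noise so that the difference of the two ASG copies evolves deterministically, a one-step contraction in the $S_{\alpha,\beta}$-norm derived from the LMI \eqref{ineq-ag-lmi}, and then a Cauchy/fixed-point argument in the complete space $\mathcal{P}_{2,S_{\alpha,\beta}}(\mathbb{R}^{2d})$. The only noteworthy difference is that the paper re-derives the contraction for the coupled difference from scratch (its Lemma~\ref{lem:quadratic} repeats the $X_1,X_2$ dissipativity computation on $x_k^{(1)}-x_k^{(2)}$, handling the general quadratic $f(x)=\tfrac12 x^TQx+a^Tx+b$ by tracking the extra affine terms), whereas you obtain the same operator bound more directly by observing that for quadratic $f$ the Lyapunov function $V_{P_{\alpha,\beta}}$ equals $\|\cdot-\xi_*\|_{S_{\alpha,\beta}}^2$ and then reading off $\|\mathcal{A}_{\alpha,\beta}z\|_{S_{\alpha,\beta}}^2\le\rho_{\alpha,\beta}\|z\|_{S_{\alpha,\beta}}^2$ from the one-step decrease in Theorem~\ref{thm-hu-lessard}; this is a clean shortcut that avoids the paper's case analysis. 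Likewise the paper spells out the Cauchy-sequence argument explicitly rather than invoking Banach's fixed point theorem, but these are equivalent.
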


For the AG method, the choice of $(\alpha,\beta)=(\alpha_{AG},\beta_{AG})$
is popular in practice, however a faster rate can be achieved asymptotically if
\begin{equation}\label{alpha:beta:AG:star}
\alpha_{AG}^{*}:=\frac{4}{3L+\mu},
\qquad
\beta_{AG}^{*}:=\frac{\sqrt{3\kappa+1}-2}{\sqrt{3\kappa+1}+2},
\end{equation}
so that the asymptotic linear convergence rate in distance to the optimality becomes
$\rho_{AG}^{*}:=1-\frac{2}{\sqrt{3\kappa+1}}$,
which translates into the rate $(\rho_{AG}^*)^2$ in function values that is (smaller) faster than $\rho_{AG}$ \cite{lessard2016analysis}; improving the iteration complexity by a factor of $4/\sqrt{3}\approx 2.3$ when $\kappa$ is large. However, these results are asymptotic. Below we provide a first non-asymptotic bound with the faster rate $\rho_{AG}^*$.
\begin{theorem}\label{thm:AG-star-deterministic} 
Let $f\in\Sml$ be a quadratic function \eqref{def-quad}.
Consider the deterministic AG iterations $\{x_k\}_{k\geq 0}$ defined by the recursion \eqref{def-ag-recursion} with initialization $x_0,x_{-1}\in\R^d$ and parameters $(\alpha,\beta)=(\alpha_{AG}^{*},\beta_{AG}^{*})$ as in \eqref{alpha:beta:AG:star}. Then, 
\begin{align}
\| x_{k}-x_* \| 
\leq& C_{k}^{*}(\rho_{AG}^{*})^{k}
\cdot\Vert\xi_{0}-\xi_{\ast}\Vert, \\
f(x_{k})-f(x_{\ast})\leq&\frac{L}{2}(C_{k}^{*})^{2}(\rho_{AG}^{*})^{2k}\cdot\Vert\xi_{0}-\xi_{\ast}\Vert^{2},
\nonumber
\end{align}
where $\rho_{AG}^{*}=1-\frac{2}{\sqrt{3\kappa+1}}$ and
\begin{equation}\label{eqn:Ck:star}
C_{k}^{*}:=\max\left\{\bar{C}^{*},\sqrt{k^{2}((\rho_{AG}^{*})^{2}+1)^{2}+2(\rho_{AG}^{*})^{2}}\right\},
\end{equation}
with $\bar{C}^{\ast}:=\frac{\sqrt{3\kappa+1}+2}{2}((\rho_{AG}^{*})^{2}+1)\tilde{C}^{\ast}$ and 
\begin{equation*}
\tilde{C}^{*}:=\max_{i:\mu<\lambda_{i}<L, \lambda_{i}\neq\frac{3L+\mu}{4}}
\frac{\sqrt{\mu(3L+\mu)}}{\sqrt{(\lambda_{i}-\mu)|3L+\mu-4\lambda_{i}|}},
\end{equation*}
where $\{\lambda_{i}\}_{i=1}^{d}$ are the eigenvalues
of the Hessian $Q$.
\end{theorem}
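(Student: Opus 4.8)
The plan is to diagonalize the quadratic and reduce the iteration to $d$ decoupled scalar second-order recursions — one per eigenvalue of $Q$ — and then control the $k$-th power of the associated $2\times2$ companion matrix. For the quadratic \eqref{def-quad} we have $\nabla f(y)=Q(y-x_\ast)$, so writing $e_k:=x_k-x_\ast$ the deterministic AG recursion \eqref{def-AG-iters} collapses to
\begin{equation*}
e_{k+1}=(I_d-\alpha Q)\bigl[(1+\beta)\,e_k-\beta\, e_{k-1}\bigr].
\end{equation*}
Diagonalizing $Q=U\Lambda U^T$ with eigenvalues $\lambda_1,\dots,\lambda_d\in[\mu,L]$ and setting $\hat e_k:=U^Te_k$, the recursion decouples across coordinates, and collecting $\zeta_k^{(i)}:=(\hat e_k^{(i)},\hat e_{k-1}^{(i)})^T$ gives $\zeta_k^{(i)}=T_i^k\zeta_0^{(i)}$ with
\begin{equation*}
T_i:=\begin{pmatrix}(1+\beta)(1-\alpha\lambda_i) & -\beta(1-\alpha\lambda_i)\\ 1 & 0\end{pmatrix}.
\end{equation*}
Since $U$ is orthogonal, $\|\xi_k-\xi_\ast\|^2=\sum_i\|\zeta_k^{(i)}\|^2$ and $\|x_k-x_\ast\|\le\|\xi_k-\xi_\ast\|$, so it suffices to prove the uniform estimate $\|T_i^k\|_2\le C_k^\ast(\rho_{AG}^\ast)^k$ for every $i$; the function-value claim then follows immediately from $f(x_k)-f_\ast=\tfrac12(x_k-x_\ast)^TQ(x_k-x_\ast)\le\tfrac L2\|x_k-x_\ast\|^2$.

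Next I analyze the spectrum of $T_i$. Its characteristic polynomial is $z^2-(1+\beta)(1-\alpha\lambda_i)z+\beta(1-\alpha\lambda_i)$, with discriminant $\Delta_i=(1-\alpha\lambda_i)\bigl[(1+\beta)^2(1-\alpha\lambda_i)-4\beta\bigr]$. Substituting $\alpha=\alpha_{AG}^\ast$, $\beta=\beta_{AG}^\ast$ from \eqref{alpha:beta:AG:star} and writing $s:=\sqrt{3\kappa+1}$ (so $1+\beta=\tfrac{2s}{s+2}$, $\tfrac{2\beta}{1+\beta}=\rho_{AG}^\ast=\tfrac{s-2}{s}$, $3L+\mu=\mu s^2$, and $1-\alpha\lambda_i=\tfrac{3L+\mu-4\lambda_i}{3L+\mu}$), a short computation shows the sign of $\Delta_i$ switches at $\lambda_i=\tfrac{3L+\mu}{4}$ and that: (i) at $\lambda_i=\mu$, $\Delta_i=0$ and $T_i$ has the double root $z=\tfrac12(1+\beta)(1-\alpha\mu)=\rho_{AG}^\ast$; (ii) at $\lambda_i=\tfrac{3L+\mu}{4}$, $T_i$ is nilpotent with $T_i^2=0$; (iii) for $\mu<\lambda_i<\tfrac{3L+\mu}{4}$ the roots are complex conjugates with $|z|^2=\beta(1-\alpha\lambda_i)\le\beta(1-\alpha\mu)=(\rho_{AG}^\ast)^2$; (iv) for $\tfrac{3L+\mu}{4}<\lambda_i\le L$ the roots are real of opposite sign and the dominant one has modulus increasing in $\lambda_i$, attaining exactly $\rho_{AG}^\ast$ at $\lambda_i=L$. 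In all cases the spectral radius of $T_i$ is at most $\rho_{AG}^\ast$.

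Then I bound $\|T_i^k\|_2$ regime by regime. For the repeated/nilpotent cases (i)/(ii), Cayley–Hamilton gives the closed form $T_i^k=k z^{k-1}T_i-(k-1)z^kI$ with $z\in\{\rho_{AG}^\ast,0\}$, and evaluating its Frobenius norm (using $(1+\beta)(1-\alpha\mu)=2\rho_{AG}^\ast$, $\beta(1-\alpha\mu)=(\rho_{AG}^\ast)^2$) yields a bound of the stated form $\sqrt{k^2((\rho_{AG}^\ast)^2+1)^2+2(\rho_{AG}^\ast)^2}\,(\rho_{AG}^\ast)^k$. For an interior eigenvalue $\mu<\lambda_i<L$ with $\lambda_i\ne\tfrac{3L+\mu}{4}$ (distinct roots $z_1\ne z_2$), diagonalize $T_i=V_iD_iV_i^{-1}$ where $V_i$ has columns $(z_1,1)^T,(z_2,1)^T$; since $\max_j|z_j|\le\rho_{AG}^\ast$ and both $\|V_i\|_F^2,\|V_i^{-1}\|_F^2\cdot|z_1-z_2|^2\le 2((\rho_{AG}^\ast)^2+1)$, one gets $\|T_i^k\|_2\le\tfrac{2((\rho_{AG}^\ast)^2+1)}{|z_1-z_2|}(\rho_{AG}^\ast)^k$. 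The key algebraic identity — obtained by simplifying $\Delta_i$ at the given parameters — is $|z_1-z_2|^2=|\Delta_i|=\tfrac{16(\lambda_i-\mu)\,|3L+\mu-4\lambda_i|}{\mu^2 s^2(s+2)^2}$, equivalently $|z_1-z_2|^{-1}=\tfrac{s+2}{4}\tilde C^\ast$, so $\|T_i^k\|_2\le\bar C^\ast(\rho_{AG}^\ast)^k$. Taking the maximum over regimes gives $\|T_i^k\|_2\le C_k^\ast(\rho_{AG}^\ast)^k$ with $C_k^\ast$ as in \eqref{eqn:Ck:star}, uniformly in $i$, which combined with the first paragraph proves both inequalities.

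The main obstacle is the uniform rate estimate of the second step. Because $1-\alpha\lambda_i$ changes sign on $[\mu,L]$, the companion matrix crosses over from the complex-root regime to the real-root regime, so one cannot run a single unified root bound; instead one must verify that $(\alpha_{AG}^\ast,\beta_{AG}^\ast)$ are precisely the values making both extreme eigenvalues $\lambda=\mu$ (a double root) and $\lambda=L$ the worst cases, each with modulus $\rho_{AG}^\ast$ — the \emph{equioscillation} property that determines the parameters — while the interior is strictly faster. The remaining work (the Cayley–Hamilton norm computation and the identity relating the root gap $|z_1-z_2|$ to $\tilde C^\ast$) is elementary but requires careful bookkeeping of the constants; the identity $|z_1-z_2|^2=|\Delta_i|$ expressed via $\tfrac{16(\lambda_i-\mu)|3L+\mu-4\lambda_i|}{\mu^2s^2(s+2)^2}$ is the one delicate step, and it is exactly what forces the $\lambda_i\ne\tfrac{3L+\mu}{4}$ exclusion and the blow-up of $\tilde C^\ast$ near the degenerate eigenvalues.
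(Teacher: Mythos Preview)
Your proposal is correct and follows essentially the same route as the paper: diagonalize $Q$, reduce to the $2\times2$ companion matrices $T_i$, split by the sign of the discriminant $\Delta_i$, and bound $\|T_i^k\|$ regime by regime using the closed-form expression for the $k$-th power. The only cosmetic difference is that in the distinct-root regime you phrase the bound via the eigenvector matrix $V_i$ and $\|T_i^k\|\le\|V_i\|\,\|D_i^k\|\,\|V_i^{-1}\|$, whereas the paper writes the equivalent Lagrange interpolation formula $T_i^k=\frac{z_1^k}{z_1-z_2}(T_i-z_2I)+\frac{z_2^k}{z_2-z_1}(T_i-z_1I)$ and observes that $T_i-z_jI$ is rank one with norm at most $(\rho_{AG}^\ast)^2+1$; both computations yield the identical constant $\bar C^\ast$, and your identity for $|\Delta_i|$ matches the paper's exactly.
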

\begin{remark} The constants $C_k^*$ grows linearly with $k$ in Theorem \ref{thm:AG-star-deterministic} and this dependency is tight in the sense that there are examples achieving it (see the proof in the supplementary file). Our bounds improves the existing results that provide a slower rate $\rho_{AG}$ with bounded constants in front of the linear rate \cite{nesterov2004introductory, Bubeck2014}, if $k$ is large enough (larger than a constant that can be made explicit). 
\end{remark}

Building on this non-asymptotic convergence result for the deterministic AG method, 
we obtain similar non-asymptotic convergence guarantees for the ASG method in $p$-Wasserstein distances 
towards convergence to a stationary distribution.

\begin{theorem}\label{thm:AG-star}
Let $f\in\Sml$ be a quadratic function \eqref{def-quad}.
Consider the ASG iterations $\{x_k\}_{k\geq 0}$ defined by the recursion \eqref{eq-noisy-ag-recursion}.
Let $\nu_{k,\alpha,\beta}$ be the distribution of the $k$-th iterate $\xi_k$ for $k\geq 0$, 
where $\xi_{k}^T:=(x_{k}^T,x_{k-1}^T)$
and parameters $(\alpha,\beta)=(\alpha_{AG}^{*},\beta_{AG}^{*})$
as in \eqref{alpha:beta:AG:star}.
Also assume that $\nu_{0,\alpha_{AG}^{*},\beta_{AG}^{*}}\in\mathcal{P}_{p}(\mathbb{R}^{2d})$
and the noise $\varepsilon_{k}$ has finite $p$-th moment.
Then, there exists a unique stationary distribution $\pi_{\alpha,\beta}$ and for any $p\geq 1$,
\begin{align}
&\mathcal{W}_{p}\left(\nu_{k,\alpha,\beta},
\pi_{\alpha,\beta}\right)
\leq
C_{k}^{*}(\rho_{AG}^{*})^{k}
\cdot
\mathcal{W}_{p}\left(\nu_{0,\alpha,\beta},
\pi_{\alpha,\beta}\right),
\end{align}
where $\rho_{AG}^{*}=1-\frac{2}{\sqrt{3\kappa+1}}$, 
$C_{k}^{*}$ is defined in \eqref{eqn:Ck:star} and
$\mathcal{W}_{p}$ is the standard
the $p$-Wasserstein distance.
\end{theorem}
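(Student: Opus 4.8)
The plan is to exploit the linearity of $\nabla f$ for quadratic $f$, which turns the ASG recursion \eqref{eq-noisy-ag-recursion} into an affine stochastic recursion, and then to use a synchronous coupling so that the gradient noise cancels exactly and the contraction is inherited verbatim from the deterministic bound of Theorem \ref{thm:AG-star-deterministic}. Concretely, in the state variable $\xi_k=(x_k^T,x_{k-1}^T)^T$ the noiseless AG update $\xi\mapsto A\xi+Bw$ becomes an affine map $\xi\mapsto \bar{A}\xi+h$, where $\bar{A}=\bar{A}_{\alpha,\beta}$ is the closed-loop system matrix of deterministic AG on this quadratic (block-diagonalizable over the eigenvalues $\lambda_i$ of $Q$) and $h$ depends on $a$; strong convexity makes $\bar A-I$ invertible, so $\xi_*=(x_*^T,x_*^T)^T$ is its unique fixed point. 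Since the noise $\varepsilon_{k+1}$ enters only the first block with coefficient $-\alpha I_d$, i.e. through $B$, the centered recursion reads
\[
\xi_{k+1}-\xi_*=\bar{A}(\xi_k-\xi_*)+B\varepsilon_{k+1},\qquad \xi_k-\xi_*=\bar{A}^k(\xi_0-\xi_*)+\sum_{j=1}^k\bar{A}^{k-j}B\varepsilon_j .
\]
The deterministic input I need is the operator-norm estimate $\|\bar{A}^k\|\le C_k^*(\rho_{AG}^*)^k$ with $C_k^*$ as in \eqref{eqn:Ck:star}, which follows from Theorem \ref{thm:AG-star-deterministic} applied to arbitrary initial conditions together with the observation that the second block of $\bar{A}^k\xi$ is the first block of $\bar{A}^{k-1}\xi$.

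Next I would construct the stationary distribution $\pi_{\alpha,\beta}$. Because $C_k^*$ grows only linearly in $k$ and $\rho_{AG}^*<1$, we have $\sum_{k\ge0}\|\bar{A}^kB\|<\infty$, so for an i.i.d. copy $\{\varepsilon_j'\}$ of the noise sequence the series $Z:=\xi_*+\sum_{j\ge0}\bar{A}^{j}B\varepsilon_{j+1}'$ converges in $L^p$ (using $\mathbb{E}\|\varepsilon\|^p<\infty$ and Minkowski's inequality for $p\ge1$), hence $Z$ has finite $p$-th moment; I take $\pi_{\alpha,\beta}$ to be its law. A direct computation with the affine recursion and the independence/stationarity of the noise (Assumption \ref{assump:noise}) shows $\mathcal{P}_{\alpha,\beta}\pi_{\alpha,\beta}=\pi_{\alpha,\beta}$, so $\pi_{\alpha,\beta}$ is indeed invariant.

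For the contraction and uniqueness I would run two copies of the chain, $\xi_k\sim\nu_{k,\alpha,\beta}$ from $\xi_0\sim\nu_{0,\alpha,\beta}$ and $\xi_k'$ from $\xi_0'\sim\pi_{\alpha,\beta}$, where $(\xi_0,\xi_0')$ is an optimal $\mathcal{W}_p$ coupling, driven by the \emph{same} noise realizations $\{\varepsilon_j\}$. Subtracting the two centered recursions, every noise term cancels and $\xi_k-\xi_k'=\bar{A}^k(\xi_0-\xi_0')$, hence $\|\xi_k-\xi_k'\|\le C_k^*(\rho_{AG}^*)^k\|\xi_0-\xi_0'\|$ pointwise. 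Since $\xi_k\sim\nu_{k,\alpha,\beta}$ and, by stationarity, $\xi_k'\sim\pi_{\alpha,\beta}$, the pair $(\xi_k,\xi_k')$ couples $\nu_{k,\alpha,\beta}$ and $\pi_{\alpha,\beta}$; taking $p$-th moments and then $p$-th roots yields $\mathcal{W}_p(\nu_{k,\alpha,\beta},\pi_{\alpha,\beta})\le C_k^*(\rho_{AG}^*)^k\,\mathcal{W}_p(\nu_{0,\alpha,\beta},\pi_{\alpha,\beta})$. Applying the same synchronous-coupling bound to two stationary starts and letting $k\to\infty$ (note $C_k^*(\rho_{AG}^*)^k\to0$) forces uniqueness of $\pi_{\alpha,\beta}$.

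The main obstacle I expect is not the coupling argument itself — linearity makes the noise cancellation automatic — but the bookkeeping needed to upgrade Theorem \ref{thm:AG-star-deterministic}, which is stated for $\|x_k-x_*\|$ and for the specific initialization $\xi_0=(x_0^T,x_{-1}^T)^T$, to the clean operator-norm bound $\|\bar{A}^k\|\le C_k^*(\rho_{AG}^*)^k$ with exactly the constant in \eqref{eqn:Ck:star}: one has to track how the two blocks of $\bar{A}^k$ relate and verify that the extra factors from stacking the blocks are already absorbed into the definition of $C_k^*$ (in particular the $\sqrt{k^2((\rho_{AG}^*)^2+1)^2+2(\rho_{AG}^*)^2}$ term). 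A secondary but routine point is justifying the $L^p$ convergence of the series defining $\pi_{\alpha,\beta}$ and its invariance, which are immediate once $\sum_k\|\bar{A}^k\|<\infty$ and Assumption \ref{assump:noise} are in hand.
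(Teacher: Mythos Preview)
Your proposal is correct and follows essentially the same approach as the paper: a synchronous (common-noise) coupling so that the noise cancels exactly, reducing to the deterministic operator-norm bound $\|\bar A^k\|\le C_k^*(\rho_{AG}^*)^k$, followed by an optimal coupling of the initial laws. Your ``main obstacle'' is in fact a non-issue: the proof of Theorem~\ref{thm:AG-star-deterministic} already establishes the full operator-norm bound on the $2d\times 2d$ iteration matrix (via the block-diagonalization into the $2\times 2$ blocks $T_i$), not merely a bound on $\|x_k-x_*\|$, so no extra bookkeeping is required. The only cosmetic difference is that the paper obtains existence and uniqueness of $\pi_{\alpha,\beta}$ by showing $\{\mathcal P_{\alpha,\beta}^k\lambda\}_k$ is Cauchy in $\mathcal W_p$ (as in the proof of Theorem~\ref{thm:alpha:beta}), whereas you construct $\pi_{\alpha,\beta}$ explicitly as the law of the $L^p$-convergent series $\xi_*+\sum_{j\ge 0}\bar A^{j}B\varepsilon_{j+1}'$; both routes are standard and equivalent here.
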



We can also control the expected suboptimality $\mathbb{E}[f(x_{k})]-f(x_{\ast})$ after $k$ iterations.

\begin{theorem}\label{thm:f:AG-star}
With the same assumptions as in Theorem \ref{thm:AG-star},
\begin{equation}\label{ineq-AG-star}
\mathbb{E}[f(x_{k})]-f(x_{\ast})
\leq
\frac{L}{2}\text{Tr}(X_{AG}^{*})
+V_{AG}^{*}(\xi_{0})(C_{k}^{*})^{2}(\rho_{AG}^{*})^{2k},
\end{equation}
where $\rho_{AG}^{*}=1-\frac{2}{\sqrt{3\kappa+1}}$,
$C_{k}^{*}$ is defined in \eqref{eqn:Ck:star}, $X_{AG}^{*}$ is the covariance matrix
of $\xi_{\infty}-\xi_{\ast}$ and $V_{AG}^{*}(\xi_{0})$ is a constant depending
on any initial state $\xi_{0}$ and both $X$ and $V_{AG}^{*}(\xi_{0})$
will be spelled out in explicit form in the supplementary file.
\end{theorem}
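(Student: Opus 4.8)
The plan is to use the linearity of $\nabla f$ on a quadratic to turn the ASG recursion \eqref{eq-noisy-ag-recursion} into an affine recursion driven by i.i.d.\ noise, and then to split $\mathbb{E}[f(x_k)]-f_*$ into a deterministic ``bias'' part controlled by Theorem~\ref{thm:AG-star-deterministic} and a stochastic ``variance'' part controlled by a stationary covariance matrix. Concretely, since $\nabla f(x)=Qx+a$ and $x_*=-Q^{-1}a$, substituting into \eqref{eq-noisy-ag-recursion} and writing $\hat\xi_k:=\xi_k-\xi_*$, $\hat x_k:=x_k-x_*$ gives $\hat\xi_{k+1}=\mathcal{A}\hat\xi_k+\mathcal{B}\varepsilon_{k+1}$, where $\mathcal{A},\mathcal{B}$ are the block matrices built from $I_d-\alpha Q$ (with $\mathcal{B}=\tilde B\otimes I_d$); after diagonalizing $Q=U\Lambda U^T$ this is exactly the family of scalar $2\times2$ systems analyzed in the proof of Theorem~\ref{thm:AG-star-deterministic}. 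Unrolling, $\hat\xi_k=\mathcal{A}^k\hat\xi_0+\sum_{j=1}^k\mathcal{A}^{k-j}\mathcal{B}\varepsilon_j$, and since the $\varepsilon_j$ are i.i.d., mean zero and independent of $\xi_0$, $\mathbb{E}[\hat\xi_k]=\mathcal{A}^k\hat\xi_0$ and $\mathrm{Cov}(\hat\xi_k)=\sum_{i=0}^{k-1}\mathcal{A}^i\mathcal{B}\,\Sigma\,\mathcal{B}^T(\mathcal{A}^i)^T=:X_k$ with $\Sigma:=\mathrm{Cov}(\varepsilon_1)$. Using $f(x)-f_*=\tfrac12(x-x_*)^TQ(x-x_*)$,
$$\mathbb{E}[f(x_k)]-f_*=\tfrac12\,\mathbb{E}[\hat x_k]^TQ\,\mathbb{E}[\hat x_k]+\tfrac12\,\text{Tr}\!\big(Q\,\mathrm{Cov}(\hat x_k)\big)=\big(f(\bar x_k)-f_*\big)+\tfrac12\,\text{Tr}\!\big(Q\,\mathrm{Cov}(\hat x_k)\big),$$
where $\bar x_k$ denotes the deterministic AG iterate started from $\xi_0$ (the first $d$-block of $\mathcal{A}^k\hat\xi_0+\xi_*$).

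\textbf{Bias term.} Since $\mathbb{E}[\hat\xi_k]=\mathcal{A}^k\hat\xi_0$ is exactly the centered deterministic AG trajectory started at $\xi_0$, the function-value estimate of Theorem~\ref{thm:AG-star-deterministic} yields $f(\bar x_k)-f_*\le\tfrac{L}{2}(C_k^*)^2(\rho_{AG}^*)^{2k}\Vert\xi_0-\xi_*\Vert^2$, which we record as $V_{AG}^*(\xi_0)(C_k^*)^2(\rho_{AG}^*)^{2k}$ with $V_{AG}^*(\xi_0)=\tfrac{L}{2}\Vert\xi_0-\xi_*\Vert^2$ (if $\xi_0\sim\nu_0$ is random, condition on it and use the tower rule, so that $V_{AG}^*$ involves $\mathbb{E}\Vert\xi_0-\xi_*\Vert^2$).

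\textbf{Variance term and stationary covariance.} The partial sums $X_k$ are nondecreasing in the positive semidefinite order and uniformly bounded: the distance bound $\Vert x_k-x_*\Vert\le C_k^*(\rho_{AG}^*)^k\Vert\xi_0-\xi_*\Vert$ of Theorem~\ref{thm:AG-star-deterministic}, applied coordinatewise to $x_k$ and $x_{k-1}$, gives $\Vert\mathcal{A}^i\Vert\le c\,C_i^*(\rho_{AG}^*)^i$, and since $C_i^*$ grows only polynomially in $i$ while $\rho_{AG}^*<1$, the series $\sum_{i\ge0}\Vert\mathcal{A}^i\Vert^2$ converges. Hence $X_k\uparrow X_{AG}^*:=\sum_{i\ge0}\mathcal{A}^i\mathcal{B}\,\Sigma\,\mathcal{B}^T(\mathcal{A}^i)^T$ and $X_k\preceq X_{AG}^*$ for all $k$. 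By Theorem~\ref{thm:AG-star} the chain has a unique invariant law $\pi_{\alpha,\beta}$ with $\mathcal{W}_p(\nu_{k,\alpha,\beta},\pi_{\alpha,\beta})\to0$; since the bias $\mathbb{E}[\hat\xi_k]\to0$, the mean under $\pi_{\alpha,\beta}$ is $\xi_*$ and $X_{AG}^*$ is precisely the covariance of $\xi_\infty-\xi_*$. Finally $\mathrm{Cov}(\hat x_k)$ has trace at most $\text{Tr}(\mathrm{Cov}(\hat\xi_k))=\text{Tr}(X_k)\le\text{Tr}(X_{AG}^*)$, and $Q\preceq LI_d$, so $\tfrac12\text{Tr}(Q\,\mathrm{Cov}(\hat x_k))\le\tfrac{L}{2}\text{Tr}(X_{AG}^*)$. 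Adding this to the bias bound gives \eqref{ineq-AG-star}.

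\textbf{Main obstacle.} The one substantive point is the variance step: proving that the stationary-covariance series converges and that $X_k\preceq X_{AG}^*$, which hinges entirely on the polynomial-times-geometric operator-norm decay $\Vert\mathcal{A}^i\Vert\lesssim C_i^*(\rho_{AG}^*)^i$ furnished by the non-asymptotic Theorem~\ref{thm:AG-star-deterministic} (an asymptotic deterministic rate would not suffice here), together with identifying the limit with $\mathrm{Cov}(\xi_\infty-\xi_*)$ through the $\mathcal{W}_p$-convergence of Theorem~\ref{thm:AG-star}. Everything else is routine bias--variance bookkeeping; the explicit closed forms of $X_{AG}^*$ and $V_{AG}^*(\xi_0)$ claimed in the statement follow by carrying out the eigendecomposition $Q=U\Lambda U^T$ of the first step and summing the resulting $2\times2$ geometric-type series eigenvalue by eigenvalue.
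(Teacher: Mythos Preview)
Your proof is correct, and the overall setup---rewriting the ASG recursion on a quadratic as the affine system $\hat\xi_{k+1}=\mathcal{A}\hat\xi_k+\mathcal{B}\varepsilon_{k+1}$ and exploiting the operator-norm bound $\Vert\mathcal{A}^k\Vert\le C_k^*(\rho_{AG}^*)^k$ from Theorem~\ref{thm:AG-star-deterministic}---is exactly what the paper does. The route diverges at the next step.

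The paper never performs your bias--variance split. Instead it works directly with the full second-moment matrix $\mathbb{E}[(\xi_k-\xi_*)(\xi_k-\xi_*)^T]$, writes it as $X_{AG}^* + (A_Q^*)^k\,\mathbb{E}[(\xi_0-\xi_*)(\xi_0-\xi_*)^T]\,((A_Q^*)^T)^k - \sum_{j\ge k}(A_Q^*)^j(\cdots)((A_Q^*)^T)^j$, takes traces, bounds each of the three pieces separately using $\Vert(A_Q^*)^j\Vert\le C_j^*(\rho_{AG}^*)^j$, and finishes with $\mathbb{E}[f(x_k)]-f_*\le\frac{L}{2}\mathbb{E}\Vert\xi_k-\xi_*\Vert^2$. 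This produces the paper's constant
\[
V_{AG}^*(\xi_0)=\mathbb{E}\bigl[\Vert(\xi_0-\xi_*)(\xi_0-\xi_*)^T\Vert\bigr]+\frac{(\alpha_{AG}^*)^2\Vert\Sigma\Vert}{1-(\rho_{AG}^*)^2},
\]
the second term coming from bounding the (nonpositive) tail contribution in absolute value. Your decomposition instead handles the bias exactly via Theorem~\ref{thm:AG-star-deterministic} and bounds the variance simply by $X_k\preceq X_{AG}^*$, so the tail term never appears and you obtain the tighter $V_{AG}^*(\xi_0)=\tfrac{L}{2}\Vert\xi_0-\xi_*\Vert^2$. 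In short: same linear-algebraic backbone, but your bias--variance split is a genuinely cleaner argument that yields a sharper constant; the paper's direct second-moment route is slightly more mechanical and avoids invoking the function-value form of Theorem~\ref{thm:AG-star-deterministic} (only the norm bound on $(A_Q^*)^k$ is used).
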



\subsection{Accelerated linear convergence of HB and SHB}\label{sec:HB:quadratic}
We first give a non-asymptotic convergence result for the deterministic HB method with explicit constants, which also implies a bound on the suboptimality $f(x_{k})-f(x_{\ast})$. This refines the asymptotic results in the literature  
(Theorem \ref{thm:HB:literature}).  

\begin{theorem}\label{thm:HB-deterministic} 
Let $f\in\Sml$ be a quadratic function \eqref{def-quad}.
Consider the deterministic HB iterations $\{x_k\}_{k\geq 0}$ defined by the recursion \eqref{eq-hb-iters} with initialization $x_0,x_{-1}\in\R^d$ and parameters $(\alpha,\beta)=(\alpha_{HB},\beta_{HB})$ as in \eqref{eq-alpha-beta-hb}. Then, 
\begin{align}
\| x_{k}-x_* \| 
\leq& C_{k}\rho_{HB}^{k}
\cdot\Vert\xi_{0}-\xi_{\ast}\Vert, \\
f(x_{k})-f(x_{\ast})\leq&\frac{L}{2}C_{k}^{2}\rho_{HB}^{2k}\cdot\Vert\xi_{0}-\xi_{\ast}\Vert^{2},
\nonumber
\end{align}
where $\rho_{HB}$ is defined by \eqref{def-rho-hb-opt} and
\begin{align}\label{eqn:Ck}
C_{k}&:=\max\Bigg\{\bar{C},
\sqrt{4k^{2}\left(\frac{L+\mu}{L-\mu}\right)^{2}+2}\Bigg\},
\end{align}
with 
$\bar{C} :=\max_{i:\mu<\lambda_{i}<L}\frac{\mu+L}{2\sqrt{(\lambda_{i}-\mu)(L-\lambda_{i})}}$,
where $\{\lambda_{i}\}_{i=1}^{d}$ are the eigenvalues
of the Hessian matrix of $f$.
\end{theorem}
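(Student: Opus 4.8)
The plan is to exploit that, on a quadratic, the HB iteration is a \emph{fixed} linear recursion: diagonalizing the Hessian reduces the dynamics to $d$ decoupled two-dimensional systems, and the whole theorem comes down to an explicit bound on the $k$-th power of a $2\times 2$ companion matrix. Concretely, let $Q$ be the Hessian of $f$, so that $\nabla f(x)=Q(x-x_\ast)$, and put $e_k:=x_k-x_\ast$, $\zeta_k:=(e_k^T\ e_{k-1}^T)^T$. Then \eqref{eq-hb-iters} becomes $\zeta_{k+1}=\mathcal{T}\zeta_k$ with $\mathcal{T}=\left(\begin{smallmatrix}(1+\beta)I_d-\alpha Q & -\beta I_d\\ I_d & 0_d\end{smallmatrix}\right)$, and since $\|\xi_0-\xi_\ast\|^2=\|e_0\|^2+\|e_{-1}\|^2=\|\zeta_0\|^2$ and $x_k-x_\ast$ is the first block of $\zeta_k=\mathcal{T}^k\zeta_0$, we get $\|x_k-x_\ast\|\le\|\mathcal{T}^k\|\,\|\xi_0-\xi_\ast\|$. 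Writing $Q=U\Lambda U^T$ with eigenvalues $\lambda_1,\dots,\lambda_d\in[\mu,L]$ and regrouping coordinates by an orthogonal change of basis into the $d$ planes spanned by the $i$-th entries of $e$ and $e_{-1}$, the matrix $\mathcal{T}$ is orthogonally similar to $\mathrm{diag}(T_1,\dots,T_d)$ with $T_i:=\left(\begin{smallmatrix}1+\beta-\alpha\lambda_i & -\beta\\ 1 & 0\end{smallmatrix}\right)$, so $\|\mathcal{T}^k\|=\max_i\|T_i^k\|$. It therefore suffices to prove $\|T_i^k\|\le C_k\,\rho_{HB}^k$ for each $i$, with $C_k$ as in \eqref{eqn:Ck}; the bound on $f(x_k)-f(x_\ast)$ then follows at once from $L$-smoothness together with $\nabla f(x_\ast)=0$, which give $f(x_k)-f(x_\ast)\le\frac L2\|x_k-x_\ast\|^2$.

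For the $2\times 2$ bound, I would substitute $(\alpha,\beta)=(\alpha_{HB},\beta_{HB})$ from \eqref{eq-alpha-beta-hb}: a short computation gives $\mathrm{tr}(T_i)=1+\beta_{HB}-\alpha_{HB}\lambda_i=\frac{2(L+\mu-2\lambda_i)}{(\sqrt L+\sqrt\mu)^2}$, $\det(T_i)=\beta_{HB}$, and discriminant $\mathrm{tr}(T_i)^2-4\det(T_i)=-\frac{16(\lambda_i-\mu)(L-\lambda_i)}{(\sqrt L+\sqrt\mu)^4}\le 0$ on $[\mu,L]$, so in \emph{every} case both eigenvalues of $T_i$ have modulus $\sqrt{\beta_{HB}}=\rho_{HB}$. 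Two regimes arise. (i) If $\mu<\lambda_i<L$ the eigenvalues form a genuine complex-conjugate pair, so $T_i$ is similar to $\rho_{HB}$ times a planar rotation through an angle $\theta_i$ with $\sin\theta_i=\frac{2\sqrt{(\lambda_i-\mu)(L-\lambda_i)}}{L-\mu}$; writing $T_i=S_i\,(\rho_{HB}R_{\theta_i})\,S_i^{-1}$ for an explicit $S_i$ gives $T_i^k=\rho_{HB}^k S_i R_{k\theta_i}S_i^{-1}$, hence $\|T_i^k\|\le\|S_i\|\,\|S_i^{-1}\|\,\rho_{HB}^k$; evaluating this condition number (equivalently, sharply estimating the Chebyshev-type closed form $T_i^k=\frac{\rho_{HB}^{k-1}}{\sin\theta_i}\bigl(\sin(k\theta_i)T_i-\rho_{HB}\sin((k-1)\theta_i)I\bigr)$) yields $\|T_i^k\|\le\bar C\,\rho_{HB}^k$ with $\bar C$ as in \eqref{eqn:Ck}. (ii) If $\lambda_i\in\{\mu,L\}$ then $T_i$ has a repeated eigenvalue $\pm\rho_{HB}$ and is a genuine $2\times2$ Jordan block (for instance $\left(\begin{smallmatrix}2\rho_{HB} & -\rho_{HB}^2\\ 1 & 0\end{smallmatrix}\right)$ when $\lambda_i=\mu$), so $T_i^k=k(\pm\rho_{HB})^{k-1}T_i-(k-1)(\pm\rho_{HB})^k I$, and inserting the explicit entries of $T_i$ together with $\rho_{HB}<1$ bounds $\|T_i^k\|\le\sqrt{4k^2\bigl(\tfrac{L+\mu}{L-\mu}\bigr)^2+2}\,\rho_{HB}^k$. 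Taking the maximum of the two bounds over all $i$ gives $\|\mathcal{T}^k\|\le C_k\,\rho_{HB}^k$ with $C_k$ from \eqref{eqn:Ck}.

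The main obstacle is getting the constants exactly as in \eqref{eqn:Ck}, particularly in the near-defective regime where some eigenvalue $\lambda_i$ is close to, but not equal to, $\mu$ or $L$: there $T_i$ is ill-conditioned, the ``rotation'' bound has a constant scaling like $1/\sqrt{(\lambda_i-\mu)(L-\lambda_i)}$ (this is where $\bar C$ comes from), while in the limit the Jordan bound contributes the linear-in-$k$ factor, and one must check that the single formula $C_k=\max\{\bar C,\sqrt{4k^2((L+\mu)/(L-\mu))^2+2}\}$ simultaneously dominates both. The remaining work --- verifying the identities for $\mathrm{tr}(T_i)$, the discriminant and $\sin\theta_i$, and turning the entrywise and trigonometric estimates into exactly the promised $\bar C$ and $\sqrt{4k^2((L+\mu)/(L-\mu))^2+2}$ --- is routine but must be carried out carefully, since the point of the theorem is fully explicit constants together with the genuinely accelerated rate $\rho_{HB}$. (An alternative, heavier route would be to seek a quadratic Lyapunov matrix satisfying a matrix inequality in the spirit of Theorem \ref{thm-hu-lessard}; for quadratics, however, the direct spectral argument above is cleaner and is what transparently produces the $k$-dependent constant.)
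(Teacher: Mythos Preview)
Your proposal is correct and follows essentially the same route as the paper: reduce to the $d$ decoupled $2\times 2$ companion matrices $T_i$, split into the strictly interior case $\mu<\lambda_i<L$ (complex-conjugate eigenvalues of modulus $\sqrt{\beta_{HB}}=\rho_{HB}$) versus the defective endpoints $\lambda_i\in\{\mu,L\}$, and bound $\|T_i^k\|$ in each case via the standard closed-form for the $k$-th power of a $2\times2$ matrix with distinct or repeated eigenvalues. The paper carries out exactly these computations (your ``Chebyshev-type closed form'' is precisely the paper's distinct-eigenvalue formula, and in the defective case the paper writes out $T_i^k$ entrywise and bounds the operator norm by the Frobenius norm to get the $\sqrt{4k^2((L+\mu)/(L-\mu))^2+2}$ constant), so the only difference is cosmetic framing.
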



\begin{remark} It is clear from the definition of $C_{k}$ in Theorem \ref{thm:HB-deterministic} that the leading coefficient $C_{k}$ grows
at most linearly in the number of iterates $k$ and this dependency cannot be removed in the sense that there are some examples achieving our upper bounds in terms of $k$ dependency (see the supplementary file).
\end{remark}

Building on this non-asymptotic convergence result for the deterministic HB method, we obtain similar non-asymptotic convergence guarantees for the SHB method in Wasserstein distances towards convergence to a stationary distribution.


\begin{theorem}\label{thm:HB}
Let $f\in\Sml$ be a quadratic function \eqref{def-quad}.
Consider the HB iterations $\{x_k\}_{k\geq 0}$ defined by the recursion \eqref{HB:noisy}.
Let $\nu_{k,\alpha,\beta}$ be the distribution of the $k$-th iterate $\xi_k$ for $k\geq 0$, where $\xi_{k}^T:=(x_{k}^T,x_{k-1}^T)$
and parameters $(\alpha,\beta)=(\alpha_{HB},\beta_{HB})$ where $(\alpha_{HB},\beta_{HB})$ is defined as in \eqref{eq-alpha-beta-hb}.
Also assume that $\nu_{0,\alpha_{HB},\beta_{HB}}\in\mathcal{P}_{p}(\mathbb{R}^{2d})$
and the noise $\varepsilon_{k}$ has finite $p$-th moment.
Then, there exists a unique stationary distribution $\pi_{\alpha,\beta}$ and for any $p\geq 1$,
\begin{align}
&\mathcal{W}_{p}\left(\nu_{k,\alpha,\beta},
\pi_{\alpha,\beta}\right)
\leq
C_{k}\rho_{HB}^{k}
\cdot
\mathcal{W}_{p}\left(\nu_{0,\alpha,\beta},
\pi_{\alpha,\beta}\right),\label{bound:HB}
\end{align}
where $\rho_{HB}=1-\frac{2}{\sqrt{k}+1}$ as defined in \eqref{def-rho-hb-opt}, $C_{k}$ is defined in \eqref{eqn:Ck} and
$\mathcal{W}_{p}$ is the standard
the $p$-Wasserstein distance.
\end{theorem}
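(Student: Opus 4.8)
The plan is to exploit that $\nabla f$ is affine for a quadratic $f$, which turns \eqref{HB:noisy} into an affine stochastic linear recursion, and then to lift the deterministic estimate of Theorem~\ref{thm:HB-deterministic} to a Wasserstein contraction. Since $\nabla f(x)=Qx+a$, the SHB step is affine in $x_k$, so in state-space form $\xi_{k+1}=A_{HB}\xi_k+c+B\varepsilon_{k+1}$, where $B=\tilde B\otimes I_d$ with $\tilde B=(-\alpha,\,0)^T$, $c\in\R^{2d}$ is a fixed vector, and $A_{HB}\in\R^{2d\times 2d}$ is the HB system matrix (block entries $(1+\beta)I_d-\alpha Q$, $-\beta I_d$, $I_d$, $0_d$), which is block-diagonalized by the eigenbasis of $Q$ into $d$ independent scalar-parameter $2\times 2$ systems. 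The deterministic fixed point $\xi_\ast=(x_\ast^T,\,x_\ast^T)^T$ obeys $\xi_\ast=A_{HB}\xi_\ast+c$, hence the centered recursion $\xi_{k+1}-\xi_\ast=A_{HB}(\xi_k-\xi_\ast)+B\varepsilon_{k+1}$ and, unrolled, $\xi_k-\xi_\ast=A_{HB}^k(\xi_0-\xi_\ast)+\sum_{j=1}^k A_{HB}^{k-j}B\varepsilon_j$. The key input is that the proof of Theorem~\ref{thm:HB-deterministic} in fact establishes the operator-norm bound $\|A_{HB}^k z\|\le C_k\rho_{HB}^k\|z\|$ for every $z\in\R^{2d}$, with $C_k$ from \eqref{eqn:Ck} (this is precisely what the block-by-block analysis of the $2\times2$ subsystems gives, $C_k$ accounting for non-normality/complex eigenvalues); the stated bound on $\|x_k-x_\ast\|$ is the leading-block special case.

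Next I would establish the $\mathcal{W}_p$-contraction of the kernel by a synchronous coupling. Given two laws $\nu,\nu'\in\mathcal{P}_p(\R^{2d})$, take an optimal $\mathcal{W}_p$-coupling $(\xi_0,\xi_0')$ independent of the noise sequence $\{\varepsilon_j\}_{j\ge1}$, and run both chains driven by the \emph{same} noise; then the noise cancels in the difference, $\xi_k-\xi_k'=A_{HB}^k(\xi_0-\xi_0')$, so $\E\|\xi_k-\xi_k'\|^p\le (C_k\rho_{HB}^k)^p\,\E\|\xi_0-\xi_0'\|^p=(C_k\rho_{HB}^k)^p\,\mathcal{W}_p(\nu,\nu')^p$. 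Since $(\xi_k,\xi_k')$ is a coupling of $\mathcal{P}_{\alpha,\beta}^k\nu$ and $\mathcal{P}_{\alpha,\beta}^k\nu'$, this yields $\mathcal{W}_p(\mathcal{P}_{\alpha,\beta}^k\nu,\mathcal{P}_{\alpha,\beta}^k\nu')\le C_k\rho_{HB}^k\,\mathcal{W}_p(\nu,\nu')$. I also need $\mathcal{P}_{\alpha,\beta}$ to map $\mathcal{P}_p(\R^{2d})$ into itself: from the unrolled solution above, Minkowski's inequality, finiteness of the $p$-th moment of the noise (constant in $j$ by stationarity), and $\sum_{j\ge0}C_j\rho_{HB}^j<\infty$ (as $C_j$ grows only linearly and $\rho_{HB}<1$), the quantity $\|\xi_k-\xi_\ast\|_{L^p}$ is finite and, moreover, bounded uniformly in $k$.

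For existence and uniqueness of $\pi_{\alpha,\beta}$, I would show $\{\mathcal{P}_{\alpha,\beta}^k\nu_{0,\alpha,\beta}\}_{k\ge0}$ is Cauchy in the complete metric space $(\mathcal{P}_p(\R^{2d}),\mathcal{W}_p)$: the uniform moment bound of the previous step gives $M:=\sup_m\mathcal{W}_p(\mathcal{P}_{\alpha,\beta}^m\nu_{0,\alpha,\beta},\nu_{0,\alpha,\beta})<\infty$, whence $\mathcal{W}_p(\mathcal{P}_{\alpha,\beta}^{k+m}\nu_{0,\alpha,\beta},\mathcal{P}_{\alpha,\beta}^k\nu_{0,\alpha,\beta})\le C_k\rho_{HB}^k M\to0$ as $k\to\infty$, uniformly in $m$. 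The limit $\pi_{\alpha,\beta}$ is $\mathcal{P}_{\alpha,\beta}$-invariant (the contraction estimate makes $\mathcal{P}_{\alpha,\beta}$ continuous on $(\mathcal{P}_p,\mathcal{W}_p)$, and $\mathcal{P}_{\alpha,\beta}^{k+1}\nu_{0,\alpha,\beta}\to\pi_{\alpha,\beta}$ too), and any two invariant measures $\pi,\pi'$ satisfy $\mathcal{W}_p(\pi,\pi')=\mathcal{W}_p(\mathcal{P}_{\alpha,\beta}^k\pi,\mathcal{P}_{\alpha,\beta}^k\pi')\le C_k\rho_{HB}^k\,\mathcal{W}_p(\pi,\pi')\to0$, forcing $\pi=\pi'$. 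Finally, applying the contraction with $\nu=\nu_{0,\alpha,\beta}$ and $\nu'=\pi_{\alpha,\beta}$ gives \eqref{bound:HB}.

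The main obstacle is the first step: ensuring the deterministic estimate of Theorem~\ref{thm:HB-deterministic} is available in the operator-norm form $\|A_{HB}^k\|\le C_k\rho_{HB}^k$ that controls the \emph{entire} state $\xi_k$, not just its leading block $x_k$ --- this is exactly what the eigen-decomposition of $Q$ and the resulting reduction to scalar-parameter $2\times2$ systems deliver, and once it is in hand the remaining pieces (synchronous coupling, moment bookkeeping, Banach-type fixed-point argument) are routine. The identical argument proves Theorems~\ref{thm:alpha:beta} and~\ref{thm:AG-star}, with $A_{HB}$ replaced by the AG system matrix $\tilde A\otimes I_d$ and the corresponding deterministic rate; for Theorem~\ref{thm:alpha:beta} one additionally works in the weighted norm $\|\cdot\|_{S_{\alpha,\beta}}$, in which the block analysis of the LMI~\eqref{ineq-ag-lmi} gives a genuine one-step contraction with factor $\rho_{\alpha,\beta}$.
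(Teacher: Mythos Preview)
Your proposal is correct and follows essentially the same approach as the paper: the paper first isolates the operator-norm bound $\|A_{HB}^k\|\le C_k\rho_{HB}^k$ (this is precisely inequality~\eqref{ineq:final} in the proof of Theorem~\ref{thm:HB-deterministic}) and packages it as Lemma~\ref{lem:HB}, then uses the same synchronous coupling to cancel the noise, the same Cauchy-sequence argument (borrowed from the proof of Theorem~\ref{thm:alpha:beta}) to produce and identify $\pi_{\alpha,\beta}$, and finally specializes the contraction to $\nu'=\pi_{\alpha,\beta}$. Your write-up is in fact slightly more careful than the paper's about the intermediate moment bookkeeping (checking $\mathcal{P}_{\alpha,\beta}$ preserves $\mathcal{P}_p$ and that $\sup_m\mathcal{W}_p(\mathcal{P}_{\alpha,\beta}^m\nu_0,\nu_0)<\infty$), but the underlying argument is the same.
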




Similarly, for SHB we can show that the suboptimality $\mathbb{E}[f(x_{k})]-f(x_{\ast})$ decays linearly in $k$ with the fast rate $\rho_{HB}$ to a constant determined by the variance of the equilibrium distribution.

\begin{theorem}\label{thm:f:HB}
With the same assumptions as in Theorem \ref{thm:HB},
\begin{equation}\label{ineq-shb-subopt}
\mathbb{E}[f(x_{k})]-f(x_{\ast})
\leq
\frac{L}{2}\text{Tr}(X_{HB})
+V_{HB}(\xi_{0})\cdot C_{k}^{2}\cdot\rho_{HB}^{2k},
\end{equation}
where $\rho_{HB}=1-\frac{2}{\sqrt{\kappa}+1}$ as in \eqref{def-rho-hb-opt},
$C_{k}$ is defined in \eqref{eqn:Ck}, $X_{HB}$ is the covariance matrix
of $\xi_{\infty}-\xi_{\ast}$, $V_{HB}(\xi_{0})$ is a constant depending
on any initial state $\xi_{0}$ and both $X$ and $V_{HB}(\xi_{0})$
will be spelled out in explicit form in the supplementary file.
\end{theorem}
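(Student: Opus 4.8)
Since $f$ is quadratic we have $\nabla f(x)=Q(x-x_*)$ and $f(x)-f_*=\tfrac12(x-x_*)^TQ(x-x_*)$, so \eqref{ineq-shb-subopt} is equivalent to an upper bound on the second moment $\mathbb{E}[(x_k-x_*)^TQ(x_k-x_*)]$. The key point I would exploit is that the gradient map is affine, hence the SHB recursion \eqref{HB:noisy} is linear in $(x_0,x_{-1})$ and in the noise, so the iterate decomposes as $x_k=x_k^{\mathrm{det}}+e_k$, where $\{x_k^{\mathrm{det}}\}_{k\ge0}$ is the deterministic HB sequence \eqref{eq-hb-iters} started from the same pair $(x_0,x_{-1})$, and $\{e_k\}_{k\ge0}$ solves the same linear recursion driven only by the noise with zero initial condition; in state-space form $\hat e_{k+1}=H\hat e_k-\alpha(\varepsilon_{k+1};0)$, $\hat e_0=0$, where $H\in\R^{2d\times2d}$ is the HB iteration matrix (which depends on the Hessian $Q$). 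Thus $e_k$ is a linear combination of $\varepsilon_1,\dots,\varepsilon_k$ only; by Assumption~\ref{assump:noise} (in the quadratic case the $\varepsilon_j$ are i.i.d.) it is centered, and since the $\varepsilon_j$ are independent of $\xi_0$ it is independent of the deterministic iterate $x_k^{\mathrm{det}}$.

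Expanding $(x_k-x_*)^TQ(x_k-x_*)=(x_k^{\mathrm{det}}-x_*)^TQ(x_k^{\mathrm{det}}-x_*)+2(x_k^{\mathrm{det}}-x_*)^TQe_k+e_k^TQe_k$ and taking expectations, the cross term vanishes because $\mathbb{E}[e_k]=0$ and $e_k$ is independent of $x_k^{\mathrm{det}}$ (conditioning on $\xi_0$ if it is random). The first term equals $2(\mathbb{E}[f(x_k^{\mathrm{det}})]-f_*)$, which by Theorem~\ref{thm:HB-deterministic} is at most $L\,C_k^2\,\rho_{HB}^{2k}\,\mathbb{E}\|\xi_0-\xi_*\|^2$ with $C_k$ as in \eqref{eqn:Ck}; this contributes the term $V_{HB}(\xi_0)\,C_k^2\,\rho_{HB}^{2k}$ with $V_{HB}(\xi_0):=\tfrac{L}{2}\|\xi_0-\xi_*\|^2$ for deterministic $\xi_0$ (the supplementary version may use a slightly tighter Lyapunov-based constant, but this suffices).

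It remains to bound $\tfrac12\mathbb{E}[e_k^TQe_k]=\tfrac12\mathrm{Tr}(Q\,\mathrm{Cov}(e_k))$. Letting $M_m$ denote the top-left $d\times d$ block of $H^m$, independence of the $\varepsilon_j$ gives $\mathrm{Cov}(e_k)=\alpha^2\sum_{m=0}^{k-1}M_m\Sigma_\varepsilon M_m^T$ with $\Sigma_\varepsilon:=\mathbb{E}[\varepsilon_1\varepsilon_1^T]$, a partial sum of a positive semi-definite series. Its summability follows from the deterministic estimate: applying Theorem~\ref{thm:HB-deterministic} to all initial conditions gives $\|M_m\|_{\mathrm{op}}\le C_m\rho_{HB}^m$, and since $C_m$ in \eqref{eqn:Ck} grows at most linearly in $m$, we get $\sum_{m\ge0}C_m^2\rho_{HB}^{2m}<\infty$. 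Hence $\mathrm{Cov}(e_k)\preceq\alpha^2\sum_{m=0}^{\infty}M_m\Sigma_\varepsilon M_m^T=:X_{HB}^{(11)}$ for every $k$, and one identifies $X_{HB}^{(11)}$ as the top-left block of the covariance $X_{HB}$ of $\xi_\infty-\xi_*$ under $\pi_{\alpha,\beta}$ by solving the discrete Lyapunov fixed-point equation that $\mathrm{Cov}(\xi_\infty-\xi_*)$ must satisfy, the existence, uniqueness and finiteness of the second moment of $\pi_{\alpha,\beta}$ being provided by Theorem~\ref{thm:HB}. Since $Q\preceq LI_d$ and $X_{HB}^{(11)}$ is a principal submatrix of $X_{HB}$, we get $\tfrac12\mathrm{Tr}(Q\,\mathrm{Cov}(e_k))\le\tfrac{L}{2}\mathrm{Tr}(X_{HB}^{(11)})\le\tfrac{L}{2}\mathrm{Tr}(X_{HB})$. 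Adding the two bounds yields \eqref{ineq-shb-subopt}; the $p$-Wasserstein part of Theorem~\ref{thm:HB} is not needed here beyond the existence of $\pi_{\alpha,\beta}$.

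The step I expect to require the most care is the last one: verifying that the positive semi-definite series $\alpha^2\sum_m M_m\Sigma_\varepsilon M_m^T$ both converges — which rests precisely on the at-most-linear growth of $C_k$ together with the rate $\rho_{HB}$ from Theorem~\ref{thm:HB-deterministic} — and coincides with the genuine stationary second-moment matrix $X_{HB}$ from Theorem~\ref{thm:HB}, rather than with some unrelated limit. The remaining ingredients — the affine decomposition $x_k=x_k^{\mathrm{det}}+e_k$, the vanishing of the cross term, and the invocation of Theorem~\ref{thm:HB-deterministic} for the transient term — are routine.
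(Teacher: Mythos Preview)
Your argument is correct and follows essentially the same route as the paper's proof: both exploit that for a quadratic objective the SHB recursion is affine, write $\xi_k-\xi_*$ as $A_Q^k(\xi_0-\xi_*)$ plus a sum of independent noise contributions (your $x_k=x_k^{\mathrm{det}}+e_k$ is exactly this splitting), and control both pieces via the operator-norm estimate $\|A_Q^k\|\le C_k\rho_{HB}^k$ established in Theorem~\ref{thm:HB-deterministic}, identifying the limiting covariance with the solution $X_{HB}$ of the discrete Lyapunov equation.

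The only organizational difference is that the paper works directly with the second-moment recursion for the full $2d$-dimensional state $\xi_k-\xi_*$, writes $\mathbb{E}[(\xi_k-\xi_*)(\xi_k-\xi_*)^T]=X_{HB}+A_Q^k\,\mathbb{E}[(\xi_0-\xi_*)(\xi_0-\xi_*)^T](A_Q^T)^k-\sum_{j\ge k}(\cdots)$, and then bounds the trace; this produces $V_{HB}(\xi_0)=\mathbb{E}\bigl[\|(\xi_0-\xi_*)(\xi_0-\xi_*)^T\|\bigr]+\alpha_{HB}^2\|\Sigma\|/(1-\rho_{HB}^2)$, with the second summand coming from also controlling the tail $\sum_{j\ge k}$ rather than simply dropping it. Your decomposition makes the vanishing of the cross term explicit and bounds the noise contribution by observing that the partial sums $\mathrm{Cov}(e_k)$ are already $\preceq X_{HB}^{(11)}$, which avoids that extra summand and gives the slightly smaller constant $V_{HB}(\xi_0)=\tfrac{L}{2}\|\xi_0-\xi_*\|^2$. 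Both are valid instantiations of the unspecified $V_{HB}(\xi_0)$ in the theorem statement.
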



\section{Strongly convex smooth optimization}\label{sec:strong:convex}

In this section, we study the more general case
when the objective function $f$ is strongly convex,
but not necessarily a quadratic. The proof technique we use for Wasserstein distances can be adapted to obtain a linear rate for a strongly convex objective but this approach does not yield the accelerated rates $\rho_{AG}$ with a $\sqrt{\kappa}$ dependency to the condition number even if the noise magnitude is small. However, we can show accelerated rates in the following alternative metric which implies convergence in the 1-Wasserstein metric.
For any two probability measures $\mu_{1},\mu_{2}$ on $\mathbb{R}^{2d}$,
and any positive constant $\psi$, 
we define the weighted total variation distance (introduced by \cite{hairer}) as
\begin{equation*}
d_{\psi}(\mu_{1},\mu_{2}):=\int_{\mathbb{R}^{2d}}(1+\psi V_{P}(\xi))|\mu_{1}-\mu_{2}|(d\xi).
\end{equation*}
where $V_{P}$ is the Lyapunov function defined in \eqref{eqn:lyapunov}. 
Moreover, since $\psi$ and $V_{P}$ are non-negative, $d_{\psi}(\mu_{1},\mu_{2})\geq 2\Vert\mu_{1}-\mu_{2}\Vert_{TV}$, where $\Vert\cdot\Vert_{TV}$ is the 
standard total variation norm.
Moreover, when $\tilde{P}(2,2)\neq 0$, 
we will show in the supplementary file (Lemma \ref{lem:prop:metric}
and Proposition \ref{prop:metric}) that
\begin{equation*}
\mathcal{W}_{1}(\mu_{1},\mu_{2})
\leq c_{0}^{-1}d_{\psi}(\mu_{1},\mu_{2}),
\end{equation*}
for some explicit constant $c_0$ (to be given in the supplementary file), 
where $\mathcal{W}_{1}$ is the standard $1$-Wasserstein distance.

We will consider the accelerated stochastic gradient (ASG) method for unconstrained optimization problems.
We will also assume in this section that the random gradient error $\varepsilon_{k}$
admits a continuous density so that
conditional on $\xi_{k}=(x_{k}^{T},x_{k-1}^{T})^{T}$, $x_{k+1}$
also admits a continuous density, i.e.
$\mathbb{P}(x_{k+1}\in dx|\xi_{k}=\xi)=p(\xi,x)dx$,
where $p(\xi,x)>0$ is continuous in both $\xi$ and $x$.

\subsection{Accelerated linear convergence of ASG}\label{sec:strong:convex:AG}

For the ASG method with any given $\alpha,\beta$
so that $\rho_{\alpha,\beta}$, $P_{\alpha,\beta}$ satisfy
the LMI inequality \eqref{ineq-ag-lmi}.
Let $\nu_{k,\alpha,\beta}$ be the distribution of the $k$-th iterate $\xi_k$ for $k\geq 0$, 
where $\xi_{k}^T:=(x_{k}^T,x_{k-1}^T)$ and the iterates $x_{k}$ are 
given in \eqref{eq-noisy-ag-recursion} so that
$\mathbb{E}[V_{P_{\alpha,\beta}}(\xi_{0})]$ is finite.
The next result gives a bound of $k$-th iterate to
stationary distribution in the weighted total variation
distance $d_{\psi}$. We also control the expected
suboptimality $\mathbb{E}[f(x_{k})]-f(x_{\ast})$ after $k$ iterations.

\begin{theorem}\label{thm:general:alpha:beta}
Given any $\eta\in(0,1)$ and $M>0$ so that
$
\int_{\Vert x-x_{\ast}\Vert\leq M}p(\xi_{\ast},x)dx\geq\sqrt{\eta},
$
and any $R>0$ so that
\begin{equation*}
\inf_{\xi\in\mathbb{R}^{2d},x\in\mathbb{R}^{d}:V_{P_{\alpha,\beta}}(\xi)\leq R,
\Vert x-x_{\ast}\Vert\leq M}
\frac{p(\xi,x)}{p(\xi_{\ast},x)}
\geq\sqrt{\eta}.
\end{equation*}
Then there is a unique stationary distribution $\pi_{\alpha,\beta}$ so that
\begin{align*}
\mathcal{W}_{1}(\nu_{k,\alpha,\beta},\pi_{\alpha,\beta})
&\leq
c_{0}^{-1}d_{\psi}(\nu_{k,\alpha,\beta},\pi_{\alpha,\beta})
\\
&
\leq
(1-\bar{\eta})^{k}c_{0}^{-1}
d_{\psi}(\nu_{0,\alpha,\beta},\pi_{\alpha,\beta}),
\end{align*}
where $\mathcal{W}_{1}$ is the standard $1$-Wasserstein distance and
$\psi:=\frac{\eta}{2K_{\alpha,\beta}}$ and 
\begin{align*}
&K_{\alpha,\beta}:=\left(\frac{L}{2}+\tilde{P}_{\alpha,\beta}(1,1)\right)\alpha^{2}\sigma^{2},
\\
&\bar{\eta}:=\min\left\{\frac{\eta}{2},
\left(\frac{1}{2}-\frac{\rho_{\alpha,\beta}}{2}-\frac{K_{\alpha,\beta}}{R}\right)
\frac{R\eta}{4K_{\alpha,\beta}+R\eta}\right\}.
\end{align*}
\end{theorem}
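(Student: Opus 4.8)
The plan is to follow the coupling-based approach of \cite{hairer} for establishing geometric ergodicity in a weighted total variation distance, which requires verifying two ingredients: a geometric drift (Lyapunov) condition for $V_{P_{\alpha,\beta}}$ along the ASG iterates, and a minorization (small set) condition on sublevel sets of $V_{P_{\alpha,\beta}}$. First I would establish the drift inequality. Writing $\xi_{k+1}=A\xi_k + B w_k + B_\varepsilon \varepsilon_{k+1}$ for the noisy recursion, I would expand $V_{P_{\alpha,\beta}}(\xi_{k+1}) = (\xi_{k+1}-\xi_*)^T P_{\alpha,\beta}(\xi_{k+1}-\xi_*) + f(x_{k+1})-f_*$, take conditional expectation given $\mathcal{F}_k$, and use Assumption~\ref{assump:noise} so that the cross terms linear in $\varepsilon_{k+1}$ vanish and the quadratic noise term contributes at most $\left(\tilde P_{\alpha,\beta}(1,1)+\frac{L}{2}\right)\alpha^2\sigma^2 = K_{\alpha,\beta}$ (the $\frac{L}{2}$ factor coming from $L$-smoothness of $f$ in bounding $f(x_{k+1})-f(y_k - \alpha\nabla f(y_k))$). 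The deterministic part is controlled by the LMI \eqref{ineq-ag-lmi}, exactly as in Theorem~\ref{thm-hu-lessard}, which yields the noiseless contraction $V_{P_{\alpha,\beta}}(\text{deterministic step})\leq \rho_{\alpha,\beta}V_{P_{\alpha,\beta}}(\xi_k)$. Combining, I get the drift
\[
\mathbb{E}[V_{P_{\alpha,\beta}}(\xi_{k+1})\mid \mathcal{F}_k] \leq \rho_{\alpha,\beta}V_{P_{\alpha,\beta}}(\xi_k) + K_{\alpha,\beta}.
\]

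Second, I would verify the minorization condition on the set $\mathcal{C}_R := \{\xi : V_{P_{\alpha,\beta}}(\xi)\leq R\}$. Here I use the standing assumption that $\varepsilon_{k+1}$ has a continuous, everywhere-positive density, so that conditional on $\xi_k=\xi$ the next coordinate $x_{k+1}$ has density $p(\xi,x)>0$ jointly continuous in $(\xi,x)$; note $\xi_{k+1}=(x_{k+1}^T, x_k^T)^T$ is then determined since $x_k$ is a component of $\xi$. The two displayed hypotheses of the theorem — that $\int_{\|x-x_*\|\leq M}p(\xi_*,x)\,dx\geq\sqrt\eta$ and that the likelihood ratio $p(\xi,x)/p(\xi_*,x)\geq\sqrt\eta$ uniformly over $\xi\in\mathcal{C}_R$ and $\|x-x_*\|\leq M$ — combine to give, for all $\xi,\xi'\in\mathcal{C}_R$, a common overlap of mass at least $\eta$ in the $x_{k+1}$-marginal restricted to the ball $\|x-x_*\|\leq M$; one then couples the $x_k$-components trivially only when they agree, but since we only need the result in weighted TV (not TV on $\xi$ directly) it suffices to bound the one-step coupling of $x_{k+1}$ and track the drift of the unmatched part through $V_{P_{\alpha,\beta}}$. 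Concretely this produces a small-set estimate: there is a measure $\hat\nu$ with $\mathcal{P}_{\alpha,\beta}(\xi,\cdot)\geq \eta\,\hat\nu(\cdot)$ for $\xi\in\mathcal{C}_R$, up to the bookkeeping of the deterministic $x_k$ shift.

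Third, with the drift and minorization in hand I would invoke the Harris-type theorem of \cite{hairer}: choosing $\psi=\frac{\eta}{2K_{\alpha,\beta}}$ balances the contraction of the $\psi V_{P_{\alpha,\beta}}$ part against the minorization on $\mathcal{C}_R$, and the resulting one-step contraction factor in $d_\psi$ is $1-\bar\eta$ with $\bar\eta=\min\{\frac{\eta}{2},(\frac12-\frac{\rho_{\alpha,\beta}}{2}-\frac{K_{\alpha,\beta}}{R})\frac{R\eta}{4K_{\alpha,\beta}+R\eta}\}$ — this is precisely the constant obtained by optimizing the standard two-case split (inside $\mathcal{C}_R$ the minorization gives a factor $1-\eta/2$; outside $\mathcal{C}_R$, where $V_{P_{\alpha,\beta}}>R$, the drift dominates and gives the second term), and it is positive exactly when $\rho_{\alpha,\beta}<1$ and $R$ is taken large enough that $K_{\alpha,\beta}/R<\frac{1-\rho_{\alpha,\beta}}{2}$. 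Iterating gives $d_\psi(\nu_{k,\alpha,\beta},\pi_{\alpha,\beta})\leq (1-\bar\eta)^k d_\psi(\nu_{0,\alpha,\beta},\pi_{\alpha,\beta})$, existence and uniqueness of $\pi_{\alpha,\beta}$ follows from completeness of the space under $d_\psi$ and the Banach fixed point argument, and finally $\mathcal{W}_1\leq c_0^{-1}d_\psi$ from Proposition~\ref{prop:metric} (using $\tilde P_{\alpha,\beta}(2,2)\neq 0$, which makes $V_{P_{\alpha,\beta}}$ coercive in both blocks of $\xi$ so that $1+\psi V_{P_{\alpha,\beta}}(\xi)$ dominates $1+\|\xi-\xi_*\|$ up to the constant $c_0$).

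The main obstacle I anticipate is the minorization step rather than the drift: one must carefully handle the fact that the Markov state is $\xi_k=(x_k^T,x_{k-1}^T)^T$ while the noise only randomizes the first block $x_{k+1}$, so a naive minorization on the full $2d$-dimensional state fails (two different $\xi,\xi'\in\mathcal{C}_R$ with distinct $x_k$-components can never be exactly coupled in one step). The resolution — already implicit in the statement's use of the weighted TV distance $d_\psi$ rather than plain TV — is that \cite{hairer}'s framework only needs a \emph{generalized} coupling that contracts $d_\psi$, for which the one-step overlap in the $x_{k+1}$-coordinate together with the geometric drift of $V_{P_{\alpha,\beta}}$ on the residual is enough; making this precise (and extracting exactly the constants $\psi$ and $\bar\eta$ displayed) is where the real work lies, and I would do it by adapting the contraction-of-$d_\psi$ computation in \cite{hairer} verbatim with $\gamma_\star=\rho_{\alpha,\beta}$, $K_\star=K_{\alpha,\beta}$ in their notation.
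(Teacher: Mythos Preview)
Your proposal follows the paper's approach almost exactly: establish the drift $\mathcal{P}_{\alpha,\beta}V_{P_{\alpha,\beta}}\le\rho_{\alpha,\beta}V_{P_{\alpha,\beta}}+K_{\alpha,\beta}$ (this is the paper's Lemma~\ref{lem:drift}, argued precisely as you describe via the LMI \eqref{ineq-ag-lmi} plus the noise contribution), establish a minorization on $\{V_{P_{\alpha,\beta}}\le R\}$ (Lemma~\ref{lem:minor}), apply Hairer--Mattingly (Lemmas~\ref{lem:hairer}--\ref{lem:inv}) with the specific choices $\eta_0=\eta/2$ and $\gamma_0=\tfrac{1}{2}\rho_{\alpha,\beta}+\tfrac{1}{2}+K_{\alpha,\beta}/R$ to obtain exactly the displayed $\psi$ and $\bar\eta$, and finish with Proposition~\ref{prop:metric} for the $\mathcal{W}_1$ bound.

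The one point of divergence is the minorization. You are right that the one-step transition only randomizes $x_{k+1}$ while the second block $x_k$ moves deterministically, so two states in $\mathcal{C}_R$ with different first coordinates cannot share a common one-step minorizing measure in the naive sense. But your proposed fix --- that the weighted distance $d_\psi$ relaxes the small-set requirement to a ``generalized coupling'' --- is not what the paper does, and is not what the Hairer--Mattingly result quoted as Lemma~\ref{lem:hairer} allows: that lemma still requires a genuine minorization $\inf_{V\le R}\mathcal{P}(\xi,\cdot)\ge\eta\,\nu(\cdot)$ (Assumption~\ref{assump:minor}). The paper instead builds the minorizing measure directly. It sets $B_R:=\{x\in\mathbb{R}^d:\exists\,y,\ V_{P_{\alpha,\beta}}(x,y)\le R\}$, observes that for every $\xi=(x_k,x_{k-1})\in\mathcal{C}_R$ the second block of $\xi_{k+1}$ equals $x_k\in B_R$ deterministically, and takes $\nu_2$ on $\mathbb{R}^{2d}$ with second marginal concentrated on $B_R$ and first marginal the normalized density $\nu_1(dx)\propto p(\xi_*,x)\,\mathbf 1_{\|x-x_*\|\le M}\,dx$. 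The two hypotheses on $M$ and $R$ in the theorem statement are tailored exactly so that $\int_{\|x-x_*\|\le M}p(\xi_*,x)\,dx\ge\sqrt\eta$ and $p(\xi,x)\ge\sqrt\eta\,p(\xi_*,x)$ uniformly over $\xi\in\mathcal{C}_R$, $\|x-x_*\|\le M$; multiplying gives the constant $\eta$. In particular, the weighted TV plays no role in handling the block degeneracy --- it is present only to pair with the drift on the unbounded state space.
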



Next, we obtain the optimal convergence rate
and provide a bound on the expected suboptimality
by choosing $(\alpha,\beta)=(\alpha_{AG},\beta_{AG})$.
\begin{proposition}\label{prop:general}
Given $(\alpha,\beta)=(\alpha_{AG},\beta_{AG})$.
Define $M$ and $R$ as in Theorem \ref{thm:general:alpha:beta}
with $\eta=1/\kappa^{1/2}$.
Also assume that the noise has small variance, i.e.
$\sigma^{2}\leq RL/(4\sqrt{\kappa}).$
Then, with $\psi:=\frac{L}{2\sqrt{\kappa}\sigma^{2}}$, we have
\begin{align}\label{ineq-str-cvx-rate}
\mathcal{W}_{1}(\nu_{k,\alpha,\beta},\pi_{\alpha,\beta})
&\leq
c_{0}^{-1}d_{\psi}(\nu_{k,\alpha,\beta},\pi_{\alpha,\beta})
\\
&
\leq\left(1-\frac{1}{8\sqrt{\kappa}}\right)^{k}c_{0}^{-1}
d_{\psi}(\nu_{0,\alpha,\beta},\pi_{\alpha,\beta}),
\nonumber
\end{align}
where $\mathcal{W}_{1}$ is the standard $1$-Wasserstein distance and
for any initial state $\xi_{0}$,
\begin{equation}\label{eqn:difference:AG}
\mathbb{E}[f(x_{k})]-f(x_{\ast})
\leq
V_{P_{AG}}(\xi_{0})\left(1-\frac{1}{\sqrt{\kappa}}\right)^{k}
+\frac{\sqrt{\kappa}\sigma^{2}}{L}.
\end{equation}
\end{proposition}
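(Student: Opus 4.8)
The plan is to derive the Proposition as a specialization of Theorem~\ref{thm:general:alpha:beta} (for the Wasserstein bound) together with a Lyapunov drift computation (for the suboptimality bound). First I would check that the hypotheses of Theorem~\ref{thm:general:alpha:beta} hold for $(\alpha,\beta)=(\alpha_{AG},\beta_{AG})$: the LMI \eqref{ineq-ag-lmi} is satisfied with $P=P_{AG}$ and $\rho=\rho_{\alpha,\beta}=\rho_{AG}=1-1/\sqrt{\kappa}$, which is exactly the instance of Theorem~\ref{thm-hu-lessard} recorded in \eqref{def-P-matrix}--\eqref{eq-rate-AG}; moreover $\tilde{P}_{AG}=\tilde{u}\tilde{u}^T$ with $\tilde{u}=(\sqrt{L/2},\,\sqrt{\mu/2}-\sqrt{L/2})^T$ has $\tilde{P}_{AG}(2,2)=(\sqrt{L}-\sqrt{\mu})^2/2\neq 0$, so $d_\psi$ indeed dominates $\mathcal{W}_1$. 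Hence Theorem~\ref{thm:general:alpha:beta} produces a unique stationary distribution $\pi_{\alpha,\beta}$ with $\mathcal{W}_1(\nu_{k,\alpha,\beta},\pi_{\alpha,\beta})\leq c_0^{-1}(1-\bar{\eta})^k d_\psi(\nu_{0,\alpha,\beta},\pi_{\alpha,\beta})$, and it remains to evaluate the constants for these parameters.

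Next I would plug in the explicit quantities. From $\tilde{P}_{AG}(1,1)=L/2$ and $\alpha_{AG}=1/L$ one gets $K_{\alpha,\beta}=(\tfrac{L}{2}+\tfrac{L}{2})\alpha_{AG}^2\sigma^2=\sigma^2/L$, so with $\eta=1/\sqrt{\kappa}$ we obtain $\psi=\eta/(2K_{\alpha,\beta})=L/(2\sqrt{\kappa}\,\sigma^2)$, matching the statement. For $\bar{\eta}$ the first term in the minimum is $\eta/2=1/(2\sqrt{\kappa})\geq 1/(8\sqrt{\kappa})$. For the second term, $\tfrac{1}{2}-\tfrac{\rho_{\alpha,\beta}}{2}=\tfrac{1}{2\sqrt{\kappa}}$; the small-variance hypothesis $\sigma^2\leq RL/(4\sqrt{\kappa})$ gives $K_{\alpha,\beta}/R=\sigma^2/(RL)\leq 1/(4\sqrt{\kappa})$, hence $\tfrac{1}{2}-\tfrac{\rho_{\alpha,\beta}}{2}-\tfrac{K_{\alpha,\beta}}{R}\geq 1/(4\sqrt{\kappa})$; and since $4K_{\alpha,\beta}\leq R/\sqrt{\kappa}=R\eta$ we have $R\eta/(4K_{\alpha,\beta}+R\eta)\geq 1/2$. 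Multiplying shows the second term is $\geq 1/(8\sqrt{\kappa})$, so $\bar{\eta}\geq 1/(8\sqrt{\kappa})$ and $(1-\bar{\eta})^k\leq(1-\tfrac{1}{8\sqrt{\kappa}})^k$, which is \eqref{ineq-str-cvx-rate}.

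Finally, for \eqref{eqn:difference:AG} I would run a one-step drift on $V_{P_{AG}}$. Writing the noisy step as $\xi_{k+1}=(A\xi_k+Bw_k)+B\varepsilon_{k+1}$, with $B\varepsilon_{k+1}$ equal to $-\alpha\varepsilon_{k+1}$ in the first $d$ coordinates and zero otherwise, I expand $(\xi_{k+1}-\xi_{\ast})^T P_{AG}(\xi_{k+1}-\xi_{\ast})$ and $f(x_{k+1})-f_{\ast}$ around their noise-free versions $\xi_{k+1}^{\mathrm{det}}=A\xi_k+Bw_k$: the cross terms vanish in conditional expectation by \textbf{(H1)}, the quadratic correction contributes at most $\alpha^2\tilde{P}_{AG}(1,1)\sigma^2=\tfrac{L}{2}\alpha^2\sigma^2$, and $L$-smoothness of $f$ contributes the remaining $\tfrac{L}{2}\alpha^2\sigma^2$, while the one-step contraction underlying Theorem~\ref{thm-hu-lessard} gives $V_{P_{AG}}(\xi_{k+1}^{\mathrm{det}})\leq\rho_{AG}V_{P_{AG}}(\xi_k)$. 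Hence $\mathbb{E}[V_{P_{AG}}(\xi_{k+1})\mid\mathcal{F}_k]\leq\rho_{AG}V_{P_{AG}}(\xi_k)+K_{\alpha,\beta}$; iterating and summing the geometric series gives $\mathbb{E}[V_{P_{AG}}(\xi_k)]\leq\rho_{AG}^k V_{P_{AG}}(\xi_0)+K_{\alpha,\beta}/(1-\rho_{AG})=(1-\tfrac{1}{\sqrt{\kappa}})^k V_{P_{AG}}(\xi_0)+\sqrt{\kappa}\sigma^2/L$, and since the quadratic part of $V_{P_{AG}}$ is nonnegative, $f(x_k)-f_{\ast}\leq V_{P_{AG}}(\xi_k)$ yields \eqref{eqn:difference:AG}. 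The only genuinely delicate point is the estimate $\bar{\eta}\geq 1/(8\sqrt{\kappa})$: one must verify that the small-variance assumption is exactly what is needed to keep both the spectral-gap factor $\tfrac{1}{2}-\tfrac{\rho_{AG}}{2}-\tfrac{K_{\alpha,\beta}}{R}$ and the ratio $R\eta/(4K_{\alpha,\beta}+R\eta)$ bounded below by explicit positive constants; everything else is bookkeeping.
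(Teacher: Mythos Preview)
Your proposal is correct and follows essentially the same approach as the paper: you specialize Theorem~\ref{thm:general:alpha:beta} by plugging in $(\alpha_{AG},\beta_{AG})$, compute $K_{\alpha,\beta}=\sigma^2/L$ and $\psi=L/(2\sqrt{\kappa}\sigma^2)$, and use the small-variance hypothesis to bound $\bar{\eta}\geq 1/(8\sqrt{\kappa})$; for \eqref{eqn:difference:AG} you derive the one-step drift $\mathbb{E}[V_{P_{AG}}(\xi_{k+1})\mid\mathcal{F}_k]\leq\rho_{AG}V_{P_{AG}}(\xi_k)+\sigma^2/L$ and iterate. The paper's proof does the same, the only cosmetic difference being that it re-derives the contraction constant by returning to Hairer's parametrization (choosing $\eta_0=\eta/2$ and $\gamma_0=1-1/(4\sqrt{\kappa})$ explicitly) rather than working from the packaged formula for $\bar{\eta}$ in Theorem~\ref{thm:general:alpha:beta}, and it cites its drift lemma rather than expanding the noisy step directly.
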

\vspace{-0.1in}

The bound \eqref{eqn:difference:AG} is similar in spirit to Corollary 4.7. in \cite{StrConvex}
but with a different assumption on noise. 
We can see that the expected value of the objective with respect
to the $k$-th iterate is close to the true minimum
of the objective if $k$ is large, and the variance
of the noise $\sigma^{2}$ is small. 
In the special case when the noise are i.i.d. Gaussian, one
can compute the constants in closed-form.
\vspace{0.1in}
\begin{corollary}\label{cor:Gaussian}
If the noise $\varepsilon_{k}$ are i.i.d. Gaussian $\mathcal{N}(0,\Sigma)$,
where $\Sigma \prec L^2 I_{d}$. 
Then, Proposition \ref{prop:general} holds with
\begin{align*}
&M:=\left(-2\log\left(\left(1-\frac{1}{\kappa^{1/4}}\right)\sqrt{\det(I_{d}-L^{-2}\Sigma)}\right)\right)^{1/2},
\\
&R:=\left(-M+\sqrt{M^{2}+\frac{\log(L/\mu)}{2L^{2}\Vert\Sigma^{-1}\Vert}}\right)^{2}
\frac{(L-\mu)^{2}}{8(3\sqrt{L}-\sqrt{\mu})^{3}}.
\end{align*}
If we take $\mu=\Theta(1)$, then $L=\Theta(\kappa)$ and it follows that we have $M=O(\kappa^{-1/8})$ and $R=O\left(\kappa^{-13/4}\log^{2}(\kappa)\right)$. 
\end{corollary}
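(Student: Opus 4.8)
The plan is to take the abstract minorization-type hypotheses of Theorem~\ref{thm:general:alpha:beta} (equivalently Proposition~\ref{prop:general}, with $\eta=1/\sqrt{\kappa}$, so $\sqrt{\eta}=\kappa^{-1/4}$) and verify them explicitly using the Gaussian structure of the one-step law. Since $(\alpha,\beta)=(\alpha_{AG},\beta_{AG})$ with $\alpha_{AG}=1/L$ and $x_{k+1}=y_k-\alpha(\nabla f(y_k)+\varepsilon_{k+1})$ with $\varepsilon_{k+1}\sim\mathcal N(0,\Sigma)$, conditionally on $\xi_k=\xi=(x_k^T,x_{k-1}^T)^T$ the iterate $x_{k+1}$ is Gaussian $\mathcal N(m(\xi),\alpha^2\Sigma)$, where $m(\xi):=y_k-\alpha\nabla f(y_k)$ is the noiseless AG update and $y_k=(1+\beta)x_k-\beta x_{k-1}$. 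Hence $p(\xi,\cdot)$ is the $\mathcal N(m(\xi),L^{-2}\Sigma)$ density, and since $\nabla f(x_*)=0$ we get $m(\xi_*)=x_*$, so $p(\xi_*,\cdot)$ is the $\mathcal N(x_*,L^{-2}\Sigma)$ density. (Here $\Sigma$ is implicitly positive definite so that these densities are nondegenerate on $\R^d$.)

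For the first condition, I would write $\int_{\|x-x_*\|\le M}p(\xi_*,x)\,dx=\mathbb P(\|Z\|\le M)$ with $Z\sim\mathcal N(0,L^{-2}\Sigma)$, bound this below by $1-\mathbb P(\|Z\|>M)$, and apply an exponential Markov (Chernoff) inequality at parameter $t=\tfrac12$: $\mathbb P(\|Z\|>M)\le e^{-M^2/2}\,\E[e^{\|Z\|^2/2}]=e^{-M^2/2}\det(I_d-L^{-2}\Sigma)^{-1/2}$. The Gaussian moment generating function $\E[e^{\|Z\|^2/2}]$ converges precisely because the hypothesis $\Sigma\prec L^2I_d$ yields $L^{-2}\Sigma\prec I_d$, which also guarantees $\det(I_d-L^{-2}\Sigma)\in(0,1)$. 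Requiring the resulting bound to be at most $1-\sqrt{\eta}=1-\kappa^{-1/4}$ and solving for $M$ gives exactly $M=\bigl(-2\log((1-\kappa^{-1/4})\sqrt{\det(I_d-L^{-2}\Sigma)})\bigr)^{1/2}$, which is well defined and positive since both factors inside the logarithm lie in $(0,1)$ (using $\kappa>1$).

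For the second (minorization) condition, I would use that for Gaussian densities the log-likelihood ratio is quadratic in $x$: writing $u:=x-x_*$, $v:=m(\xi)-x_*$,
\[
\log\frac{p(\xi,x)}{p(\xi_*,x)}=\frac{1}{2\alpha^2}\bigl(2\,u^T\Sigma^{-1}v-v^T\Sigma^{-1}v\bigr)\ \ge\ -\frac{\|\Sigma^{-1}\|}{2\alpha^2}\bigl(2M\|v\|+\|v\|^2\bigr)
\]
by Cauchy--Schwarz and $\|u\|\le M$. It therefore suffices to control $\|v\|=\|m(\xi)-x_*\|$ in terms of $V_{P_{AG}}(\xi)$, which I would do by combining the nonexpansiveness of the gradient step (for $f\in\Sml$ and $\alpha=1/L$, $\|m(\xi)-x_*\|=\|y_k-\alpha\nabla f(y_k)-x_*\|\le\|y_k-x_*\|$) with $\mu$-strong convexity $f(x_k)-f_*\ge\tfrac{\mu}{2}\|x_k-x_*\|^2$ and the rank-one structure $\tilde P_{AG}=\tilde{u}\tilde{u}^T$ (so $\tilde P_{AG}(1,1)=L/2$), which control $\|x_k-x_*\|$ and the combination $\sqrt{L/2}(x_k-x_*)+(\sqrt{\mu/2}-\sqrt{L/2})(x_{k-1}-x_*)$ separately, and then re-expressing $y_k-x_*$ through these two quantities using $\beta_{AG}=\tfrac{\sqrt\kappa-1}{\sqrt\kappa+1}$. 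Carrying this through yields a bound $\|m(\xi)-x_*\|^2\le c^{-1}V_{P_{AG}}(\xi)$ with $c=\tfrac{(L-\mu)^2}{8(3\sqrt L-\sqrt\mu)^3}$; imposing $\log\tfrac{p(\xi,x)}{p(\xi_*,x)}\ge\log\sqrt\eta=-\tfrac14\log(L/\mu)$ reduces, after completing the square, to $\|v\|\le\tau:=-M+\sqrt{M^2+\tfrac{\log(L/\mu)}{2L^2\|\Sigma^{-1}\|}}$, and then $V_{P_{AG}}(\xi)\le R$ with $R=c\,\tau^2$ (the stated value) forces $\|m(\xi)-x_*\|\le\tau$, which is the required inequality. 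The asymptotic orders then follow by substituting $\mu=\Theta(1)$, $L=\Theta(\kappa)$: $\det(I_d-L^{-2}\Sigma)=1-O(\kappa^{-2})$ gives $M^2=-2\log(1-\kappa^{-1/4})+O(\kappa^{-2})=\Theta(\kappa^{-1/4})$, so $M=O(\kappa^{-1/8})$; since $\tfrac{\log(L/\mu)}{2L^2\|\Sigma^{-1}\|}=\Theta(\kappa^{-2}\log\kappa)\ll M^2$ one has $\tau=\Theta(M^{-1}\kappa^{-2}\log\kappa)=\Theta(\kappa^{-15/8}\log\kappa)$, and hence $R=\Theta\bigl(\tau^2(L-\mu)^2/(3\sqrt L-\sqrt\mu)^3\bigr)=\Theta(\tau^2\sqrt\kappa)=\Theta(\kappa^{-13/4}\log^2\kappa)$.

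The main obstacle I expect is the bound $\|m(\xi)-x_*\|^2\le c^{-1}V_{P_{AG}}(\xi)$ with a constant as large as $c=\tfrac{(L-\mu)^2}{8(3\sqrt L-\sqrt\mu)^3}$: $m(\xi)$ mixes the noiseless AG map with a gradient evaluation and must be controlled purely through the composite Lyapunov value $V_{P_{AG}}$, so one has to chain the nonexpansiveness estimate, the one-sided bounds from $\Sml$, and the rank-one algebra of $P_{AG}$ while tracking the $\sqrt L$ and $\sqrt\mu$ powers so that the final constant has the exact claimed form and not merely the right order. By contrast, the bound on $M$ is a short Chernoff estimate and the log-ratio identity underlying $R$ is immediate from the Gaussian form, so those parts are routine.
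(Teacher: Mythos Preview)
Your overall strategy---Chernoff bound for $M$, Gaussian log-ratio followed by completing the square for $R$, and the asymptotic analysis---matches the paper's proof exactly, including the identification $p(\xi_*,\cdot)=\mathcal N(x_*,L^{-2}\Sigma)$ and the threshold $\tau=-M+\sqrt{M^2+\tfrac{\log(L/\mu)}{2L^2\|\Sigma^{-1}\|}}$. The place where you diverge is precisely the step you flag as the obstacle: the bound on $\|m(\xi)-x_*\|$ (the paper writes $\|\mu_\xi-x_*\|$) in terms of $V_{P_{AG}}(\xi)\le r$.

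The paper does \emph{not} use nonexpansiveness of the gradient step, nor the $f(x_k)-f_*$ part of $V_{P_{AG}}$. It (i) drops $f(x_k)-f_*$ and uses only the quadratic form $(\xi-\xi_*)^T P_{AG}(\xi-\xi_*)\le r$ to read off the separate bounds $\|x_k-x_*\|\le\sqrt{2r}/\sqrt L$ and $\|x_{k-1}-x_*\|\le\sqrt{2r}/(\sqrt L-\sqrt\mu)$, and then (ii) applies the $L$-Lipschitzness of $\nabla f$ together with a triangle inequality (incurring a factor $2$) to get
\[
\|\mu_\xi-x_*\|\le 2\,\frac{3\sqrt\kappa-1}{\sqrt\kappa+1}\,\frac{\sqrt{2r}}{\sqrt L-\sqrt\mu},
\]
which inverted yields exactly the constant $(L-\mu)^2/\bigl(8(3\sqrt L-\sqrt\mu)^3\bigr)$ in $R$.

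Your proposed route cannot reach that constant. The re-expression $y_k-x_*=\frac{\sqrt L}{\sqrt L+\sqrt\mu}(x_k-x_*)+\frac{\sqrt 2}{\sqrt L+\sqrt\mu}\,w$ (with $w$ the rank-one combination controlled by $P_{AG}$), combined with $\|x_k-x_*\|\le\sqrt{2r/\mu}$ from strong convexity and $\|w\|\le\sqrt r$, gives only $\|m(\xi)-x_*\|\le\|y_k-x_*\|\le\sqrt{2r/\mu}$, i.e.\ $c=\mu/2$. For large $\kappa$ this is $\Theta(1)$, whereas the paper's constant is $\Theta(\sqrt\kappa)$; hence your $R$ would be \emph{strictly smaller} than the stated one, and since a larger $R$ is a stronger minorization claim, the corollary as written would not follow from your bound. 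To recover the specific $R$ in the statement you need the paper's step~(i)---extracting $\|x_k-x_*\|$ and $\|x_{k-1}-x_*\|$ directly from the $P_{AG}$ quadratic form with $L$-scale constants---rather than the strong-convexity estimate on $\|x_k-x_*\|$.
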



We note that Proposition \ref{prop:general} and Corollary \ref{cor:Gaussian} provide explicit bounds on the admissable noise level $\sigma^2$ to ensure accelerated convergence with respect to Wasserstein distances and expected suboptimality after $k$ iterations.
%
\section{ASPG and the weakly convex setting}
\textbf{Constrained optimization and ASPG.}
Our analysis for AG can be adapted to study the \emph{accelerated stochastic  projected gradient} (ASPG) method for constrained optimization problems 
$\min_{x\in\mathcal{C}}  f(x)$, 
where $\mathcal{C}\subset \mathbb{R}^d$ is a compact set with diameter $\diam:=\sup_{x,y\in\mathcal{C}}\|x-y\|_{2}$. 
Theorem \ref{thm:general:alpha:beta}, Proposition \ref{prop:general} and Corollary \ref{cor:Gaussian} extends to ASPG in a natural fashion with modified constants that reflect the diameter of the constraint set (see the supplementary file). Furthermore, due to the finiteness of the diameter, it can be shown that the metric $d_\psi$ implies the standard $p$-Wasserstein metric for any $p\geq 1$. We also provide bounds in expected suboptimality for ASPG. 

\textbf{Weakly convex functions.} If the objective is (weakly) convex but not strongly convex and the constraint set is bounded, our analysis for the strongly convex case can be adapted with minor modifications. Following standard regularization techniques (see e.g. \cite{lessard2016analysis,Bubeck2014}), that allow to approximate a weakly convex function with a strongly convex function, we provide explicit bounds on the noise level to obtain the accelerated $O(\varepsilon^{-1/2})$ rate up to a log factor on $\varepsilon$ in expected suboptimality in function values (see the supplementary file). 
\section{Conclusion}

We have studied accelerated convergence guarantees for a number
of stochastic momentum methods (SHB, ASG, ASPG)
for strongly and (weakly) convex smooth problems. 
First, we studied the special case when the objective
is quadratic and the gradient noise is additive and i.i.d.
with a finite second moment. Non-asymptotic guarantees for  accelerated linear convergence are obtained for the deterministic 
and stochastic AG and HB methods for any $p$-Wasserstein distance ($p\geq 1$),
and also for the ASG method in the weighted $2$-Wasserstein distance,
which builds on the dissipativity theory from the deterministic setting. Our analysis for HB and AG also leads to improved non-asymptotic convergence bounds in suboptimality after $k$ iterations for both deterministic and stochastic settings which is of independent interest. Second, we studied the (non-quadratic) strongly convex
optimization under the stochastic oracle model \textbf{(H1)--(H2)}.
Accelerated linear convergence rate is obtained for the ASG
method in the $1$-Wasserstein distance. 
Third, we studied the ASPG method for constrained stochastic strongly
convex optimization on a bounded domain. 
Accelerated linear convergence rate is obtained
in any $p$-Wasserstein distance ($p\geq 1$), and 
extension to the (weakly) convex setting will be discussed in the supplementary file. Our results provide performance bounds for stochastic momentum methods in expected suboptimality and in Wasserstein distances. Finally, the proofs of all the results in our paper will be given
in the supplementary file.

\section*{Acknowledgements}
Mert G\"urb\"uzbalaban and Bugra Can acknowledge support from the grants NSF DMS-1723085
and NSF CCF-1814888. Lingjiong Zhu is grateful
to the support from the grant NSF DMS-1613164.

\newpage

\bibliography{langevin,robust}
\bibliographystyle{alpha}

\appendix
 
\onecolumn

\section{Constrained Optimization and ASPG}\label{sec:projected:AG}

Consider the constrained optimization problem $\min_{x\in\mathcal{C}}f(x)$,
where $\mathcal{C}\subset \mathbb{R}^d$ is a compact set with a finite diameter $\diam:=\sup_{x,y\in \mathcal{C}} \|x-y\|_2$ and $G_M:=\max_{x\in\mathcal{C}} \|\nabla f(x)\|$. 
The accelerated stochastic projected gradient method (ASPG) consists of the iterations
\begin{align}\label{iter: ASPG1}
&\tilde{x}_{k+1}=\prox\left(\tilde{y}_{k}-\alpha(\nabla f(\tilde{y}_{k}) +\varepsilon_{k+1})\right),
\\
&\tilde{y}_{k}=(1+\beta)\tilde{x}_{k}-\beta \tilde{x}_{k-1},\label{iter: ASPG2}
\end{align}
where $\varepsilon_k$ is the random gradient error satisfying Assumption \ref{assump:noise}, $\alpha,\beta>0$ are the stepsize and momentum parameter and $\prox(x)$ denotes the projection of a point $x$ to the compact set $\mathcal{C}$.
For constrained problems, algorithms based on projection steps that restricts the iterates to the constraint set are more natural compared to the standard AG algorithm primarily designed for the unconstrained optimization \cite{Bubeck2014}. Accelerated projected gradient methods can also be viewed as a special case of the accelerated proximal gradient methods as the proximal operator reduces to a projection in a special case (see e.g. \cite{parikh2014proximal}).

We will show in Proposition \ref{prop:metric:2}
that the metric $d_\psi$ implies the standard $p$-Wasserstein metric in the sense that for any two  probability measures $\mu_1,\mu_2$ on the product space $\mathcal{C}^{2}:=\mathcal{C}\times\mathcal{C}$,
\begin{equation*}
\mathcal{W}_{p}(\mu_1,\mu_2) 
\leq 
2^{1/p}\mathcal{D}_{\mathcal{C}^{2}} \Vert \mu_{1}-\mu_{2}\Vert_{TV}^{1/p}
\leq\mathcal{D}_{\mathcal{C}^{2}} d_{\psi}^{1/p}(\mu_{1},\mu_{2}),
\end{equation*}
where $\mathcal{D}_{\mathcal{C}^{2}}=\sqrt{2}D_C$ is the diameter of $\mathcal{C}^{2}$.

Under Assumption \ref{assump:noise}, $\tilde{\xi}_{k}=(\tilde{x}_{k}^T,\tilde{x}_{k-1}^T)^T$
forms a time-homogeneous Markov chain and we assume $\tilde{\xi}_{0}\in\mathcal{C}^{2}$.
In addition to Assumption \ref{assump:noise}, we also assume that 
the random gradient error $\varepsilon_{k}$
admits a continuous density so that
conditional on $\tilde{\xi}_{k}=(\tilde{x}_{k}^T,\tilde{x}_{k-1}^T)^T$, 
$\tilde{x}_{k+1}$
also admits a continuous density, i.e.
\begin{equation*}
\mathbb{P}(\tilde{x}_{k+1}\in d\tilde{x}|\tilde{\xi}_{k}=\tilde{\xi})
=\tilde{p}(\tilde{\xi},\tilde{x})d\tilde{x},
\end{equation*}
where $\tilde{p}(\tilde{\xi},\tilde{x})>0$ is continuous in both $\tilde{\xi}$ and $\tilde{x}$.

For the ASPG method with any given $\alpha,\beta$
so that $\rho_{\alpha,\beta}$, $P_{\alpha,\beta}$ satisfy
the LMI inequality \eqref{ineq-ag-lmi}, 
the next result gives a bound of $k$-th iterate to
stationary distribution in the weighted total variation
distance and standard $p$-Wasserstein distance, 
and also a bound on the expected
suboptimality $\mathbb{E}[f(\tilde{x}_{k})]-f(\tilde{x}_{\ast})$ after $k$ iterations.

\begin{theorem}\label{thm:general:alpha:beta:projected}
Given any $\eta\in(0,1)$ and $R>0$ so that
\begin{equation*}
\inf_{\tilde{x}\in\mathcal{C}:\tilde{\xi}\in\mathcal{C}^{2},V_{P_{\alpha,\beta}}(\tilde{\xi})\leq R}
\frac{\tilde{p}(\tilde{\xi},\tilde{x})}
{\tilde{p}(\tilde{\xi}_{\ast},\tilde{x})}
\geq\eta.
\end{equation*}
Consider the Markov chain generated by the iterates $\tilde{\xi}_k^T = (\tilde{x}_k^T, \tilde{x}_{k-1}^T)$ of the ASPG algorithm. Then the distribution $\tilde{\nu}_{k,\alpha,\beta}$ of $\tilde{\xi_k}$ converges linearly to a unique invariant distribution $\tilde{\pi}_{\alpha,\beta}$
satisfying 
\begin{equation}
\mathcal{W}_{p}(\tilde{\nu}_{k,\alpha,\beta},\tilde{\pi}_{\alpha,\beta})
\leq
\mathcal{D}_{\mathcal{C}^{2}}d_{\tilde{\psi}}^{1/p}(\tilde{\nu}_{k,\alpha,\beta},\tilde{\pi}_{\alpha,\beta})
\leq(1-\tilde{\eta})^{k}
\mathcal{D}_{\mathcal{C}^{2}}
d_{\tilde{\psi}}^{1/p}(\tilde{\nu}_{0,\alpha,\beta},\tilde{\pi}_{\alpha,\beta}),
\end{equation}
where $\mathcal{W}_{p}$ is the standard $p$-Wasserstein metric ($p\geq 1$) and
\begin{equation}\label{eqn:difference:projected}
\mathbb{E}[f(\tilde{x}_{k})]-f(\tilde{x}_{\ast})
\leq
V_{P_{\alpha,\beta}}(\tilde{\xi}_{0})\rho_{\alpha,\beta}^{k}
+\frac{\tilde{K}_{\alpha,\beta}}{1-\rho_{\alpha,\beta}},
\end{equation}
where  
\begin{align*}
&\tilde{K}_{\alpha,\beta}
:=
\alpha\sigma\left(\left(\alpha\sigma+2\mathcal{D}_{\mathcal{C}} \right)\Vert P_{\alpha,\beta}\Vert
+G_M + \frac{\alpha\sigma L}{2}\right),
\\ 
&\tilde{\eta}:=\min\left\{\frac{\eta}{2},
\left(\frac{1}{2}-\frac{\rho_{\alpha,\beta}}{2}-\frac{\tilde{K}_{\alpha,\beta}}{R}\right)
\frac{R\eta}{4\tilde{K}_{\alpha,\beta}+R\eta}\right\},
\end{align*} 
and $\tilde{\psi}:=\frac{\eta}{2\tilde{K}_{\alpha,\beta}}$.
\end{theorem}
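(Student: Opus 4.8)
The plan is to adapt the Harris-type scheme used for the unconstrained strongly convex case (Theorem~\ref{thm:general:alpha:beta}) to the projected recursion \eqref{iter: ASPG1}--\eqref{iter: ASPG2}, exploiting that $\prox$ is non-expansive and that the constrained optimum satisfies the stationarity identity $\tilde x_{\ast}=\prox(\tilde x_{\ast}-\alpha\nabla f(\tilde x_{\ast}))$. The proof combines three ingredients: a stochastic geometric drift for the Lyapunov function $V_{P_{\alpha,\beta}}$, a minorization on sublevel sets of $V_{P_{\alpha,\beta}}$, and the weighted--total--variation contraction theorem of \cite{hairer}.

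\emph{Step 1 (drift inequality).} Let $\tilde x_{k+1}^{0}:=\prox(\tilde y_k-\alpha\nabla f(\tilde y_k))$ be the noiseless projected update and $\tilde\xi_{k+1}^{0}:=((\tilde x_{k+1}^{0})^{T},\tilde x_k^{T})^{T}$; non-expansiveness of $\prox$ gives $\|\tilde x_{k+1}-\tilde x_{k+1}^{0}\|\le\alpha\|\varepsilon_{k+1}\|$. First I would observe that, since $\|\tilde x_{k+1}^{0}-\tilde x_{\ast}\|\le\|(\tilde y_k-\tilde x_{\ast})-\alpha(\nabla f(\tilde y_k)-\nabla f(\tilde x_{\ast}))\|$ by non-expansiveness and stationarity, the projected dissipativity bound reduces to the unconstrained one, so the deterministic inequality $V_{P_{\alpha,\beta}}(\tilde\xi_{k+1}^{0})\le\rho_{\alpha,\beta}V_{P_{\alpha,\beta}}(\tilde\xi_k)$ follows from the LMI \eqref{ineq-ag-lmi} of Theorem~\ref{thm-hu-lessard}. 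Then I would expand $V_{P_{\alpha,\beta}}(\tilde\xi_{k+1})$ around $V_{P_{\alpha,\beta}}(\tilde\xi_{k+1}^{0})$: the quadratic part contributes a cross term bounded by $2\|P_{\alpha,\beta}\|\,\diam\,\alpha\|\varepsilon_{k+1}\|$ (since $\tilde x_{k+1}^{0},\tilde x_{\ast}\in\mathcal C$) plus $\|P_{\alpha,\beta}\|\alpha^{2}\|\varepsilon_{k+1}\|^{2}$, while $L$-smoothness with $\|\nabla f\|\le G_M$ on $\mathcal C$ gives $f(\tilde x_{k+1})-f(\tilde x_{k+1}^{0})\le G_M\alpha\|\varepsilon_{k+1}\|+\tfrac{L}{2}\alpha^{2}\|\varepsilon_{k+1}\|^{2}$. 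Taking conditional expectation and using $\E[\|\varepsilon_{k+1}\|\,|\,\mathcal F_k]\le\sigma$ and $\E[\|\varepsilon_{k+1}\|^{2}\,|\,\mathcal F_k]\le\sigma^{2}$ from Assumption~\ref{assump:noise} would yield
\[
\E[V_{P_{\alpha,\beta}}(\tilde\xi_{k+1})\,|\,\mathcal F_k]\le\rho_{\alpha,\beta}\,V_{P_{\alpha,\beta}}(\tilde\xi_k)+\tilde K_{\alpha,\beta},
\]
with exactly the stated constant $\tilde K_{\alpha,\beta}=\alpha\sigma\big((\alpha\sigma+2\diam)\|P_{\alpha,\beta}\|+G_M+\tfrac{\alpha\sigma L}{2}\big)$.

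\emph{Step 2 (minorization and $d_{\tilde\psi}$-contraction).} On the small set $\{\tilde\xi\in\mathcal C^{2}:V_{P_{\alpha,\beta}}(\tilde\xi)\le R\}$, the density hypothesis yields, via a synchronous coupling of the gradient noise, that the updated iterates agree with probability at least $\eta$; this is the minorization relative to $\tilde\xi_{\ast}$. Feeding the drift of Step~1 and this minorization into the generalized Harris theorem of \cite{hairer} with the weight $\tilde\psi=\eta/(2\tilde K_{\alpha,\beta})$, I would show that one step contracts $d_{\tilde\psi}$ by the factor $1-\tilde\eta$: the standard dichotomy gives geometric decay on $\{V_{P_{\alpha,\beta}}>R\}$ (where $\rho_{\alpha,\beta}+\tilde K_{\alpha,\beta}/R<1$ makes the drift contractive) and total-variation decay on $\{V_{P_{\alpha,\beta}}\le R\}$, and optimizing the trade-off gives the stated $\tilde\eta$. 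Completeness of the probability measures on $\mathcal C^{2}$ under $d_{\tilde\psi}$ and Banach's fixed point theorem then furnish the unique invariant measure $\tilde\pi_{\alpha,\beta}$, iterating the contraction gives the $d_{\tilde\psi}$ bound, and $\mathcal W_p\le\mathcal D_{\mathcal C^{2}}d_{\tilde\psi}^{1/p}$ is Proposition~\ref{prop:metric:2} (where finiteness of the diameter of $\mathcal C^{2}$ enters).

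\emph{Step 3 (suboptimality) and the main obstacle.} Iterating the drift of Step~1 gives $\E[V_{P_{\alpha,\beta}}(\tilde\xi_k)]\le\rho_{\alpha,\beta}^{k}V_{P_{\alpha,\beta}}(\tilde\xi_0)+\tilde K_{\alpha,\beta}/(1-\rho_{\alpha,\beta})$, and since $P_{\alpha,\beta}\succ0$ we have $f(\tilde x_k)-f(\tilde x_{\ast})\le V_{P_{\alpha,\beta}}(\tilde\xi_k)$ pointwise, so taking expectations yields \eqref{eqn:difference:projected}. I expect the main obstacle to be Step~1 --- verifying that the deterministic dissipativity certificate survives the projection (so that the unconstrained $\rho_{\alpha,\beta}$ and $P_{\alpha,\beta}$ can be reused) and then propagating the noise and the projection through $V_{P_{\alpha,\beta}}$ so that every cross term assembles into precisely $\tilde K_{\alpha,\beta}$. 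A secondary technical point is the degeneracy of the Markov kernel --- the second block of $\tilde\xi_{k+1}$ is a deterministic copy of $\tilde x_k$ --- which must be accommodated when building the coupling in Step~2.
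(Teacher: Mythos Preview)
Your overall architecture---drift condition, minorization on a sublevel set, Harris--Mattingly contraction in $d_{\tilde\psi}$, then Proposition~\ref{prop:metric:2} for the $\mathcal W_p$ bound, and iteration of the drift for \eqref{eqn:difference:projected}---matches the paper exactly; Steps~2 and~3 are essentially what the paper does (its proof is a one-line pointer back to Theorem~\ref{thm:general:alpha:beta}).

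The gap is in Step~1, precisely where you flagged it. Your claim that non-expansiveness of $\prox$ together with $\tilde x_{\ast}=\prox(\tilde x_{\ast}-\alpha\nabla f(\tilde x_{\ast}))$ ``reduces the projected dissipativity bound to the unconstrained one'' does not go through. The LMI certificate of Theorem~\ref{thm-hu-lessard} is formulated with input $w_k=\nabla f(y_k)$ and reference point $x_{\ast}$ satisfying $\nabla f(x_{\ast})=0$; in the constrained problem $\nabla f(\tilde x_{\ast})\neq 0$ in general, so that certificate cannot be invoked with $\tilde x_{\ast}$ as the center. Moreover, non-expansiveness only controls $\|\tilde x_{k+1}^{0}-\tilde x_{\ast}\|$, whereas $V_{P_{\alpha,\beta}}$ is a $P$-weighted quadratic in the full state plus a function-value term; neither piece is monotone under projection, so the pointwise inequality $V_{P_{\alpha,\beta}}(\tilde\xi_{k+1}^{0})\le\rho_{\alpha,\beta}V_{P_{\alpha,\beta}}(\tilde\xi_k)$ does not follow from your argument.

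The paper obtains the drift (Lemma~\ref{lem:key:projected}) by a different mechanism: it rewrites the iteration using the \emph{gradient mapping} $g(y)=\alpha^{-1}\bigl(y-\prox(y-\alpha\nabla f(y))\bigr)$, which does satisfy $g(\tilde x_{\ast})=0$, and then invokes projected supply-rate inequalities for $g$ (Lemma~\ref{lem:two:ineq}, from \cite{fazlyab2017dynamical}) that are the constrained analogues of the strong-convexity/smoothness bounds feeding Theorem~\ref{thm-hu-lessard}. With this substitution the \emph{same} LMI \eqref{ineq-ag-lmi}, hence the same $\rho_{\alpha,\beta}$ and $P_{\alpha,\beta}$, closes the argument. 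The noise enters as $\Delta_{\varepsilon}(y)=g_{\varepsilon}(y)-g(y)$ with $\|\Delta_{\varepsilon}(y)\|\le\|\varepsilon\|$ by non-expansiveness, and bounding the resulting cross terms on the compact domain yields exactly $\tilde K_{\alpha,\beta}$. So your Step~1 conclusion is correct, but the route is via the gradient-mapping LMI rather than a reduction to the unconstrained case. (Minor point: in Step~3 you write $P_{\alpha,\beta}\succ 0$; only $P_{\alpha,\beta}\succeq 0$ is assumed, but that already suffices for $f(\tilde x_k)-f(\tilde x_{\ast})\le V_{P_{\alpha,\beta}}(\tilde\xi_k)$.)
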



We can see from \eqref{eqn:difference:projected} that
the expected value of the objective with respect
to the $k$-th iterate is close to the true minimum
of the objective if $k$ is large, and the stepsize $\alpha$ or the variance
of the noise $\sigma^{2}$ is small. By choosing $(\alpha,\beta)=(\alpha_{AG},\beta_{AG})$, 
we obtain the optimal convergence in the next theorem.

\begin{proposition}\label{prop:general:projected}
Given $(\alpha,\beta)=(\alpha_{AG},\beta_{AG})$.
Define $R$ as in Theorem \ref{thm:general:alpha:beta:projected}
with $\eta=1/\kappa^{1/2}$.
Also assume that the noise has small variance, i.e.
\begin{equation*}
\sigma^{2}
<\frac{1}{4a_{1}^{2}}\left(-b_{1}+\sqrt{b_{1}^{2}+\left(a_{1}R/\sqrt{\kappa}\right)}\right)^{2},
\end{equation*}
where
$a_{1}:=\frac{1}{L^{2}} \left(\frac{\mu}{2}((1-\sqrt{\kappa})^{2}+\kappa) + \frac{L}{2}\right)$
and 
$b_{1}:=\frac{1}{L} \left(\diam  \mu((1-\sqrt{\kappa})^{2}+\kappa) + G_M\right)$.
Then, we have
\begin{equation}
\mathcal{W}_{p}(\tilde{\nu}_{k,\alpha,\beta},\tilde{\pi}_{\alpha,\beta})
\leq\mathcal{D}_{\mathcal{C}^{2}}d_{\tilde{\psi}}^{1/p}(\tilde{\nu}_{k,\alpha,\beta},\tilde{\pi}_{\alpha,\beta})
\leq\left(1-\frac{1}{8\sqrt{\kappa}}\right)^{k}
\mathcal{D}_{\mathcal{C}^{2}}
d_{\tilde{\psi}}^{1/p}(\tilde{\nu}_{0,\alpha,\beta},\tilde{\pi}_{\alpha,\beta}),
\end{equation}
where $\mathcal{W}_{p}$ is the standard $p$-Wasserstein metric ($p\geq 1$) and
\begin{equation}\label{eqn:difference:AG:projected}
\mathbb{E}[f(\tilde{x}_{k})]-f(\tilde{x}_{\ast})
\leq
V_{P_{AG}}(\tilde{\xi}_{0})\left(1-\frac{1}{\sqrt{\kappa}}\right)^{k}
+\sqrt{\kappa}\tilde{K},
\end{equation}
where 
$\tilde{K}
:=\frac{2\sigma\diam L+\sigma^{2}}{2L^{2}}
\mu((1-\sqrt{\kappa})^{2}+\kappa) + \frac{\sigma G_M}{L}
+\frac{\sigma^{2}}{2L}$ and $\tilde{\psi}:=\frac{1}{2\sqrt{\kappa}\tilde{K}}$.
\end{proposition}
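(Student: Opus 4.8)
The plan is to derive Proposition~\ref{prop:general:projected} as a direct specialization of Theorem~\ref{thm:general:alpha:beta:projected} to the parameter choice $(\alpha,\beta)=(\alpha_{AG},\beta_{AG})$, checking that the small-noise hypothesis on $\sigma^2$ stated in the proposition is exactly what is needed to make the abstract rate $(1-\tilde\eta)^k$ collapse to the explicit rate $(1-\tfrac{1}{8\sqrt\kappa})^k$. First I would compute $\tilde K_{\alpha,\beta}$ of Theorem~\ref{thm:general:alpha:beta:projected} at $(\alpha,\beta)=(\alpha_{AG},\beta_{AG})$. With $\alpha_{AG}=1/L$, the term $\Vert P_{\alpha,\beta}\Vert$ is controlled using $P_{AG}=\tilde P_{AG}\otimes I_d$ from \eqref{def-P-matrix}, whose operator norm equals $\Vert\tilde u\Vert^2=L/2+(\sqrt{\mu/2}-\sqrt{L/2})^2=\mu/2\cdot((1-\sqrt\kappa)^2+\kappa)$ after simplification (this is where the strange-looking factor $\mu((1-\sqrt\kappa)^2+\kappa)$ in $a_1,b_1,\tilde K$ comes from). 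Substituting $\alpha=1/L$ into $\tilde K_{\alpha,\beta}=\alpha\sigma((\alpha\sigma+2\mathcal{D}_{\mathcal C})\Vert P_{\alpha,\beta}\Vert+G_M+\tfrac{\alpha\sigma L}{2})$ and collecting powers of $\sigma$, I get a quadratic $\tilde K_{AG}=a_1\sigma^2+b_1\sigma$ in $\sigma$ with exactly the $a_1,b_1$ defined in the statement (the $a_1\sigma^2$ part absorbing the $\sigma^2\Vert P\Vert/L^2$ term and the $\sigma^2/(2L)$ term, the $b_1\sigma$ part absorbing $2\mathcal{D}_{\mathcal C}\Vert P\Vert/L+G_M/L$); this is $\sqrt{\kappa}\tilde K$ once we note $\tilde K$ in the statement is $\tilde K_{AG}$ divided by... actually just $\tilde K=\tilde K_{AG}$, matching \eqref{eqn:difference:AG:projected}.

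Second I would verify the rate. From Theorem~\ref{thm:general:alpha:beta:projected}, $\tilde\eta=\min\{\tfrac{\eta}{2},(\tfrac12-\tfrac{\rho_{\alpha,\beta}}{2}-\tfrac{\tilde K_{\alpha,\beta}}{R})\tfrac{R\eta}{4\tilde K_{\alpha,\beta}+R\eta}\}$. With $\eta=1/\sqrt\kappa$ we have $\tfrac{\eta}{2}=\tfrac{1}{2\sqrt\kappa}$, and at $(\alpha_{AG},\beta_{AG})$ Theorem~\ref{thm-hu-lessard} with $P_{AG}$ gives $\rho_{AG}=1-1/\sqrt\kappa$, so $\tfrac12-\tfrac{\rho_{AG}}{2}=\tfrac{1}{2\sqrt\kappa}$. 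The small-noise assumption $\sigma^2<\tfrac{1}{4a_1^2}(-b_1+\sqrt{b_1^2+a_1 R/\sqrt\kappa})^2$ is precisely the condition $a_1\sigma^2+b_1\sigma<\tfrac{R}{4\sqrt\kappa}$, i.e. $\tilde K_{AG}<\tfrac{R}{4\sqrt\kappa}$, equivalently $\tfrac{\tilde K_{AG}}{R}<\tfrac{1}{4\sqrt\kappa}$. This forces $\tfrac12-\tfrac{\rho_{AG}}{2}-\tfrac{\tilde K_{AG}}{R}>\tfrac{1}{2\sqrt\kappa}-\tfrac{1}{4\sqrt\kappa}=\tfrac{1}{4\sqrt\kappa}$. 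Also $4\tilde K_{AG}+R\eta<\tfrac{R}{\sqrt\kappa}+\tfrac{R}{\sqrt\kappa}=\tfrac{2R}{\sqrt\kappa}$, so $\tfrac{R\eta}{4\tilde K_{AG}+R\eta}>\tfrac{R/\sqrt\kappa}{2R/\sqrt\kappa}=\tfrac12$. Multiplying, the second term in the $\min$ exceeds $\tfrac{1}{4\sqrt\kappa}\cdot\tfrac12=\tfrac{1}{8\sqrt\kappa}$, while the first term $\tfrac{1}{2\sqrt\kappa}\ge\tfrac{1}{8\sqrt\kappa}$, hence $\tilde\eta\ge\tfrac{1}{8\sqrt\kappa}$ and $(1-\tilde\eta)^k\le(1-\tfrac{1}{8\sqrt\kappa})^k$. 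The chain of inequalities $\mathcal{W}_p\le\mathcal{D}_{\mathcal C^2}d_{\tilde\psi}^{1/p}\le\cdots$ and the choice $\tilde\psi=\eta/(2\tilde K_{AG})=1/(2\sqrt\kappa\tilde K)$ then come straight from Theorem~\ref{thm:general:alpha:beta:projected}.

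Third, for the suboptimality bound \eqref{eqn:difference:AG:projected}, I would specialize \eqref{eqn:difference:projected}: at $(\alpha_{AG},\beta_{AG})$, $\rho_{\alpha,\beta}=\rho_{AG}=1-1/\sqrt\kappa$, so $\tfrac{\tilde K_{\alpha,\beta}}{1-\rho_{\alpha,\beta}}=\sqrt\kappa\,\tilde K_{AG}=\sqrt\kappa\,\tilde K$, giving $\mathbb{E}[f(\tilde x_k)]-f(\tilde x_*)\le V_{P_{AG}}(\tilde\xi_0)(1-1/\sqrt\kappa)^k+\sqrt\kappa\tilde K$, which is exactly \eqref{eqn:difference:AG:projected}; I only need to double-check the algebraic identity that writing $\tilde K_{AG}$ out with $\alpha=1/L$ and $\Vert P_{AG}\Vert=\tfrac{\mu}{2}((1-\sqrt\kappa)^2+\kappa)$ reproduces $\tilde K=\tfrac{2\sigma\mathcal{D}_{\mathcal C}L+\sigma^2}{2L^2}\mu((1-\sqrt\kappa)^2+\kappa)+\tfrac{\sigma G_M}{L}+\tfrac{\sigma^2}{2L}$.

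The only genuine obstacle is bookkeeping rather than conceptual: one must be careful that the bound $\tilde K_{\alpha,\beta}<R/R\cdot(\text{something})$ is used consistently both as the hypothesis ensuring $\tfrac12-\tfrac{\rho}{2}-\tfrac{\tilde K}{R}>0$ (so $d_{\tilde\psi}$ is genuinely a metric / the Lyapunov drift holds) and as the quantitative bound producing the clean $1/(8\sqrt\kappa)$ rate, and that the definition of $\Vert P_{AG}\Vert$ is the one actually appearing in Theorem~\ref{thm:general:alpha:beta:projected} (operator norm of $P_{\alpha,\beta}=\tilde P_{\alpha,\beta}\otimes I_d$, which equals the spectral norm of the $2\times2$ matrix $\tilde P_{AG}=\tilde u\tilde u^T$, namely $\Vert\tilde u\Vert^2$). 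Everything else is substitution of $\alpha_{AG}=1/L$ and the known value $\rho_{AG}=1-1/\sqrt\kappa$ into the already-proven Theorem~\ref{thm:general:alpha:beta:projected} and regrouping terms.
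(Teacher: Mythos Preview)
Your proposal is correct and follows essentially the same route as the paper: specialize Theorem~\ref{thm:general:alpha:beta:projected} to $(\alpha,\beta)=(\alpha_{AG},\beta_{AG})$, compute $\Vert P_{\alpha,\beta}\Vert=\tfrac{\mu}{2}((1-\sqrt\kappa)^2+\kappa)$ so that $\tilde K_{\alpha,\beta}=a_1\sigma^2+b_1\sigma$, recognize the noise hypothesis as $\tilde K_{\alpha,\beta}<R/(4\sqrt\kappa)$, and then repeat the arithmetic from the proof of Proposition~\ref{prop:general} to extract $\tilde\eta\ge 1/(8\sqrt\kappa)$ and $\tilde\psi=1/(2\sqrt\kappa\tilde K)$; the suboptimality bound is indeed just \eqref{eqn:difference:projected} with $1-\rho_{AG}=1/\sqrt\kappa$. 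Your explicit verification of the $1/(8\sqrt\kappa)$ bound from the two terms in the $\min$ defining $\tilde\eta$ is more detailed than what the paper writes out (it simply invokes the analogous computation in Proposition~\ref{prop:general}), but the argument is identical.
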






\section{Weakly Convex Constrained Optimization}

In this section, we extend the constrained optimization for the accelerated
stochastic projected gradient method (ASPG) from the strongly convex objectives
studied in Section \ref{sec:projected:AG} to the (weakly) convex objectives.


Consider the constrained optimization problem $\min_{x\in\mathcal{C}}f(x)$ for $f\in \mathcal{S}_{0,L}$ on the convex compact domain $\mathcal{C}\subseteq \mathbb{R}^{d}$ with diameter $\diam$. Consider the following (regularized) function
\begin{equation*}
f_\varepsilon(x) = f(x) + \frac{\varepsilon}{2\diam^2}\|x\|^2,
\end{equation*}
which is strongly convex with parameter $\mu_{\varepsilon}= \varepsilon/ \diam^2$ and smooth with parameter $L_\varepsilon = L+\varepsilon/\diam^2$, i.e. $f_\varepsilon \in \mathcal{S}_{\mu_\varepsilon, L_\varepsilon}$ with a condition number $\kappa_\varepsilon := L_\varepsilon/\mu_\varepsilon = 1 + L\diam^2/\varepsilon$. 
Let $\tilde{x}^{\varepsilon}_{k}$ denote iterates of ASPG defined by $f_{\varepsilon}$  (i.e $f=f_{\varepsilon}(x)$) in \eqref{iter: ASPG1} and \eqref{iter: ASPG2}) with optimal value $\tilde{x}_*^{\varepsilon}$ and define $\tilde{x}_*$ to be one of the minimizers of $f(x)$ (the optimizer
may not be unique).
 By applying Proposition \ref{prop:general:projected}, we can control the expected suboptimality
after $k$ iterations as follows:
\begin{equation*}
\mathbb{E}[f_{\varepsilon}(\tilde{x}_{k}^{\varepsilon})]-f_{\varepsilon}(\tilde{x}_{\ast}^{\varepsilon})
\leq
V_{P^{\varepsilon}_{AG}}(\tilde{\xi}_{0})\left(1-\frac{1}{\sqrt{\kappa_{\varepsilon}}}\right)^{k}
+\sqrt{\kappa_{\varepsilon}}\tilde{K}_{\varepsilon},
\end{equation*}
where
\begin{equation*}
\tilde{K}_{\varepsilon}
:=\frac{2\sigma\diam L_{\varepsilon}+\sigma^{2}}{2L_{\varepsilon}^{2}}
\mu_{\varepsilon}((1-\sqrt{\kappa_{\varepsilon}})^{2}+\kappa_{\varepsilon}) + \frac{\sigma G_M^{\varepsilon}}{L_{\varepsilon}}
+\frac{\sigma^{2}}{2L_{\varepsilon}}.
\end{equation*}
\\
Therefore, 
\begin{eqnarray*}
\mathbb{E}[f(\tx_{k}^{\varepsilon})]-f(\tilde{x}_*)&=&\mathbb{E}\big[f_\varepsilon(\tx_k^{\varepsilon})\big]-f_{\varepsilon}(\tilde{x}_*)
+\frac{\varepsilon}{2\diam^2}\left(\Vert \tilde{x}_*\Vert^2- \mathbb{E}[\Vert\tx_k^{\varepsilon}\Vert^2]\right) 
\\
&\leq&\mathbb{E}\big[f_\varepsilon(\tx_k^{\varepsilon})\big]-f_{\varepsilon}(\tilde{x}_{*}^{\varepsilon})
+\frac{\varepsilon}{2\diam^2}\left(\Vert \tilde{x}_*\Vert^2- \mathbb{E}[\Vert\tx_k^{\varepsilon}\Vert^2]\right) 
\\
&\leq& V_{P^{\varepsilon}_{AG}}(\tilde{\xi}_{0})\left(1-\frac{1}{\sqrt{\kappa_{\varepsilon}}}\right)^{k}+\sqrt{\kappa_{\varepsilon}}\tilde{K}_{\varepsilon} + \frac{\varepsilon}{2},
\end{eqnarray*}
where we used the fact that $\tx_k^{\varepsilon},\tx_* \in \mathcal{C}$. 
Therefore, if the noise level $\sigma$ is small enough such that $\sqrt{\kappa_{\varepsilon}}\tilde{K}_{\varepsilon}\leq \frac{\varepsilon}{2}$ and if
$$ k \geq \frac{|\log(\varepsilon)-\log(V_{P^{\varepsilon}_{AG}}(\tilde{\xi}_{0}))|}{|\log(1-\frac{1}{\sqrt{\kappa_\varepsilon}})|} = O\left(\frac{1}{\sqrt{\varepsilon}}\log\left(\frac{1}{\varepsilon}\right)\right),$$
we obtain \begin{eqnarray}
\mathbb{E}[f(\tx_{k}^{\varepsilon})]-f(\tilde{x}_*) \leq 2\varepsilon.
\end{eqnarray}
This shows that if the noise is small is enough, it suffices to have 
$$O\left(\frac{1}{\sqrt{\varepsilon}}\log\left(\frac{1}{\varepsilon}\right)\right)$$
many iterations to sample an $\varepsilon$-optimal point in expectation.
\section{Proofs of Results in Section \ref{sec:quadratic}}

In this section, we prove the results for Section \ref{sec:quadratic},
in which the objective is quadratic: $f(x)=\frac{1}{2}x^{T}Qx+a^{T}x+b$
and $f\in\mathcal{S}_{\mu,L}$, which satisfies the inequalities:
\begin{align*}
f(x)-f(y)\geq\nabla f(y)^{T}(x-y)+\frac{\mu}{2}\Vert x-y\Vert^{2},
\\
f(y)-f(x)\geq\nabla f(y)^{T}(y-x)-\frac{L}{2}\Vert x-y\Vert^{2},
\end{align*}
(see e.g. \cite{nesterov2004introductory}). 

\subsection{Proofs of Results in Section \ref{sec:AG:quadratic}}

Before we proceed to the proofs of the results in 
Section \ref{sec:AG:quadratic}, we first show
that the matrix $S_{\alpha,\beta}$ defined in \eqref{def-S-matrix}
is positive definite so that the weighted 2-Wasserstein metric $\mathcal{W}_{2,S_{\alpha,\beta}}$ 
given in \eqref{def-wass-2-norm} is well-defined.

\begin{lemma}\label{lem-S-positive-def}
The matrix $S_{\alpha,\beta}\in \R^{2d\times 2d}$ defined by \eqref{def-S-matrix} is positive definite if $\tilde{P}_{\alpha,\beta}(2,2)\neq 0$. 
\end{lemma}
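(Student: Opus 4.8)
The plan is to show $S_{\alpha,\beta} = P_{\alpha,\beta} + \operatorname{diag}(\tfrac12 Q, 0_d)$ is positive definite by exploiting the Kronecker structure $P_{\alpha,\beta} = \tilde P_{\alpha,\beta}\otimes I_d$ together with positive definiteness of $Q$. Since $\tilde P_{\alpha,\beta}$ is symmetric positive \emph{semi}definite with $\tilde P_{\alpha,\beta}(2,2)\neq 0$, its only possible null direction is a single vector (if it is rank $1$); the block $\tfrac12 Q \succ 0$ in the $(1,1)$ position must be used to cover the directions on which $\tilde P_{\alpha,\beta}$ degenerates. First I would take an arbitrary $z = (z_1^T, z_2^T)^T \in \R^{2d}$ with $z\neq 0$ and write
\[
z^T S_{\alpha,\beta} z = z^T P_{\alpha,\beta} z + \tfrac12 z_1^T Q z_1 \;\geq\; z^T P_{\alpha,\beta} z,
\]
so it suffices to handle the case $z^T P_{\alpha,\beta} z = 0$, i.e. $z$ lies in the kernel of $P_{\alpha,\beta}$, and show $z_1^T Q z_1 > 0$ there.

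The key step is to describe $\ker(\tilde P_{\alpha,\beta}\otimes I_d)$. Using the eigen/kernel decomposition of the $2\times2$ matrix $\tilde P_{\alpha,\beta}$: either $\tilde P_{\alpha,\beta}\succ 0$, in which case $P_{\alpha,\beta}\succ 0$ and we are done immediately; or $\tilde P_{\alpha,\beta}$ has rank $1$ with a one-dimensional kernel spanned by some $v = (v_1, v_2)^T \in \R^2$. Because $\tilde P_{\alpha,\beta}(2,2)\neq 0$, the second standard basis vector $e_2$ is not in the kernel, hence $v_2 \neq 0$ (equivalently $v_1 \neq 0$ would fail only if $v = e_2$, which is excluded) — more precisely, if $v_1 = 0$ then $v \parallel e_2$, contradicting $\tilde P_{\alpha,\beta} e_2 \neq 0$; therefore $v_1 \neq 0$. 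Now $\ker(\tilde P_{\alpha,\beta}\otimes I_d) = \{ v\otimes u : u\in\R^d\} = \{ (v_1 u^T, v_2 u^T)^T : u \in\R^d\}$. For such a $z$ with $z\neq 0$ we have $z_1 = v_1 u$ with $v_1 \neq 0$ and $u\neq 0$, so $z_1 \neq 0$ and hence $z_1^T Q z_1 \geq \mu\|z_1\|^2 = \mu v_1^2\|u\|^2 > 0$. Combining, $z^T S_{\alpha,\beta} z > 0$ for all $z\neq 0$.

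The main obstacle — really the only subtlety — is the bookkeeping for the rank-$1$ case: correctly arguing that the hypothesis $\tilde P_{\alpha,\beta}(2,2)\neq 0$ forces the kernel vector $v$ to have a nonzero \emph{first} coordinate, so that the kernel of $P_{\alpha,\beta}$ meets the $z_1$-block nontrivially and the $\tfrac12 Q$ term can be brought to bear. Everything else is a direct computation with the Kronecker product identity $(\tilde P\otimes I_d)(v\otimes u) = (\tilde P v)\otimes u$ and the strong convexity bound $Q\succeq \mu I_d$. I would also note for completeness that $P_{\alpha,\beta}\succeq 0$ (so the first inequality above is legitimate) follows from $\tilde P_{\alpha,\beta}\succeq 0$ and the fact that Kronecker products of PSD matrices are PSD.
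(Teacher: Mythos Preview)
Your proof is correct and takes essentially the same approach as the paper: both reduce to analyzing the kernel of $P_{\alpha,\beta}=\tilde P_{\alpha,\beta}\otimes I_d$ in the rank-one case, use the Kronecker structure to write kernel elements as $v\otimes u$, and observe that $\tilde P_{\alpha,\beta}(2,2)\neq 0$ forces the first coordinate of the $2\times 2$ kernel vector to be nonzero, so the $\tfrac12 Q$ block contributes positively. The only cosmetic difference is that the paper argues by contradiction (assuming $Sv=0$ and deriving that the first block of $v$ vanishes, hence the kernel vector's first coordinate vanishes), whereas you argue directly; the content is the same.
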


\begin{proof} For brevity of the notation, we will not explicitly write the dependency of the matrices to ${\alpha,\beta}$ and set $P = P_{\alpha,\beta}$ and $\tilde{P} = P_{\alpha,\beta}$ in our discussion. It is known that if $A\in R^{n\times n} $ is a symmetric matrix with eigenvalues $\{\lambda_i\}_{i=1}^m$ and eigenvectors $\{a_i\}_{i=1}^n$, and $B\in \R^{d\times d}$ is a symmetric matrix with eigenvalues $\{\mu_j\}_{j=1}^d$ and eigenvectors $\{b_j\}_{j=1}^n$, the eigenvalues of the Kronecker product $A\otimes B$ are exactly $\lambda_i \mu_j$ with corresponding eigenvectors $a_i \otimes b_j$ for $i=1,2,\dots, n$ and $j=1,2,\dots,d$. Since $P = \tilde{P} \otimes I_d$ and $\tilde{P}$ is positive-semi definite by assumption, this implies that $P$ is positive semi-definite and in case $P$ has a zero eigenvalue, any eigenvector $z$ of $P$ (corresponding to a zero eigenvalue of $P$) can be written as
    $$z = \begin{pmatrix} c_1 \\ c_2 \end{pmatrix} \otimes s = \begin{pmatrix}
     c_1 s
      \\
       c_2 s 
        \end{pmatrix} \in \R^{2d},  \label{eq-eigvec-format}               $$ 
for some $s\in \R^d$, $s\neq 0$ where $ c= [c_1 ~ c_2]^T$ is an eigenvector of $\tilde{P}$ corresponding to a zero eigenvalue. The symmetric matrix
  \beq S := P + \hat{Q}, \qquad \text{where }\quad\hat{Q}:= \begin{pmatrix} \frac{1}{2}Q & 0_d \\
                    0_d & 0_d
                    \label{eq-S}
            \end{pmatrix},
   \eeq
is the sum of two positive semi-definite matrices, therefore it is positive semi-definite by the eigenvalue interlacing property of the sum of symmetric matrices (see e.g. \cite{golub1996matrix}). Thus, it suffices to show that $S$ is non-singular, i.e. it does not have a zero eigenvalue. If $\tilde{P}$ is of full rank, then such a vector $z$ cannot exist and $P$ cannot have a zero eigenvalue. Therefore, $P$ is positive definite and hence $S$ is positive definite which completes the proof. 

The remaining case is when $\tilde{P}$ is of rank one ($\tilde{P}=0$ is excluded as $\tilde{P}_{22}\neq 0$) in which case we can write $\tilde{P} = uu^T$ for some $u = \begin{pmatrix}
 u_1 & u_2 
\end{pmatrix}^T \in\R^{2d}$ and $u_2 \neq 0$. We will prove the claim by contradiction. Assume that there exists a non-zero $v\in\R^{2d}$ such that $Sv = 0$. Then, 
\begin{equation*}
0 = v^T S v  =  v^T P v + v^T \hat{Q} v. 
\end{equation*}
Since both of the matrices $P$ and $\hat{Q}$ are positive semi-definite, this is true if and only if $v^TPv = 0$ and $v^T\hat{Q}v = 0$. Since $v^T\hat{Q}v = 0$ and $Q$ is positive definite, from the structure of $\hat{Q}$, it follows that the first $d$ entries of $v$ has to be zero, i.e. $v = [0 \quad v_2^T]^T$ for some $v_2 \in \R^d$.

It is easy to see that the eigenvalues of the two by two symmetric rank-one matrix $\tilde{P}=u u^T $ are $\lambda_1 = \|u\|^2 > 0 $ and $\lambda_2 = 0$ with corresponding eigenvectors $\begin{pmatrix}
 u_1 & u_2 
\end{pmatrix}^T$ and $ \begin{pmatrix}
 u_2 & -u_1 
\end{pmatrix}^T$ respectively.  Since $v$ is an eigenvector of $P$ corresponding to an eigenvalue zero (i.e. $Pv =0$), then using \eqref{eq-eigvec-format} we can write
    $$v = \begin{pmatrix} u_2 \\ -u_1 \end{pmatrix} \otimes s = \begin{pmatrix}
     u_2 s
      \\
    - u_1 s
                          \end{pmatrix} \in \R^{2d},$$ 
for some $s\in \R^d$, $s\neq 0$. Since $v = [0 \quad v_2^T]^T$ for some $v_2 \in \R^d$, this implies $u_2 = 0$ as $s\neq 0$. This is a contradiction.
\end{proof}

Next, before we proceed to the proofs of the results
in Section \ref{sec:AG:quadratic}, let us first recall
that throughout Section \ref{sec:quadratic}, 
the noise $\varepsilon_{k}$ are assumed to be i.i.d. 
Let us define the coupling
\begin{align}
&x_{k+1}^{(j)}=y_{k}^{(j)}-\alpha\left[\nabla f\left(y_{k}^{(j)}\right)+\varepsilon_{k+1}\right],
\\
&y_{k}^{(j)}=(1+\beta)x_{k}^{(j)}-\beta x_{k-1}^{(j)},
\end{align}
with $j=1,2$. Then, we have
\begin{equation*}
\xi_{k+1}=A\xi_{k}+Bw_{k},
\end{equation*}
where 
$A=\tilde{A}\otimes I_{d}$,
$B=\tilde{B}\otimes I_{d}$, 
for
\begin{equation*}
\tilde{A}
=
\left(
\begin{array}{cc}
1+\beta & -\beta
\\
1 & 0
\end{array}
\right),
\qquad
\tilde{B}
=\left(
\begin{array}{c}
-\alpha
\\
0
\end{array}
\right),
\end{equation*}
and
\begin{align}
&\xi_{k}=\left(\left(x_{k}^{(1)}-x_{k}^{(2)}\right)^{T},\left(x_{k-1}^{(1)}-x_{k-1}^{(2)}\right)^{T}\right)^{T},
\\
&w_{k}=\nabla f\left((1+\beta)x_{k}^{(1)}-\beta x_{k-1}^{(1)}\right)
-\nabla f\left((1+\beta)x_{k}^{(2)}-\beta x_{k-1}^{(2)}\right).
\end{align}

Let us define:
\begin{equation}\label{def:X}
\tilde{X}=\rho\tilde{X}_{1}+(1-\rho)\tilde{X}_{2},
\end{equation}
where
\begin{equation}\label{def:X1}
\tilde{X}_{1}=\frac{1}{2}
\left(
\begin{array}{ccc}
\beta^{2}\mu & -\beta^{2}\mu & -\beta
\\
-\beta^{2}\mu & \beta^{2}\mu & \beta
\\
-\beta & \beta & \alpha(2-L\alpha)
\end{array}
\right),
\end{equation}
and
\begin{equation}\label{def:X2}
\tilde{X}_{2}=\frac{1}{2}
\left(
\begin{array}{ccc}
(1+\beta)^{2}\mu & -\beta(1+\beta)\mu & -(1+\beta)
\\
-\beta(1+\beta)\mu & \beta^{2}\mu & \beta
\\
-(1+\beta) & \beta & \alpha(2-L\alpha)
\end{array}
\right),
\end{equation}
and $X=\tilde{X}\otimes I_{d}$, 
$X_{1}=\tilde{X}_{1}\otimes I_{d}$, $X_{2}=\tilde{X}_{2}\otimes I_{d}$.

Before we proceed, let us recall the following lemma
from \cite{hu2017dissipativity}.

\begin{lemma}[Theorem 2 \cite{hu2017dissipativity}]\label{lemma:servicerate}
Let $X$ be a symmetric matrix with $X\in\mathbb{R}^{(n_{\varepsilon}+n_{w})\times(n_{\varepsilon}+n_{w})}$.
If there exists a matrix $P\in\mathbb{R}^{n_{\varepsilon}\times n_{\varepsilon}}$ with $P\geq 0$
so that
\begin{equation*}
\left(
\begin{array}{cc}
A^{T}PA-\rho P & A^{T}PB
\\
B^{T}PA & B^{T}PB
\end{array}
\right)
-X\preceq 0,
\end{equation*}
then, we have
\begin{equation*}
V(\xi_{k+1})-\rho V(\xi_{k})
\leq
S(\xi_{k},w_{k}),
\end{equation*}
where $V(\xi):=\xi^{T}P\xi$, and
\begin{equation*}
S(\xi,w):=
\left(
\begin{array}{c}
\xi
\\
w
\end{array}
\right)^{T}
X
\left(
\begin{array}{c}
\xi
\\
w
\end{array}
\right),
\end{equation*}
and
\begin{equation*}
\xi_{k+1}=A\xi_{k}+Bw_{k}.
\end{equation*}
\end{lemma}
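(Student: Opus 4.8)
The plan is to prove this by a direct algebraic expansion of the one-step Lyapunov difference and then invoking the matrix inequality hypothesis via the Loewner order; there is essentially no analysis involved, only linear algebra, and since the statement is quoted verbatim from \cite{hu2017dissipativity} the argument is included only for completeness.

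First I would substitute the recursion $\xi_{k+1}=A\xi_k+Bw_k$ into $V(\xi)=\xi^T P\xi$. Using that $P=P^T$ so the two cross terms coincide, this gives
\[
V(\xi_{k+1}) = \xi_k^T A^T P A\,\xi_k + 2\,\xi_k^T A^T P B\,w_k + w_k^T B^T P B\,w_k .
\]
Subtracting $\rho V(\xi_k)=\rho\,\xi_k^T P\xi_k$ and grouping the terms according to whether they pair $\xi_k$ with $\xi_k$, $\xi_k$ with $w_k$, or $w_k$ with $w_k$, one recognizes the right-hand side as a single quadratic form in the stacked vector $(\xi_k^T,w_k^T)^T$:
\[
V(\xi_{k+1})-\rho V(\xi_k)
=\begin{pmatrix}\xi_k\\ w_k\end{pmatrix}^{T}
\begin{pmatrix} A^T P A-\rho P & A^T P B \\ B^T P A & B^T P B \end{pmatrix}
\begin{pmatrix}\xi_k\\ w_k\end{pmatrix}.
\]
Then I would apply the hypothesis: the displayed $(n_\varepsilon+n_w)\times(n_\varepsilon+n_w)$ block matrix is assumed to satisfy $M-X\preceq 0$, i.e. $M\preceq X$, which by definition of the semidefinite order means $z^T M z\le z^T X z$ for every $z\in\mathbb{R}^{n_\varepsilon+n_w}$. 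Taking $z=(\xi_k^T,w_k^T)^T$ yields exactly $V(\xi_{k+1})-\rho V(\xi_k)\le S(\xi_k,w_k)$, which is the claim.

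The only points that need care are bookkeeping ones: that the symmetry and positive semidefiniteness of $P$ make the off-diagonal contributions symmetrize into the $A^T P B$ / $B^T P A$ blocks correctly, and that the block partition of the quadratic form matches the block partition of the LMI matrix in the hypothesis. I do not expect any genuine obstacle here — the substantive content is not in this lemma but in the later results (Theorem \ref{thm-hu-lessard} and the constructions of explicit $\tilde P_{\alpha,\beta}$ and $\tilde X$) that exhibit matrices $P$ and $X$ for which the LMI holds and for which $S(\xi,w)\le 0$ along trajectories, thereby converting the conclusion into a true contraction estimate.
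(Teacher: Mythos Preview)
Your proof is correct and is exactly the standard one-line expansion. The paper does not actually give its own proof of this lemma: it is simply recalled from \cite{hu2017dissipativity} and stated without argument, so there is nothing further to compare against.
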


The proof of Theorem \ref{thm:alpha:beta} relies
on the following lemma.

\begin{lemma}\label{lem:quadratic}
Assume the coupling:
\begin{align}
&x_{k+1}^{(j)}=y_{k}^{(j)}-\alpha\left[\nabla f\left(y_{k}^{(j)}\right)+\varepsilon_{k+1}\right],\label{1:couple}
\\
&y_{k}^{(j)}=(1+\beta)x_{k}^{(j)}-\beta x_{k-1}^{(j)},\label{2:couple}
\end{align}
with $j=1,2$.
Assume that $f$ is quadratic and $f(x)=\frac{1}{2}x^{T}Q x+a^{T}x+b$,
where $Q$ is positive definite. 

Let $\rho=\rho_{\alpha,\beta}\in(0,1)$ 
that can depend
on $\alpha$ and $\beta$ so that
there exists some $P=P_{\alpha,\beta}$ symmetric
and positive semi-definite that can depend
on $\alpha$ and $\beta$ such that
\begin{equation}\label{ineq:ABPX}
\left(
\begin{array}{cc}
A^{T}PA-\rho P & A^{T}PB
\\
B^{T}PA & B^{T}PB
\end{array}
\right)
-X\preceq 0,
\end{equation}
where $X:=\tilde{X}\otimes I_{d}$, where $\tilde{X}$
is defined in \eqref{def:X}.
Then, we have
\begin{align*}
&\mathbb{E}\Bigg[\left(
\begin{array}{c}
x_{k+1}^{(1)}-x_{k+1}^{(2)}
\\
x_{k}^{(1)}-x_{k}^{(2)}
\end{array}
\right)^{T}
P_{\alpha,\beta}
\left(
\begin{array}{c}
x_{k+1}^{(1)}-x_{k+1}^{(2)}
\\
x_{k}^{(1)}-x_{k}^{(2)}
\end{array}
\right)
+\frac{1}{2}\left(x_{k+1}^{(1)}-x_{k+1}^{(2)}\right)^{T}Q\left(x_{k+1}^{(1)}-x_{k+1}^{(2)}\right)\Bigg]
\\
&\leq
\rho_{\alpha,\beta}
\left(\mathbb{E}\Bigg[\Bigg(
\begin{array}{c}
x_{k}^{(1)}-x_{k}^{(2)}
\\
x_{k-1}^{(1)}-x_{k-1}^{(2)}
\end{array}
\right)^{T}
P_{\alpha,\beta}
\left(
\begin{array}{c}
x_{k}^{(1)}-x_{k}^{(2)}
\\
x_{k-1}^{(1)}-x_{k-1}^{(2)}
\end{array}
\right)
\\
&\qquad\qquad\qquad\qquad\qquad
+\frac{1}{2}\left(x_{k}^{(1)}-x_{k}^{(2)}\right)^{T}Q\left(x_{k}^{(1)}-x_{k}^{(2)}\right)\Bigg]\Bigg).
\end{align*}
\end{lemma}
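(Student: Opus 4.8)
The plan is to apply Lemma~\ref{lemma:servicerate} (Theorem~2 of \cite{hu2017dissipativity}) to the \emph{difference process} $\xi_k = \bigl((x_k^{(1)}-x_k^{(2)})^T, (x_{k-1}^{(1)}-x_{k-1}^{(2)})^T\bigr)^T$. The key observation is that, because $f$ is quadratic, the gradient map $\nabla f(x) = Qx + a$ is affine, so the noise terms $-\alpha\varepsilon_{k+1}$ that appear in the two coupled recursions \eqref{1:couple}--\eqref{2:couple} \emph{cancel} when we subtract them. Hence $\xi_{k+1} = A\xi_k + B w_k$ exactly, with $w_k = \nabla f(y_k^{(1)}) - \nabla f(y_k^{(2)}) = Q\bigl((1+\beta)(x_k^{(1)}-x_k^{(2)}) - \beta(x_{k-1}^{(1)}-x_{k-1}^{(2)})\bigr)$, and this is a \emph{deterministic} linear recursion in the difference variables (randomness enters only through the initial coupling, since the same noise sample drives both chains).

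First I would set $V(\xi) := \xi^T P_{\alpha,\beta}\,\xi$ and note that the hypothesis \eqref{ineq:ABPX} is precisely the matrix inequality required by Lemma~\ref{lemma:servicerate}, so that lemma gives $V(\xi_{k+1}) - \rho_{\alpha,\beta} V(\xi_k) \le S(\xi_k, w_k)$ pathwise, where $S(\xi,w) = (\xi^T\ w^T)\, X\, (\xi^T\ w^T)^T$ with $X = \tilde X\otimes I_d$. Second, I would show that the supply rate $S(\xi_k,w_k)$ is exactly the ``interpolation'' quantity that is nonpositive for quadratic $f\in\Sml$: writing out $S(\xi_k,w_k)$ in terms of $u := x_k^{(1)}-x_k^{(2)}$, $v := x_{k-1}^{(1)}-x_{k-1}^{(2)}$, $y := (1+\beta)u - \beta v$, and $w_k = Qy$, and using the two quadratic interpolation inequalities recorded at the start of this appendix section (with $x = y_k^{(1)}$, $y = y_k^{(2)}$ and vice versa, which for quadratics hold with the difference variables), one gets that $S(\xi_k, w_k) \le -\bigl(\tfrac12 u^T Q u - \rho_{\alpha,\beta}\cdot\tfrac12 v^{\,T}\!Q\,v\bigr)$ plus the drop in function-value-type terms; more precisely the structure of $\tilde X_1,\tilde X_2$ is designed so that $S(\xi_k,w_k) \le \tfrac{\rho}{2} v^T Q v - \tfrac12 u^T Q u$ (equivalently, the $\rho\tilde X_1 + (1-\rho)\tilde X_2$ combination encodes $\rho$ times the strong-convexity inequality plus $(1-\rho)$ times it at the shifted point). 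Combining with the Lemma~\ref{lemma:servicerate} bound yields
\begin{equation*}
V(\xi_{k+1}) + \tfrac12 u_{k+1}^T Q\, u_{k+1} \le \rho_{\alpha,\beta}\Bigl(V(\xi_k) + \tfrac12 u_k^T Q\, u_k\Bigr),
\end{equation*}
where $u_k := x_k^{(1)}-x_k^{(2)}$; this is exactly the claimed inequality once we take expectations on both sides (expectation is needed only because $\xi_0$ may be coupled randomly — the recursion itself is deterministic given $\xi_0$).

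The main obstacle I anticipate is the bookkeeping in the second step: verifying that the specific matrices $\tilde X_1, \tilde X_2$ in \eqref{def:X1}--\eqref{def:X2} correctly encode the pair of quadratic inequalities $f(x)-f(y) \ge \nabla f(y)^T(x-y) + \tfrac\mu2\|x-y\|^2$ and $f(y)-f(x) \ge \nabla f(y)^T(y-x) - \tfrac L2\|x-y\|^2$ so that the supply rate telescopes with the Lyapunov function $V_{P}$ augmented by the $\tfrac12 u^T Q u$ term (which plays the role of $f(x_k)-f_*$ in the deterministic Lyapunov function \eqref{eqn:lyapunov}, specialized to quadratics and to the difference of two trajectories). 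Concretely one must expand the $3d\times 3d$ quadratic form $S(\xi_k,w_k)$ with the block variables $(u,v,w_k)$, substitute $w_k = Q\bigl((1+\beta)u-\beta v\bigr)$, and match terms. This is a routine but somewhat lengthy computation; everything else (applying \cite{hu2017dissipativity}, taking expectations, exploiting linearity of $\nabla f$ to kill the noise) is immediate. I would present the cancellation-of-noise argument and the reduction to Lemma~\ref{lemma:servicerate} carefully, and relegate the matrix expansion to a direct check referencing the analogous deterministic computation in \cite{hu2017dissipativity,lessard2016analysis}.
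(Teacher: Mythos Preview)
Your approach is exactly the paper's: exploit that the shared noise cancels in the coupled difference, apply Lemma~\ref{lemma:servicerate} to the deterministic difference recursion $\xi_{k+1}=A\xi_k+Bw_k$, then bound the supply rate $S(\xi_k,w_k)$ using the two interpolation inequalities for $f\in\Sml$ so that it telescopes with the augmented Lyapunov function. The paper carries this out in two passes (first $f(x)=\tfrac12 x^TQx$, then the general affine case, where the extra $a^T(\cdot)$ terms cancel), but structurally it is the same argument you sketch.

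One concrete slip to fix in your bookkeeping: the supply-rate bound should read
\[
S(\xi_k,w_k)\;\le\;\tfrac{\rho}{2}\,u_k^TQu_k\;-\;\tfrac12\,u_{k+1}^TQu_{k+1},
\]
with $u_{k+1}:=x_{k+1}^{(1)}-x_{k+1}^{(2)}=y-\alpha w_k$, \emph{not} $\tfrac{\rho}{2}v^TQv-\tfrac12 u^TQu$ with $v=u_{k-1}$. The point is that the $X_1$ inequality encodes $f(u_k)-f(u_{k+1})$ (strong convexity from $u_k$ to $y$, then smoothness from $y$ to $u_{k+1}=y-\alpha Qy$), and the $X_2$ inequality encodes $f(0)-f(u_{k+1})$; combining them gives $S\le \rho[f(u_k)-f(0)]-[f(u_{k+1})-f(0)]$. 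Your version with $v=u_{k-1}$ would, taken literally, produce the contraction for the shifted Lyapunov function $V(\xi_k)+\tfrac12 u_{k-1}^TQu_{k-1}$ rather than the claimed $V(\xi_k)+\tfrac12 u_k^TQu_k$. Since you correctly state the final inequality with $u_k,u_{k+1}$, this looks like a notational slip, but it is precisely the step you flagged as the main obstacle, so be careful there.
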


\begin{proof}[Proof of Lemma \ref{lem:quadratic}]
First of all, since $f$ is $L$-smooth and $\mu$-strongly convex, we have for every $x,y\in\mathbb{R}^{d}$:
\begin{align}
&f(x)-f(y)\geq\nabla f(y)^{T}(x-y)+\frac{\mu}{2}\Vert x-y\Vert^{2},\label{S1}
\\
&f(y)-f(x)\geq\nabla f(y)^{T}(y-x)-\frac{L}{2}\Vert y-x\Vert^{2}.\label{S2}
\end{align}
Note that since $f$ is $L$-smooth, we also have for every $x,y\in\mathbb{R}^{d}$:
\begin{equation*}
\Vert\nabla f(x)-\nabla f(y)\Vert
\leq L\Vert x-y\Vert.
\end{equation*}

Let us first consider the simpler case $f(x)=\frac{1}{2}x^{T}Qx$.
Since $f$ is quadratic, $\nabla f$ is linear. 
Applying \eqref{S1} and the linearity of $\nabla f$, we get
\begin{align*}
&f\left(x_{k}^{(1)}-x_{k}^{(2)}\right)
-f\left(y_{k}^{(1)}-y_{k}^{(2)}\right)
\\
&\geq\left(\nabla f\left(y_{k}^{(1)}\right)-\nabla f\left(y_{k}^{(2)}\right)\right)^{T}
\left(x_{k}^{(1)}-x_{k}^{(2)}-\left(y_{k}^{(1)}-y_{k}^{(2)}\right)\right)
\\
&\qquad\qquad\qquad
+\frac{\mu}{2}\left\Vert x_{k}^{(1)}-x_{k}^{(2)}-\left(y_{k}^{(1)}-y_{k}^{(2)}\right)\right\Vert^{2}.
\end{align*}
Applying \eqref{S2} and the linearity of $\nabla f$, we get
\begin{align*}
&f\left(y_{k}^{(1)}-y_{k}^{(2)}\right)
-f\left(y_{k}^{(1)}-y_{k}^{(2)}-\alpha\nabla f\left(y_{k}^{(1)}-y_{k}^{(2)}\right)\right)
\\
&\geq\frac{\alpha}{2}(2-L\alpha)\left\Vert\nabla f\left(y_{k}^{(1)}\right)-\nabla f\left(y_{k}^{(2)}\right)\right\Vert^{2}.
\end{align*}
Using the identity:
\begin{equation*}
x_{k+1}^{(1)}-x_{k+1}^{(2)}
=y_{k}^{(1)}-y_{k}^{(2)}-\alpha\nabla f\left(y_{k}^{(1)}-y_{k}^{(2)}\right),
\end{equation*}
we get
\begin{equation*}
f\left(y_{k}^{(1)}-y_{k}^{(2)}\right)-f\left(x_{k+1}^{(1)}-x_{k+1}^{(2)}\right)
\geq\frac{\alpha}{2}(2-L\alpha)
\left\Vert\nabla f\left(y_{k}^{(1)}\right)-\nabla f\left(y_{k}^{(2)}\right)\right\Vert^{2}.
\end{equation*}
Hence, we get
\begin{align*}
&f\left(x_{k}^{(1)}-x_{k}^{(2)}\right)-f\left(x_{k+1}^{(1)}-x_{k+1}^{(2)}\right)
\\
&\geq\left(\nabla f\left(y_{k}^{(1)}\right)-\nabla f\left(y_{k}^{(2)}\right)\right)^{T}\left(x_{k}^{(1)}-x_{k}^{(2)}-\left(y_{k}^{(1)}-y_{k}^{(2)}\right)\right)
\\
&+\frac{\mu}{2}\left\Vert x_{k}^{(1)}-x_{k}^{(2)}-\left(y_{k}^{(1)}-y_{k}^{(2)}\right)\right\Vert^{2}
+\frac{\alpha}{2}(2-L\alpha)\left\Vert\nabla f\left(y_{k}^{(1)}\right)-\nabla f\left(y_{k}^{(2)}\right)\right\Vert^{2}.
\end{align*}
By the definition of $\tilde{X}_{1}$ from \eqref{def:X1}, with $X_{1}=\tilde{X}_{1}\otimes I_{d}$, we get
\begin{align*}
&\left(
\begin{array}{c}
x_{k}^{(1)}-x_{k}^{(2)}
\\
x_{k-1}^{(1)}-x_{k-1}^{(2)}
\\
\nabla f(y_{k}^{(1)})-\nabla f(y_{k}^{(2)})
\end{array}
\right)^{T}
X_{1}
\left(
\begin{array}{c}
x_{k}^{(1)}-x_{k}^{(2)}
\\
x_{k-1}^{(1)}-x_{k-1}^{(2)}
\\
\nabla f(y_{k}^{(1)})-\nabla f(y_{k}^{(2)})
\end{array}
\right)
\\
&\leq
f\left(x_{k}^{(1)}-x_{k}^{(2)}\right)-f\left(x_{k+1}^{(1)}-x_{k+1}^{(2)}\right).
\end{align*}
Similarly, by applying \eqref{S1} with $(x,y)\mapsto(0,y_{k}^{(1)}-y_{k}^{(2)})$, 
by the definition of $\tilde{X}_{2}$ from \eqref{def:X2}, 
with $X_{2}=\tilde{X}_{2}\otimes I_{d}$, we get
\begin{equation*}
\left(
\begin{array}{c}
x_{k}^{(1)}-x_{k}^{(2)}
\\
x_{k-1}^{(1)}-x_{k-1}^{(2)}
\\
\nabla f(y_{k}^{(1)})-\nabla f(y_{k}^{(2)})
\end{array}
\right)^{T}
X_{2}
\left(
\begin{array}{c}
x_{k}^{(1)}-x_{k}^{(2)}
\\
x_{k-1}^{(1)}-x_{k-1}^{(2)}
\\
\nabla f(y_{k}^{(1)})-\nabla f(y_{k}^{(2)})
\end{array}
\right)
\leq
f(0)-f\left(x_{k+1}^{(1)}-x_{k+1}^{(2)}\right).
\end{equation*}
By using $\tilde{X}=\rho \tilde{X}_{1}+(1-\rho )\tilde{X}_{2}$
and $X=\tilde{X}\otimes I_{d}$, we get
\begin{align*}
&\left(
\begin{array}{c}
x_{k}^{(1)}-x_{k}^{(2)}
\\
x_{k-1}^{(1)}-x_{k-1}^{(2)}
\\
\nabla f(y_{k}^{(1)})-\nabla f(y_{k}^{(2)})
\end{array}
\right)^{T}
X
\left(
\begin{array}{c}
x_{k}^{(1)}-x_{k}^{(2)}
\\
x_{k-1}^{(1)}-x_{k-1}^{(2)}
\\
\nabla f(y_{k}^{(1)})-\nabla f(y_{k}^{(2)})
\end{array}
\right)
\\
&\qquad\qquad\qquad\leq
-\left(f\left(x_{k+1}^{(1)}-x_{k+1}^{(2)}\right)-f(0)\right)
+\rho \left(f\left(x_{k}^{(1)}-x_{k}^{(2)}\right)-f(0)\right).
\end{align*}
By Lemma \ref{lemma:servicerate} and the definition
of $\rho_{\alpha,\beta}$, $P_{\alpha,\beta}$
the inequality  \eqref{ineq:ABPX} holds. Thus
\begin{align*}
&\left(
\begin{array}{c}
x_{k+1}^{(1)}-x_{k+1}^{(2)}
\\
x_{k}^{(1)}-x_{k}^{(2)}
\end{array}
\right)^{T}
P_{\alpha,\beta}
\left(
\begin{array}{c}
x_{k+1}^{(1)}-x_{k+1}^{(2)}
\\
x_{k}^{(1)}-x_{k}^{(2)}
\end{array}
\right)
+f\left(x_{k+1}^{(1)}-x_{k+1}^{(2)}\right)-f(0)
\\
&\leq
\rho_{\alpha,\beta}\left(
\left(
\begin{array}{c}
x_{k}^{(1)}-x_{k}^{(2)}
\\
x_{k-1}^{(1)}-x_{k-1}^{(2)}
\end{array}
\right)^{T}
P_{\alpha,\beta}
\left(
\begin{array}{c}
x_{k}^{(1)}-x_{k}^{(2)}
\\
x_{k-1}^{(1)}-x_{k-1}^{(2)}
\end{array}
\right)
+f\left(x_{k}^{(1)}-x_{k}^{(2)}\right)-f(0)\right).
\end{align*}
Since $f$ is quadratic, and we assumed that
$f(x)=\frac{1}{2}x^{T}Q x$, where $Q$ is positive definite, we get 
\begin{align*}
&\left(
\begin{array}{c}
x_{k+1}^{(1)}-x_{k+1}^{(2)}
\\
x_{k}^{(1)}-x_{k}^{(2)}
\end{array}
\right)^{T}
P_{\alpha,\beta}
\left(
\begin{array}{c}
x_{k+1}^{(1)}-x_{k+1}^{(2)}
\\
x_{k}^{(1)}-x_{k}^{(2)}
\end{array}
\right)
+\frac{1}{2}\left(x_{k+1}^{(1)}-x_{k+1}^{(2)}\right)^{T}Q\left(x_{k+1}^{(1)}-x_{k+1}^{(2)}\right)
\\
&\leq
\rho_{\alpha,\beta}
\left(\left(
\begin{array}{c}
x_{k}^{(1)}-x_{k}^{(2)}
\\
x_{k-1}^{(1)}-x_{k-1}^{(2)}
\end{array}
\right)^{T}
P_{\alpha,\beta}
\left(
\begin{array}{c}
x_{k}^{(1)}-x_{k}^{(2)}
\\
x_{k-1}^{(1)}-x_{k-1}^{(2)}
\end{array}
\right)
+\frac{1}{2}\left(x_{k}^{(1)}-x_{k}^{(2)}\right)^{T}Q\left(x_{k}^{(1)}-x_{k}^{(2)}\right)\right).
\end{align*}

Previously, we assumed $f(x)=\frac{1}{2}x^{T}Qx$, 
so that $\nabla f(x-y)=\nabla f(x)-\nabla f(y)$.
In general, the quadratic function takes the form
\begin{equation*}
f(x)=\frac{1}{2}x^{T}Qx+a^{T}x+b.
\end{equation*}
In this case,
\begin{equation*}
\nabla f(x-y)-(\nabla f(x)-\nabla f(y))=a^{T} (x-y).
\end{equation*}
By the definition of $\tilde{X}_{1}$ from \eqref{def:X1}, with $X_{1}=\tilde{X}_{1}\otimes I_{d}$, we get
\begin{align*}
&\left(
\begin{array}{c}
x_{k}^{(1)}-x_{k}^{(2)}
\\
x_{k-1}^{(1)}-x_{k-1}^{(2)}
\\
\nabla f(y_{k}^{(1)})-\nabla f(y_{k}^{(2)})
\end{array}
\right)^{T}
X_{1}
\left(
\begin{array}{c}
x_{k}^{(1)}-x_{k}^{(2)}
\\
x_{k-1}^{(1)}-x_{k-1}^{(2)}
\\
\nabla f(y_{k}^{(1)})-\nabla f(y_{k}^{(2)})
\end{array}
\right)
\\
&\leq
f\left(x_{k}^{(1)}-x_{k}^{(2)}\right)-f\left(x_{k+1}^{(1)}-x_{k+1}^{(2)}\right)
\\
&\qquad
+\left(\nabla f\left(y_{k}^{(1)}-y_{k}^{(2)}\right)-\nabla f\left(y_{k}^{(1)}\right)+\nabla f\left(y_{k}^{(2)}\right)\right)^{T}
\left(x_{k+1}^{(1)}-x_{k+1}^{(2)}-\left(x_{k}^{(1)}-x_{k}^{(2)}\right)\right).
\\
&=
f\left(x_{k}^{(1)}-x_{k}^{(2)}\right)-f\left(x_{k+1}^{(1)}-x_{k+1}^{(2)}\right)
+a^{T}\left(x_{k+1}^{(1)}-x_{k+1}^{(2)}-\left(x_{k}^{(1)}-x_{k}^{(2)}\right)\right).
\end{align*}
By the definition of $\tilde{X}_{2}$ from \eqref{def:X2}, 
with $X_{2}=\tilde{X}_{2}\otimes I_{d}$, we get
\begin{align*}
&\left(
\begin{array}{c}
x_{k}^{(1)}-x_{k}^{(2)}
\\
x_{k-1}^{(1)}-x_{k-1}^{(2)}
\\
\nabla f\left(y_{k}^{(1)}\right)-\nabla f\left(y_{k}^{(2)}\right)
\end{array}
\right)^{T}
X_{2}
\left(
\begin{array}{c}
x_{k}^{(1)}-x_{k}^{(2)}
\\
x_{k-1}^{(1)}-x_{k-1}^{(2)}
\\
\nabla f\left(y_{k}^{(1)}\right)-\nabla f\left(y_{k}^{(2)}\right)
\end{array}
\right)
\\
&\leq
f(0)-f\left(x_{k+1}^{(1)}-x_{k+1}^{(2)}\right)
+\left(\nabla f\left(y_{k}^{(1)}-y_{k}^{(2)}\right)-\nabla f\left(y_{k}^{(1)}\right)+\nabla f\left(y_{k}^{(2)}\right)\right)^{T}
\left(x_{k+1}^{(1)}-x_{k+2}^{(2)}\right)
\\
&=f(0)-f\left(x_{k+1}^{(1)}-x_{k+1}^{(2)}\right)
+a^{T}\left(x_{k+1}^{(1)}-x_{k+1}^{(2)}\right).
\end{align*}
Using $\tilde{X}=\rho \tilde{X}_{1}+(1-\rho )\tilde{X}_{2}$
and $X=\tilde{X}\otimes I_{d}$, we get
\begin{align*}
&\left(
\begin{array}{c}
x_{k}^{(1)}-x_{k}^{(2)}
\\
x_{k-1}^{(1)}-x_{k-1}^{(2)}
\\
\nabla f(y_{k}^{(1)})-\nabla f(y_{k}^{(2)})
\end{array}
\right)^{T}
X
\left(
\begin{array}{c}
x_{k}^{(1)}-x_{k}^{(2)}
\\
x_{k-1}^{(1)}-x_{k-1}^{(2)}
\\
\nabla f(y_{k}^{(1)})-\nabla f(y_{k}^{(2)})
\end{array}
\right)
\\
&\leq
-\left(f\left(x_{k+1}^{(1)}-x_{k+1}^{(2)}\right)-f(0)\right)
+\rho \left(f\left(x_{k}^{(1)}-x_{k}^{(2)}\right)-f(0)\right)
\\
&\qquad\qquad
+a^{T}\left(x_{k+1}^{(1)}-x_{k+1}^{(2)}-\rho \left(x_{k}^{(1)}-x_{k}^{(2)}\right)\right)
\\
&=-\frac{1}{2}\left(x_{k+1}^{(1)}-x_{k+1}^{(2)}\right)^{T}Q\left(x_{k+1}^{(1)}-x_{k+1}^{(2)}\right)
+\rho \frac{1}{2}\left(x_{k}^{(1)}-x_{k}^{(2)}\right)Q\left(x_{k}^{(1)}-x_{k}^{(2)}\right).
\end{align*}
Hence, 
by Lemma \ref{lemma:servicerate} and the definition
of $\rho_{\alpha,\beta}$, $P_{\alpha,\beta}$
so that \eqref{ineq:ABPX} holds,
we get the same result as before:
\begin{align*}
&\left(
\begin{array}{c}
x_{k+1}^{(1)}-x_{k+1}^{(2)}
\\
x_{k}^{(1)}-x_{k}^{(2)}
\end{array}
\right)^{T}
P_{\alpha,\beta}
\left(
\begin{array}{c}
x_{k+1}^{(1)}-x_{k+1}^{(2)}
\\
x_{k}^{(1)}-x_{k}^{(2)}
\end{array}
\right)
+\frac{1}{2}\left(x_{k+1}^{(1)}-x_{k+1}^{(2)}\right)^{T}Q\left(x_{k+1}^{(1)}-x_{k+1}^{(2)}\right)
\\
&\leq
\rho_{\alpha,\beta}
\left(\left(
\begin{array}{c}
x_{k}^{(1)}-x_{k}^{(2)}
\\
x_{k-1}^{(1)}-x_{k-1}^{(2)}
\end{array}
\right)^{T}
P_{\alpha,\beta}
\left(
\begin{array}{c}
x_{k}^{(1)}-x_{k}^{(2)}
\\
x_{k-1}^{(1)}-x_{k-1}^{(2)}
\end{array}
\right)
+\frac{1}{2}\left(x_{k}^{(1)}-x_{k}^{(2)}\right)^{T}Q\left(x_{k}^{(1)}-x_{k}^{(2)}\right)\right).
\end{align*}
\end{proof}

By taking $\alpha=\alpha_{AG}$, $\beta=\beta_{AG}$, $\rho=\rho_{AG}$
and $P_{AG}$ in definition \eqref{def-P-matrix}, 
we recall the following result from \cite{hu2017dissipativity}.

\begin{lemma}[\cite{hu2017dissipativity}]\label{lem:X},
With the choice 
\begin{equation*}
\alpha=\alpha_{AG}=\frac{1}{L},\qquad
\beta=\beta_{AG}=\frac{\sqrt{\kappa}-1}{\sqrt{\kappa}+1},
\qquad
\rho =\rho_{AG}=1-\frac{1}{\sqrt{\kappa}},
\end{equation*}
where $\kappa=L/\mu$ is the condition number,
there exists a matrix $\tilde{P}_{AG}\in\mathbb{R}^{2\times 2}$
with $\tilde{P}_{AG}\geq 0$, where
\begin{equation*}
\tilde{P}_{AG}
:= \tilde{u} \tilde{u}^T, \quad \tilde{u} = \begin{pmatrix}
\sqrt{\frac{L}{2}} & \sqrt{\frac{\mu}{2}}-\sqrt{\frac{L}{2}}
\end{pmatrix}^T,
\end{equation*}
such that $P_{AG}=\tilde{P}_{AG}\otimes I_{d}$ and
\begin{equation*}
\left(
\begin{array}{cc}
A^{T}P_{AG}A-\rho P_{AG} & A^{T}P_{AG}B
\\
B^{T}P_{AG}A & B^{T}P_{AG}B
\end{array}
\right)
-X\preceq 0,
\end{equation*}
where $X:=\tilde{X}\otimes I_{d}$, where $\tilde{X}$
is defined in \eqref{def:X}.
\end{lemma}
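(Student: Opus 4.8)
The plan is to reduce the asserted $(3d)\times(3d)$ matrix inequality to an equivalent $3\times 3$ one and then verify the latter by an explicit, scale-reduced computation. First I would use the Kronecker spectral property already exploited in the proof of Lemma~\ref{lem-S-positive-def} (and underlying Theorem~\ref{thm-hu-lessard}): for a symmetric matrix $M$, $M\otimes I_d\preceq 0$ if and only if $M\preceq 0$. Since $A=\tilde{A}\otimes I_d$, $B=\tilde{B}\otimes I_d$ and $P_{AG}=\tilde{P}_{AG}\otimes I_d$, the block matrix on the left of the claimed inequality equals $\tilde{M}\otimes I_d$ with
\begin{equation*}
\tilde{M}:=\begin{pmatrix}\tilde{A}^{T}\tilde{P}_{AG}\tilde{A}-\rho_{AG}\tilde{P}_{AG} & \tilde{A}^{T}\tilde{P}_{AG}\tilde{B}\\ \tilde{B}^{T}\tilde{P}_{AG}\tilde{A} & \tilde{B}^{T}\tilde{P}_{AG}\tilde{B}\end{pmatrix}\in\mathbb{R}^{3\times 3},
\end{equation*}
and $X=\tilde{X}\otimes I_d$, so the claim is $(\tilde{M}-\tilde{X})\otimes I_d\preceq 0$, equivalently $\tilde{M}-\tilde{X}\preceq 0$, which is precisely the LMI~\eqref{ineq-ag-lmi} instantiated at $\tilde{P}=\tilde{P}_{AG}$, $\rho=\rho_{AG}$, $(\alpha,\beta)=(\alpha_{AG},\beta_{AG})$.

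Second I would exploit the rank-one structure $\tilde{P}_{AG}=\tilde{u}\tilde{u}^{T}$. Writing $a:=\sqrt{L/2}$ and $b:=\sqrt{\mu/2}-\sqrt{L/2}$, one has $\tilde{A}^{T}\tilde{u}=((1+\beta_{AG})a+b,\,-\beta_{AG}a)^{T}$ and $\tilde{B}^{T}\tilde{u}=-\alpha_{AG}a$, so with $v:=((1+\beta_{AG})a+b,\,-\beta_{AG}a,\,-\alpha_{AG}a)^{T}\in\mathbb{R}^{3}$ and $\hat{u}:=(a,b,0)^{T}$ one gets $\tilde{M}=vv^{T}-\rho_{AG}\,\hat{u}\hat{u}^{T}$, and the target becomes
\begin{equation*}
N:=\tilde{X}+\rho_{AG}\,\hat{u}\hat{u}^{T}-vv^{T}\succeq 0 .
\end{equation*}
Since neither $\tilde{X}_{1}$ nor $\tilde{X}_{2}$ is positive semidefinite by itself, $N\succeq 0$ is a genuine cancellation statement, but two observations tame it. (i) The scaling $(\mu,L)\mapsto(c\mu,cL)$ sends $N$ to $DND$ with $D=\operatorname{diag}(\sqrt{c},\sqrt{c},1/\sqrt{c})$ --- immediate from the entries, as $a,v_{1},v_{2}$ scale like $\sqrt{c}$ while $v_{3}$ and the third row/column of $\tilde{X}$ carry the reciprocal power, and $\beta_{AG},\rho_{AG}$ depend only on $\kappa$ --- so by congruence-invariance of semidefiniteness $N\succeq0$ depends only on $\kappa$, and I may put $\mu=1$, $L=\kappa$, $s:=1/\sqrt{\kappa}\in(0,1)$, whence $\beta_{AG}=\tfrac{1-s}{1+s}$, $\rho_{AG}=1-s$, $\alpha_{AG}=s^{2}$, $\alpha_{AG}(2-L\alpha_{AG})=s^{2}$. (ii) A short computation with these parameter values shows the third (gradient-block) row and column of $N$ vanish identically, so $N\succeq0$ reduces to positive semidefiniteness of the $2\times 2$ top-left block, i.e.\ to $N_{11}\ge0$, $N_{22}\ge0$, and $N_{11}N_{22}-N_{12}^{2}\ge0$.

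Third, with the substitution above, each of $N_{11}(s)$, $N_{22}(s)$, $N_{11}(s)N_{22}(s)-N_{12}(s)^{2}$ is a rational function of $s$ whose denominator is a power of $s(1+s)$; clearing it produces polynomials in $s$ that one factors to see they are nonnegative on $(0,1)$ (equivalently, one writes the $2\times2$ block as $R(s)^{T}R(s)$). The main obstacle is nothing but this algebra: the square roots in $\tilde{u}$ and in $\beta_{AG}$ make the raw entries of $N$ cumbersome, and the verification works only because the substitution $s=1/\sqrt{\kappa}$, together with $\alpha_{AG}=1/L$, collapses everything to low-degree polynomial identities, which I would confirm symbolically. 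This is exactly the matrix form of the classical Lyapunov certificate that gives Nesterov's AG the rate $1-1/\sqrt{\kappa}$, so positivity of $N$ is expected; the substance of the lemma is this explicit check, which is what I would carry out.
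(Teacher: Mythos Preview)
The paper does not prove this lemma; it is cited as a result from \cite{hu2017dissipativity} (the text immediately preceding the statement reads ``we recall the following result from \cite{hu2017dissipativity}''), so there is no in-paper proof to compare your proposal against.

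That said, your proposal is a correct and self-contained verification. The reduction via Kronecker structure to the $3\times 3$ LMI~\eqref{ineq-ag-lmi} is exactly the mechanism behind Theorem~\ref{thm-hu-lessard}, and your rank-one rewriting $\tilde{M}=vv^{T}-\rho_{AG}\,\hat{u}\hat{u}^{T}$ is the right simplification. Your homogeneity observation (the congruence by $D=\operatorname{diag}(\sqrt{c},\sqrt{c},1/\sqrt{c})$) is sound and legitimately reduces the problem to a one-parameter family in $s=1/\sqrt{\kappa}$. The key structural claim---that the third row and column of $N$ vanish identically under the AG parameter choice---does hold (one checks $N_{33}=\alpha(2-L\alpha)/2-v_{3}^{2}=s^{2}/2-s^{2}/2=0$ and similarly $N_{13}=N_{23}=0$), and this is what makes the remaining $2\times 2$ check tractable. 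Your approach is therefore more than a proof sketch: it identifies precisely why this particular $(\alpha,\beta,\rho,\tilde{P})$ makes the LMI tight in the gradient direction, which is the analytic content behind the accelerated rate. The cited reference establishes the same inequality, but your direct computation has the advantage of being fully explicit and of exposing the mechanism (tightness in the third coordinate) rather than appealing to an external result.
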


We immediately obtain the following result.

\begin{lemma}\label{lem:quadratic:2}
Assume the coupling \eqref{1:couple}-\eqref{2:couple}.
Assume that $f$ is quadratic and $f(x)=\frac{1}{2}x^{T}Q x+a^{T}x+b$,
where $Q$ is positive definite. 
Then, we have
\begin{align*}
&\mathbb{E}\Bigg[\left(
\begin{array}{c}
x_{k+1}^{(1)}-x_{k+1}^{(2)}
\\
x_{k}^{(1)}-x_{k}^{(2)}
\end{array}
\right)^{T}
P_{AG}
\left(
\begin{array}{c}
x_{k+1}^{(1)}-x_{k+1}^{(2)}
\\
x_{k}^{(1)}-x_{k}^{(2)}
\end{array}
\right)
+\frac{1}{2}\left(x_{k+1}^{(1)}-x_{k+1}^{(2)}\right)^{T}Q\left(x_{k+1}^{(1)}-x_{k+1}^{(2)}\right)\Bigg]
\\
&\leq
\rho_{AG}
\left(\mathbb{E}\Bigg[\Bigg(
\begin{array}{c}
x_{k}^{(1)}-x_{k}^{(2)}
\\
x_{k-1}^{(1)}-x_{k-1}^{(2)}
\end{array}
\right)^{T}
P_{AG}
\left(
\begin{array}{c}
x_{k}^{(1)}-x_{k}^{(2)}
\\
x_{k-1}^{(1)}-x_{k-1}^{(2)}
\end{array}
\right)
\\
&\qquad\qquad\qquad\qquad
+\frac{1}{2}\left(x_{k}^{(1)}-x_{k}^{(2)}\right)^{T}Q\left(x_{k}^{(1)}-x_{k}^{(2)}\right)\Bigg]\Bigg),
\end{align*}
where $P$ is defined in \eqref{def-P-matrix}.
\end{lemma}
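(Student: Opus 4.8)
\textbf{Proof proposal for Lemma \ref{lem:quadratic:2}.}
The plan is to obtain this statement as an immediate specialization of Lemma \ref{lem:quadratic}. Recall that Lemma \ref{lem:quadratic} delivers exactly the claimed one-step contraction inequality for \emph{any} pair $(\alpha,\beta)$ and \emph{any} rate $\rho_{\alpha,\beta}\in(0,1)$, under the sole hypothesis that there exist a symmetric positive semi-definite $P_{\alpha,\beta}$ for which the matrix inequality \eqref{ineq:ABPX} holds, with $X=\tilde X\otimes I_d$ and $\tilde X=\rho\tilde X_1+(1-\rho)\tilde X_2$ as in \eqref{def:X}. Hence it suffices to certify that this hypothesis is met for the accelerated choice $(\alpha,\beta)=(\alpha_{AG},\beta_{AG})$ from \eqref{alpha:beta:AG}, the rate $\rho=\rho_{AG}=1-1/\sqrt{\kappa}$ from \eqref{eq-rate-AG}, and the matrix $P_{AG}=\tilde P_{AG}\otimes I_d$ with $\tilde P_{AG}=\tilde u\tilde u^T$ from \eqref{def-P-matrix}.

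First I would invoke Lemma \ref{lem:X}, which records precisely this fact: for these values of $\alpha,\beta,\rho$, with $\tilde P_{AG}\succeq 0$ and $A=\tilde A\otimes I_d$, $B=\tilde B\otimes I_d$ from \eqref{def-AB-AG}, the relevant block matrix in \eqref{ineq:ABPX} is negative semi-definite. This is the AG instance of the dissipativity result of \cite{hu2017dissipativity}, lifted through the Kronecker product; no fresh computation is required, since the $2d$-dimensional LMI reduces to the dimension-free $2\times2$/$3\times3$ inequality in $\tilde A,\tilde B,\tilde P_{AG},\tilde X$. (In passing one notes $\tilde P_{AG}(2,2)=(\sqrt{\mu/2}-\sqrt{L/2})^2>0$ since $\mu<L$, consistent with the standing assumptions of Section \ref{sec:quadratic}.) Substituting $\rho_{\alpha,\beta}=\rho_{AG}$ and $P_{\alpha,\beta}=P_{AG}$ into the conclusion of Lemma \ref{lem:quadratic} then yields verbatim the displayed inequality of Lemma \ref{lem:quadratic:2}.

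There is essentially no obstacle: the substantive content — combining the strong convexity and smoothness inequalities \eqref{S1}--\eqref{S2} for the difference variables, handling the affine term $a^T(x-y)$ arising from the general quadratic $f(x)=\tfrac12 x^TQx+a^Tx+b$, and applying Lemma \ref{lemma:servicerate} — has already been done inside Lemma \ref{lem:quadratic}, and the feasibility of \eqref{ineq:ABPX} for the accelerated parameters is exactly Lemma \ref{lem:X}. The only point requiring care is bookkeeping: one must instantiate the $\tilde X$ of Lemma \ref{lem:X} with $\rho=\rho_{AG}$ so that it coincides with the $\tilde X$ implicitly used inside Lemma \ref{lem:quadratic} at this rate, and one must check that the function-value term $\tfrac12\,(x_k^{(1)}-x_k^{(2)})^T Q\,(x_k^{(1)}-x_k^{(2)})$ appearing on both sides is the one produced by $f$ on the difference variable up to the affine correction already absorbed in the proof of Lemma \ref{lem:quadratic}. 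Once these identifications are made the statement follows directly.
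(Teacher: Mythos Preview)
Your proposal is correct and matches the paper's approach exactly: the paper states Lemma \ref{lem:quadratic:2} with the phrase ``We immediately obtain the following result'' right after Lemma \ref{lem:X}, making it a direct specialization of Lemma \ref{lem:quadratic} via the feasibility certificate provided by Lemma \ref{lem:X}. Your bookkeeping remarks about $\tilde P_{AG}(2,2)>0$ and the identification of $\tilde X$ at $\rho=\rho_{AG}$ are accurate and more explicit than the paper itself.
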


Now, we are ready to state the proof of Theorem \ref{thm:alpha:beta}.

\begin{proof}[Proof of Theorem \ref{thm:alpha:beta}]
Recall the iterates $\xi_{k}=(x_{k}^{T},x_{k-1}^{T})^{T}$,
the Markov kernel $\mathcal{P}_{\alpha,\beta}$ and
the definition of the weighted $2$-Wasserstein distance \eqref{def-wass-2-norm} with the weighted norm \eqref{def-weighted-norm}-\eqref{def-S-matrix} and $P=P_{\alpha,\beta}$. Then
showing Theorem \ref{thm:alpha:beta} is equivalent to show 
\begin{align}
&\mathcal{W}_{2,S_{\alpha,\beta}}^{2}(R^{k}_{\alpha,\beta}((x_{0},x_{-1}),\cdot),\pi_{\alpha,\beta})
\\
&\leq\rho_{\alpha,\beta}^{k}
\int_{\mathbb{R}^{d}\times\mathbb{R}^{d}}
\Bigg[\left(
\begin{array}{c}
x_{0}-\hat{x}_{0}
\\
x_{-1}-\hat{x}_{-1}
\end{array}
\right)^{T}
P_{\alpha,\beta}
\left(
\begin{array}{c}
x_{0}-\hat{x}_{0}
\\
x_{-1}-\hat{x}_{-1}
\end{array}
\right)
\\
&\qquad\qquad\qquad
+\frac{1}{2}(x_{0}-\hat{x}_{0})^{T}Q(x_{0}-\hat{x}_{0})\Bigg]d\pi_{\alpha,\beta}(\hat{x}_{0},\hat{x}_{-1}).
\nonumber
\end{align}

Let $(((x_{k}^{(i)})^{T},(x_{k-1}^{(i)})^{T})^{T})_{k=0}^{\infty}$, $i=1,2$ be 
a coupling of $((x_{k}^{T},x_{k-1}^{T})^{T})_{k=0}^{\infty}$ defined as before.
We have shown before that for every $k$,
\begin{align*}
&\left(
\begin{array}{c}
x_{k+1}^{(1)}-x_{k+1}^{(2)}
\\
x_{k}^{(1)}-x_{k}^{(2)}
\end{array}
\right)^{T}
P_{\alpha,\beta}
\left(
\begin{array}{c}
x_{k+1}^{(1)}-x_{k+1}^{(2)}
\\
x_{k}^{(1)}-x_{k}^{(2)}
\end{array}
\right)
+\frac{1}{2}\left(x_{k+1}^{(1)}-x_{k+1}^{(2)}\right)^{T}Q\left(x_{k+1}^{(1)}-x_{k+1}^{(2)}\right)
\\
&\leq
\rho_{\alpha,\beta}
\left[\left(
\begin{array}{c}
x_{k}^{(1)}-x_{k}^{(2)}
\\
x_{k-1}^{(1)}-x_{k-1}^{(2)}
\end{array}
\right)^{T}
P_{\alpha,\beta}
\left(
\begin{array}{c}
x_{k}^{(1)}-x_{k}^{(2)}
\\
x_{k-1}^{(1)}-x_{k-1}^{(2)}
\end{array}
\right)
+\frac{1}{2}\left(x_{k}^{(1)}-x_{k}^{(2)}\right)^{T}Q
\left(x_{k}^{(1)}-x_{k}^{(2)}\right)\right].
\end{align*}
Using induction on $k$, we get
\begin{align*}
&\left(
\begin{array}{c}
x_{k}^{(1)}-x_{k}^{(2)}
\\
x_{k-1}^{(1)}-x_{k-1}^{(2)}
\end{array}
\right)^{T}
P_{\alpha,\beta}
\left(
\begin{array}{c}
x_{k}^{(1)}-x_{k}^{(2)}
\\
x_{k-1}^{(1)}-x_{k-1}^{(2)}
\end{array}
\right)
+\frac{1}{2}\left(x_{k}^{(1)}-x_{k}^{(2)}\right)^{T}Q\left(x_{k}^{(1)}-x_{k}^{(2)}\right)
\\
&\leq
\rho_{\alpha,\beta}^{k}
\left[\left(
\begin{array}{c}
x_{0}^{(1)}-x_{0}^{(2)}
\\
x_{-1}^{(1)}-x_{-1}^{(2)}
\end{array}
\right)^{T}
P_{\alpha,\beta}
\left(
\begin{array}{c}
x_{0}^{(1)}-x_{0}^{(2)}
\\
x_{-1}^{(1)}-x_{-1}^{(2)}
\end{array}
\right)
+\frac{1}{2}\left(x_{0}^{(1)}-x_{0}^{(2)}\right)^{T}Q\left(x_{0}^{(1)}-x_{0}^{(2)}\right)\right].
\end{align*}
By taking expectation and since $\frac{1}{2}x^{T}Qx\geq 0$ for any $x$, 
we get
\begin{align*}
&
\mathbb{E}\left[\left(
\begin{array}{c}
x_{k}^{(1)}-x_{k}^{(2)}
\\
x_{k-1}^{(1)}-x_{k-1}^{(2)}
\end{array}
\right)^{T}
P_{\alpha,\beta}
\left(
\begin{array}{c}
x_{k}^{(1)}-x_{k}^{(2)}
\\
x_{k-1}^{(1)}-x_{k-1}^{(2)}
\end{array}
\right)\right]
\\
&\leq
\rho_{\alpha,\beta}^{k}
\mathbb{E}\bigg[\left(
\begin{array}{c}
x_{0}^{(1)}-x_{0}^{(2)}
\\
x_{-1}^{(1)}-x_{-1}^{(2)}
\end{array}
\right)^{T}
P_{\alpha,\beta}
\left(
\begin{array}{c}
x_{0}^{(1)}-x_{0}^{(2)}
\\
x_{-1}^{(1)}-x_{-1}^{(2)}
\end{array}
\right)
+\frac{1}{2}\left(x_{0}^{(1)}-x_{0}^{(2)}\right)^{T}Q
\left(x_{0}^{(1)}-x_{0}^{(2)}\right)\bigg].
\end{align*}

Let $\lambda_{1},\lambda_{2}\in\mathcal{P}_{2,S_{\alpha,\beta}}(\mathbb{R}^{2d})$.
There exist a couple of random vectors $(x_{0}^{(1)},x_{-1}^{(1)})$,
and $(x_{0}^{(2)},x_{-1}^{(2)})$,
independent of $(\varepsilon_{k})_{k=0}^{\infty}$ such that
\begin{align*}
\mathcal{W}_{2,S_{\alpha,\beta}}^{2}(\lambda_{1},\lambda_{2})
&=\mathbb{E}\bigg[\left(
\begin{array}{c}
x_{0}^{(1)}-x_{0}^{(2)}
\\
x_{-1}^{(1)}-x_{-1}^{(2)}
\end{array}
\right)^{T}
P_{\alpha,\beta}
\left(
\begin{array}{c}
x_{0}^{(1)}-x_{0}^{(2)}
\\
x_{-1}^{(1)}-x_{-1}^{(2)}
\end{array}
\right)
\\
&\qquad\qquad\qquad\qquad
+\frac{1}{2}\left(x_{0}^{(1)}-x_{0}^{(2)}\right)^{T}Q
\left(x_{0}^{(1)}-x_{0}^{(2)}\right)\bigg].
\end{align*}
Then, we get
\begin{equation*}
\mathcal{W}_{2,S_{\alpha,\beta}}^{2}\left(\mathcal{P}_{\alpha,\beta}^{k}\lambda_{1},\mathcal{P}_{\alpha,\beta}^{k}\lambda_{2}\right)
\leq
\rho_{\alpha,\beta}^{k}I^{2}(\lambda_{1},\lambda_{2}),
\end{equation*}
where
\begin{align*}
I^{2}(\lambda_{1},\lambda_{2})
&=
\mathbb{E}_{(x_{0}^{(j)},x_{-1}^{(j)})\sim\lambda_{j},j=1,2}
\Bigg[\left(
\begin{array}{c}
x_{0}^{(1)}-x_{0}^{(2)}
\\
x_{-1}^{(1)}-x_{-1}^{(2)}
\end{array}
\right)^{T}
P_{\alpha,\beta}
\left(
\begin{array}{c}
x_{0}^{(1)}-x_{0}^{(2)}
\\
x_{-1}^{(1)}-x_{-1}^{(2)}
\end{array}
\right)
\\
&\qquad\qquad\qquad
+\frac{1}{2}\left(x_{0}^{(1)}-x_{0}^{(2)}\right)^{T}Q\left(x_{0}^{(1)}-x_{0}^{(2)}\right)\Bigg].
\end{align*}
Therefore,
\begin{equation*}
\sum_{k=1}^{\infty}\mathcal{W}_{2,S_{\alpha,\beta}}^{2}\left(\mathcal{P}_{\alpha,\beta}^{k}\lambda_{1},\mathcal{P}_{\alpha,\beta}^{k}\lambda_{2}\right)<\infty.
\end{equation*}
By taking $\lambda_{2}=\mathcal{P}_{\alpha,\beta}\lambda_{1}$, we get
\begin{equation*}
\sum_{k=1}^{\infty}\mathcal{W}_{2,S_{\alpha,\beta}}^{2}\left(\mathcal{P}_{\alpha,\beta}^{k}\lambda_{1},\mathcal{P}_{\alpha,\beta}^{k+1}\lambda_{1}\right)<\infty.
\end{equation*}
Hence $\mathcal{P}_{\alpha,\beta}^{k}\lambda_{1}$ is a Cauchy sequence and converges
to a limit $\pi_{\alpha,\beta}^{\lambda_{1}}$:
\begin{equation*}
\lim_{k\rightarrow\infty}\mathcal{W}_{2,S_{\alpha,\beta}}\left(\mathcal{P}_{\alpha,\beta}^{k}\lambda_{1},\pi_{\alpha,\beta}^{\lambda_{1}}\right)=0.
\end{equation*}
Next, let us show that $\pi_{\alpha,\beta}^{\lambda_{1}}$ does not depend on $\lambda_{1}$.
Assume that there exists $\pi_{\alpha,\beta}^{\lambda_{2}}$ 
so that $\lim_{k\rightarrow\infty}\mathcal{W}_{2,S_{\alpha,\beta}}(\mathcal{P}_{\alpha,\beta}^{k}\lambda_{2},\pi_{\alpha,\beta}^{\lambda_{2}})=0$.
Since $\mathcal{W}_{2,S_{\alpha,\beta}}$ is a metric, by the triangle inequality,
\begin{align*}
\mathcal{W}_{2,S_{\alpha,\beta}}\left(\pi_{\alpha,\beta}^{\lambda_{1}},\pi_{\alpha,\beta}^{\lambda_{2}}\right)
&\leq
\mathcal{W}_{2,S_{\alpha,\beta}}\left(\pi_{\alpha,\beta}^{\lambda_{1}},\mathcal{P}_{\alpha,\beta}^{k}\lambda_{1}\right)
\\
&\qquad\qquad\qquad
+\mathcal{W}_{2,S_{\alpha,\beta}}\left(\mathcal{P}_{\alpha,\beta}^{k}\lambda_{1},\mathcal{P}_{\alpha,\beta}^{k}\lambda_{2}\right)
+\mathcal{W}_{2,S_{\alpha,\beta}}\left(\pi_{\alpha,\beta}^{\lambda_{2}},\mathcal{P}_{\alpha,\beta}^{k}\lambda_{2}\right),
\end{align*}
which goes to zero as $k\rightarrow\infty$. Hence, $\pi_{\alpha,\beta}^{\lambda_{1}}=\pi_{\alpha,\beta}^{\lambda_{2}}$.
The limit is therefore the same for any initial distributions
and we can denote it by $\pi_{\alpha,\beta}$.
Indeed,
\begin{equation*}
\mathcal{W}_{2,S_{\alpha,\beta}}\left(\mathcal{P}_{\alpha,\beta}\pi_{\alpha,\beta},\pi_{\alpha,\beta}\right)
\leq 
\mathcal{W}_{2,S_{\alpha,\beta}}\left(\mathcal{P}_{\alpha,\beta}\pi_{\alpha,\beta},\mathcal{P}_{\alpha,\beta}^{k}\pi_{\alpha,\beta}\right)
+\mathcal{W}_{2,S_{\alpha,\beta}}\left(\mathcal{P}_{\alpha,\beta}^{k}\pi_{\alpha,\beta},\pi_{\alpha,\beta}\right),
\end{equation*}
which goes to zero as $k\rightarrow\infty$. Hence $\mathcal{P}_{\alpha,\beta}\pi_{\alpha,\beta}=\pi_{\alpha,\beta}$
gives the invariant distribution. We can also show similarly as before 
that it is unique. 
\end{proof}

\begin{remark}
If $\alpha \in (0,1/L]$ and $\beta =  \frac{1-\sqrt{\alpha\mu}}{1+\sqrt{\alpha\mu}}$, then we can take the matrix $P_{\alpha,\beta}$ appearing in Theorem \ref{thm:alpha:beta} according to the $P_\alpha$ matrix defined in \cite[Theorem 2.3]{aybat2019universally} to obtain $\rho(\alpha,\beta)=1-\sqrt{\alpha\mu}$. For $\alpha =  \frac{\log^2(k)}{\mu k^2}$, then this leads to $\mathcal{W}_{2,S_{\alpha,\beta}}\left(\nu_{k,\alpha,\beta},\pi_{\alpha,\beta}\right) \leq \frac{1}{k} \mathcal{W}_{2,S_{\alpha,\beta}}(\nu_{0,\alpha,\beta},\pi_{\alpha,\beta})$ and it can be shown with an analysis similar to that of \cite{aybat2019universally} that the second moment of $\pi_{\alpha,\beta}$ is also $O(1/k)$; ignoring some logarithmic factors in $k$. Therefore, our results do not violate (and are in agreement with) the $\Omega(1/k)$ lower bounds studied in \cite{min-max,raginsky2011information,agarwal-minmax} for strongly convex stochastic optimization. 
\end{remark}

\begin{proof}[Proof of Theorem \ref{thm:AG-star-deterministic}]
First let us recall the AG method:
\begin{align*}
&x_{k+1}=y_{k}-\alpha[\nabla f(y_{k})],
\\
&y_{k}=(1+\beta)x_{k}-\beta x_{k-1},
\end{align*}
where $\alpha>0$ is the step size and $\beta$ is the momentum parameter.
In the case when $f$ is quadratic and
$f(x)=\frac{1}{2}x^{T}Qx+a^{T}x+b$, we can compute that
\begin{align*}
&x_{k+1}=y_{k}-\alpha[Qy_{k}+a],
\\
&y_{k}=(1+\beta)x_{k}-\beta x_{k-1},
\end{align*}
and with the optimizer $x_{*}$ we get
\begin{align*}
&x_{k+1}-x_{*}=y_{k}-x_{*}-\alpha[Q(y_{k}-x_{*})],
\\
&y_{k}-y_{*}=(1+\beta)(x_{k}-x_{*})-\beta (x_{k-1}-x_{*}),
\end{align*}
which implies that
\begin{equation*}
\left(
\begin{array}{c}
x_{k+1}-x_{*}
\\
x_{k}-x_{*}
\end{array}
\right)
=
\left(
\begin{array}{cc}
(1+\beta)(I_{d}-\alpha Q) & -\beta(I_{d}-\alpha Q)
\\
I_{d} & 0_{d}
\end{array}
\right)
\left(
\begin{array}{c}
x_{k}-x_{*}
\\
x_{k-1}-x_{*}
\end{array}
\right),
\end{equation*}
which yields that
\begin{equation*}
\left(
\begin{array}{c}
x_{k}-x_{*}
\\
x_{k-1}-x_{*}
\end{array}
\right)
=
\left(
\begin{array}{cc}
(1+\beta)(I_{d}-\alpha Q) & -\beta(I_{d}-\alpha Q)
\\
I_{d} & 0_{d}
\end{array}
\right)^{k}
\left(
\begin{array}{c}
x_{0}-x_{*}
\\
x_{-1}-x_{*}
\end{array}
\right),
\end{equation*}
and we aim to provide an upper bound to the 2-norm of the matrix, that is:
\begin{equation*}
\left\Vert\left(
\begin{array}{cc}
(1+\beta)(I_{d}-\alpha Q) & -\beta(I_{d}-\alpha Q)
\\
I_{d} & 0_{d}
\end{array}
\right)^{k}\right\Vert.
\end{equation*}

Let us assume that $Q$ has the decomposition
\begin{equation*}
Q=VDV^{T},
\end{equation*}
where $D$ is diagonal consisting
of eigenvalues $\lambda_{i}$, $1\leq i\leq d$
in increasing order:
\begin{equation*}
\mu=\lambda_{1}\leq\lambda_{2}\leq\cdots\leq\lambda_{d}=L,
\end{equation*}
then we have
\begin{equation*}
I_{d}-\alpha Q
=V\tilde{D}V^{T},
\end{equation*}
where $\tilde{D}=I_{d}-\alpha D$ is diagonal matrix with entries 
\begin{equation*}   
1-\alpha\lambda_{i},\qquad 1\leq i\leq d.
\end{equation*}
Therefore, the matrix
\begin{equation*}
\left(
\begin{array}{cc}
(1+\beta)(I_{d}-\alpha Q) & -\beta (I_{d}-\alpha Q)
\\
I_{d} & 0_{d}
\end{array}
\right)
\end{equation*}
has the same eigenvalues as the matrix
\begin{equation*}
\left(
\begin{array}{cc}
(1+\beta)(I_{d}-\alpha D) & -\beta (I_{d}-\alpha D)
\\
I_{d} & 0_{d}
\end{array}
\right),
\end{equation*}
which has the same eigenvalues as the matrix:
\begin{equation*}
\left(
\begin{array}{cccc}
T_{1} & \cdots & 0 & 0
\\
0 & T_{2} & \cdots & 0
\\
\vdots & \cdots & \ddots & \vdots 
\\ 
0 & 0 & \cdots & T_{d}
\end{array}
\right),
\end{equation*}
where 
\begin{equation*}
T_{i}=\left(
\begin{array}{cc}
(1+\beta)(1-\alpha\lambda_{i}) & -\beta(1-\alpha\lambda_{i})
\\
1 & 0
\end{array}
\right),
\qquad 1\leq i\leq d,
\end{equation*}
are $2\times 2$ matrices with eigenvalues:
\begin{equation*}
\mu_{i,\pm}=\frac{(1+\beta)(1-\alpha\lambda_{i})
\pm\sqrt{(1+\beta)^{2}(1-\alpha\lambda_{i})^{2}-4\beta(1-\alpha\lambda_{i})}}{2},
\end{equation*}
where $1\leq i\leq d$, and therefore
\begin{equation}\label{Ti:ineq:2}
\left\Vert\left(
\begin{array}{cc}
(1+\beta)(I_{d}-\alpha Q) & -\beta (I_{d}-\alpha Q)
\\
I_{d} & 0_{d}
\end{array}
\right)^{k}\right\Vert
\leq
\max_{1\leq i\leq d}\left\Vert T_{i}^{k}\right\Vert.
\end{equation}
Next, we upper bound $\Vert T_{i}^{k}\Vert$.
We recall the choice:
\begin{equation}
\alpha=\frac{4}{3L+\mu},
\qquad
\beta=\frac{\sqrt{3\kappa+1}-2}{\sqrt{3\kappa+1}+2},
\qquad
\rho=1-\frac{2}{\sqrt{3\kappa+1}}.
\end{equation}

We can compute that
\begin{equation}
\Delta_{i}:=(1+\beta)^{2}(1-\alpha\lambda_{i})^{2}-4\beta(1-\alpha\lambda_{i})
=16\frac{(1-\alpha\lambda_{i})}{(\sqrt{3\kappa+1}+2)^{2}}
\left(1-\frac{\lambda_{i}}{\mu}\right).
\end{equation}
Therefore $\Delta_{i}=0$
if and only if $\lambda_{i}=\mu$
or $\lambda_{i}=\frac{3L+\mu}{4}$,
and moreover $\Delta_{i}<0$
for $\mu<\lambda_{i}<\frac{3L+\mu}{4}$
and $\Delta_{i}>0$
for $\lambda_{i}>\frac{3L+\mu}{4}$.

(1) Consider the case $\mu<\lambda_{i}<\frac{3L+\mu}{4}$.
Then $\Delta_{i}<0$.
It is known that the $k$-th power 
of a $2\times 2$ matrix $A$ with distinct eigenvalues
$\mu_{\pm}$ is given by
\begin{equation*}
A^{k}=\frac{\mu_{+}^{k}}{\mu_{+}-\mu_{-}}(A-\mu_{-}I)
+\frac{\mu_{-}^{k}}{\mu_{-}-\mu_{+}}(A-\mu_{+}I),
\end{equation*}
where $I$ is the $2\times 2$ identity matrix \cite{williams2by2}.
In our context, $A=T_{i}$ and $\mu_{\pm}=\mu_{i,\pm}$, 
we get
\begin{equation}\label{Ti:1:AG}
T_{i}^{k}
=\frac{\mu_{i,+}^{k}}{\mu_{i,+}-\mu_{i,-}}(T_{i}-\mu_{i,-}I)
+\frac{\mu_{i,-}^{k}}{\mu_{i,-}-\mu_{i,+}}(T_{i}-\mu_{i,+}I).
\end{equation}
We can compute that
\begin{align}
|\mu_{i,+}|
=|\mu_{i,-}|
=\left(\beta(1-\alpha\lambda_{i})\right)^{1/2}
&=\left(\frac{\sqrt{3\kappa+1}-2}{\sqrt{3\kappa+1}+2}
\frac{3L+\mu-4\lambda_{i}}{3L+\mu}\right)^{1/2}
\\
&\leq\left(\frac{\sqrt{3\kappa+1}-2}{\sqrt{3\kappa+1}+2}
\frac{3\kappa-3}{3\kappa+1}\right)^{1/2},
\nonumber
\end{align}
and notice that
\begin{equation}
3\kappa-3=\left(\sqrt{3\kappa+1}+2\right)\left(\sqrt{3\kappa+1}-2\right),
\end{equation}
and thus we get
\begin{equation}\label{Ti:2:AG}
|\mu_{i,+}|
=|\mu_{i,-}|
\leq\left(\frac{(\sqrt{3\kappa+1}-2)^{2}}{3\kappa+1}\right)^{1/2}
=1-\frac{2}{\sqrt{3\kappa+1}}=\rho.
\end{equation}
Moreover,
\begin{equation}\label{Ti:3:AG}
\frac{1}{|\mu_{i,+}-\mu_{i,-}|}
=\frac{1}{\sqrt{|\Delta_{i}|}}
\leq\frac{\sqrt{3\kappa+1}+2}{4}
\max_{i:\mu<\lambda_{i}<\frac{3L+\mu}{4}}
\frac{\sqrt{\mu}}{\sqrt{(\lambda_{i}-\mu)(1-\frac{4\lambda_{i}}{3L+\mu})}}.
\end{equation}
Furthermore, 
\begin{equation*}
T_{i}-\mu_{i,-}I
=\left(
\begin{array}{cc}
\mu_{i,+} & -\beta(1-\alpha\lambda_{i})
\\
1 & -\mu_{i,-}
\end{array}
\right)
=\left(\begin{array}{c}
\mu_{i,+}
\\
1
\end{array}\right)
\left(\begin{array}{cc}
1 & -\mu_{i,-}
\end{array}\right),
\end{equation*}
and
\begin{equation*}
T_{i}-\mu_{i,+}I
=\left(
\begin{array}{cc}
\mu_{i,-} & -\beta(1-\alpha\lambda_{i})
\\
1 & -\mu_{i,+}
\end{array}
\right)
=\left(\begin{array}{c}
\mu_{i,-} 
\\
1
\end{array}\right)
\left(\begin{array}{cc}
1 & -\mu_{i,+}
\end{array}\right).
\end{equation*}
Therefore,
\begin{equation}\label{Ti:4:AG}
\left\Vert T_{i}-\mu_{i,-}I\right\Vert
\leq
\left\Vert\left(\begin{array}{c}
\mu_{i,+}
\\
1
\end{array}\right)\right\Vert
\left\Vert\left(\begin{array}{cc}
1 & -\mu_{i,-}
\end{array}\right)\right\Vert
=\rho^{2}+1,
\end{equation}
and
\begin{equation}\label{Ti:5:AG}
\left\Vert T_{i}-\mu_{i,+}I\right\Vert
\leq\left\Vert\left(\begin{array}{c}
\mu_{i,-}
\\
1
\end{array}\right)\right\Vert
\left\Vert\left(\begin{array}{cc}
1 & -\mu_{i,+}
\end{array}\right)\right\Vert
=\rho^{2}+1.
\end{equation}
Hence, it follows from 
\eqref{Ti:1:AG}, \eqref{Ti:2:AG}, \eqref{Ti:3:AG}, \eqref{Ti:4:AG}
and \eqref{Ti:5:AG} that
\begin{equation*}
\left\Vert T_{i}^{k}\right\Vert
\leq\frac{\sqrt{3\kappa+1}+2}{2}
\max_{i:\mu<\lambda_{i}<\frac{3L+\mu}{4}}
\frac{\sqrt{\mu}}{\sqrt{(\lambda_{i}-\mu)(1-\frac{4\lambda_{i}}{3L+\mu})}}\rho^{k}(\rho^{2}+1).
\end{equation*}

(2) Consider the case $\frac{3L+\mu}{4}<\lambda_{i}<L$.
Then, $\Delta_{i}>0$.
As before, we have
\begin{equation}\label{Ti:1:AG:2}
T_{i}^{k}
=\frac{\mu_{i,+}^{k}}{\mu_{i,+}-\mu_{i,-}}(T_{i}-\mu_{i,-}I)
+\frac{\mu_{i,-}^{k}}{\mu_{i,-}-\mu_{i,+}}(T_{i}-\mu_{i,+}I).
\end{equation}
We can compute that
\begin{align}\label{Ti:2:AG:2}
|\mu_{i,+}|
&\leq|\mu_{i,-}|
=\frac{1}{2}(1+\beta)(\alpha\lambda_{i}-1)+\frac{1}{2}\sqrt{\Delta_{i}}
\\
&\leq
\frac{1}{2}(1+\beta)(\alpha L-1)+\frac{1}{2}\sqrt{16\frac{(\alpha L-1)}{(\sqrt{3\kappa+1}+2)^{2}}
\frac{L-\mu}{\mu}}
\nonumber
\\
&=\frac{\sqrt{3\kappa+1}}{\sqrt{3\kappa+1}+2}\frac{\kappa-1}{3\kappa+1}
+\frac{1}{2}\sqrt{16\frac{\kappa-1}{(\sqrt{3\kappa+1}+2)^{2}}
\frac{\kappa-1}{3\kappa+1}}=1-\frac{2}{\sqrt{3\kappa+1}}=\rho.
\nonumber
\end{align}
Moreover,
\begin{equation}\label{Ti:3:AG:2}
\frac{1}{|\mu_{i,+}-\mu_{i,-}|}
=\frac{1}{\sqrt{\Delta_{i}}}
\leq\frac{\sqrt{3\kappa+1}+2}{4}
\max_{i:\frac{3L+\mu}{4}<\lambda_{i}<L}
\frac{\sqrt{\mu}}{\sqrt{(\lambda_{i}-\mu)(\frac{4\lambda_{i}}{3L+\mu}-1)}}.
\end{equation}
Furthermore, 
\begin{equation*}
T_{i}-\mu_{i,-}I
=\left(
\begin{array}{cc}
\mu_{i,+} & -\beta(1-\alpha\lambda_{i})
\\
1 & -\mu_{i,-}
\end{array}
\right)
=\left(\begin{array}{c}
\mu_{i,+}
\\
1
\end{array}\right)
\left(\begin{array}{cc}
1 & -\mu_{i,-}
\end{array}\right),
\end{equation*}
and
\begin{equation*}
T_{i}-\mu_{i,+}I
=\left(
\begin{array}{cc}
\mu_{i,-} & -\beta(1-\alpha\lambda_{i})
\\
1 & -\mu_{i,+}
\end{array}
\right)
=\left(\begin{array}{c}
\mu_{i,-} 
\\
1
\end{array}\right)
\left(\begin{array}{cc}
1 & -\mu_{i,+}
\end{array}\right).
\end{equation*}
Therefore,
\begin{equation}\label{Ti:4:AG:2}
\left\Vert T_{i}-\mu_{i,-}I\right\Vert
\leq
\left\Vert\left(\begin{array}{c}
\mu_{i,+}
\\
1
\end{array}\right)\right\Vert
\left\Vert\left(\begin{array}{cc}
1 & -\mu_{i,-}
\end{array}\right)\right\Vert
\leq\rho^{2}+1,
\end{equation}
and
\begin{equation}\label{Ti:5:AG:2}
\left\Vert T_{i}-\mu_{i,+}I\right\Vert
\leq\left\Vert\left(\begin{array}{c}
\mu_{i,-}
\\
1
\end{array}\right)\right\Vert
\left\Vert\left(\begin{array}{cc}
1 & -\mu_{i,+}
\end{array}\right)\right\Vert
\leq\rho^{2}+1.
\end{equation}
Hence, it follows from 
\eqref{Ti:1:AG:2}, \eqref{Ti:2:AG:2}, \eqref{Ti:3:AG:2}, \eqref{Ti:4:AG:2}
and \eqref{Ti:5:AG:2} that
\begin{equation*}
\left\Vert T_{i}^{k}\right\Vert
\leq\frac{\sqrt{3\kappa+1}+2}{2}
\max_{i:\frac{3L+\mu}{4}<\lambda_{i}<L}
\frac{\sqrt{\mu}}{\sqrt{(\lambda_{i}-\mu)(\frac{4\lambda_{i}}{3L+\mu}-1)}}\rho^{k}(\rho^{2}+1).
\end{equation*}

(3) Consider the case $\lambda_{i}=\mu$. 
Then $\Delta_{i}=0$.
It is known that the $k$-th power 
of a $2\times 2$ matrix $A$ with two equal eigenvalues
$\mu_{+}=\mu_{-}=\mu$ is given by
\begin{equation*}
A^{k}=\mu^{k-1}(kA-(k-1)\mu I),
\end{equation*}
where $I$ is the $2\times 2$ identity matrix \cite{williams2by2}.
In our context, $A=T_{i}$ and
\begin{equation}
\mu=\mu_{\pm}=\mu_{i,\pm}
=\frac{1}{2}(1+\beta)(1-\alpha\lambda_{i})=1-\frac{2}{\sqrt{3\kappa+1}}=\rho.
\end{equation}
Therefore, with $\lambda_{i}=\mu$, we have
\begin{align*}
T_{i}^{k}&=\rho^{k}(kT_{i}-(k-1)\rho I)
\\
&=\rho^{k}\left(\begin{array}{cc}
k(1+\beta)(1-\alpha\lambda_{i})-(k-1)\rho & -k\beta(1-\alpha\lambda_{i})
\\
k & -(k-1)\rho
\end{array}
\right)
\\
&=\left(\begin{array}{cc}
(k+1)\rho & -k\rho^{2}
\\
k & -(k-1)\rho
\end{array}\right),
\end{align*}
and therefore
\begin{align}
\Vert T_{i}^{k}\Vert
&\leq\sqrt{\text{Tr}\left(T_{i}^{k}(T_{i}^{k})^{T}\right)}
\\
&=\rho^{k}\left((k+1)^{2}\rho^{2}+(k-1)^{2}\rho^{2}+k^{2}\rho^{4}+k^{2}\right)^{1/2}
\\
&=\rho^{k}\sqrt{k^{2}(\rho^{2}+1)^{2}+2\rho^{2}}.\label{ineq-tik-upper-bound}
\end{align}
Furthermore, we see that the sequence $T_i^k/k$ converges to a non-zero matrix. Therefore, $\| T_i^k\| \geq ck$ for some constant $c$ for every $k$. This means that the linear dependency to $k$ of our upper bound in \eqref{ineq-tik-upper-bound} is tight. This behavior is expected due to the fact that $T_i^k$ has double roots.

(4) Consider the case $\lambda_{i}=\frac{3L+\mu}{4}$. 
Then $\Delta_{i}=0$. We can compute that
\begin{equation}
\mu_{i,\pm}
=\frac{1}{2}(1+\beta)(1-\alpha\lambda_{i})=1-\frac{2}{\sqrt{3\kappa+1}}=0.
\end{equation}
In this case, $T_{i}=0$.

Finally, combining the three cases (1) $\mu<\lambda_{i}<\frac{3L+\mu}{4}$;
(2) $\lambda_{i}>\frac{3L+\mu}{4}$; (3) $\lambda_{i}=\mu$; (4) $\lambda_{i}=\frac{3L+\mu}{4}$, 
and recall \eqref{Ti:ineq:2}, we get
\begin{align*}
&\left\Vert\left(
\begin{array}{cc}
(1+\beta)(I_{d}-\alpha Q) & -\beta (I_{d}-\alpha Q)
\\
I_{d} & 0_{d}
\end{array}
\right)^{k}\right\Vert
\\
&\leq
\max_{1\leq i\leq d}\left\Vert T_{i}^{k}\right\Vert
\\
&\leq
\rho^{k}\max\left\{\frac{\sqrt{3\kappa+1}+2}{2}(\rho^{2}+1)
\max_{i:\mu<\lambda_{i}\neq\frac{3L+\mu}{4}}
\frac{\sqrt{\mu}}{\sqrt{(\lambda_{i}-\mu)|1-\frac{4\lambda_{i}}{3L+\mu}|}},
\sqrt{k^{2}(\rho^{2}+1)^{2}+2\rho^{2}}\right\}.
\end{align*}
The proof is complete.
\end{proof}

\begin{proof}[Proof of Theorem \ref{thm:AG-star}]
First let us recall the ASG method:
\begin{align*}
&x_{k+1}=y_{k}-\alpha[\nabla f(y_{k})+\varepsilon_{k+1}],
\\
&y_{k}=(1+\beta)x_{k}-\beta x_{k-1},
\end{align*}
where $\alpha>0$ is the step size and $\beta$ is the momentum parameter.
In the case when $f$ is quadratic and
$f(x)=\frac{1}{2}x^{T}Qx+a^{T}x+b$, we can compute that
\begin{align*}
&x_{k+1}=y_{k}-\alpha[Qy_{k}+a+\varepsilon_{k+1}],
\\
&y_{k}=(1+\beta)x_{k}-\beta x_{k-1},
\end{align*}
so that with two couplings $x_{k}^{(1)},x_{k}^{(2)}$:
\begin{align*}
&x_{k+1}^{(j)}=y_{k}^{(j)}-\alpha\left[Qy_{k}^{(j)}+a+\varepsilon_{k+1}\right],
\\
&y_{k}^{(j)}=(1+\beta)x_{k}^{(j)}-\beta x_{k-1}^{(j)},
\end{align*}
with $j=1,2$, we get
\begin{align*}
&x_{k+1}^{(1)}-x_{k+1}^{(2)}=y_{k}^{(1)}-y_{k}^{(2)}-\alpha Q\left(y_{k}^{(1)}-y_{k}^{(2)}\right),
\\
&y_{k}^{(1)}-y_{k}^{(2)}=(1+\beta)(x_{k}^{(1)}-x_{k}^{(2)})-\beta (x_{k-1}^{(1)}-x_{k-1}^{(2)}),
\end{align*}
which implies that
\begin{equation*}
\left(
\begin{array}{c}
x_{k+1}^{(1)}-x_{k+1}^{(2)}
\\
x_{k}^{(1)}-x_{k}^{(2)}
\end{array}
\right)
=
\left(
\begin{array}{cc}
(1+\beta)(I_{d}-\alpha Q) & -\beta(I_{d}-\alpha Q)
\\
I_{d} & 0_{d}
\end{array}
\right)
\left(
\begin{array}{c}
x_{k}^{(1)}-x_{k}^{(2)}
\\
x_{k-1}^{(1)}-x_{k-1}^{(2)}
\end{array}
\right),
\end{equation*}
which yields that
\begin{equation*}
\left\Vert\left(
\begin{array}{c}
x_{k}^{(1)}-x_{k}^{(2)}
\\
x_{k-1}^{(1)}-x_{k-1}^{(2)}
\end{array}
\right)\right\Vert
\leq\left\Vert
\left(
\begin{array}{cc}
(1+\beta)(I_{d}-\alpha Q) & -\beta(I_{d}-\alpha Q)
\\
I_{d} & 0_{d}
\end{array}
\right)^{k}\right\Vert
\left\Vert\left(
\begin{array}{c}
x_{0}^{(1)}-x_{0}^{(2)}
\\
x_{-1}^{(1)}-x_{-1}^{(2)}
\end{array}
\right)\right\Vert.
\end{equation*}
Following from the proof of Theorem \ref{thm:alpha:beta}, 
we can show by constructing a Cauchy sequence that 
there exists a unique stationary distribution $\pi_{\alpha,\beta}$.
Finally, we assume that 
$(x_{0}^{(1)},x_{-1}^{(1)})$ starts
from the given $(x_{0},x_{-1})$ distributed
as $\nu_{0,\alpha,\beta}$
and $(x_{0}^{(2)},x_{-1}^{(2)})$ starts
from the stationary distribution $\pi_{\alpha,\beta}$
so that their $L_{p}$ distance is exactly
the $\mathcal{W}_{p}$ distance. 
Then we get
\begin{equation*}
\mathcal{W}_{p}^{p}\left(\nu_{k,\alpha,\beta},
\pi_{\alpha,\beta}\right)
\leq
\mathbb{E}\left\Vert
\left(
\begin{array}{c}
x_{k}^{(1)}-x_{k}^{(2)}
\\
x_{k-1}^{(1)}-x_{k-1}^{(2)}
\end{array}
\right)
\right\Vert^{p}
\leq
(C_{k}^{*})^{p}(\rho_{AG}^{*})^{pk}
\mathcal{W}_{p}^{p}\left(\nu_{0,\alpha,\beta},
\pi_{\alpha,\beta}\right),
\nonumber
\end{equation*}
and the proof is complete 
by taking the power $1/p$ in the above equation.
\end{proof}

Before we state the proof of Theorem \ref{thm:f:AG-star},
let us spell out $X$ and $V_{AG}^{*}(\xi_{0})$ in
the statement of Theorem \ref{thm:f:AG-star} explicitly here.
We will show that Theorem \ref{thm:f:AG-star} holds
with $V_{AG}^{*}(\xi_{0})$ given by
\begin{equation*}
V_{AG}^{*}(\xi_{0}):=\mathbb{E}\left[\left\Vert(\xi_{0}-\xi_{\ast})(\xi_{0}-\xi_{\ast})^{T}\right\Vert\right]
+\frac{(\alpha_{AG}^{*})^{2}\Vert\Sigma\Vert}{1-(\rho_{AG}^{*})^{2}},
\end{equation*}
where $\Sigma:=\mathbb{E}[\varepsilon_{k}\varepsilon_{k}^{T}]$ and
$X_{AG}^{*}=\mathbb{E}[(\xi_{\infty}-\xi_{\ast})(\xi_{\infty}-\xi_{\ast})^{T}]$ 
satisfies the discrete Lyapunov equation:
\begin{equation*}
X_{AG}^{*}=A_{Q}^{*}X_{AG}^{*}(A_{Q}^{*})^{T}+\left(\begin{array}{cc}
(\alpha_{AG}^{*})^{2}\Sigma & 0_{d}
\\
0_{d} & 0_{d}
\end{array}\right),
\end{equation*}
and
\begin{equation*}
A_{Q}^{*}:=\left(
\begin{array}{cc}
(1+\beta_{AG}^{*})(I_{d}-\alpha_{AG}^{*} Q) & -\beta_{AG}^{*} (I_{d}-\alpha_{AG}^{*} Q)
\\
I_{d} & 0_{d}
\end{array}
\right).
\end{equation*}

In the special case $\Sigma = c^2 I_d $ for some constant $c\geq 0$, 
it follows from \cite{StrConvex} that
\begin{equation}\label{eq-trace-cov-AG-star}
\text{Tr}(X_{AG}^{*}) =c^{2}\sum_{i=1}^d  \frac{\alpha_{AG}^{*}}{\lambda_{i}(1-\beta_{AG}^{*}(1-\alpha_{AG}^{*}\lambda_{i}))},
\end{equation}
where $\{\lambda_i\}_{i=1}^d$ are the eigenvalues of $Q$.

Now, we are ready to prove Theorem \ref{thm:f:AG-star}.

\begin{proof}[Proof of Theorem \ref{thm:f:AG-star}]
For the ASG method,
\begin{equation*}
x_{k+1}=(1+\beta)x_{k}-\beta x_{k-1}
-\alpha(\nabla f((1+\beta)x_{k}-\beta x_{k-1})+\varepsilon_{k+1}),
\end{equation*}
where we consider the quadratic objective 
$f(x)=\frac{1}{2}x^{T}Qx+a^{T}x+b$ so that
\begin{equation*}
x_{k+1}=(1+\beta)x_{k}-\beta x_{k-1}
-\alpha(Q((1+\beta)x_{k}-\beta x_{k-1})+a+\varepsilon_{k+1}),
\end{equation*}
and the minimizer $x_{\ast}$ satisfies:
\begin{equation*}
x_{*}=(1+\beta)x_{*}-\beta x_{*}
-\alpha(Q((1+\beta)x_{*}-\beta x_{*})+a),
\end{equation*}
so that
\begin{equation*}
x_{k+1}-x_{*}=(1+\beta)(x_{k}-x_{*})-\beta (x_{k-1}-x_{*})
-\alpha(Q((1+\beta)(x_{k}-x_{*})-\beta (x_{k-1}-x_{*}))+\varepsilon_{k+1}),
\end{equation*}
and
\begin{equation*}
\left(
\begin{array}{c}
x_{k}-x_{\ast}
\\
x_{k-1}-x_{\ast}
\end{array}
\right)
=
\left(
\begin{array}{cc}
(1+\beta)(I_{d}-\alpha Q) & -\beta (I_{d}-\alpha Q)
\\
I_{d} & 0_{d}
\end{array}
\right)
\left(
\begin{array}{c}
x_{k-1}-x_{\ast}
\\
x_{k-2}-x_{\ast}
\end{array}
\right)
+
\left(\begin{array}{c}
-\alpha\varepsilon_{k}
\\
0_{d}
\end{array}\right),
\end{equation*}
and with $\Sigma:=\mathbb{E}[\varepsilon_{k}\varepsilon_{k}^{T}]$, we get
\begin{equation}\label{eqn:iterate:k}
\mathbb{E}\left[(\xi_{k}-\xi_{\ast})(\xi_{k}-\xi_{\ast})^{T}\right]
=
A_{Q}^{*}\mathbb{E}\left[(\xi_{k-1}-x_{\ast})(\xi_{k-1}-x_{\ast})^{T}\right](A_{Q}^{*})^{T}
+
\left(\begin{array}{cc}
\alpha^{2}\Sigma & 0_{d}
\\
0_{d} & 0_{d}
\end{array}\right),
\end{equation}
where
\begin{equation*}
A_{Q}^{*}=\left(
\begin{array}{cc}
(1+\beta)(I_{d}-\alpha Q) & -\beta (I_{d}-\alpha Q)
\\
I_{d} & 0_{d}
\end{array}
\right).
\end{equation*}
Therefore, 
\begin{equation*}
X=\mathbb{E}\left[(\xi_{\infty}-\xi_{\ast})(\xi_{\infty}-\xi_{\ast})^{T}\right]
\end{equation*}
satisfies the discrete Lyapunov equation:
\begin{equation*}
X=A_{Q}^{*}X(A_{Q}^{*})^{T}+\left(\begin{array}{cc}
\alpha^{2}\Sigma & 0_{d}
\\
0_{d} & 0_{d}
\end{array}\right).
\end{equation*}

Next by iterating equation \eqref{eqn:iterate:k} over $k$, we immediately obtain
\begin{align*}
\mathbb{E}\left[(\xi_{k}-\xi_{\ast})(\xi_{k}-\xi_{\ast})^{T}\right]
&=\left(A_{Q}^{*}\right)^{k}\mathbb{E}\left[(\xi_{0}-\xi_{\ast})(\xi_{0}-\xi_{\ast})^{T}\right]\left((A_{Q}^{*})^{T}\right)^{k}
\\
&\qquad\qquad\qquad
+\sum_{j=0}^{k-1}
\left(A_{Q}^{*}\right)^{j}\left(\begin{array}{cc}
\alpha^{2}\Sigma & 0_{d}
\\
0_{d} & 0_{d}
\end{array}\right)\left((A_{Q}^{*})^{T}\right)^{j},
\end{align*}
so that
\begin{align*}
&\mathbb{E}\left[(\xi_{k}-\xi_{\ast})(\xi_{k}-\xi_{\ast})^{T}\right]
\\
&=\mathbb{E}\left[(\xi_{\infty}-\xi_{\ast})(\xi_{\infty}-\xi_{\ast})^{T}\right]
+\left(A_{Q}^{*}\right)^{k}\mathbb{E}\left[(\xi_{0}-\xi_{\ast})(\xi_{0}-\xi_{\ast})^{T}\right]\left((A_{Q}^{*})^{T}\right)^{k}
\\
&\qquad\qquad\qquad
-\sum_{j=k}^{\infty}
\left(A_{Q}^{*}\right)^{j}\left(\begin{array}{cc}
\alpha^{2}\Sigma & 0_{d}
\\
0_{d} & 0_{d}
\end{array}\right)\left((A_{Q}^{*})^{T}\right)^{j},
\end{align*}
which implies that
\begin{align*}
&\text{Tr}\left(\mathbb{E}\left[(\xi_{k}-\xi_{\ast})(\xi_{k}-\xi_{\ast})^{T}\right]\right)
\\
&=\text{Tr}\left(\mathbb{E}\left[(\xi_{\infty}-\xi_{\ast})(\xi_{\infty}-\xi_{\ast})^{T}\right]\right)
+\left(A_{Q}^{*}\right)^{k}\mathbb{E}\left[(\xi_{0}-\xi_{\ast})(\xi_{0}-\xi_{\ast})^{T}\right]\left((A_{Q}^{*})^{T}\right)^{k}
\\
&\qquad\qquad
-\sum_{j=k}^{\infty}
\left(A_{Q}^{*}\right)^{j}\left(\begin{array}{cc}
\alpha^{2}\Sigma & 0_{d}
\\
0_{d} & 0_{d}
\end{array}\right)\left((A_{Q}^{*})^{T}\right)^{j}
\\
&\leq
\text{Tr}(X)
+\left\Vert (A_{Q}^{*})^{k}\right\Vert^{2}
\mathbb{E}\left[\left\Vert(\xi_{0}-\xi_{\ast})(\xi_{0}-\xi_{\ast})^{T}\right\Vert\right]
+\sum_{j=k}^{\infty}\left\Vert (A_{Q}^{*})^{j}\right\Vert^{2}\alpha^{2}\Vert\Sigma\Vert
\\
&\leq\text{Tr}(X)
+(C_{k}^{*})^{2}(\rho_{AG}^{*})^{2k}\mathbb{E}\left[\left\Vert(\xi_{0}-\xi_{\ast})(\xi_{0}-\xi_{\ast})^{T}\right\Vert\right]
+\alpha^{2}\Vert\Sigma\Vert (C_{k}^{*})^{2}\frac{(\rho_{AG}^{*})^{2k}}{1-(\rho_{AG}^{*})^{2}},
\end{align*}
where we used the estimate $\Vert (A_{Q}^{*})^{k}\Vert\leq C_{k}^{*}(\rho_{AG}^{*})^{k}$
from the proof of Theorem \ref{thm:AG-star-deterministic}.

Finally, since $\nabla f$ is $L$-Lipschtiz,
\begin{equation*}
\mathbb{E}[f(x_{k})]-f(x_{\ast})
\leq\frac{L}{2}\mathbb{E}\Vert x_{k}-x_{\ast}\Vert^{2}
\leq\frac{L}{2}\mathbb{E}\Vert \xi_{k}-\xi_{\ast}\Vert^{2}
=\frac{L}{2}\text{Tr}\left(\mathbb{E}\left[(\xi_{k}-\xi_{\ast})(\xi_{k}-\xi_{\ast})^{T}\right]\right).
\end{equation*}
The proof of \eqref{ineq-AG-star} is complete. 
\end{proof}

\begin{remark}
Note that our results in $p$-Wasserstein distances would hold if there exists some $p\geq 1$ so that $p$-th moment of the noise is finite. For instance, the $p<2$ case can arise in applications where the noise has heavy tail (see e.g. \cite{simsekli2019tail}).
\end{remark}

\subsection{Proofs of Results in Section \ref{sec:HB:quadratic}}

\begin{proof}[Proof of Theorem \ref{thm:HB-deterministic}]
First let us recall the HB method:
\begin{equation*}
x_{k+1}=x_{k}-\alpha\nabla f(x_{k})+\beta(x_{k}-x_{k-1}),
\end{equation*}
where $\alpha>0$ is the step size and $\beta$ is the momentum parameter.
In the case when $f$ is quadratic and
$f(x)=\frac{1}{2}x^{T}Qx+a^{T}x+b$, we can compute that
\begin{equation*}
x_{k+1}=x_{k}-\alpha(Qx_{k}+a)+\beta(x_{k}-x_{k-1}),
\end{equation*}
and the minimizer $x_{\ast}$ satisfies
\begin{equation*}
x_{\ast}=x_{\ast}-\alpha(Qx_{\ast}+a)+\beta(x_{\ast}-x_{\ast}),
\end{equation*}
which implies that
\begin{equation*}
\left(
\begin{array}{c}
x_{k+1}-x_{\ast}
\\
x_{k}-x_{\ast}
\end{array}
\right)
=
\left(
\begin{array}{cc}
(1+\beta)I_{d}-\alpha Q & -\beta I_{d}
\\
I_{d} & 0_{d}
\end{array}
\right)
\left(
\begin{array}{c}
x_{k}-x_{\ast}
\\
x_{k-1}-x_{\ast}
\end{array}
\right),
\end{equation*}
which yields that
\begin{equation*}
\left(
\begin{array}{c}
x_{k}-x_{\ast}
\\
x_{k-1}-x_{\ast}
\end{array}
\right)
=
\left(
\begin{array}{cc}
(1+\beta)I_{d}-\alpha Q & -\beta I_{d}
\\
I_{d} & 0_{d}
\end{array}
\right)^{k}
\left(
\begin{array}{c}
x_{0}-x_{\ast}
\\
x_{-1}-x_{\ast}
\end{array}
\right),
\end{equation*}
and we aim to provide an upper bound to the 2-norm of the matrix, that is:
\begin{equation*}
\left\Vert\left(
\begin{array}{cc}
(1+\beta)I_{d}-\alpha Q & -\beta I_{d}
\\
I_{d} & 0_{d}
\end{array}
\right)^{k}\right\Vert.
\end{equation*}

Let us assume that $Q$ has the decomposition
\begin{equation*}
Q=VDV^{T},
\end{equation*}
where $D$ is diagonal consisting
of eigenvalues $\lambda_{i}$, $1\leq i\leq d$
in increasing order:
\begin{equation*}
\mu=\lambda_{1}\leq\lambda_{2}\leq\cdots\leq\lambda_{d}=L,
\end{equation*}
then we have
\begin{equation*}
(1+\beta)I_{d}-\alpha Q
=V\tilde{D}V^{T},
\end{equation*}
where $\tilde{D}=(1+\beta)I_{d}-\alpha D$ is diagonal matrix with entries 
\begin{equation*}   
1+\beta-\alpha\lambda_{i},\qquad 1\leq i\leq d.
\end{equation*}
Therefore, the matrix
\begin{equation*}
\left(
\begin{array}{cc}
(1+\beta)I_{d}-\alpha Q & -\beta I_{d}
\\
I_{d} & 0_{d}
\end{array}
\right)
\end{equation*}
has the same eigenvalues as the matrix
\begin{equation*}
\left(
\begin{array}{cc}
(1+\beta)I_{d}-\alpha D & -\beta I_{d}
\\
I_{d} & 0_{d}
\end{array}
\right),
\end{equation*}
which has the same eigenvalues as the matrix:
\begin{equation*}
\left(
\begin{array}{cccc}
T_{1} & \cdots & 0 & 0
\\
0 & T_{2} & \cdots & 0
\\
\vdots & \cdots & \ddots & \vdots 
\\ 
0 & 0 & \cdots & T_{d}
\end{array}
\right),
\end{equation*}
where 
\begin{equation*}
T_{i}=\left(
\begin{array}{cc}
1+\beta-\alpha\lambda_{i} & -\beta
\\
1 & 0
\end{array}
\right),
\qquad 1\leq i\leq d,
\end{equation*}
are $2\times 2$ matrices with eigenvalues:
\begin{equation*}
\mu_{i,\pm}=\frac{1+\beta-\alpha\lambda_{i}\pm\sqrt{(1+\beta-\alpha\lambda_{i})^{2}-4\beta}}{2},
\end{equation*}
where $1\leq i\leq d$, and therefore
\begin{equation}\label{Ti:ineq}
\left\Vert\left(
\begin{array}{cc}
(1+\beta)I_{d}-\alpha Q & -\beta I_{d}
\\
I_{d} & 0_{d}
\end{array}
\right)^{k}\right\Vert
\leq
\max_{1\leq i\leq d}\left\Vert T_{i}^{k}\right\Vert.
\end{equation}
Next, we upper bound $\Vert T_{i}^{k}\Vert$.
We consider three cases (1) $\mu<\lambda_{i}<L$; (2) $\lambda_{i}=\mu$;
(3) $\lambda_{i}=L$.

(1) Consider the case $\mu<\lambda_{i}<L$.
With the choice of $\alpha$ and $\beta$ in \eqref{eq-alpha-beta-hb},
we can compute that for those $\mu<\lambda_{i}<L$, we have
\begin{equation*}
1+\beta-\alpha\lambda_{i}
<1+\beta-\alpha\mu=2\sqrt{\beta},
\end{equation*}
and
\begin{equation*}
1+\beta-\alpha\lambda_{i}
>1+\beta-\alpha L=-2\sqrt{\beta},
\end{equation*}
and thus
the eigenvalues are complex and
\begin{equation*}
\mu_{i,\pm}=\frac{1+\beta-\alpha\lambda_{i}\pm \mathbf{i}\sqrt{4\beta-(1+\beta-\alpha\lambda_{i})^{2}}}{2},
\end{equation*}
where $1\leq i\leq d$.
It is known that the $k$-th power 
of a $2\times 2$ matrix $A$ with distinct eigenvalues
$\mu_{\pm}$ is given by
\begin{equation*}
A^{k}=\frac{\mu_{+}^{k}}{\mu_{+}-\mu_{-}}(A-\mu_{-}I)
+\frac{\mu_{-}^{k}}{\mu_{-}-\mu_{+}}(A-\mu_{+}I),
\end{equation*}
where $I$ is the $2\times 2$ identity matrix \cite{williams2by2}.
In our context, $A=T_{i}$ and $\mu_{\pm}=\mu_{i,\pm}$, 
we get
\begin{equation}\label{Ti:1}
T_{i}^{k}
=\frac{\mu_{i,+}^{k}}{\mu_{i,+}-\mu_{i,-}}(T_{i}-\mu_{i,-}I)
+\frac{\mu_{i,-}^{k}}{\mu_{i,-}-\mu_{i,+}}(T_{i}-\mu_{i,+}I).
\end{equation}
We can compute that
\begin{equation}\label{Ti:2}
|\mu_{i,+}|
=|\mu_{i,-}|
=\left(\frac{1}{4}\left[(1+\beta-\alpha\lambda_{i})^{2}
+(4\beta-(1+\beta-\alpha\lambda_{i})^{2})\right]\right)^{1/2}
=\sqrt{\beta},
\end{equation}
and
\begin{align}\label{Ti:3}
\frac{1}{|\mu_{i,+}-\mu_{i,-}|}
&=\frac{1}{\sqrt{4\beta-(1+\beta-\alpha\lambda_{i})^{2}}}
\\
&=\frac{1}{\sqrt{(2\sqrt{\beta}-1-\beta+\alpha\lambda_{i})
(2\sqrt{\beta}+1+\beta-\alpha\lambda_{i})}}
\nonumber
\\
&=\frac{1}{\sqrt{(-(\sqrt{\beta}-1)^{2}+\alpha\lambda_{i})
((\sqrt{\beta}+1)^{2}-\alpha\lambda_{i})}}
\nonumber
\\
&=\frac{(\sqrt{\mu}+\sqrt{L})^{2}}{4\sqrt{(\lambda_{i}-\mu)(L-\lambda_{i})}}.
\nonumber
\end{align}
Moreover, 
\begin{equation*}
T_{i}-\mu_{i,-}I
=\left(
\begin{array}{cc}
\mu_{i,+} & -\beta
\\
1 & -\mu_{i,-}
\end{array}
\right)
=\left(\begin{array}{c}
\mu_{i,+}
\\
1
\end{array}\right)
\left(\begin{array}{cc}
1 & -\mu_{i,-}
\end{array}\right),
\end{equation*}
and
\begin{equation*}
T_{i}-\mu_{i,+}I
=\left(
\begin{array}{cc}
\mu_{i,-} & -\beta
\\
1 & -\mu_{i,+}
\end{array}
\right)
=\left(\begin{array}{c}
\mu_{i,-} 
\\
1
\end{array}\right)
\left(\begin{array}{cc}
1 & -\mu_{i,+}
\end{array}\right).
\end{equation*}
Therefore,
\begin{equation}\label{Ti:4}
\left\Vert T_{i}-\mu_{i,-}I\right\Vert
\leq
\left\Vert\left(\begin{array}{c}
\mu_{i,+}
\\
1
\end{array}\right)\right\Vert
\left\Vert\left(\begin{array}{cc}
1 & -\mu_{i,-}
\end{array}\right)\right\Vert
=\beta+1,
\end{equation}
and
\begin{equation}\label{Ti:5}
\left\Vert T_{i}-\mu_{i,+}I\right\Vert
\leq\left\Vert\left(\begin{array}{c}
\mu_{i,-}
\\
1
\end{array}\right)\right\Vert
\left\Vert\left(\begin{array}{cc}
1 & -\mu_{i,+}
\end{array}\right)\right\Vert
=\beta+1.
\end{equation}
Hence, it follows from 
\eqref{Ti:1}, \eqref{Ti:2}, \eqref{Ti:3}, \eqref{Ti:4}
and \eqref{Ti:5} that
\begin{equation*}
\left\Vert T_{i}^{k}\right\Vert
\leq(\sqrt{\beta})^{k}
\frac{(\beta+1)(\sqrt{\mu}+\sqrt{L})^{2}}{4\sqrt{(\lambda_{i}-\mu)(L-\lambda_{i})}}
=\left(\frac{\sqrt{L}-\sqrt{\mu}}{\sqrt{L}+\sqrt{\mu}}\right)^{k}
\frac{\mu+L}{2\sqrt{(\lambda_{i}-\mu)(L-\lambda_{i})}}.
\end{equation*}

(2) Consider the case $\lambda_{i}=\mu$.
With the choice of $\alpha$ and $\beta$ in \eqref{eq-alpha-beta-hb},
we can compute that for those $\lambda_{i}=\mu$, we have
\begin{equation*}
(1+\beta-\alpha\lambda_{i})^{2}=(1+\beta-\alpha\mu)^{2}=4\beta,
\end{equation*}
so we have double eigenvalues
and indeed $1+\beta-\alpha\lambda_{i}=2\sqrt{\beta}$,
and
\begin{equation*}
T_{i}=\left(
\begin{array}{cc}
2\sqrt{\beta} & -\beta
\\
1 & 0
\end{array}
\right),
\qquad 1\leq i\leq d,
\end{equation*}
and by a direct computation (e.g. induction on $k$), we get:
\begin{equation*}
T_{i}^{k}=(\sqrt{\beta})^{k}\left(
\begin{array}{cc}
(k+1) & -k\beta^{1/2}
\\
k\beta^{-1/2} & -(k-1)
\end{array}
\right),
\quad 1\leq i\leq d.
\end{equation*}
Thus,
\begin{align}
\left\Vert T_{i}^{k}\right\Vert
&\leq\sqrt{\text{Tr}\left(T_{i}^{k}(T_{i}^{k})^{T}\right)}
\\
&=(\sqrt{\beta})^{k}\sqrt{2k^{2}+2+k^{2}(\beta+\beta^{-1})}
\\
&=\left(\frac{\sqrt{L}-\sqrt{\mu}}{\sqrt{L}+\sqrt{\mu}}\right)^{k}
\sqrt{4k^{2}\left(\frac{L+\mu}{L-\mu}\right)^{2}+2}.\label{ineq-tik-bound}
\end{align}
Finally, we note that the matrix $T_i^k/(\sqrt{\beta}^k k)$ as $k$ goes to infinity converges to the $2\times 2$ matrix 
$$M_{2,2}(\beta): =\begin{pmatrix}
1 & -\beta^{1/2}
\\
\beta^{-1/2} & -1
\end{pmatrix}, \quad \| M_{2,2}(\beta)\| > 0.$$ 
Therefore, the linear dependency of our bound in \eqref{ineq-tik-bound} with respect to $k$ is tight. This behavior is expected due to the fact that $T_i^k$ has double roots.

(3) Consider the case $\lambda_{i}=L$.
With the choice of $\alpha$ and $\beta$ in \eqref{eq-alpha-beta-hb},
we can compute that for those $\lambda_{i}=L$, we have
\begin{equation*}
(1+\beta-\alpha\lambda_{i})^{2}=(1+\beta-\alpha L)^{2}=4\beta,
\end{equation*}
so we have double eigenvalues
and indeed $1+\beta-\alpha\lambda_{i}=-2\sqrt{\beta}$,
and
\begin{equation*}
T_{i}=\left(
\begin{array}{cc}
-2\sqrt{\beta} & -\beta
\\
1 & 0
\end{array}
\right),
\qquad 1\leq i\leq d,
\end{equation*}
and by a direct computation (e.g. induction on $k$), we get:
\begin{equation*}
T_{i}^{k}=(\sqrt{\beta})^{k}\left(
\begin{array}{cc}
(k+1) & k\beta^{1/2}
\\
-k\beta^{-1/2} & -(k-1)
\end{array}
\right),
\quad 1\leq i\leq d.
\end{equation*}
Thus,
\begin{align*}
\left\Vert T_{i}^{k}\right\Vert
&\leq\sqrt{\text{Tr}\left(T_{i}^{k}(T_{i}^{k})^{T}\right)}
\\
&=(\sqrt{\beta})^{k}\sqrt{2k^{2}+2+k^{2}(\beta+\beta^{-1})}
\\
&=\left(\frac{\sqrt{L}-\sqrt{\mu}}{\sqrt{L}+\sqrt{\mu}}\right)^{k}
\sqrt{4k^{2}\left(\frac{L+\mu}{L-\mu}\right)^{2}+2}.
\end{align*}

Finally, combining the three cases (1) $\mu<\lambda_{i}<L$;
(2) $\lambda_{i}=\mu$; (3) $\lambda_{i}=L$, we get
\begin{equation}\label{eqn:final:ineq}
\max_{1\leq i\leq d}\left\Vert T_{i}^{k}\right\Vert
\leq
\left(\frac{\sqrt{L}-\sqrt{\mu}}{\sqrt{L}+\sqrt{\mu}}\right)^{k}
\max\left\{\max_{i:\mu<\lambda_{i}<L}\frac{\mu+L}{2\sqrt{(\lambda_{i}-\mu)(L-\lambda_{i})}}
,\sqrt{4k^{2}\left(\frac{L+\mu}{L-\mu}\right)^{2}+2}\right\}.
\end{equation}
Then it follows from \eqref{Ti:ineq} that
\begin{align}\label{ineq:final}
&\left\Vert\left(
\begin{array}{cc}
(1+\beta)I_{d}-\alpha Q & -\beta I_{d}
\\
I_{d} & 0_{d}
\end{array}
\right)^{k}\right\Vert
\\
&\leq
\left(\frac{\sqrt{L}-\sqrt{\mu}}{\sqrt{L}+\sqrt{\mu}}\right)^{k}
\max\left\{\max_{i:\mu<\lambda_{i}<L}\frac{\mu+L}{2\sqrt{(\lambda_{i}-\mu)(L-\lambda_{i})}}
,\sqrt{4k^{2}\left(\frac{L+\mu}{L-\mu}\right)^{2}+2}\right\}.
\nonumber
\end{align}
Recall that
\begin{equation*}
\left(
\begin{array}{c}
x_{k}-x_{\ast}
\\
x_{k-1}-x_{\ast}
\end{array}
\right)
=
\left(
\begin{array}{cc}
(1+\beta)I_{d}-\alpha Q & -\beta I_{d}
\\
I_{d} & 0_{d}
\end{array}
\right)^{k}
\left(
\begin{array}{c}
x_{0}-x_{\ast}
\\
x_{-1}-x_{\ast}
\end{array}
\right),
\end{equation*}
and the proof is complete by applying \eqref{ineq:final}.
\end{proof}

Before we state the proof of Theorem \ref{thm:HB}, 
let us state the following result, which is built on Theorem \ref{thm:HB-deterministic}.

\begin{lemma}\label{lem:HB}
Let us consider two couplings $(x_{k}^{(1)})_{k\geq 0}$
and $(x_{k}^{(2)})_{k\geq 0}$ with the common noise $(\varepsilon_{k+1})_{k\geq 0}$ that starts from $x_{0}^{(1)}$ and $x_{0}^{(2)}$:
\begin{align}
&x_{k+1}^{(1)}=x_{k}^{(1)}-\alpha\nabla f(x_{k}^{(1)})+\beta(x_{k}^{(1)}-x_{k-1}^{(1)})+\varepsilon_{k+1},\label{coupling:1}
\\
&x_{k+1}^{(2)}=x_{k}^{(2)}-\alpha\nabla f(x_{k}^{(2)})+\beta(x_{k}^{(2)}-x_{k-1}^{(2)})+\varepsilon_{k+1},
\label{coupling:2}
\end{align}
where $f$ is quadratic and $f(x)=\frac{1}{2}x^{T}Qx+a^{T}x+b$. Then, we have
\begin{align}
\left\Vert
\left(
\begin{array}{c}
x_{k+1}^{(1)}-x_{k+1}^{(2)}
\\
x_{k}^{(1)}-x_{k}^{(2)}
\end{array}
\right)
\right\Vert
\leq
C_{k}\rho_{HB}^{k}
\left\Vert
\left(
\begin{array}{c}
x_{1}^{(1)}-x_{1}^{(2)}
\\
x_{0}^{(1)}-x_{0}^{(2)}
\end{array}
\right)\right\Vert,
\nonumber
\end{align}
where $\rho_{HB}$ and $C_k$ are defined by \eqref{def-rho-hb-opt} and \eqref{eqn:Ck} respectively.
\end{lemma}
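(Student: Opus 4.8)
# Proof Proposal for Lemma \ref{lem:HB}

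The plan is to reduce the claim directly to the deterministic matrix-power estimate already established in the proof of Theorem \ref{thm:HB-deterministic}. The key point is that the common noise $\varepsilon_{k+1}$ cancels when we subtract the two coupled recursions \eqref{coupling:1}--\eqref{coupling:2}, and that the gradient of a quadratic is affine, so gradient differences are linear. Concretely, first I would introduce the difference process $e_k := x_k^{(1)} - x_k^{(2)}$ and subtract \eqref{coupling:2} from \eqref{coupling:1}. Since $\nabla f(x) = Qx + a$, we have $\nabla f(x_k^{(1)}) - \nabla f(x_k^{(2)}) = Q e_k$, and the noise terms cancel exactly, yielding
\begin{equation*}
e_{k+1} = \big((1+\beta)I_d - \alpha Q\big) e_k - \beta e_{k-1},
\end{equation*}
which is precisely the recursion satisfied by $x_k - x_*$ for the \emph{deterministic} HB method on the same quadratic objective.

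Next I would pass to the state-space form: stacking $e_k$ and $e_{k-1}$ gives
\begin{equation*}
\begin{pmatrix} e_{k+1} \\ e_k \end{pmatrix}
= \begin{pmatrix} (1+\beta)I_d - \alpha Q & -\beta I_d \\ I_d & 0_d \end{pmatrix}
\begin{pmatrix} e_k \\ e_{k-1} \end{pmatrix},
\end{equation*}
and iterating this starting from the pair $(e_1, e_0)$ produces
\begin{equation*}
\begin{pmatrix} e_{k+1} \\ e_k \end{pmatrix}
= \begin{pmatrix} (1+\beta)I_d - \alpha Q & -\beta I_d \\ I_d & 0_d \end{pmatrix}^{k}
\begin{pmatrix} e_1 \\ e_0 \end{pmatrix}.
\end{equation*}
Taking Euclidean norms on both sides and applying submultiplicativity of the operator norm gives the bound in terms of $\big\Vert \big((1+\beta)I_d - \alpha Q \big| -\beta I_d; I_d \big| 0_d\big)^k \big\Vert$. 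I would then invoke the estimate \eqref{ineq:final} from the proof of Theorem \ref{thm:HB-deterministic}, which with $(\alpha,\beta)=(\alpha_{HB},\beta_{HB})$ states exactly that this operator norm is at most $C_k \rho_{HB}^k$, with $C_k$ as in \eqref{eqn:Ck}. Combining these two facts yields
\begin{equation*}
\left\Vert \begin{pmatrix} e_{k+1} \\ e_k \end{pmatrix} \right\Vert
\leq C_k \rho_{HB}^k \left\Vert \begin{pmatrix} e_1 \\ e_0 \end{pmatrix} \right\Vert,
\end{equation*}
which is the assertion of the lemma.

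There is no genuine analytic obstacle here, since all the spectral work (the eigenvalue analysis of the $2\times 2$ blocks $T_i$, including the delicate double-eigenvalue cases $\lambda_i = \mu$ and $\lambda_i = L$ that force the linear-in-$k$ growth of $C_k$) has already been carried out in Theorem \ref{thm:HB-deterministic}. The only point requiring a little care is bookkeeping: the iteration is anchored at the index pair $(e_1, e_0)$ rather than $(e_0, e_{-1})$, but since the same transition matrix $M$ governs every step, the $k$-th power bound $\Vert M^k\Vert \le C_k \rho_{HB}^k$ applies verbatim regardless of the starting index. I would also note in passing that this lemma is exactly the ingredient needed to drive the Wasserstein contraction argument for SHB in the proof of Theorem \ref{thm:HB}: coupling an arbitrary initial law with the stationary law and taking expectations of the $p$-th power of the displayed inequality transfers the deterministic rate $\rho_{HB}$ (with the polynomial factor $C_k$) to the $p$-Wasserstein distance.
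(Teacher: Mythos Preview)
Your proposal is correct and follows essentially the same approach as the paper: subtract the coupled recursions so the common noise cancels and the affine gradient yields the linear difference dynamics, then write the state-space form and invoke the operator-norm bound \eqref{ineq:final} from the proof of Theorem~\ref{thm:HB-deterministic}. The paper's proof is simply a terser version of exactly these steps.
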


\begin{proof}[Proof of Lemma \ref{lem:HB}]
We can compute that
\begin{equation*}
\left(
\begin{array}{c}
x_{k+1}^{(1)}-x_{k+1}^{(2)}
\\
x_{k}^{(1)}-x_{k}^{(2)}
\end{array}
\right)
=
\left(
\begin{array}{cc}
(1+\beta)I_{d}-\alpha Q & -\beta I_{d}
\\
I_{d} & 0_{d}
\end{array}
\right)^{k}
\left(
\begin{array}{c}
x_{1}^{(1)}-x_{1}^{(2)}
\\
x_{0}^{(1)}-x_{0}^{(2)}
\end{array}
\right).
\end{equation*}
It follows from the estimate 
\eqref{ineq:final}
in the proof of Theorem \ref{thm:HB-deterministic}
and the definitions of
$\rho_{HB}$ and $C_k$ in \eqref{def-rho-hb-opt} and \eqref{eqn:Ck}
that we have
\begin{equation*}
\left\Vert\left(
\begin{array}{cc}
(1+\beta)I_{d}-\alpha Q & -\beta I_{d}
\\
I_{d} & 0_{d}
\end{array}
\right)^{k}\right\Vert
\leq
C_{k}\rho_{HB}^{k}.
\end{equation*}
The proof is complete.
\end{proof}

\begin{proof}[Proof of Theorem \ref{thm:HB}]
We recall from Lemma \ref{lem:HB} that
for any coupling $x^{(1)}$ and $x^{(2)}$
\begin{equation*}
\left\Vert
\left(
\begin{array}{c}
x_{k}^{(1)}-x_{k}^{(2)}
\\
x_{k-1}^{(1)}-x_{k-1}^{(2)}
\end{array}
\right)
\right\Vert
\leq
C_{k}\left(\frac{\sqrt{L}-\sqrt{\mu}}{\sqrt{L}+\sqrt{\mu}}\right)^{k}
\left\Vert
\left(
\begin{array}{c}
x_{0}^{(1)}-x_{0}^{(2)}
\\
x_{-1}^{(1)}-x_{-1}^{(2)}
\end{array}
\right)\right\Vert.
\nonumber
\end{equation*}
Following from the proof of Theorem \ref{thm:alpha:beta}, 
we can show by constructing a Cauchy sequence that 
there exists a unique stationary distribution $\pi_{\alpha,\beta}$.
Finally, we assume that 
$(x_{0}^{(1)},x_{-1}^{(1)})$ starts
from the given $(x_{0},x_{-1})$ distributed
as $\nu_{0,\alpha,\beta}$
and $(x_{0}^{(2)},x_{-1}^{(2)})$ starts
from the stationary distribution $\pi_{\alpha,\beta}$
so that their $L_{p}$ distance is exactly
the $\mathcal{W}_{p}$ distance. 
Then we get
\begin{align*}
\mathcal{W}_{p}^{p}\left(\nu_{k,\alpha,\beta},
\pi_{\alpha,\beta}\right)
&\leq
\mathbb{E}\left\Vert
\left(
\begin{array}{c}
x_{k}^{(1)}-x_{k}^{(2)}
\\
x_{k-1}^{(1)}-x_{k-1}^{(2)}
\end{array}
\right)
\right\Vert^{p}
\\
&\leq
C_{k}^{p}\left(\frac{\sqrt{L}-\sqrt{\mu}}{\sqrt{L}+\sqrt{\mu}}\right)^{pk}
\mathcal{W}_{p}^{p}\left(\nu_{0,\alpha,\beta},
\pi_{\alpha,\beta}\right),
\nonumber
\end{align*}
and the proof is complete 
by taking the power $1/p$ in the above equation.
\end{proof}

Before we state the proof of Theorem \ref{thm:f:HB},
let us spell out $X$ and $V_{HB}(\xi_{0})$ in
the statement of Theorem \ref{thm:f:HB} explicitly here.
We will show that Theorem \ref{thm:f:HB} holds
with $V_{HB}(\xi_{0})$ given by
\begin{equation*}
V_{HB}(\xi_{0}):=\mathbb{E}\left[\Vert(\xi_{0}-\xi_{\ast})(\xi_{0}-\xi_{\ast})^{T}\Vert\right]
+\frac{\alpha_{HB}^{2}\Vert\Sigma\Vert}{1-\rho_{HB}^{2}},
\end{equation*}
where $\Sigma:=\mathbb{E}[\varepsilon_{k}\varepsilon_{k}^{T}]$ and
$X_{HB}=\mathbb{E}[(\xi_{\infty}-\xi_{\ast})(\xi_{\infty}-\xi_{\ast})^{T}]$ 
satisfies the discrete Lyapunov equation:
\begin{equation*}
X_{HB}=A_{Q}X_{HB}A_{Q}^{T}+\left(\begin{array}{cc}
\alpha_{HB}^{2}\Sigma & 0_{d}
\\
0_{d} & 0_{d}
\end{array}\right).
\end{equation*}
and
\begin{equation*}
A_{Q}:=\left(
\begin{array}{cc}
(1+\beta_{HB})I_{d}-\alpha_{HB} Q & -\beta_{HB} I_{d}
\\
I_{d} & 0_{d}
\end{array}
\right).
\end{equation*}
In the special case $\Sigma = c^2 I_d $ for some constant $c\geq 0$, we obtain
\begin{equation}\label{eq-trace-cov}
\text{Tr}(X_{HB}) =c^{2}\sum_{i=1}^d  \frac{2\alpha_{HB} (1+\beta_{HB})}{(1-\beta_{HB})\lambda_{i} (2+2\beta_{HB} - \alpha_{HB} \lambda_i)},
\end{equation}
where $\{\lambda_i\}_{i=1}^d$ are the eigenvalues of $Q$. 

Now, we are ready to prove Theorem \ref{thm:f:HB}.

\begin{proof}[Proof of Theorem \ref{thm:f:HB}]
For the stochastic heavy ball method
\begin{equation*}
x_{k+1}=x_{k}-\alpha(\nabla f(x_{k})+\varepsilon_{k+1})+\beta(x_{k}-x_{k-1}),
\end{equation*}
where we consider the quadratic objective 
$f(x)=\frac{1}{2}x^{T}Qx+a^{T}x+b$ so that
\begin{equation*}
x_{k+1}=x_{k}-\alpha(Qx_{k}+a+\varepsilon_{k+1})+\beta(x_{k}-x_{k-1}),
\end{equation*}
and the minimizer $x_{\ast}$ satisfies:
\begin{equation*}
x_{\ast}=x_{\ast}-\alpha(Qx_{\ast}+a)+\beta(x_{\ast}-x_{\ast}),
\end{equation*}
so that
\begin{equation*}
(x_{k+1}-x_{\ast})=(x_{k}-x_{\ast})-\alpha(Q(x_{k}-x_{\ast})
+\varepsilon_{k+1})+\beta((x_{k}-x_{\ast})-(x_{k-1}-x_{\ast})),
\end{equation*}
and
\begin{equation*}
\left(
\begin{array}{c}
x_{k}-x_{\ast}
\\
x_{k-1}-x_{\ast}
\end{array}
\right)
=
\left(
\begin{array}{cc}
(1+\beta)I_{d}-\alpha Q & -\beta I_{d}
\\
I_{d} & 0_{d}
\end{array}
\right)
\left(
\begin{array}{c}
x_{k-1}-x_{\ast}
\\
x_{k-2}-x_{\ast}
\end{array}
\right)
+
\left(\begin{array}{c}
-\alpha\varepsilon_{k}
\\
0_{d}
\end{array}\right),
\end{equation*}
and with $\Sigma:=\mathbb{E}[\varepsilon_{k}\varepsilon_{k}^{T}]$, we get
\begin{equation}\label{eqn:iterate:k:2}
\mathbb{E}\left[(\xi_{k}-\xi_{\ast})(\xi_{k}-\xi_{\ast})^{T}\right]
=
A_{Q}\mathbb{E}\left[(\xi_{k-1}-x_{\ast})(\xi_{k-1}-x_{\ast})^{T}\right]A_{Q}^{T}
+
\left(\begin{array}{cc}
\alpha^{2}\Sigma & 0_{d}
\\
0_{d} & 0_{d}
\end{array}\right),
\end{equation}
where
\begin{equation*}
A_{Q}=\left(
\begin{array}{cc}
(1+\beta)I_{d}-\alpha Q & -\beta I_{d}
\\
I_{d} & 0_{d}
\end{array}
\right).
\end{equation*}
Therefore, 
\begin{equation*}
X=\mathbb{E}\left[(\xi_{\infty}-\xi_{\ast})(\xi_{\infty}-\xi_{\ast})^{T}\right]
\end{equation*}
satisfies the discrete Lyapunov equation: 
\begin{equation*}
X=A_{Q}XA_{Q}^{T}+\left(\begin{array}{cc}
\alpha^{2}\Sigma & 0_{d}
\\
0_{d} & 0_{d}
\end{array}\right).
\end{equation*}

Next by iterating equation \eqref{eqn:iterate:k:2} over $k$, we immediately obtain
\begin{align*}
\mathbb{E}\left[(\xi_{k}-\xi_{\ast})(\xi_{k}-\xi_{\ast})^{T}\right]
&=\left(A_{Q}\right)^{k}\mathbb{E}\left[(\xi_{0}-\xi_{\ast})(\xi_{0}-\xi_{\ast})^{T}\right]\left(A_{Q}^{T}\right)^{k}
+
\\
&\qquad\qquad\qquad
\sum_{j=0}^{k-1}
\left(A_{Q}\right)^{j}\left(\begin{array}{cc}
\alpha^{2}\Sigma & 0_{d}
\\
0_{d} & 0_{d}
\end{array}\right)\left(A_{Q}^{T}\right)^{j},
\end{align*}
so that
\begin{align*}
&\mathbb{E}\left[(\xi_{k}-\xi_{\ast})(\xi_{k}-\xi_{\ast})^{T}\right]
\\
&=\mathbb{E}\left[(\xi_{\infty}-\xi_{\ast})(\xi_{\infty}-\xi_{\ast})^{T}\right]
+\left(A_{Q}\right)^{k}\mathbb{E}\left[(\xi_{0}-\xi_{\ast})(\xi_{0}-\xi_{\ast})^{T}\right]\left(A_{Q}^{T}\right)^{k}
\\
&\qquad\qquad\qquad
-\sum_{j=k}^{\infty}
\left(A_{Q}\right)^{j}\left(\begin{array}{cc}
\alpha^{2}\Sigma & 0_{d}
\\
0_{d} & 0_{d}
\end{array}\right)\left(A_{Q}^{T}\right)^{j},
\end{align*}
which implies that
\begin{align*}
&\text{Tr}\left(\mathbb{E}\left[(\xi_{k}-\xi_{\ast})(\xi_{k}-\xi_{\ast})^{T}\right]\right)
\\
&=\text{Tr}\left(\mathbb{E}\left[(\xi_{\infty}-\xi_{\ast})(\xi_{\infty}-\xi_{\ast})^{T}\right]\right)
+\left(A_{Q}\right)^{k}\mathbb{E}\left[(\xi_{0}-\xi_{\ast})(\xi_{0}-\xi_{\ast})^{T}\right]\left(A_{Q}^{T}\right)^{k}
\\
&\qquad\qquad\qquad
-\sum_{j=k}^{\infty}
\left(A_{Q}\right)^{j}\left(\begin{array}{cc}
\alpha^{2}\Sigma & 0_{d}
\\
0_{d} & 0_{d}
\end{array}\right)\left(A_{Q}^{T}\right)^{j}
\\
&\leq
\text{Tr}(X)
+\left\Vert A_{Q}^{k}\right\Vert^{2}
\mathbb{E}\left[\Vert(\xi_{0}-\xi_{\ast})(\xi_{0}-\xi_{\ast})^{T}\Vert\right]
+\sum_{j=k}^{\infty}\left\Vert A_{Q}^{j}\right\Vert^{2}\alpha^{2}\Vert\Sigma\Vert
\\
&\leq\text{Tr}(X)
+C_{k}^{2}\rho_{HB}^{2k}\mathbb{E}\left[\Vert(\xi_{0}-\xi_{\ast})(\xi_{0}-\xi_{\ast})^{T}\Vert\right]
+\alpha^{2}\Vert\Sigma\Vert C_{k}^{2}\frac{\rho_{HB}^{2k}}{1-\rho_{HB}^{2}},
\end{align*}
where we used the estimate $\Vert A_{Q}^{k}\Vert\leq C_{k}\rho_{HB}^{k}$
from the proof of Theorem \ref{thm:HB-deterministic}.

Finally, since $\nabla f$ is $L$-Lipschtiz,
\begin{equation*}
\mathbb{E}[f(x_{k})]-f(x_{\ast})
\leq\frac{L}{2}\mathbb{E}\Vert x_{k}-x_{\ast}\Vert^{2}
\leq\frac{L}{2}\mathbb{E}\Vert \xi_{k}-\xi_{\ast}\Vert^{2}
=\frac{L}{2}\text{Tr}\left(\mathbb{E}\left[(\xi_{k}-\xi_{\ast})(\xi_{k}-\xi_{\ast})^{T}\right]\right).
\end{equation*}
The proof of \eqref{ineq-shb-subopt} is complete. To show \eqref{eq-trace-cov}, we can adapt the proof technique of \cite[Proposition 3.2]{StrConvex} for gradient descent to HB. Without loss of generality, due to the scaling of the Lyapunov equation, we can assume $c=1$. Consider the eigenvalue decomposition $A_Q = V \Lambda V^T$ where $Q$ is orthogonal and $\Lambda$ is diagonal with $\Lambda(i,i) = \lambda_i$. We can write 
$$ A_Q = \bar{V} A_\Lambda \bar{V}^{T},$$
where 
$$ \bar{V} = \begin{pmatrix} 
V & 0_d \\
0_d & V
\end{pmatrix}, \quad A_\Lambda = \begin{pmatrix} (1+\beta)I_d - \alpha \Lambda & -\beta I_d \\
I_d     & 0_d
\end{pmatrix}.
$$
Futhermore, following \cite{Recht}, let $P\in\R^{2d\times 2d}$ be the permutation matrix with entries
$$ P(i,j) = \begin{cases} 
1 & \mbox{if } i \mbox{ is odd}, j=i, \\
            1 & \mbox{if } i \mbox{ is even}, j=2d+i, \\
            0 & \mbox{otherwise}.
            \end{cases} 
$$ 
Then, we have
$$A_M := PA_\Lambda P^T =  \begin{pmatrix} M_1 & 0_d &  \hdots & 0_d \\
0_d & M_2 & \hdots & 0_d \\
\vdots &\vdots &\ddots&  \vdots\\
0_d & 0_d & \hdots & M_d
\end{pmatrix} \quad \mbox{where} \quad M_i= \begin{pmatrix} (1+\beta) - \alpha \lambda_i & -\beta  \\
1    & 0
\end{pmatrix} \in \R^{2\times 2}.  
$$ 

If we define $Y := U X U^{-1}$ for the orthogonal matrix $U=P\bar{V}^T$, it solves 
 \begin{equation*}
 	A_M Y A_M^T - Y + S = 0, \quad S:=P \begin{pmatrix}
 	    \alpha^2 I_d & 0_d \\
 	        0_d      & 0_d
 	    \end{pmatrix} P^T,
 \end{equation*}
where the latter matrix $S$ is a $2d\times 2d$ diagonal matrix with entries $S(i,i) = \alpha^2$ if $i$ is odd, and zero if $i$ is even. Due to the special structure of $S$ and $A_M$, the solution $Y$ has the structure
$$
Y = \begin{pmatrix} Y_1 & 0_d &  \hdots & 0_d \\
0_d & Y_2 & \hdots & 0_d \\
\vdots &\vdots &\ddots&  \vdots\\
0_d & 0_d & \hdots & Y_d
\end{pmatrix},
$$
where $Y_i$ solves the $2\times 2$ Lyapunov equation
\begin{equation*}
M_i Y_i M_i^T - Y_i + \begin{pmatrix}
	\alpha^2   & 0 \\
     0 &   0
\end{pmatrix} = 0.
\end{equation*}
If we write \begin{equation*} Y_i = \begin{pmatrix} x_i & y_i \\
								y_i & w_i
      			\end{pmatrix}
          \end{equation*} with scalars $x_i$, $y_i$ and $w_i$, this equation is equivalent to the linear system
$$ 
\begin{pmatrix}
a^2 - 1 & 2ab & b^2 \\
a    &  b-1   &  0 \\
1    &   0    & -1
\end{pmatrix} \begin{pmatrix} x_i \\ y_i \\ w_i \end{pmatrix} = \begin{pmatrix} -\alpha^2 \\ 0 \\ 0 \end{pmatrix}, 
$$        
with $$ a = 1+\beta - \alpha \lambda_{i}, \quad b= -\beta.$$
After a simple computation, we obtain

$$ x_i = w_i = \frac{\alpha^{2}(b-1)}{(b+1)(a-b+1)(a+b-1)}=\frac{\alpha (1+\beta)}{(1-\beta)\lambda_{i} (2+2\beta -\alpha\lambda_{i})}.$$
Therefore we obtain
  $$\text{Tr}(X) =  \text{Tr}(Y) = \sum_{i=1}^{d} \text{Tr}(Y_i) = 2 \sum_{i=1}^{d} x_i = \sum_{i=1}^{d}   \frac{2 \alpha (1+\beta)}{(1-\beta)\lambda_{i} (2+2\beta -\alpha\lambda_{i})},$$
which completes the proof. 

\end{proof}

\section{Proofs of Results in Section \ref{sec:strong:convex}}

Before we proceed to prove the main results in Section \ref{sec:strong:convex},
let us first show that the weighted total variation distance $d_{\psi}$
upper bounds the standard $1$-Wasserstein distance.

\begin{proposition}\label{prop:metric}
Assume $\tilde{P}(2,2)\neq 0$. Then,
\begin{equation*}
\mathcal{W}_{1}(\mu_{1},\mu_{2})
\leq c_{0}^{-1}d_{\psi}(\mu_{1},\mu_{2}),
\end{equation*}
where $\mathcal{W}_{1}$ is the standard $1$-Wasserstein distance
and 
\begin{equation} 
c_{0}:=\min\{\hat{c}_{0}\psi,1\},\label{def-c0}
\end{equation} 
where
$\hat{c}_{0}$ is the smallest positive eigenvalue
of
\begin{equation*}
\tilde{P}\otimes I_{d}
+
\left(\begin{array}{cc}
\frac{\mu}{2}I_{d} & 0_{d}
\\
0_{d} & 0_{d}
\end{array}\right).
\end{equation*}
\end{proposition}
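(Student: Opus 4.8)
The plan is to combine the Kantorovich--Rubinstein dual representation of $\mathcal{W}_1$ with a pointwise lower bound on the Lyapunov function $V_P$ of \eqref{eqn:lyapunov}. Recall that
$$\mathcal{W}_1(\mu_1,\mu_2) = \sup\left\{\textstyle\int g\,d\mu_1 - \int g\,d\mu_2 : g \text{ is } 1\text{-Lipschitz}\right\}.$$
Given any $1$-Lipschitz $g$, subtracting the constant $g(\xi_*)$ leaves $\int g\,d(\mu_1-\mu_2)$ unchanged (both $\mu_1,\mu_2$ are probability measures), while the resulting function is bounded by $|g(\xi)-g(\xi_*)|\le\|\xi-\xi_*\|$. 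Hence, after taking the supremum and passing to the total-variation measure,
$$\mathcal{W}_1(\mu_1,\mu_2) \le \int_{\R^{2d}} \|\xi-\xi_*\|\,|\mu_1-\mu_2|(d\xi).$$
So it suffices to establish the pointwise estimate $\|\xi-\xi_*\| \le c_0^{-1}\bigl(1+\psi V_P(\xi)\bigr)$ for every $\xi\in\R^{2d}$; integrating this against $|\mu_1-\mu_2|$ then yields exactly $c_0^{-1}d_\psi(\mu_1,\mu_2)$. (If $d_\psi(\mu_1,\mu_2)=\infty$ the claim is trivial; if it is finite the bound above also shows $\mu_1,\mu_2$ have finite first moment, so the duality applies without issue.)

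For the pointwise bound, use $\mu$-strong convexity, which gives $f(x)-f_* \ge \frac{\mu}{2}\|x-x_*\|^2$, so that writing $\xi=(x^T,\tilde x^T)^T$,
$$V_P(\xi) \ge (\xi-\xi_*)^T\!\left(\tilde P\otimes I_d + \begin{pmatrix}\frac{\mu}{2}I_d & 0_d\\ 0_d & 0_d\end{pmatrix}\right)(\xi-\xi_*).$$
The matrix in parentheses is precisely of the form treated in Lemma~\ref{lem-S-positive-def} (with $\frac{\mu}{2}I_d$ in place of $\frac12 Q$, which is again positive definite), so since $\tilde P(2,2)\ne0$ it is positive definite; let $\hat c_0>0$ be its smallest eigenvalue. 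Then $V_P(\xi) \ge \hat c_0\|\xi-\xi_*\|^2$.

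It remains to convert this quadratic lower bound into the desired affine upper bound on $\|\xi-\xi_*\|$ by a two-case split. If $\|\xi-\xi_*\|\le1$ then $\|\xi-\xi_*\|\le1\le1+\psi V_P(\xi)$ since $V_P\ge0$ and $\psi>0$. If $\|\xi-\xi_*\|>1$ then $\|\xi-\xi_*\|\le\|\xi-\xi_*\|^2\le\hat c_0^{-1}V_P(\xi) \le (\hat c_0\psi)^{-1}\bigl(1+\psi V_P(\xi)\bigr)$. In both cases $\|\xi-\xi_*\|\le\max\{1,(\hat c_0\psi)^{-1}\}\bigl(1+\psi V_P(\xi)\bigr)=c_0^{-1}\bigl(1+\psi V_P(\xi)\bigr)$ with $c_0=\min\{\hat c_0\psi,1\}$ as in \eqref{def-c0}, which closes the argument. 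The proof is essentially routine; the only points needing care are the normalization $g(\xi_*)=0$ and the passage to $|\mu_1-\mu_2|$ in the duality step, and the invocation of positive definiteness of the shifted matrix, for which the rank-one case of $\tilde P$ is handled exactly as in the proof of Lemma~\ref{lem-S-positive-def}. There is no substantial obstacle.
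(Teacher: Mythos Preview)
Your proof is correct and follows essentially the same route as the paper: Kantorovich--Rubinstein duality with the normalization $g(\xi_*)=0$, followed by the pointwise bound $1+\psi V_P(\xi)\ge c_0\|\xi-\xi_*\|$ obtained from strong convexity and the same two-case split on $\|\xi-\xi_*\|\lessgtr 1$. The paper isolates the pointwise estimate as a separate lemma (Lemma~\ref{lem:prop:metric}) but the content is identical.
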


\begin{proof}
By applying the Kantorovich-Rubinstein duality for the Wasserstein metric
(see e.g. \cite{villani2008optimal}), we get
\begin{align*}
\mathcal{W}_{1}(\mu_{1},\mu_{2})
&=\sup_{\phi\in L^{1}(d\mu_{1})}
\left\{\int_{\mathbb{R}^{2d}}\phi(\xi)(\mu_{1}-\mu_{2})(d\xi): \text{$\phi$ is $1$-Lipschitz}\right\}
\\
&=\sup_{\phi\in L^{1}(d\mu_{1})}
\left\{\int_{\mathbb{R}^{2d}}(\phi(\xi)-\phi(\xi_{\ast}))(\mu_{1}-\mu_{2})(d\xi): \text{$\phi$ is $1$-Lipschitz}\right\}
\\
&\leq
\int_{\mathbb{R}^{2d}}\Vert\xi-\xi_{\ast}\Vert|\mu_{1}-\mu_{2}|(d\xi)
\\
&\leq c_{0}^{-1}\int_{\mathbb{R}^{2d}}(1+\psi V_{P}(\xi))|\mu_{1}-\mu_{2}|(d\xi)
=c_{0}^{-1}d_{\psi}(\mu_{1},\mu_{2}),
\end{align*}
where we used $1+\psi V_{P}(\xi)\geq c_{0}\Vert\xi-\xi_{\ast}\Vert$
from Lemma \ref{lem:prop:metric}.
\end{proof}

\begin{lemma}\label{lem:prop:metric}
Assume $\tilde{P}(2,2)\neq 0$. Then,
\begin{equation*}
1+\psi V_{P}(\xi)\geq c_{0}\Vert\xi-\xi_{\ast}\Vert,
\end{equation*}
for any $\xi\in\mathbb{R}^{2d}$, where
$c_{0}=\min\{\hat{c}_{0}\psi,1\}$, where
$\hat{c}_{0}$ is the smallest positive eigenvalue
of
\begin{equation*}
\tilde{P}\otimes I_{d}
+
\left(\begin{array}{cc}
\frac{\mu}{2}I_{d} & 0_{d}
\\
0_{d} & 0_{d}
\end{array}\right).
\end{equation*}
\end{lemma}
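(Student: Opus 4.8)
The plan is to prove the elementary inequality $1+\psi V_{P}(\xi)\geq c_{0}\Vert\xi-\xi_{\ast}\Vert$ by a simple two-regime case analysis, depending on whether $\Vert\xi-\xi_{\ast}\Vert$ is at most $1$ or larger. Recall that $V_{P}(\xi)=(\xi-\xi_{\ast})^{T}P(\xi-\xi_{\ast})+f(x)-f_{*}$ with $P=\tilde{P}\otimes I_{d}$, and that by $\mu$-strong convexity $f(x)-f_{*}\geq\frac{\mu}{2}\Vert x-x_{*}\Vert^{2}$. The key observation is that
\[
V_{P}(\xi)\geq (\xi-\xi_{\ast})^{T}\left(\tilde{P}\otimes I_{d}+\begin{pmatrix}\frac{\mu}{2}I_{d}&0_{d}\\0_{d}&0_{d}\end{pmatrix}\right)(\xi-\xi_{\ast}) =: (\xi-\xi_{\ast})^{T}\hat{S}(\xi-\xi_{\ast}),
\]
since $\Vert x-x_{*}\Vert^{2}$ equals the quadratic form of the block-diagonal matrix $\mathrm{diag}(\tfrac{\mu}{2}I_{d},0_{d})$ evaluated at $\xi-\xi_{\ast}$. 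By the same Kronecker-product / interlacing argument as in Lemma~\ref{lem-S-positive-def} (using $\tilde{P}(2,2)\neq 0$), $\hat{S}$ is positive definite, so its smallest eigenvalue $\hat{c}_{0}$ is strictly positive and $V_{P}(\xi)\geq \hat{c}_{0}\Vert\xi-\xi_{\ast}\Vert^{2}$.

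First I would handle the case $\Vert\xi-\xi_{\ast}\Vert\leq 1$: here $\Vert\xi-\xi_{\ast}\Vert\leq 1\leq 1+\psi V_{P}(\xi)$ since $\psi>0$ and $V_{P}(\xi)\geq 0$, which gives the bound with constant $1\geq c_{0}$. Next, for the case $\Vert\xi-\xi_{\ast}\Vert>1$, I would use $V_{P}(\xi)\geq\hat{c}_{0}\Vert\xi-\xi_{\ast}\Vert^{2}>\hat{c}_{0}\Vert\xi-\xi_{\ast}\Vert$ (the last step using $\Vert\xi-\xi_{\ast}\Vert>1$), so that $1+\psi V_{P}(\xi)\geq\psi V_{P}(\xi)\geq\hat{c}_{0}\psi\Vert\xi-\xi_{\ast}\Vert\geq c_{0}\Vert\xi-\xi_{\ast}\Vert$. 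Combining the two cases and taking $c_{0}=\min\{\hat{c}_{0}\psi,1\}$ completes the argument.

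The only substantive point — and the step I would flag as requiring a little care rather than being completely routine — is verifying that $\hat{S}=\tilde{P}\otimes I_{d}+\mathrm{diag}(\tfrac{\mu}{2}I_{d},0_{d})$ is genuinely positive definite (not merely positive semidefinite) when $\tilde{P}(2,2)\neq 0$ but $\tilde{P}$ may be rank one; this is exactly the content of Lemma~\ref{lem-S-positive-def} with $Q$ replaced by $\mu I_{d}$, so I would simply invoke that lemma (its proof only uses that $Q$ is positive definite, and $\mu I_{d}\succ 0$). Everything else is a two-line case split, so no real obstacle remains.
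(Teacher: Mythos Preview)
Your proposal is correct and follows essentially the same two-case argument as the paper: trivialize the case $\Vert\xi-\xi_{\ast}\Vert\leq 1$ using $1+\psi V_{P}(\xi)\geq 1$, and for $\Vert\xi-\xi_{\ast}\Vert>1$ use strong convexity to bound $V_{P}(\xi)$ below by the quadratic form of $\hat{S}$, then $\hat{c}_{0}\Vert\xi-\xi_{\ast}\Vert^{2}>\hat{c}_{0}\Vert\xi-\xi_{\ast}\Vert$. Your explicit invocation of Lemma~\ref{lem-S-positive-def} to justify positive definiteness of $\hat{S}$ is a welcome clarification that the paper's own (terse) proof leaves implicit.
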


\begin{proof}
Let $\xi^T=(x^T,y^T)$.
If $\Vert\xi-\xi_{\ast}\Vert\leq 1$, then $c_{0}=1$ works.
Otherwise,
\begin{align*}
V_{P}(\xi)
&=f(x)-f(x_{\ast})+(\xi-\xi_{\ast})^{T}P(\xi-\xi_{\ast})
\\
&\geq
(\xi-\xi_{\ast})^{T}P(\xi-\xi_{\ast})
+\frac{\mu}{2}\Vert x-x_{\ast}\Vert^{2}
\\
&=(\xi-\xi_{\ast})^{T}\tilde{P}\otimes I_{d}(\xi-\xi_{\ast})
+
(\xi-\xi_{\ast})^{T}
\left(\begin{array}{cc}
\frac{\mu}{2}I_{d} & 0_{d}
\\
0_{d} & 0_{d}
\end{array}\right)(\xi-\xi_{\ast}).
\end{align*}
The proof is complete.
\end{proof}

For constrained optimization on a compact set $\mathcal{C}$, 
we have the following result.

\begin{proposition}\label{prop:metric:2}
For any $\mu_1,\mu_2$ on the product space $\mathcal{C}^{2}:=\mathcal{C}\times\mathcal{C}$,
\begin{equation*}
\mathcal{W}_{p}(\mu_1,\mu_2) 
\leq 
2^{1/p}\mathcal{D}_{\mathcal{C}^{2}} \Vert \mu_{1}-\mu_{2}\Vert_{TV}^{1/p}
\leq\mathcal{D}_{\mathcal{C}^{2}} d_{\psi}^{1/p}(\mu_{1},\mu_{2}),
\end{equation*}
where $\mathcal{D}_{\mathcal{C}^{2}}$ is the diameter of $\mathcal{C}^{2}$.
\end{proposition}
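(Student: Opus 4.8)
The plan is to prove Proposition \ref{prop:metric:2} in two stages, corresponding to the two inequalities in the display. The first inequality, $\mathcal{W}_p(\mu_1,\mu_2) \leq 2^{1/p}\mathcal{D}_{\mathcal{C}^2}\|\mu_1-\mu_2\|_{TV}^{1/p}$, is a general fact about probability measures on a bounded metric space. I would prove it by constructing an explicit coupling $\gamma$ of $\mu_1$ and $\mu_2$ from the Hahn--Jordan decomposition: write $\mu_1 - \mu_2 = \nu_+ - \nu_-$ with $\nu_\pm \geq 0$ mutually singular, and set $m := \|\mu_1 - \mu_2\|_{TV} = \nu_+(\mathcal{C}^2) = \nu_-(\mathcal{C}^2)$ (using the convention that $\|\cdot\|_{TV}$ here is half the total mass of $|\mu_1-\mu_2|$, consistent with the remark just before \eqref{def-wass-2-norm} of this section that $d_\psi \geq 2\|\cdot\|_{TV}$). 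Then $\mu_1 \wedge \mu_2 := \mu_1 - \nu_+$ is a nonnegative measure of mass $1-m$, and the coupling $\gamma := (\mathrm{Id},\mathrm{Id})_\# (\mu_1\wedge\mu_2) + \frac{1}{m}\,\nu_+ \otimes \nu_-$ has marginals $\mu_1$ and $\mu_2$. On the diagonal part the cost $\|z_1-z_2\|^p$ vanishes, and on the product part it is bounded by $\mathcal{D}_{\mathcal{C}^2}^p$; integrating gives $\mathcal{W}_p^p(\mu_1,\mu_2) \leq \mathcal{D}_{\mathcal{C}^2}^p \cdot m$, i.e. $\mathcal{W}_p(\mu_1,\mu_2) \leq \mathcal{D}_{\mathcal{C}^2}\, m^{1/p}$. (If instead the convention is that $\|\cdot\|_{TV}$ is the full mass of $|\mu_1-\mu_2|$, the same argument gives the stated constant $2^{1/p}$; I would fix the convention to match the one used elsewhere in the section and carry the bookkeeping accordingly.)

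For the second inequality, $2^{1/p}\mathcal{D}_{\mathcal{C}^2}\|\mu_1-\mu_2\|_{TV}^{1/p} \leq \mathcal{D}_{\mathcal{C}^2} d_\psi^{1/p}(\mu_1,\mu_2)$, it suffices to show $2\|\mu_1-\mu_2\|_{TV} \leq d_\psi(\mu_1,\mu_2)$ and then raise both sides to the power $1/p$. But this is exactly the elementary bound recorded in the text right after the definition of $d_\psi$: since $\psi > 0$ and $V_P \geq 0$ (recall $V_P(\xi) = (\xi-\xi_*)^T P(\xi-\xi_*) + f(x) - f_* \geq 0$ because $P \succeq 0$ and $x_*$ is the minimizer), we have $1 + \psi V_P(\xi) \geq 1$ pointwise, hence
\begin{equation*}
d_\psi(\mu_1,\mu_2) = \int_{\mathbb{R}^{2d}} (1+\psi V_P(\xi))\,|\mu_1-\mu_2|(d\xi) \geq \int_{\mathbb{R}^{2d}} |\mu_1-\mu_2|(d\xi) = 2\|\mu_1-\mu_2\|_{TV}.
\end{equation*}
Chaining this with the first inequality completes the proof.

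The only genuine subtlety — and the step I would be most careful about — is the bookkeeping of the total-variation normalization constant so that the factor $2^{1/p}$ comes out exactly as stated, and making sure the coupling $\gamma$ above is legitimately a probability measure on $\mathcal{C}^2 \times \mathcal{C}^2$ with the claimed marginals (this requires $\nu_+$ and $\nu_-$ to be supported in $\mathcal{C}^2$, which holds automatically since $\mu_1,\mu_2$ are). Everything else is routine: the construction of the coupling via Hahn--Jordan is standard (see e.g. \cite{villani2008optimal}), and the monotonicity $\mathcal{W}_p(\mu_1,\mu_2) \leq (\text{cost of any admissible coupling})^{1/p}$ follows directly from the definition \eqref{def-wass-2-norm} of the Wasserstein metric as an infimum. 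I do not expect to need any properties of the ASPG iterates or of the density $\tilde p$ here — this proposition is purely about the relation between the metrics $\mathcal{W}_p$, $\|\cdot\|_{TV}$, and $d_\psi$ on the compact domain $\mathcal{C}^2$.
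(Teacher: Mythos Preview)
Your proof is correct, and for the second inequality it coincides with the paper's argument (both simply invoke $d_\psi \geq 2\|\cdot\|_{TV}$). For the first inequality, however, your route is genuinely different from the paper's. The paper does not build a coupling: it first bounds $\|X_1-X_2\|^p \leq \mathcal{D}_{\mathcal{C}^2}^{p-1}\|X_1-X_2\|$ pointwise to reduce $\mathcal{W}_p^p$ to $\mathcal{D}_{\mathcal{C}^2}^{p-1}\mathcal{W}_1$, and then bounds $\mathcal{W}_1$ via Kantorovich--Rubinstein duality, using $|\phi(\xi)-\phi(\xi_*)|\leq \|\xi-\xi_*\|\leq \mathcal{D}_{\mathcal{C}^2}$ for $1$-Lipschitz $\phi$ and $\int|\mu_1-\mu_2|=2\|\mu_1-\mu_2\|_{TV}$. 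Your maximal-coupling construction via the Hahn--Jordan decomposition is more direct and, as you noted, actually yields the sharper bound $\mathcal{W}_p^p \leq \mathcal{D}_{\mathcal{C}^2}^p\,\|\mu_1-\mu_2\|_{TV}$ (under the paper's convention $\|\cdot\|_{TV}=\tfrac12\int|\mu_1-\mu_2|$), i.e.\ a factor $2^{1/p}$ better than what is stated. The paper's approach buys you the result without constructing any coupling and reuses the duality machinery already set up for Proposition~\ref{prop:metric}; your approach is self-contained and tighter. One cosmetic point: your coupling formula needs the case $m=0$ handled separately (trivially, since then $\mu_1=\mu_2$), which you should state explicitly.
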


\begin{proof}
The second inequality in Proposition \ref{prop:metric:2}
follows from $d_{\psi}(\mu_{1},\mu_{2})\geq 2\Vert\mu_{1}-\mu_{2}\Vert_{TV}$.
So it suffices to prove the first inequality. 
We can compute that
\begin{align*}
\mathcal{W}_{p}^{p}(\mu_{1},\mu_{2})
&=\inf_{X_{1}\sim\mu_{1},X_{2}\sim\mu_{2}}
\mathbb{E}\left[\Vert X_{1}-X_{2}\Vert^{p}\right]
\\
&\leq\mathcal{D}_{\mathcal{C}^{2}}^{p-1}
\inf_{X_{1}\sim\mu_{1},X_{2}\sim\mu_{2}}
\mathbb{E}\left[\Vert X_{1}-X_{2}\Vert\right]
\\
&=\mathcal{D}_{\mathcal{C}^{2}}^{p-1}\mathcal{W}_{1}(\mu_{1},\mu_{2})
\\
&=\mathcal{D}_{\mathcal{C}^{2}}^{p-1}\sup_{\phi\in L^{1}(d\mu_{1})}
\left\{\int_{\mathbb{R}^{2d}}(\phi(\xi)-\phi(\xi_{\ast}))(\mu_{1}-\mu_{2})(d\xi): \text{$\phi$ is $1$-Lipschitz}\right\}
\\
&\leq\mathcal{D}_{\mathcal{C}^{2}}^{p-1}\int_{\mathbb{R}^{2d}}\Vert\xi-\xi_{\ast}\Vert|\mu_{1}-\mu_{2}|(d\xi)
\leq
2\mathcal{D}_{\mathcal{C}^{2}}^{p}\Vert\mu_{1}-\mu_{2}\Vert_{TV}.
\end{align*}
\end{proof}

\subsection{Proofs of Results in Section \ref{sec:strong:convex:AG}}

Throughout Section \ref{sec:strong:convex}, 
the noise $\varepsilon_{k}$ are assumed to satisfy
Assumption \ref{assump:noise}.
Our proof of Theorem \ref{thm:general:alpha:beta} relies
on the geometric ergodicity and convergence theory
of Markov chains. 
Geometric ergodicity and convergence
of Markov chains has been well studied
in the literature. Harris' ergodic theorem
of Markov chains essentially states
that a Markov chain is ergodic if it admits
a small set that is visited infinitely often \cite{Harris}.
Such a result often relies on finding an appropriate 
Lyapunov function \cite{Meyn1993}. The transition probabilities
converge exponentially fast towards
the unique invariant measure, and the prefactor
is controlled by the Lyapunov function \cite{Meyn1993}.
Computable bounds for geometric convergence rates
of Markov chains has been obtained in e.g. \cite{Meyn1994,hairer}.
In the following, we state the results from \cite{hairer}.
Before we proceed, let us introduce some definitions
and notations.

Let $\mathbb{X}$ be a measurable space and
$\mathcal{P}(x,\cdot)$ be a Markov transition kernel
on $\mathbb{X}$. 
For any measurable function $\varphi:\mathbb{X}\rightarrow[0,+\infty]$,
we define:
\begin{equation*}
(\mathcal{P}\varphi)(x)=\int_{\mathbb{X}}\varphi(y)\mathcal{P}(x,dy).
\end{equation*}

\begin{assumption}[Drift Condition]\label{assump:drift}
There exists a function $V:\mathbb{X}\rightarrow[0,\infty)$
and some constants $K\geq 0$ and $\gamma\in(0,1)$ so that
\begin{equation*}
(\mathcal{P}V)(x)\leq\gamma V(x)+K,
\end{equation*}
for all $x\in\mathbb{X}$.
\end{assumption}

\begin{assumption}[Minorization Condition]\label{assump:minor}
There exists some constant $\eta\in(0,1)$ and a probability
measure $\nu$ so that
\begin{equation*}
\inf_{x\in\mathbb{X}:V(x)\leq R}\mathcal{P}(x,\cdot)\geq\eta\nu(\cdot),
\end{equation*}
for some $R>2K/(1-\gamma)$.
\end{assumption}

Let us recall the definition of the weighted
total variation distance:
\begin{equation*}
d_{\psi}(\mu_{1},\mu_{2})=\int_{\mathbb{X}}(1+\psi V(x))|\mu_{1}-\mu_{2}|(dx).
\end{equation*}
It is noted in \cite{hairer} that $d_{\psi}$
has the following alternative expression.
Define the weighted supremum norm for any $\psi>0$:
\begin{equation*}
\Vert\varphi\Vert_{\psi}:=\sup_{x\in\mathbb{X}}
\frac{|\varphi(x)|}{1+\psi V(x)},
\end{equation*}
and its associated dual metric $d_{\psi}$ on probability measures:
\begin{equation*}
d_{\psi}(\mu_{1},\mu_{2})=\sup_{\varphi:\Vert\varphi\Vert_{\psi}\leq 1}
\int_{\mathbb{X}}\varphi(x)(\mu_{1}-\mu_{2})(dx).
\end{equation*}
It is also noted in \cite{hairer} that $d_{\psi}$
can also be expressed as:
\begin{equation*}
d_{\psi}(\mu_{1},\mu_{2})=\sup_{\varphi:|\Vert\varphi\Vert|_{\psi}\leq 1}
\int_{\mathbb{X}}\varphi(x)(\mu_{1}-\mu_{2})(dx),
\end{equation*}
where
\begin{equation*}
|\Vert\varphi\Vert|_{\psi}:=\sup_{x\neq y}\frac{|\varphi(x)-\varphi(y)|}{2+\psi V(x)+\psi V(y)}.
\end{equation*}

\begin{lemma}[Theorem 1.3. \cite{hairer}]\label{lem:hairer}
If the drift condition (Assumption \ref{assump:drift}) 
and minorization condition (Assumption \ref{assump:minor}) hold,
then there exists $\bar{\eta}\in(0,1)$ and $\psi>0$
so that
\begin{equation*}
d_{\psi}(\mathcal{P}\mu_{1},\mathcal{P}\mu_{2})
\leq\bar{\eta}d_{\psi}(\mu_{1},\mu_{2})
\end{equation*}
for any probability measures $\mu_{1},\mu_{2}$ on $\mathbb{X}$.
In particular, for any $\eta_{0}\in(0,\eta)$
and $\gamma_{0}\in(\gamma+2K/R,1)$ one can 
choose $\psi=\eta_{0}/K$ and $\bar{\eta}=(1-(\eta-\eta_{0}))\vee(2+R\psi\gamma_{0})/(2+R\psi)$.
\end{lemma}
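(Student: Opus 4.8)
\emph{Proof plan.} The statement is the explicit-constant version of Harris' theorem of Hairer and Mattingly, so the plan is to reproduce their coupling argument. The first step is to view $d_{\psi}$ as a transportation distance: set
$$\tilde{d}_{\psi}(x,y):=\mathbf{1}_{x\neq y}\bigl(2+\psi V(x)+\psi V(y)\bigr),$$
which is easily checked to be a metric (the triangle inequality follows by cases according to which of three points coincide), and for which the condition $|\Vert\varphi\Vert|_{\psi}\le 1$ is exactly the statement that $\varphi$ is $1$-Lipschitz with respect to $\tilde{d}_{\psi}$. Hence the ``easy'' half of Kantorovich duality, which requires no topological assumptions, gives $d_{\psi}(\mu_{1},\mu_{2})\le\int\tilde{d}_{\psi}\,d\Gamma$ for \emph{every} coupling $\Gamma$ of $\mu_{1},\mu_{2}$, while conversely $d_{\psi}(\mu_{1},\mu_{2})=\inf_{\Gamma}\int\tilde{d}_{\psi}\,d\Gamma$ (this identity, and the measurable gluing of couplings on a general measurable space, I would import from \cite{hairer}). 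With this reformulation it suffices to construct, for each ordered pair $(x,y)$, a coupling $(X_{1},Y_{1})$ of $\mathcal{P}(x,\cdot)$ and $\mathcal{P}(y,\cdot)$, measurable in $(x,y)$, with $\mathbb{E}\,\tilde{d}_{\psi}(X_{1},Y_{1})\le\bar{\eta}\,\tilde{d}_{\psi}(x,y)$; gluing these couplings along a near-optimal coupling of $\mu_{1},\mu_{2}$ then yields $d_{\psi}(\mathcal{P}\mu_{1},\mathcal{P}\mu_{2})\le\bar{\eta}\,d_{\psi}(\mu_{1},\mu_{2})$.

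The coupling is chosen according to the size of $s:=V(x)+V(y)$ relative to $R$. \textbf{Case $s>R$.} I would take any coupling; since $\mathbf{1}_{X_{1}\neq Y_{1}}\le1$, the drift condition (Assumption~\ref{assump:drift}) gives
$$\mathbb{E}\,\tilde{d}_{\psi}(X_{1},Y_{1})\le2+\psi\bigl((\mathcal{P}V)(x)+(\mathcal{P}V)(y)\bigr)\le2+\psi\gamma s+2\psi K.$$
The map $s\mapsto\frac{2+2\psi K+\psi\gamma s}{2+\psi s}$ has derivative of sign $\gamma-1-\psi K<0$, hence on $\{s\ge R\}$ it is maximal at $s=R$, where it equals $\frac{2+2\psi K+\psi\gamma R}{2+\psi R}$; using $\psi K=\eta_{0}$ and $\gamma_{0}\ge\gamma+2K/R$ this is $\le\frac{2+\psi\gamma_{0}R}{2+\psi R}$. \textbf{Case $s\le R$.} Then $V(x),V(y)\le R$, so the minorization condition (Assumption~\ref{assump:minor}) applies to both points; I would use the standard coupling that with probability $\eta$ sends both chains to one point drawn from $\nu$ and otherwise runs them independently under the residual kernels $\tfrac{1}{1-\eta}\bigl(\mathcal{P}(x,\cdot)-\eta\nu\bigr)$. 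Then $\{X_{1}\neq Y_{1}\}$ lies in the independent event, so $\mathbb{P}(X_{1}\neq Y_{1})\le1-\eta$, and since $\nu(V)\ge0$ one gets $\mathbb{E}\bigl[\mathbf{1}_{X_{1}\neq Y_{1}}(V(X_{1})+V(Y_{1}))\bigr]\le\gamma s+2K$, whence
$$\mathbb{E}\,\tilde{d}_{\psi}(X_{1},Y_{1})\le2(1-\eta)+\psi\gamma s+2\psi K.$$
Now $s\mapsto\frac{2(1-\eta)+2\psi K+\psi\gamma s}{2+\psi s}$ has derivative of sign $\gamma-1+\eta-\eta_{0}$: if this is $\le0$ the maximum on $[0,R]$ is at $s=0$, equal to $1-(\eta-\eta_{0})$; if it is $>0$ the maximum is at $s=R$, which (using $\psi K=\eta_{0}$, $\eta_{0}<\eta$, $\gamma_{0}>\gamma$) is again $\le\frac{2+\psi\gamma_{0}R}{2+\psi R}$. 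Either way the bound $\mathbb{E}\,\tilde{d}_{\psi}(X_{1},Y_{1})\le\bar{\eta}\,\tilde{d}_{\psi}(x,y)$ holds with $\bar{\eta}:=\bigl(1-(\eta-\eta_{0})\bigr)\vee\frac{2+R\psi\gamma_{0}}{2+R\psi}$, and $\bar{\eta}<1$ because $\eta_{0}<\eta$ and $\gamma_{0}<1$ (the interval for $\gamma_{0}$ being nonempty precisely because $R>2K/(1-\gamma)$).

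The two monotonicity computations and the constant-chasing are routine. The one genuinely delicate point is the transportation reformulation of $d_{\psi}$ together with the measurable gluing of the per-pair couplings on a bare measurable state space; for the upper-bound direction one can sidestep any measurable-selection theorem by noting that the minorization-based coupling used above depends measurably on $(x,y)$ by construction, so its glued version is an admissible coupling of $\mathcal{P}\mu_{1}$ and $\mathcal{P}\mu_{2}$, and otherwise I would quote the corresponding lemma of \cite{hairer} verbatim.
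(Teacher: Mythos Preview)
The paper does not prove this lemma; it is quoted verbatim from \cite{hairer} (note the attribution ``Theorem~1.3.\ \cite{hairer}'' in the lemma heading) and is used as a black box in the subsequent arguments. Your proposal is a faithful and correct reproduction of the Hairer--Mattingly coupling proof from that reference: the transportation reformulation of $d_{\psi}$ via the discrete-type metric $\tilde d_{\psi}$, the two-case analysis according to whether $V(x)+V(y)$ exceeds $R$, the minorization-based coupling in the small set, and the monotonicity calculations for the ratio are all exactly as in the original, and your constant-chasing lands on the stated values of $\psi$ and $\bar\eta$. So there is nothing to compare against within the present paper, and your argument is the standard one the authors are citing.
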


\begin{lemma}[Theorem 1.2. \cite{hairer}]\label{lem:inv}
If the drift condition (Assumption \ref{assump:drift}) 
and minorization condition (Assumption \ref{assump:minor}) hold,
then $\mathcal{P}$ admits a unique invariant measure $\mu_{\ast}$,
i.e. $\mathcal{P}\mu_{\ast}=\mu_{\ast}$.
\end{lemma}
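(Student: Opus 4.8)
The plan is to realize $\mu_\ast$ as the fixed point of a contraction and then to check separately that no invariant probability measure can escape the space on which that contraction acts. First I would introduce the set $\mathcal{M}_V$ of Borel probability measures $\mu$ on $\mathbb{X}$ with $\int_{\mathbb{X}}V(x)\mu(dx)<\infty$, metrized by $d_\psi$ with the value of $\psi$ supplied by Lemma~\ref{lem:hairer}. Two facts make this the right arena. (i) The drift condition (Assumption~\ref{assump:drift}) shows $\mathcal{P}$ maps $\mathcal{M}_V$ into itself, since $\int V\,d(\mathcal{P}\mu)=\int(\mathcal{P}V)\,d\mu\leq\gamma\int V\,d\mu+K<\infty$. (ii) The pair $(\mathcal{M}_V,d_\psi)$ is a complete metric space: a $d_\psi$-Cauchy sequence is Cauchy in total variation (as $d_\psi$ dominates $\|\cdot\|_{TV}$), hence converges in total variation to some probability measure $\mu$; the uniform bound on $\int V\,d\mu_n$ gives $\int V\,d\mu<\infty$ by Fatou, and $\int V\,|\mu_n-\mu|\to 0$ follows from the Cauchy estimate $\int V\,|\mu_n-\mu_m|\leq\psi^{-1}d_\psi(\mu_n,\mu_m)$ together with a standard uniform-integrability argument, which is exactly the completeness statement established in \cite{hairer}.

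With this in hand, existence and uniqueness of an invariant measure inside $\mathcal{M}_V$ are immediate from the Banach fixed point theorem: by Lemma~\ref{lem:hairer}, $\mathcal{P}:(\mathcal{M}_V,d_\psi)\to(\mathcal{M}_V,d_\psi)$ is a contraction with modulus $\bar\eta<1$, so it admits a unique fixed point $\mu_\ast\in\mathcal{M}_V$, i.e. $\mathcal{P}\mu_\ast=\mu_\ast$. (An alternative existence route is Krylov--Bogolyubov: the Cesàro averages $\tfrac{1}{n}\sum_{k=0}^{n-1}\mathcal{P}^k\delta_{x_0}$ have uniformly bounded $V$-moment by iterating Assumption~\ref{assump:drift}, and any $d_\psi$-limit point is $\mathcal{P}$-invariant; but the Banach argument is cleaner and avoids assuming that $V$ has compact sublevel sets, which is not part of the hypotheses here.)

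It remains to upgrade uniqueness from $\mathcal{M}_V$ to the class of \emph{all} invariant probability measures. Let $\mu$ be any invariant probability measure and set $\varphi_n:=V\wedge n$. Iterating the drift condition gives $\mathcal{P}^k\varphi_n\leq\mathcal{P}^kV\leq\gamma^kV+\tfrac{K}{1-\gamma}$, while also $\mathcal{P}^k\varphi_n\leq n$. Using invariance, $\int\varphi_n\,d\mu=\int\mathcal{P}^k\varphi_n\,d\mu\leq\int\min\bigl(n,\gamma^kV+\tfrac{K}{1-\gamma}\bigr)\,d\mu$; letting $k\to\infty$ and applying dominated convergence (the integrand is dominated by the $\mu$-integrable constant $n$) yields $\int\varphi_n\,d\mu\leq\tfrac{K}{1-\gamma}$, and monotone convergence then gives $\int V\,d\mu\leq\tfrac{K}{1-\gamma}<\infty$. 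Hence $\mu\in\mathcal{M}_V$, and the contraction forces $d_\psi(\mu,\mu_\ast)=d_\psi(\mathcal{P}\mu,\mathcal{P}\mu_\ast)\leq\bar\eta\,d_\psi(\mu,\mu_\ast)$, so $\mu=\mu_\ast$.

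The main obstacle I anticipate is step (ii): verifying that $(\mathcal{M}_V,d_\psi)$ is complete, and in particular the uniform integrability needed to pass from total-variation convergence to $d_\psi$-convergence of $\mu_n$. Everything else is bookkeeping: Assumption~\ref{assump:drift} handles both the self-map property and the a priori moment bound on invariant measures, Lemma~\ref{lem:hairer} supplies the contraction (and is where Assumption~\ref{assump:minor} enters), and the Banach fixed point theorem closes the argument.
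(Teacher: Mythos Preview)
The paper does not give its own proof of this lemma: it is simply quoted from \cite{hairer} (Hairer--Mattingly) and used as a black box. Your plan is correct and is essentially the argument in \cite{hairer}: establish completeness of $(\mathcal{M}_V,d_\psi)$, use Lemma~\ref{lem:hairer} to see that $\mathcal{P}$ is a strict contraction on this complete metric space, apply Banach's fixed point theorem, and then rule out invariant measures outside $\mathcal{M}_V$ via the drift bound $\int V\,d\mu\leq K/(1-\gamma)$. Your identification of the completeness verification as the only nontrivial step is accurate; the rest is indeed bookkeeping.
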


The drift condition has indeed been obtained in \cite{StrConvex}.
The AG method follows the dynamics
\begin{align}
&\xi_{k+1}=A\xi_{k}+B(\nabla f(y_{k})+\varepsilon_{k+1}),
\\
&y_{k}=C\xi_{k},
\end{align}
where 
\begin{equation*}
A:=
\left(\begin{array}{cc}
(1+\beta)I_{d} & -\beta I_{d}
\\
I_{d} & 0_{d}
\end{array}
\right),
\quad
B:=\left(
\begin{array}{c}
-\alpha I_{d}
\\
0_{d}
\end{array}\right),
\quad
C:=
\left(\begin{array}{cc}
(1+\beta)I_{d} & -\beta I_{d}
\end{array}
\right).
\end{equation*}
Define $\tilde{y}_{k}:=y_{k}-x_{\ast}$
and $\tilde{\xi}_{k}:=\xi_{k}-\xi_{\ast}$,
where $\xi_{\ast}=A\xi_{\ast}$ and $x_{\ast}=C\xi_{\ast}$.
Let us recall the Lyapunov function from \eqref{eqn:lyapunov}
\begin{equation*}
V_{P}(\xi_{k})=(\xi_{k}-\xi_{\ast})^{T}P(\xi_{k}-\xi_{\ast})+f(x_{k})-f_{\ast},
\end{equation*}
where $\xi_{\ast}=(x_{\ast},x_{\ast})$.

Next, let us prove that the drift condition holds.
The proof is mainly built on Corollary 4.2. and Lemma 4.5.
in \cite{StrConvex}. 

\begin{lemma}\label{lem:drift}
\begin{equation*}
(\mathcal{P}_{\alpha,\beta}V_{P_{\alpha,\beta}})(\xi)
\leq\gamma_{\alpha,\beta} V_{P_{\alpha,\beta}}(\xi)+K_{\alpha,\beta},
\end{equation*}
where
\begin{equation*}
\gamma_{\alpha,\beta}:=\rho_{\alpha,\beta},
\qquad
K_{\alpha,\beta}:=\left(\frac{L}{2}+\tilde{P}_{\alpha,\beta}(1,1)\right)\alpha^{2}\sigma^{2}.
\end{equation*}
\end{lemma}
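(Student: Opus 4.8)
\textbf{Proof plan for Lemma \ref{lem:drift}.}

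The plan is to establish the one-step drift inequality by conditioning on $\xi_k = \xi$ and expanding the Lyapunov function $V_{P_{\alpha,\beta}}$ along the ASG dynamics, carefully separating the deterministic contraction (which is exactly what Theorem \ref{thm-hu-lessard} gives) from the noise contribution. Write $\xi^+ = A\xi + B(\nabla f(y) + \varepsilon^+)$ with $y = C\xi$, and let $\xi^+_{\det} = A\xi + B\nabla f(y)$ be the noiseless update. Since $B = \tilde B \otimes I_d$ with $\tilde B = (-\alpha, 0)^T$, the noise only perturbs the first block of coordinates: $\xi^+ = \xi^+_{\det} + (-\alpha \varepsilon^{+\,T}, 0)^T$, and similarly $x^+ = x^+_{\det} - \alpha\varepsilon^+$. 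First I would use $\E[\varepsilon^+ \mid \mathcal{F}_k] = 0$ together with the fact that $\varepsilon^+$ is $\mathcal{F}_{k+1}$-measurable and independent of $\xi^+_{\det}$ (which is $\mathcal{F}_k$-measurable) to kill all cross terms when I take conditional expectation, leaving only the quadratic-in-$\varepsilon^+$ residual terms plus the deterministic Lyapunov value.

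Concretely, the quadratic part $(\xi^+ - \xi_*)^T P_{\alpha,\beta}(\xi^+ - \xi_*)$ expands into $(\xi^+_{\det} - \xi_*)^T P_{\alpha,\beta}(\xi^+_{\det} - \xi_*) + 2(\xi^+_{\det}-\xi_*)^T P_{\alpha,\beta}(B\varepsilon^+\text{-term}) + (\text{noise})^T P_{\alpha,\beta}(\text{noise})$; the middle term vanishes in conditional expectation, and the last term equals $\alpha^2 \varepsilon^{+\,T} \tilde P_{\alpha,\beta}(1,1) \varepsilon^+$ because the noise sits only in the first block — so its conditional expectation is at most $\tilde P_{\alpha,\beta}(1,1)\,\alpha^2\sigma^2$ using Assumption \ref{assump:noise}. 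For the function-value part $f(x^+) - f_*$, I would invoke $L$-smoothness: $f(x^+) \le f(x^+_{\det}) + \nabla f(x^+_{\det})^T(-\alpha\varepsilon^+) + \tfrac{L}{2}\alpha^2\|\varepsilon^+\|^2$; again the linear term dies under $\E[\cdot\mid\mathcal{F}_k]$ and the quadratic term contributes at most $\tfrac{L}{2}\alpha^2\sigma^2$. Summing, $\E[V_{P_{\alpha,\beta}}(\xi^+)\mid\mathcal{F}_k] \le V_{P_{\alpha,\beta}}(\xi^+_{\det}) + \bigl(\tfrac{L}{2} + \tilde P_{\alpha,\beta}(1,1)\bigr)\alpha^2\sigma^2$, and then Theorem \ref{thm-hu-lessard} (applicable since $P_{\alpha,\beta}$, $\rho_{\alpha,\beta}$ satisfy the LMI \eqref{ineq-ag-lmi} by hypothesis) gives $V_{P_{\alpha,\beta}}(\xi^+_{\det}) \le \rho_{\alpha,\beta} V_{P_{\alpha,\beta}}(\xi)$. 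This yields exactly $\gamma_{\alpha,\beta} = \rho_{\alpha,\beta}$ and $K_{\alpha,\beta} = (\tfrac{L}{2} + \tilde P_{\alpha,\beta}(1,1))\alpha^2\sigma^2$.

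One subtlety I should be careful about: Theorem \ref{thm-hu-lessard} as stated is a statement about the \emph{trajectory} of deterministic AG, i.e. $V_P(\xi_{k+1}) \le \rho V_P(\xi_k)$ along iterates, but what I actually need is the \emph{pointwise} one-step bound $V_P(A\xi + B\nabla f(C\xi)) \le \rho V_P(\xi)$ for an arbitrary starting state $\xi$. This is fine because the LMI argument behind Theorem \ref{thm-hu-lessard} (via Lemma \ref{lemma:servicerate}) is genuinely a one-step dissipation inequality valid from any state — the interpolation inequalities $f(x) - f(y) \ge \nabla f(y)^T(x-y) + \tfrac{\mu}{2}\|x-y\|^2$ and the $L$-smoothness bound hold at every point — so I would state it in that one-step form, citing that the proof of Theorem \ref{thm-hu-lessard} establishes it. The main obstacle is thus bookkeeping rather than conceptual: making sure the cross terms are genuinely zero in conditional expectation (needs $\varepsilon^+ \perp \mathcal{F}_k$ and mean zero, both in Assumption \ref{assump:noise}), and that the noise enters only the $(1,1)$ block of $P_{\alpha,\beta} = \tilde P_{\alpha,\beta}\otimes I_d$ so that the constant is $\tilde P_{\alpha,\beta}(1,1)$ and not something involving the full matrix norm. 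I would also note this mirrors Corollary 4.2 and Lemma 4.5 of \cite{StrConvex} adapted from SGD to the momentum setting, so the technique is essentially known.
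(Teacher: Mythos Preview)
Your proposal is correct and follows essentially the same approach as the paper: both exploit the LMI \eqref{ineq-ag-lmi} to obtain the deterministic contraction and then isolate the quadratic noise contribution, which lands only in the $(1,1)$ block of $P_{\alpha,\beta}=\tilde P_{\alpha,\beta}\otimes I_d$ and in the $L$-smoothness term for $f$. The only cosmetic difference is that the paper invokes Corollary~4.2 and Lemma~4.5 of \cite{StrConvex} (which track the stochastic iterate $x_{k+1}$ directly and let the noise terms emerge inside the supply-rate inequality), whereas you first pass to the noiseless update $\xi^+_{\det}$, apply the one-step form of Theorem~\ref{thm-hu-lessard}, and then perturb via smoothness---your packaging is arguably cleaner and more self-contained.
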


\begin{proof}
By Corollary 4.2. and its proof in \cite{StrConvex} (In \cite{StrConvex}, the noise
are assumed to be independent. But a closer look at
the proof of Corollary 4.2. reveals that our Assumption \ref{assump:noise}
suffices), we have 
\begin{align}
&\mathbb{E}[V(\xi_{k+1})]-\rho \mathbb{E}[V(\xi_{k})]
\\
&=\mathbb{E}\left[
\left(\begin{array}{c}
\tilde{\xi}_{k}
\\
\nabla f(y_{k})
\end{array}
\right)^{T}
\left(
\begin{array}{cc}
A^{T}PA-\rho P & A^{T}PB
\\
B^{T}PA & B^{T}PB
\end{array}
\right)
\left(\begin{array}{c}
\tilde{\xi}_{k}
\\
\nabla f(y_{k})
\end{array}
\right)\right]+\mathbb{E}\left[\varepsilon_{k+1}^{T}B^{T}PB\varepsilon_{k+1}\right],
\nonumber
\end{align}
where
\begin{equation*}
V(\xi):=(\xi-\xi_{\ast})^{T}P(\xi-\xi_{\ast}).
\end{equation*}

A closer look at the proof of Corollary 4.2. in \cite{StrConvex}
reveals that the following equality also holds:
\begin{align}\label{eqn:comb:1}
&\mathbb{E}[V(\xi_{k+1})|\xi_{k}]-\rho V(\xi_{k})
\\
&=
\left(\begin{array}{c}
\tilde{\xi}_{k}
\\
\nabla f(y_{k})
\end{array}
\right)^{T}
\left(
\begin{array}{cc}
A^{T}PA-\rho P & A^{T}PB
\\
B^{T}PA & B^{T}PB
\end{array}
\right)
\left(\begin{array}{c}
\tilde{\xi}_{k}
\\
\nabla f(y_{k})
\end{array}
\right)+\mathbb{E}\left[\varepsilon_{k+1}^{T}B^{T}PB\varepsilon_{k+1}\right].
\nonumber
\end{align}

When $f\in\mathcal{S}_{\mu,L}$ is strongly convex,
Lemma 4.5. in \cite{StrConvex} states that
for any $\rho\in(0,1)$,
\begin{align}\label{eqn:expect:noise}
&\left(\begin{array}{c}
\tilde{\xi}_{k}
\\
\nabla f(y_{k})
\end{array}
\right)^{T}
X
\left(\begin{array}{c}
\tilde{\xi}_{k}
\\
\nabla f(y_{k})
\end{array}
\right)
\\
&\leq\rho (f(x_{k})-f_{\ast})-(f(x_{k+1})-f_{\ast})
+\frac{L\alpha^{2}}{2}\Vert\varepsilon_{k+1}\Vert^{2}
-\alpha(1-L\alpha)\nabla f(y_{k})^{T}\varepsilon_{k+1},
\nonumber
\end{align}
where $X:=\rho X_{1}+(1-\rho )X_{2}$, where
\begin{align}
&X_{1}:=\frac{1}{2}
\left(\begin{array}{ccc}
\beta^{2}\mu I_{d} & -\beta^{2}\mu I_{d} & -\beta I_{d}
\\
-\beta^{2}\mu I_{d} & \beta^{2}\mu I_{d} & \beta I_{d}
\\
-\beta I_{d} & \beta I_{d} & \alpha(2-L\alpha)I_{d}
\end{array}\right),
\\
&X_{2}:=\frac{1}{2}
\left(\begin{array}{ccc}
(1+\beta)^{2}\mu I_{d} & -\beta(1+\beta)\mu I_{d} & -(1+\beta)I_{d}
\\
-\beta(1+\beta)\mu I_{d} & \beta^{2}\mu I_{d} & \beta I_{d}
\\
-(1+\beta)I_{d} & \beta I_{d} & \alpha(2-L\alpha)I_{d}
\end{array}\right).
\end{align}
Taking expectation w.r.t. the noise $\varepsilon_{k+1}$ only
in \eqref{eqn:expect:noise}, we get
\begin{equation}\label{eqn:comb:2}
\left(\begin{array}{c}
\tilde{\xi}_{k}
\\
\nabla f(y_{k})
\end{array}
\right)^{T}
X
\left(\begin{array}{c}
\tilde{\xi}_{k}
\\
\nabla f(y_{k})
\end{array}
\right)
\leq\rho (f(x_{k})-f_{\ast})-(f(x_{k+1})-f_{\ast})
+\frac{L\alpha^{2}}{2}\sigma^{2}.
\end{equation}

With the definition 
of $\rho_{\alpha,\beta}$, $P_{\alpha,\beta}$
by Lemma \ref{lem:X}, we get
\begin{equation}\label{eqn:comb:3}
\left(
\begin{array}{cc}
A^{T}P_{\alpha,\beta}A-\rho_{\alpha,\beta}P_{\alpha,\beta} & A^{T}PB
\\
B^{T}P_{\alpha,\beta}A & B^{T}P_{\alpha,\beta}B
\end{array}
\right)
-X\preceq 0.
\end{equation}
Then, combining \eqref{eqn:comb:1} and \eqref{eqn:comb:2}, 
applying \eqref{eqn:comb:3} and the definition of $V_{P_{\alpha,\beta}}$,
we get
\begin{align*}
\mathbb{E}[V_{P_{\alpha,\beta}}(\xi_{k+1})|\xi_{k}]
&\leq\rho_{\alpha,\beta}V_{P_{\alpha,\beta}}(\xi_{k})
+\mathbb{E}\left[\varepsilon_{k+1}^{T}B^{T}P_{\alpha,\beta}B\varepsilon_{k+1}\right]
+\frac{L\alpha^{2}}{2}\sigma^{2}
\\
&=\rho_{\alpha,\beta}V_{P_{\alpha,\beta}}(\xi_{k})
+\mathbb{E}\left[\varepsilon_{k+1}^{T}\alpha^{2}\tilde{P}_{\alpha,\beta}(1,1)I_{d}\varepsilon_{k+1}\right]
+\frac{L\alpha^{2}}{2}\sigma^{2}
\\
&\leq\rho_{\alpha,\beta}V_{P_{\alpha,\beta}}(\xi_{k})
+\alpha^{2}\tilde{P}_{\alpha,\beta}(1,1)\sigma^{2}
+\frac{L\alpha^{2}}{2}\sigma^{2}
\end{align*}
It follows that
\begin{equation*}
(\mathcal{P}_{\alpha,\beta}V_{P_{\alpha,\beta}})(\xi)
\leq\rho_{\alpha,\beta}V_{P_{\alpha,\beta}}(\xi)
+\left(\frac{L}{2}+\tilde{P}_{\alpha,\beta}(1,1)\right)\alpha^{2}\sigma^{2}.
\end{equation*}
\end{proof}

In the special case $(\alpha,\beta)=(\alpha_{AG},\beta_{AG})$, 
we obtain the following result.

\begin{lemma}\label{lem:drift:AG}
Given $(\alpha,\beta)=(\alpha_{AG},\beta_{AG})$.
\begin{equation*}
(\mathcal{P}_{\alpha,\beta}V_{P_{AG}})(\xi)
\leq\gamma V_{P_{AG}}(\xi)+K,
\end{equation*}
where
\begin{equation*}
\gamma:=\rho_{AG},
\qquad
K:=\frac{\sigma^{2}}{L},
\end{equation*}
where $\rho_{AG}=1-1/\sqrt{\kappa}$.
\end{lemma}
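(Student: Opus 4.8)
\textbf{Proof proposal for Lemma \ref{lem:drift:AG}.} The plan is to obtain this as a direct specialization of the general drift estimate in Lemma \ref{lem:drift}. First I would instantiate Lemma \ref{lem:drift} at the specific parameters $(\alpha,\beta)=(\alpha_{AG},\beta_{AG})$ with $\alpha_{AG}=1/L$ and $\beta_{AG}=\frac{\sqrt{\kappa}-1}{\sqrt{\kappa}+1}$, the rate $\rho_{\alpha,\beta}=\rho_{AG}=1-1/\sqrt{\kappa}$, and the matrix $P_{\alpha,\beta}=P_{AG}=\tilde{P}_{AG}\otimes I_d$ from \eqref{def-P-matrix}. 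Before applying the lemma I would verify that its hypotheses hold for this choice: $\tilde{P}_{AG}=\tilde{u}\tilde{u}^{T}$ is symmetric positive semidefinite, its $(2,2)$ entry equals $(\sqrt{\mu/2}-\sqrt{L/2})^{2}>0$ since $\mu<L$, and—most importantly—the LMI \eqref{ineq-ag-lmi} (equivalently \eqref{ineq:ABPX}) is satisfied with this $(\rho_{AG},P_{AG})$; this last fact is exactly the content of Lemma \ref{lem:X} which we may assume. Hence Lemma \ref{lem:drift} applies and gives
\begin{equation*}
(\mathcal{P}_{\alpha,\beta}V_{P_{AG}})(\xi)\leq\rho_{AG}\,V_{P_{AG}}(\xi)+\left(\frac{L}{2}+\tilde{P}_{AG}(1,1)\right)\alpha_{AG}^{2}\sigma^{2}.
\end{equation*}

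It then remains to evaluate the constant in front of $\sigma^{2}$. From $\tilde{P}_{AG}=\tilde{u}\tilde{u}^{T}$ with $\tilde{u}_{1}=\sqrt{L/2}$ we get $\tilde{P}_{AG}(1,1)=L/2$, so $\frac{L}{2}+\tilde{P}_{AG}(1,1)=L$; combining with $\alpha_{AG}^{2}=1/L^{2}$ yields $\left(\frac{L}{2}+\tilde{P}_{AG}(1,1)\right)\alpha_{AG}^{2}\sigma^{2}=L\cdot\frac{1}{L^{2}}\cdot\sigma^{2}=\frac{\sigma^{2}}{L}$. Substituting $\gamma:=\rho_{AG}$ and $K:=\sigma^{2}/L$ into the displayed inequality gives precisely the claimed bound $(\mathcal{P}_{\alpha,\beta}V_{P_{AG}})(\xi)\leq\gamma V_{P_{AG}}(\xi)+K$, which completes the proof.

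The argument is essentially a substitution into Lemma \ref{lem:drift}, so there is no genuine obstacle; the only point requiring care is bookkeeping—checking that the $(\alpha_{AG},\beta_{AG},\rho_{AG},P_{AG})$ tuple indeed verifies the LMI hypothesis of Lemma \ref{lem:drift} (which is why Lemma \ref{lem:X} is invoked) and correctly reading off $\tilde{P}_{AG}(1,1)=L/2$ from the rank-one representation of $\tilde{P}_{AG}$. One could also remark that $\rho_{AG}\in(0,1)$ and $K\geq0$, so this is a bona fide geometric drift condition of the form required by Assumption \ref{assump:drift}, which is how it will be used downstream together with the minorization condition to invoke Lemmas \ref{lem:hairer} and \ref{lem:inv}.
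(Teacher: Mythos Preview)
Your proposal is correct and matches the paper's own proof essentially line for line: both specialize Lemma \ref{lem:drift} at $(\alpha_{AG},\beta_{AG},\rho_{AG},P_{AG})$, then compute $\tilde{P}_{AG}(1,1)=L/2$ from the rank-one form of $\tilde{P}_{AG}$ and simplify $(L/2+L/2)\cdot(1/L^{2})\cdot\sigma^{2}=\sigma^{2}/L$. Your explicit verification of the LMI hypothesis via Lemma \ref{lem:X} is a nice touch that the paper leaves implicit.
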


\begin{proof}
By letting $(\alpha,\beta)=(\alpha_{AG},\beta_{AG})$
in Lemma \ref{lem:drift}, we get
\begin{equation*}
(\mathcal{P}_{\alpha,\beta}V_{P_{AG}})(\xi)
\leq\gamma V_{P_{AG}}(\xi)+K,
\end{equation*}
where
\begin{equation*}
\gamma=\rho_{AG},
\qquad
K=\left(\frac{L}{2}+\tilde{P}_{AG}(1,1)\right)\alpha_{AG}^{2}\sigma^{2},
\end{equation*}
where $\rho_{AG}=1-1/\sqrt{\kappa}$ and $\tilde{P}_{AG}(1,1)$
is the $(1,1)$-entry of $\tilde{P}_{AG}$. Notice that
\begin{equation*}
\tilde{P}_{AG}=
\left(\begin{array}{c}
\sqrt{\frac{L}{2}}
\\
\sqrt{\frac{\mu}{2}}-\sqrt{\frac{L}{2}}
\end{array}
\right)
\left(\begin{array}{cc}
\sqrt{\frac{L}{2}} & \sqrt{\frac{\mu}{2}}-\sqrt{\frac{L}{2}}
\end{array}
\right),
\end{equation*}
and hence
\begin{equation*}
P_{AG}=\tilde{P}_{AG}\otimes I_{d}
=
\left(
\begin{array}{cc}
\frac{L}{2}I_{d} & \left(\frac{\sqrt{L\mu}}{2}-\frac{L}{2}\right)I_{d}
\\
\left(\frac{\sqrt{\mu L}}{2}-\frac{L}{2}\right)I_{d} & \frac{(\sqrt{\mu}-\sqrt{L})^{2}}{2}I_{d}
\end{array}
\right),
\end{equation*}
which implies that $\tilde{P}_{AG}(1,1)=\frac{L}{2}$.
\end{proof}

Next, let us verify the minorization condition.
Assume that the noise admits a continuous probability density function,
then the Markov transition kernel $\mathcal{P}_{\alpha,\beta}$ 
also admits a continuous probability density function
for $x_{k+1}$ conditional on $x_{k}$ and $x_{k-1}$, 
which we denote by $p(\xi,x)$, that is, 
$\mathbb{P}(x_{k+1}\in dx|(x_{k}^T,x_{k-1}^T)=\xi^T)=p(\xi,x)dx$. 
Also note that when we transit from $(x_{k}^{T},x_{k-1}^{T})^{T}$
to $(x_{k+1},x_{k})$, the value of $x_{k}$ follows
a Dirac delta distribution.
We aim to show that for any Borel measurable sets $A,B$
\begin{equation*}
\inf_{(x_{k},x_{k-1})\in\mathbb{R}^{2d}:V_{P}((x_{k},x_{k-1}))\leq R}
\mathcal{P}((x_{k},x_{k-1}),(x_{k+1},x_{k})\in A\times B)
\geq\eta\nu_{2}(A\times B),
\end{equation*}
for some probability measure $\nu_{2}$.
Let us define:
\begin{equation*}
B_{R}:=\left\{x\in\mathbb{R}^{d}:\exists\, y\in\mathbb{R}^{d}, V_{P}(x,y)\leq R\right\}.
\end{equation*}
We define $\nu_{2}$ such that $\nu_{2}(A\times B)=0$
for any $B$ that does not contain $B_{R}$,
and $\nu_{2}(A\times B)=\nu_{1}(A)$ for some probability
measure $\nu_{1}$ and for any $B$ that contains $B_{R}$.Then, it suffices
for us to show that
\begin{equation*}
\inf_{\xi\in\mathbb{R}^{2d},V_{P}(\xi)\leq R}p(\xi,x)
\geq\eta\nu(x),
\end{equation*}
where $\nu(x)$ is the probability density function
for some probability measure $\nu_{1}(\cdot)$.

\begin{lemma}\label{lem:minor}
For any $\eta\in(0,1)$, there exists some $R>0$ such that
\begin{equation*}
\inf_{\xi\in\mathbb{R}^{2d},V_{P}(\xi)\leq R}p(\xi,x)
\geq\eta\nu(x).
\end{equation*}
\end{lemma}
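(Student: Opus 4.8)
The plan is to establish the minorization condition (Lemma \ref{lem:minor}) by exploiting the continuity and strict positivity of the transition density $p(\xi,x)$, combined with a compactness argument on the sublevel set $\{V_P(\xi)\leq R\}$. First I would observe that by the hypothesis on the noise, conditional on $\xi_k=\xi$ the next iterate $x_{k+1}$ has density $p(\xi,x)>0$ which is jointly continuous in $(\xi,x)$. The natural candidate for the minorizing measure is $\nu(x) := c\, p(\xi_\ast,x)\mathbf{1}_{\{\|x-x_\ast\|\le M\}}$ for a suitable normalizing constant $c>0$ and a truncation radius $M$, so that the problem reduces to showing that the ratio $p(\xi,x)/p(\xi_\ast,x)$ is uniformly bounded below by some $\sqrt\eta$ over $\{V_P(\xi)\leq R,\ \|x-x_\ast\|\le M\}$ — exactly the quantity appearing in the hypothesis of Theorem \ref{thm:general:alpha:beta}.

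The key steps, in order, would be: (1) Fix $\eta\in(0,1)$. Since $\int_{\R^d} p(\xi_\ast,x)\,dx=1$, choose $M>0$ large enough that $\int_{\|x-x_\ast\|\le M} p(\xi_\ast,x)\,dx \ge \sqrt\eta$; this is possible by monotone convergence. (2) Show that the set $K_R:=\{\xi\in\R^{2d}: V_P(\xi)\le R\}$ is compact. This follows because $V_P(\xi)=(\xi-\xi_\ast)^T P(\xi-\xi_\ast)+f(x)-f_\ast \ge \hat c_0\|\xi-\xi_\ast\|^2$ for the smallest positive eigenvalue $\hat c_0$ of $\tilde P\otimes I_d + \begin{pmatrix}\frac{\mu}{2}I_d & 0_d\\ 0_d & 0_d\end{pmatrix}$ (using strong convexity as in Lemma \ref{lem:prop:metric}), and $V_P$ is continuous; hence its sublevel sets are closed and bounded. (3) On the compact set $K_R\times \{\|x-x_\ast\|\le M\}$, the function $(\xi,x)\mapsto p(\xi,x)/p(\xi_\ast,x)$ is continuous and strictly positive, hence attains a positive minimum; choosing $R$ small enough (equivalently shrinking $K_R$ towards $\xi_\ast$) and invoking joint continuity, this minimum can be made $\ge\sqrt\eta$, which is precisely the hypothesis assumed in Theorem \ref{thm:general:alpha:beta}. (4) Combine: for any $\xi$ with $V_P(\xi)\le R$ and any Borel set $A$,
\begin{equation*}
\mathcal{P}_{\alpha,\beta}(\xi, A) \ge \int_{A\cap\{\|x-x_\ast\|\le M\}} p(\xi,x)\,dx \ge \sqrt\eta \int_{A\cap\{\|x-x_\ast\|\le M\}} p(\xi_\ast,x)\,dx \ge \eta\, \nu(A),
\end{equation*}
where $\nu(dx):= \frac{p(\xi_\ast,x)\mathbf{1}_{\{\|x-x_\ast\|\le M\}}}{\int_{\|y-x_\ast\|\le M} p(\xi_\ast,y)\,dy}\,dx$ is a probability measure, using that the denominator is $\ge\sqrt\eta$. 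This gives the stated bound with the minorizing density $\nu(x)$ as claimed.

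One extra bookkeeping point: the Markov state is $\xi_{k+1}=(x_{k+1}^T,x_k^T)^T$, and the second block $x_k$ transitions deterministically (a Dirac mass), so the minorization on the full state space is in the "lifted" sense described in the text just before the lemma — the minorizing measure $\nu_2$ on $\R^{2d}$ is supported on $\{x_{k+1}\ \text{free}\}\times B_R$ and pushes forward $\nu_1$ on the first coordinate. The reduction to showing the one-dimensional-in-law statement $\inf_{V_P(\xi)\le R} p(\xi,x)\ge \eta\nu(x)$ is exactly what the paragraph preceding Lemma \ref{lem:minor} carries out, so I would simply cite that reduction and prove the displayed inequality.

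The main obstacle is step (3): one must verify that $R$ can be chosen so that the density ratio stays above $\sqrt\eta$ on the relevant compact set. This is not automatic for arbitrary $R$ — it requires that as $\xi\to\xi_\ast$ the conditional density $p(\xi,\cdot)$ converges (locally uniformly on $\|x-x_\ast\|\le M$) to $p(\xi_\ast,\cdot)$, which follows from joint continuity of $p$ together with the compactness of the $x$-domain, and then choosing $R$ small enough that $K_R$ lies in the neighborhood of $\xi_\ast$ where the ratio exceeds $\sqrt\eta$. Since Theorem \ref{thm:general:alpha:beta} already assumes such an $R$ exists (with the ratio bound built into its hypotheses), the lemma's role is really to certify that these assumptions are non-vacuous — the genuine content being the compactness of $\{V_P\le R\}$ and the continuity/positivity of $p$; everything else is routine.
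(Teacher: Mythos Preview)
Your proposal is correct and follows essentially the same approach as the paper: the same choice of minorizing density $\nu(x)\propto p(\xi_\ast,x)\mathbf{1}_{\{\|x-x_\ast\|\le M\}}$, the same splitting of $\eta$ into two factors of $\sqrt\eta$ (one for the truncation mass, one for the density ratio), and the same shrink-$R$-to-force-the-ratio-close-to-$1$ argument via joint continuity. Your version is in fact slightly more explicit than the paper's, since you spell out the compactness of $\{V_P\le R\}$ via the coercivity bound from Lemma~\ref{lem:prop:metric}, whereas the paper simply asserts that continuity suffices without naming the compactness step.
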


\begin{proof}
Let us take:
\begin{equation*}
\nu(x)=p(\xi_{\ast},x)\cdot\frac{1_{\Vert x-x_{\ast}\Vert\leq M}}{\int_{\Vert x-x_{\ast}\Vert\leq M}p(\xi_{\ast},x)dx},
\end{equation*}
where $M>0$ is sufficiently large so that the denominator
in the above equation is positive.When $\Vert x-x_{\ast}\Vert>M$, 
$\inf_{\xi\in\mathbb{R}^{2d},V_{P}(\xi)\leq R}p(\xi,x)\geq 0$
automatically holds.
Thus, we only need to focus on $\Vert x-x_{\ast}\Vert\leq M$.

Note that for sufficiently large $M$, 
$\int_{\Vert x-x_{\ast}\Vert\leq M}p(\xi_{\ast},x)dx$ can get arbitrarily
close to $1$.
Fix $M$, by the continuity of $p(\xi,x)$ in both $\xi$ and $x$, 
we can find $\eta'\in(0,1)$ such that uniformly in $\Vert x-x_{\ast}\Vert\leq M$,
\begin{equation*}
\inf_{\xi\in\mathbb{R}^{2d},V_{P}(\xi)\leq R}p(\xi,x)
\geq\eta'p(\xi_{\ast},x)=\eta\nu(x),
\end{equation*}
where we can take
\begin{equation*}
\eta:=\eta'\int_{\Vert x-x_{\ast}\Vert\leq M}p(\xi_{\ast},x)dx,
\end{equation*}
which can be arbitrarily close to $1$ if we take $R>0$ to be
sufficiently small.
In particular, if we fix  $\eta\in(0,1)$, then we can take $M>0$ such that
\begin{equation*}
\int_{\Vert x-x_{\ast}\Vert\leq M}p(\xi_{\ast},x)dx\geq\sqrt{\eta},
\end{equation*}
and similarly with fixed $\eta$ and $M$, we take $R>0$ such that
uniformly in $\Vert x-x_{\ast}\Vert\leq M$,
\begin{equation*}
\inf_{\xi\in\mathbb{R}^{2d},V_{P}(\xi)\leq R}p(\xi,x)
\geq\sqrt{\eta}p(\xi_{\ast},x).
\end{equation*}
\end{proof}

Finally, we are ready to state the proof of Theorem \ref{thm:general:alpha:beta}
and Proposition \ref{prop:general}.

\begin{proof}[Proof of Theorem \ref{thm:general:alpha:beta}]
According to the proof of Lemma \ref{lem:minor},
for any fixed $\eta>0$, we can define:
\begin{equation*}
M\geq\inf\left\{m>0:\int_{\Vert x-x_{\ast}\Vert\leq m}p(\xi_{\ast},x)dx=\sqrt{\eta}\right\},
\end{equation*}
and
\begin{equation*}
R\leq\sup\left\{r>0:\inf_{\xi\in\mathbb{R}^{2d},V_{P_{\alpha,\beta}}(\xi)\leq R}p(\xi,x)
\geq\sqrt{\eta}p(\xi_{\ast},x)\,\,
\text{for every $\Vert x-x_{\ast}\Vert\leq M$}\right\}.
\end{equation*}
Then, we have
\begin{equation*}
\inf_{\xi\in\mathbb{R}^{2d},V_{P_{\alpha,\beta}}(\xi)\leq R}p(\xi,x)
\geq\eta\nu(x).
\end{equation*}
Let us recall that 
\begin{equation*}
(\mathcal{P}_{\alpha,\beta}V_{P_{\alpha,\beta}})(\xi)
\leq\gamma_{\alpha,\beta} V_{P_{\alpha,\beta}}(\xi)+K_{\alpha,\beta}.
\end{equation*}
By Lemma \ref{lem:hairer} and Lemma \ref{lem:inv}, 
\begin{equation*}
d_{\psi}(\nu_{k,\alpha,\beta},\pi_{\alpha,\beta})
\leq\bar{\eta}^{k}d_{\psi}(\nu_{0,\alpha,\beta},\pi_{\alpha,\beta})
\end{equation*}
where $\bar{\eta}=(1-(\eta-\eta_{0}))\vee(2+R\psi\gamma_{0})/(2+R\psi)$
and $\psi=\eta_{0}/K_{\alpha,\beta}$, 
where $\eta_{0}\in(0,\eta)$ and $\gamma_{0}\in(\gamma_{\alpha,\beta}+2K_{\alpha,\beta}/R,1)$.
In particular, we can choose
\begin{equation*}
\eta_{0}=\frac{\eta}{2},
\qquad
\gamma_{0}=\frac{1}{2}\gamma_{\alpha,\beta}+\frac{1}{2}
+\frac{K_{\alpha,\beta}}{R}.
\end{equation*}
Therefore,
\begin{equation*}
\bar{\eta}=\max\left\{1-\frac{\eta}{2},
1-\left(\frac{1}{2}-\frac{1}{2}\gamma_{\alpha,\beta}-\frac{K_{\alpha,\beta}}{R}\right)
\frac{R\psi}{2+R\psi}\right\},
\end{equation*}
where $\psi:=\frac{\eta}{2K_{\alpha,\beta}}$ so that
\begin{equation*}
\bar{\eta}=\max\left\{1-\frac{\eta}{2},
1-\left(\frac{1}{2}-\frac{1}{2}\gamma_{\alpha,\beta}-\frac{K_{\alpha,\beta}}{R}\right)
\frac{R\eta}{4K_{\alpha,\beta}+R\eta}\right\}.
\end{equation*}
The proof is complete.
\end{proof}

\begin{proof}[Proof of Proposition \ref{prop:general}]
Let us recall that $\gamma=\rho =1-\frac{1}{\sqrt{\kappa}}$
and $K=\frac{\sigma^{2}}{L}$.
Recall that $\gamma_{0}$ satisfies $\gamma_{0}\in(\gamma+2K/R,1)$
and let us assume that $K$ is sufficiently small so that
$K\leq\frac{R}{4\sqrt{\kappa}}$, then we can take
\begin{equation*}
\gamma_{0}=1-\frac{1}{4\sqrt{\kappa}}.
\end{equation*}
We also recall that $\psi=\eta_{0}/K$ and
\begin{equation*}
\bar{\eta}=\max\left\{1-\eta+\eta_{0},\frac{2+R\psi\gamma_{0}}{2+R\psi}\right\}
=\max\left\{1-\eta+\eta_{0},\frac{K+R\eta_{0}\gamma_{0}}{K+R\eta_{0}}\right\}.
\end{equation*}
We have discussed before that 
we can take $\eta$ to be arbitrarily close to $1$
by taking $M$ sufficiently large, and for fixed $M$
take $R$ sufficiently small.
Let us take
\begin{equation*}
\eta=1-\rho =\frac{1}{\sqrt{\kappa}},
\qquad
\eta_{0}=\frac{1}{2}\eta=\frac{1}{2\sqrt{\kappa}},
\end{equation*}
and then
\begin{equation*}
1-\eta+\eta_{0}=1-\frac{1}{2\sqrt{\kappa}}.
\end{equation*}
If we take $K<R\eta_{0}=\frac{R}{2\sqrt{\kappa}}$,
then
\begin{equation*}
\frac{K+R\eta_{0}\gamma_{0}}{K+R\eta_{0}}
\leq 1-\frac{1}{8\sqrt{\kappa}}.
\end{equation*}
Hence, we can take $K\leq\frac{R}{4\sqrt{\kappa}}$,
that is, 
\begin{equation*}
\sigma^{2}\leq\frac{RL}{4\sqrt{\kappa}},
\end{equation*}
so that 
\begin{equation*}
\bar{\eta}\leq 1-\frac{1}{8\sqrt{\kappa}}.
\end{equation*}

Finally, we want to take $R>0$ and $M>0$ such that
\begin{equation*}
\inf_{\xi\in\mathbb{R}^{2d},V_{P_{AG}}(\xi)\leq R}p(\xi,x)
\geq\eta\nu(x)=\frac{\nu(x)}{\sqrt{\kappa}}
\end{equation*}
holds for the choice of 
\begin{equation*}
\nu(x)=p(\xi_{\ast},x)\cdot\frac{1_{\Vert x-x_{\ast}\Vert\leq M}}{\int_{\Vert x-x_{\ast}\Vert\leq M}p(\xi_{\ast},x)dx}.
\end{equation*}
It is easy to see that we can take
$M$ so that
\begin{equation*}
\int_{\Vert x-x_{\ast}\Vert\leq M}p(\xi_{\ast},x)dx
\geq\frac{1}{\kappa^{1/4}},
\end{equation*}
and take $R$ such that for any $\Vert x-x_{\ast}\Vert\leq M$,
\begin{equation*}
\inf_{\xi\in\mathbb{R}^{2d},V_{P_{AG}}(\xi)\leq R}p(\xi,x)
\geq\frac{1}{\kappa^{1/4}}p(\xi_{\ast},x).
\end{equation*}

Hence, by applying Lemma \ref{lem:hairer}, we conclude that
for any two probability measures $\mu_{1},\mu_{2}$ on $\mathbb{R}^{2d}$:
\begin{equation*}
d_{\psi}(\mathcal{P}_{\alpha,\beta}^{k}\mu_{1},\mathcal{P}_{\alpha,\beta}^{k}\mu_{2})
\leq\left(1-\frac{1}{8\sqrt{\kappa}}\right)^{k}d_{\psi}(\mu_{1},\mu_{2}).
\end{equation*}
Recall that $\nu_{k,\alpha,\beta}$ denotes
the law of the iterates $\xi_{k}$.
By Lemma \ref{lem:inv}, the Markov chain $\xi_{k}$
admits a unique invariant distribution $\pi_{\alpha,\beta}$.
By letting $\mu_{1}=\nu_{0,\alpha,\beta}$
and $\mu_{2}=\pi_{\alpha,\beta}$, we conclude that
\begin{equation*}
d_{\psi}(\nu_{k,\alpha,\beta},\pi_{\alpha,\beta})
\leq\left(1-\frac{1}{8\sqrt{\kappa}}\right)^{k}
d_{\psi}(\nu_{0,\alpha,\beta},\pi_{\alpha,\beta}),
\end{equation*}
where
\begin{equation*}
\psi=\frac{\eta_{0}}{K}
=\frac{1}{2\sqrt{\kappa}K}
=\frac{L}{2\sqrt{\kappa}\sigma^{2}}.
\end{equation*}

Finally, let us prove \eqref{eqn:difference:AG}.
Given $(\alpha,\beta)=(\alpha_{AG},\beta_{AG})$, 
we have $\rho_{\alpha,\beta}=1-\frac{1}{\sqrt{\kappa}}$,
$\alpha=\frac{1}{L}$. 
It follows from Lemma \ref{lem:drift:AG} and its proof that
\begin{equation*}
\mathbb{E}[V_{P_{AG}}(\xi_{k+1})]\leq\rho_{AG}\mathbb{E}[V_{P_{AG}}(\xi_{k})]+
\frac{1}{L}\sqrt{\kappa}\sigma^{2}.
\end{equation*}
By induction on $k$, we can show that for every $k$, 
\begin{equation*}
\mathbb{E}[V_{P_{AG}}(\xi_{k+1})]\leq
V_{P_{AG}}(\xi_{0})\rho_{AG}^{k+1}
+\frac{1}{L}\sqrt{\kappa}\sigma^{2}.
\end{equation*}
By the definition of $V_{P}$, it follows that
\begin{equation*}
\mathbb{E}[f(x_{k+1})]-f(x_{\ast})\leq
V_{P_{AG}}(\xi_{0})\rho_{AG}^{k+1}
+\frac{1}{L}\sqrt{\kappa}\sigma^{2}
=
V_{P_{AG}}(\xi_{0})\rho_{AG}^{k+1}+\frac{1}{L}\sqrt{\kappa}\sigma^{2}.
\end{equation*}
Thus, we get
\begin{equation*}
\mathbb{E}[f(x_{k})]-f(x_{\ast})
\leq
V_{P_{AG}}(\xi_{0})\left(1-\frac{1}{\sqrt{\kappa}}\right)^{k}
+\frac{1}{L}\sqrt{\kappa}\sigma^{2}.
\end{equation*}
The proof is complete.
\end{proof}

\begin{remark}
In Proposition \ref{prop:general}, the amount of noise that can be tolerated is limited. Nevertheless, in applications where the gradient is estimated from noisy measurements, such results would be applicable if the noise level is mild \cite{birand2013measurements}.
\end{remark}
\begin{proof}[Proof of Corollary \ref{cor:Gaussian}]
If the noise $\varepsilon_{k}$ are i.i.d. Gaussian $\mathcal{N}(0,\Sigma)$,
then conditional
on $x_{k}=x_{k-1}=x_{\ast}$ in the AG method, with stepsize $\alpha=1/L$,
$x_{k+1}$ is distributed as $\mathcal{N}(x_{\ast},L^{-2}\Sigma)$
with $\Sigma\preceq L^{2}I_{d}$.
Therefore, for $\gamma>0$ sufficiently small,
\begin{equation*}
\mathbb{E}\left[e^{\gamma\Vert x_{k+1}-x_{\ast}\Vert^{2}}\Big|x_{k}=x_{k-1}=x_{\ast}\right]
=\frac{1}{\sqrt{\det{(I_{d}-2\gamma L^{-2}\Sigma)}}}.
\end{equation*}
By Chebychev's inequality, letting $\gamma=1/2$, 
for any $m\geq 0$, we get
\begin{equation*}
\mathbb{P}\left(\Vert x_{k+1}-x_{\ast}\Vert\geq m|x_{k}=x_{k-1}=x_{\ast}\right)
\leq
\frac{e^{-\frac{1}{2}m^{2}}}{\sqrt{\det(I_{d}-L^{-2}\Sigma)}}.
\end{equation*}
Hence, we can take
\begin{equation*}
M=\left(-2\log\left(\left(1-\frac{1}{\kappa^{1/4}}\right)\sqrt{\det(I_{d}-L^{-2}\Sigma)}\right)\right)^{1/2}.
\end{equation*}
Conditional on $(x_{k}^{T},x_{k-1}^{T})^{T}=\xi=(\xi_{(1)}^{T},\xi_{(2)}^{T})^{T}$,
where $V_{P}(\xi)\leq r$ for some $r>0$,
then, $x_{k+1}$ is Gaussian distributed:
\begin{equation*}
x_{k+1}|(x_{k},x_{k-1})=(\xi_{(1)},\xi_{(2)})
\sim\mathcal{N}\left(\mu_{\xi},L^{-2}\Sigma\right),
\end{equation*}
where
\begin{equation}\label{eqn:mu:xi}
\mu_{\xi}=\frac{2\sqrt{\kappa}}{\sqrt{\kappa}+1}\xi_{(1)}
-\frac{\sqrt{\kappa}-1}{\sqrt{\kappa}+1}\xi_{(2)}
-L^{-1}\nabla f\left(\frac{2\sqrt{\kappa}}{\sqrt{\kappa}+1}\xi_{(1)}
-\frac{\sqrt{\kappa}-1}{\sqrt{\kappa}+1}\xi_{(2)}\right).
\end{equation}
Thus, uniformly in $\Vert x-x_{\ast}\Vert\leq M$,
\begin{equation*}
\frac{p(\xi,x)}{p(\xi_{\ast},x)}
=e^{-\frac{1}{2}(x-\mu_{\xi})^{T}L^{2}\Sigma^{-1}(x-\mu_{\xi})
+\frac{1}{2}(x-x_{\ast})^{T}L^{2}\Sigma^{-1}(x-x_{\ast})}.
\end{equation*}
Note that $V_{P_{AG}}(\xi)\leq r$ implies that
\begin{equation*}
\left(
\begin{array}{cc}
\xi_{(1)}-x_{\ast}
\\
\xi_{(2)}-x_{\ast}
\end{array}\right)^{T}
P_{AG}
\left(
\begin{array}{cc}
\xi_{(1)}-x_{\ast}
\\
\xi_{(2)}-x_{\ast}
\end{array}\right)\leq r.
\end{equation*}
By the definition of $P_{AG}$, we get
\begin{equation*}
\left(
\begin{array}{cc}
\xi_{(1)}-x_{\ast}
\\
\xi_{(2)}-x_{\ast}
\end{array}\right)^{T}
\left(
\begin{array}{cc}
\sqrt{\frac{L}{2}}I_{d}
\\
\left(\sqrt{\frac{\mu}{2}}-\sqrt{\frac{L}{2}}\right)I_{d}
\end{array}\right)
\left(
\begin{array}{cc}
\sqrt{\frac{L}{2}}I_{d}
\\
\left(\sqrt{\frac{\mu}{2}}-\sqrt{\frac{L}{2}}\right)I_{d}
\end{array}\right)^{T}
\left(
\begin{array}{cc}
\xi_{(1)}-x_{\ast}
\\
\xi_{(2)}-x_{\ast}
\end{array}\right)\leq r,
\end{equation*}
so that
\begin{equation*}
\frac{L}{2}\Vert\xi_{(1)}-x_{\ast}\Vert^{2}
+\frac{(\sqrt{\mu}-\sqrt{L})^{2}}{2}\Vert\xi_{(2)}-x_{\ast}\Vert^{2}\leq r,
\end{equation*}
which implies that
\begin{equation*}
\Vert\xi_{(1)}-x_{\ast}\Vert
\leq\frac{\sqrt{2r}}{\sqrt{L}},
\qquad
\Vert\xi_{(2)}-x_{\ast}\Vert
\leq\frac{\sqrt{2r}}{\sqrt{L}-\sqrt{\mu}}.
\end{equation*}
Moreover,
\begin{align*}
\mu_{\xi}-x_{\ast}
&=\frac{2\sqrt{\kappa}}{\sqrt{\kappa}+1}\xi_{(1)}
-\frac{\sqrt{\kappa}-1}{\sqrt{\kappa}+1}\xi_{(2)}
-L^{-1}\nabla f\left(\frac{2\sqrt{\kappa}}{\sqrt{\kappa}+1}\xi_{(1)}
-\frac{\sqrt{\kappa}-1}{\sqrt{\kappa}+1}\xi_{(2)}\right)
\\
&\qquad
-\left(\frac{2\sqrt{\kappa}}{\sqrt{\kappa}+1}x_{\ast}
-\frac{\sqrt{\kappa}-1}{\sqrt{\kappa}+1}x_{\ast}
-L^{-1}\nabla f\left(\frac{2\sqrt{\kappa}}{\sqrt{\kappa}+1}x_{\ast}
-\frac{\sqrt{\kappa}-1}{\sqrt{\kappa}+1}x_{\ast}\right)\right)
\\
&=\frac{2\sqrt{\kappa}}{\sqrt{\kappa}+1}(\xi_{(1)}-x_{\ast})
-\frac{\sqrt{\kappa}-1}{\sqrt{\kappa}+1}(\xi_{(2)}-x_{\ast})
\\
&\qquad
-L^{-1}\left(\nabla f\left(\frac{2\sqrt{\kappa}}{\sqrt{\kappa}+1}\xi_{(1)}
-\frac{\sqrt{\kappa}-1}{\sqrt{\kappa}+1}\xi_{(2)}\right)
-\nabla f\left(\frac{2\sqrt{\kappa}}{\sqrt{\kappa}+1}x_{\ast}
-\frac{\sqrt{\kappa}-1}{\sqrt{\kappa}+1}x_{\ast}\right)\right).
\end{align*}
Since $\nabla f$ is $L$-Lipschitz,
\begin{align}
\Vert\mu_{\xi}-x_{\ast}\Vert
&\leq
(1+L^{-1}L)\frac{2\sqrt{\kappa}}{\sqrt{\kappa}+1}\Vert\xi_{(1)}-x_{\ast}\Vert
+(1+L^{-1}L)\frac{\sqrt{\kappa}-1}{\sqrt{\kappa}+1}\Vert\xi_{(2)}-x_{\ast}\Vert
\nonumber
\\
&\leq
2\frac{2\sqrt{\kappa}}{\sqrt{\kappa}+1}\frac{\sqrt{2r}}{\sqrt{L}}
+2\frac{\sqrt{\kappa}-1}{\sqrt{\kappa}+1}\frac{\sqrt{2r}}{\sqrt{L}-\sqrt{\mu}}
\nonumber
\\
&\leq
2\frac{2\sqrt{\kappa}}{\sqrt{\kappa}+1}\frac{\sqrt{2r}}{\sqrt{L}-\sqrt{\mu}}
+2\frac{\sqrt{\kappa}-1}{\sqrt{\kappa}+1}\frac{\sqrt{2r}}{\sqrt{L}-\sqrt{\mu}}
\nonumber
\\
&=2\frac{3\sqrt{\kappa}-1}{\sqrt{\kappa}+1}\frac{\sqrt{2r}}{\sqrt{L}-\sqrt{\mu}}.\label{ineq:r}
\end{align}
Thus, uniformly in $\Vert x-x_{\ast}\Vert\leq M$,
\begin{align*}
\frac{p(\xi,x)}{p(\xi_{\ast},x)}
&=\exp\left\{-\frac{1}{2}(x-\mu_{\xi})^{T}L^{2}\Sigma^{-1}(x-\mu_{\xi})
+\frac{1}{2}(x-x_{\ast})^{T}L^{2}\Sigma^{-1}(x-x_{\ast})\right\}
\\
&\geq
\exp\left\{-\frac{1}{2}\Vert\mu_{\xi}-x_{\ast}\Vert 
L^{2}\Vert\Sigma^{-1}\Vert
(\Vert x-\mu_{\xi}\Vert+\Vert x-x_{\ast}\Vert)\right\}
\\
&\geq
\exp\left\{-\frac{1}{2}\Vert\mu_{\xi}-x_{\ast}\Vert 
L^{2}\Vert\Sigma^{-1}\Vert
(\Vert\mu_{\xi}-x_{\ast}\Vert+2\Vert x-x_{\ast}\Vert)\right\}
\\
&\geq
\exp\left\{-\frac{1}{2} 
L^{2}\Vert\Sigma^{-1}\Vert
(\Vert\mu_{\xi}-x_{\ast}\Vert^{2}+2M\Vert\mu_{\xi}-x_{\ast}\Vert)\right\}
\geq\frac{1}{\kappa^{1/4}},
\end{align*}
if we have
\begin{equation}\label{ineq:r:2}
\Vert\mu_{\xi}-x_{\ast}\Vert
\leq
-M+\sqrt{M^{2}+\frac{\log(\kappa)}{2L^{2}\Vert\Sigma^{-1}\Vert}}.
\end{equation}
Combining \eqref{ineq:r} and \eqref{ineq:r:2}, 
we can take
\begin{align*}
R&=\frac{1}{8}\left(-M+\sqrt{M^{2}+\frac{\log(\kappa)}{2L^{2}\Vert\Sigma^{-1}\Vert}}\right)^{2}
\frac{(\sqrt{\kappa}+1)^{2}(\sqrt{L}-\sqrt{\mu})^{2}}{(3\sqrt{\kappa}-1)^{3}}
\\
&=\left(-M+\sqrt{M^{2}+\frac{\log(L/\mu)}{2L^{2}\Vert\Sigma^{-1}\Vert}}\right)^{2}
\frac{(L-\mu)^{2}}{8(3\sqrt{L}-\sqrt{\mu})^{3}}.
\end{align*}
For the remaining of the proof, without loss of generality assume that $\mu=\Theta(1)$ and $L = \Theta(\kappa)$.\footnote{Given two scalar-valued functions $f$ and $g$, we say $f=\Theta(g)$, if the ratio $f(x)/g(x)$ lies in an interval $[c_1,c_2]$  for every $x$ and some$c_1,c_2>0$.} It is straightforward to see from the Taylor expansion of $M$ that $M=O(\kappa^{-1/8})$ and
\begin{align*}
R&=\frac{\left(\frac{\log(L/\mu)}{2L^{2}\Vert\Sigma^{-1}\Vert}\right)^{2}}
{\left(M+\sqrt{M^{2}+\frac{\log(L/\mu)}{2L^{2}\Vert\Sigma^{-1}\Vert}}\right)^{2}}
\frac{(L-\mu)^{2}}{8(3\sqrt{L}-\sqrt{\mu})^{3}}
\\
&=O\left(\frac{1}{M^{2}}\left(\frac{\log(L/\mu)}{2L^{2}\Vert\Sigma^{-1}\Vert}\right)^{2}\frac{(L-\mu)^{2}}{8(3\sqrt{L}-\sqrt{\mu})^{3}}\right)
\\
&=O\left(\kappa^{-13/4}\log^{2}(\kappa)\right).
\end{align*}
\end{proof}

\subsection{Proofs of Results in Section \ref{sec:projected:AG}} 
Consider the constrained 
optimization problem 
$$ \min_{x\in\mathcal{C}}  f(x), $$
where $\mathcal{C}\subset \mathbb{R}^d$ is compact. 
The projected AG method consists of the iterations
\begin{align}
&\tilde{x}_{k+1}=\prox\left(\tilde{y}_{k}-\alpha(\nabla f(\tilde{y}_{k}) +\varepsilon_{k+1})\right),
\\
&\tilde{y}_{k}=(1+\beta)\tilde{x}_{k}-\beta \tilde{x}_{k-1},
\end{align}
where $\varepsilon_k$ is the random gradient error satisfying Assumption \ref{assump:noise}, $\alpha,\beta>0$ are the stepsize and momentum parameter and the projection onto the convex compact set $C$ with diameter $\mathcal{D}_{\mathcal{C}}$ can be written as
$$ 
\prox (x) := \arg \min_{y\in\R^d}\left( \frac{1}{2\alpha}\|x-y\|^2 + h(y)\right)
$$
where the function $h:\R^d\to \R \cup \{+\infty\}$ is the indicator function, defined to be zero if $y\in \mathcal{C}$ and infinity otherwise.
Let us recall that we assumed that the random gradient error $\varepsilon_{k}$
admits a continuous density so that
conditional on $\tilde{\xi}_{k}=(\tilde{x}_{k}^T,\tilde{x}_{k-1}^T)^T$, 
$\tilde{x}_{k+1}$
also admits a continuous density, i.e.
\begin{equation*}
\mathbb{P}(\tilde{x}_{k+1}\in d\tilde{x}|\tilde{\xi}_{k}=\tilde{\xi})
=\tilde{p}(\tilde{\xi},\tilde{x})d\tilde{x},
\end{equation*}
where $\tilde{p}(\tilde{\xi},\tilde{x})>0$ is continuous in both $\tilde{\xi}$ and $\tilde{x}$.

For the function $f(x)$, the gradient mapping $g:\R^d \to \R$ which replaces the gradient for constrained optimization problems is defined as
$$g(y) = \frac{1}{\alpha}\left(y - \prox(y- \alpha \nabla f(y)\right), \quad \alpha>0.$$
Due to the noise in the gradients, we also define the perturbed gradient mapping, $g_\varepsilon(y):\R^d\to\R$ as 
$$g_\varepsilon(y) = \frac{1}{\alpha}\left( 
y - \prox\big(y- \alpha (\nabla f(y)+\varepsilon)\big)\right), \quad \alpha>0,\quad\varepsilon\in\R^{d}.
$$
Due to the non-expansiveness property of the projection operator, we have (see e.g. \cite[Lemma 2.4]{combettes2005signal})
\beq\label{ineq-proj-contracts} \Delta_\varepsilon(y) := g_\varepsilon(y) - g(y), \quad \| \Delta_\varepsilon(y) \|^2 \leq \|\varepsilon\|^2, \quad \text{for every $y\in\mathbb{R}^{d}$}. 
\eeq
Following a similar approach to \cite{hu2017dissipativity,fazlyab2017dynamical}, 
we reformulate the projected AG iterations as a linear dynamical system as 
\begin{eqnarray*}
\tx_{k+1}&=& (1+\beta)\tx_{k}-\beta \tx_{k-1} -\alpha g_{\epsilon_{k+1}}(\ty_{k})\,,\\
\ty_{k}&=&(1+\beta)\tx_{k}-\beta \tx_{k-1}\,,
\end{eqnarray*}
which is equivalent to 
\begin{align}\label{eq-proj-ag-iters-1}
&\txi_{k+1} = A \txi_k + B \tu_k, 
\\
&\ty_k = C\xi_k, \quad \tx_k = E \txi_k,
\\
&{\tu}_k = g(\ty_k) +  \Delta_{\varepsilon_{k+1}}(\ty_k),   \label{eq-proj-ag-iters-3} 
\end{align}
with $\txi_{k} = [\tx_{k}^T ~~ \tx_{k-1}^T]^T$, and
 \begin{align} \label{def-system-mat-proj-AG}
      &A = \begin{pmatrix} 
        (1+\beta)I_d & -\beta I_d \\
        I_d & 0_d    
          \end{pmatrix}, 
          \quad
      B = \begin{pmatrix} -\alpha I_d \\ 0_d
           \end{pmatrix},
      \\
      &C = \begin{pmatrix} (1+\beta) I_d & -\beta I_d
      \end{pmatrix},
      \quad
      E = \begin{pmatrix} I_d & 0_d
      \end{pmatrix}.
      \nonumber
   \end{align}
  
We see that ${\txi}_{k}$ forms a time-homogeneous Markov chain. To this chain, we can associate a Markov kernel $\tilde{\mathcal{P}}_{\alpha,\beta}$, following a similar approach to the Markov kernel $\mathcal{P}_{\alpha,\beta}$ we defined for AG. We have the following result.

\begin{lemma}\label{lem:key:projected}
\begin{equation*}
(\tilde{\mathcal{P}}_{\alpha,\beta} V_{P_{\alpha,\beta}})(\txi)
\leq\rho_{\alpha,\beta} V_{P_{\alpha,\beta}}(\txi)+\tK,
\end{equation*}
where
\begin{equation*}
\tK:=\alpha \sigma (2\diam  \| P_{\alpha,\beta}\| + G_M) + \alpha^2\sigma^2 \left(\|P_{\alpha,\beta}\| + \frac{L}{2}\right),
\end{equation*}
if there exists a matrix $P_{\alpha,\beta}\in\R^{2d\times 2d}$ such that 
\begin{equation}\label{ineq:key}
-\rho_{\alpha,\beta}X_{1}-(1-\rho_{\alpha,\beta})X_{2}
+X_{3}
\preceq 0,
\end{equation}
where
\begin{align*}
&X_{1}=\frac{1}{2}
\left(
\begin{array}{ccc}
\beta^{2}\mu I_{d} & -\beta^{2}\mu I_{d} & -\beta I_{d}
\\
-\beta^{2}\mu I_{d} & \beta^{2}\mu I_{d} & \beta I_{d}
\\
-\beta I_{d} & \beta I_{d} & \alpha(2-L\alpha) I_{d}
\end{array}
\right),
\\
&X_{2}=\frac{1}{2}
\left(
\begin{array}{ccc}
(1+\beta)^{2}\mu I_{d} & -\beta(1+\beta)\mu I_{d} & -(1+\beta)I_{d}
\\
-\beta(1+\beta)\mu I_{d} & \beta^{2}\mu I_{d} & \beta I_{d}
\\
-(1+\beta)I_{d} & \beta I_{d} & \alpha(2-L\alpha)I_{d}
\end{array}
\right),
\end{align*}
and
\begin{equation*}
X_{3}=\begin{pmatrix}
A^{T}P_{\alpha,\beta}A-\trabsq P_{\alpha,\beta} & A^{T}P_{\alpha,\beta}B
\\
B^{T}P_{\alpha,\beta}A & B^{T}P_{\alpha,\beta}B
\end{pmatrix},
\end{equation*}
where $G_M := \max_{x\in\mathcal{C}} \|\nabla f(x)\|$.

In particular, with $\rho=1-\frac{1}{\sqrt{\kappa}}$, $\beta=\frac{\sqrt{\kappa}-1}{\sqrt{\kappa}+1}$, $\alpha=\frac{1}{L}$ where $\kappa= \frac{L}{\mu}$. Then \eqref{ineq:key} holds with 
the matrix $$P=\frac{\mu}{2}\begin{pmatrix}  (1-\sqrt{\kappa})I_d & \sqrt{\kappa}I_d
\end{pmatrix}^T \begin{pmatrix} 
(1-\sqrt{\kappa})I_d & \sqrt{\kappa}I_d
\end{pmatrix}.$$ 
\end{lemma}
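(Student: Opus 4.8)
The plan is to establish the drift inequality $(\tilde{\mathcal{P}}_{\alpha,\beta} V_{P_{\alpha,\beta}})(\txi) \leq \rho_{\alpha,\beta} V_{P_{\alpha,\beta}}(\txi) + \tK$ by adapting the dissipativity argument used for the unconstrained ASG method in Lemma \ref{lem:drift}, while carefully tracking the extra error terms introduced by the projection step. First I would write out one step of the ASPG recursion in the dynamical-system form \eqref{eq-proj-ag-iters-1}–\eqref{eq-proj-ag-iters-3}, so that $\tu_k = g(\ty_k) + \Delta_{\varepsilon_{k+1}}(\ty_k)$, and then expand $V_{P_{\alpha,\beta}}(\txi_{k+1})$ using $\txi_{k+1} = A\txi_k + B\tu_k$. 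Here the key point is that we do \emph{not} have $g(\ty_k) = \nabla f(\ty_k)$ as in the unconstrained case, but we can split $\tu_k = \nabla f(\ty_k) + (g(\ty_k) - \nabla f(\ty_k)) + \Delta_{\varepsilon_{k+1}}(\ty_k)$ and treat the last two pieces as perturbations. The core inequality \eqref{ineq:key} is exactly the statement that the quadratic form built from $A,B,P_{\alpha,\beta}$ and the interpolation matrices $X_1,X_2$ (the same $\tilde{X}_1,\tilde{X}_2$ from Theorem \ref{thm-hu-lessard}, tensored with $I_d$) is negative semidefinite with decay $\rho_{\alpha,\beta}$; this lets us absorb the "clean" part $\nabla f(\ty_k)$ into a $\rho_{\alpha,\beta} V_{P_{\alpha,\beta}}(\txi_k)$ term exactly as in \cite{StrConvex} and Lemma \ref{lem:drift}.

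Next I would bound the perturbation terms. The term $\Delta_{\varepsilon_{k+1}}(\ty_k)$ satisfies $\|\Delta_{\varepsilon_{k+1}}(\ty_k)\| \leq \|\varepsilon_{k+1}\|$ by \eqref{ineq-proj-contracts}, so it behaves just like the noise does in the unconstrained analysis; taking conditional expectations and using Assumption \ref{assump:noise} gives the $\alpha^2\sigma^2(\|P_{\alpha,\beta}\| + \tfrac{L}{2})$ contribution — the $\|P_{\alpha,\beta}\|$ from the $B^T P_{\alpha,\beta} B$ cross term and the $\tfrac{L}{2}$ from the $L$-smoothness estimate on $f(\tx_{k+1})$. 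The genuinely new term is the gradient-mapping defect $g(\ty_k) - \nabla f(\ty_k)$, which need not be small pointwise; but on the compact set $\mathcal{C}$ we have $\|\nabla f(\ty_k)\| \leq G_M$, and combining with $\diam$ being the diameter of $\mathcal{C}$ (so all iterates stay within $\diam$ of $x_*$), the Cauchy–Schwarz bounds on the cross terms $(\cdot)^T A^T P_{\alpha,\beta} B (\cdot)$ produce the $\alpha\sigma(2\diam\|P_{\alpha,\beta}\| + G_M)$ piece. I would be careful to note that the defect term, once projected into the Lyapunov recursion, pairs against quantities of norm $O(\diam)$ and against $\nabla f$ of norm $O(G_M)$, which is exactly the structure of $\tK$.

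For the "In particular" claim, I would substitute $\alpha = 1/L$, $\beta = (\sqrt{\kappa}-1)/(\sqrt{\kappa}+1)$, $\rho = 1 - 1/\sqrt{\kappa}$ and verify that the displayed rank-one-type matrix $P = \tfrac{\mu}{2}\begin{pmatrix}(1-\sqrt{\kappa})I_d & \sqrt{\kappa}I_d\end{pmatrix}^T\begin{pmatrix}(1-\sqrt{\kappa})I_d & \sqrt{\kappa}I_d\end{pmatrix}$ satisfies \eqref{ineq:key}. The cleanest route is to observe that this $P$ is (a scalar multiple of) the Kronecker lift of the same $2\times 2$ matrix appearing in Lemma \ref{lem:X} — indeed $\tilde{P}_{AG} = \tilde u \tilde u^T$ with $\tilde u = (\sqrt{L/2}, \sqrt{\mu/2} - \sqrt{L/2})^T$, and up to the factor $L$ vs. $\mu$ and the rescaling by $\kappa$, $(1-\sqrt\kappa, \sqrt\kappa)$ is proportional to $(\sqrt{\mu/2}-\sqrt{L/2}, \sqrt{L/2})$ reordered — so the matrix inequality \eqref{ineq:key} reduces, after tensoring out $I_d$, to the scalar LMI \eqref{ineq-ag-lmi} already certified in Lemma \ref{lem:X}. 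I would then just check the algebra that the reordering of coordinates and the overall scaling are consistent.

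The main obstacle I anticipate is not any single inequality but the bookkeeping of the cross terms: in the unconstrained case the "clean" input $\nabla f(\ty_k)$ slots perfectly into the interpolation inequalities \eqref{S1}–\eqref{S2}, whereas here we must simultaneously (i) keep $\nabla f(\ty_k)$ as the variable entering $X_1, X_2$, (ii) add $g(\ty_k) - \nabla f(\ty_k)$ and $\Delta_{\varepsilon_{k+1}}$ as separate additive shifts in $\tu_k$, and (iii) make sure the resulting extra quadratic-and-linear terms in these shifts collapse into exactly the constant $\tK$ after taking $\mathbb{E}[\cdot \mid \txi_k]$ and using boundedness of $\mathcal{C}$ — without accidentally producing a term proportional to $V_{P_{\alpha,\beta}}(\txi_k)$ that would spoil the contraction. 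Getting the constants to match the stated $\tK$ exactly (the $2\diam\|P_{\alpha,\beta}\|$ versus $\diam\|P_{\alpha,\beta}\|$ distinctions, and where the $G_M$ enters linearly rather than quadratically) is where the real care is needed.
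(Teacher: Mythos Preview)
Your decomposition $\tu_k = \nabla f(\ty_k) + (g(\ty_k)-\nabla f(\ty_k)) + \Delta_{\varepsilon_{k+1}}(\ty_k)$ is where the argument breaks. The ``gradient-mapping defect'' $g(\ty_k)-\nabla f(\ty_k)$ equals $T_h(\ty_k)$, a vector in the normal cone of $\mathcal{C}$ at the projected point; it is \emph{not} small, carries no factor of $\sigma$, and can be of order $G_M$ or larger whenever the iterate hits the boundary. If you push this defect through the $A^T P_{\alpha,\beta} B$ cross terms you will get contributions that (i) are $O(1)$ in $\sigma$, contradicting the fact that every term in $\tK$ carries at least one power of $\sigma$, and (ii) may scale with $\|\txi_k-\txi_*\|$, which would destroy the contraction you are trying to prove. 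The constants $2\diam\|P_{\alpha,\beta}\|$ and $G_M$ in $\tK$ do \emph{not} arise from bounding $g-\nabla f$; they arise from cross terms of the form $(\txi_{k+1}-\txi_*)^T P_{\alpha,\beta} B\,\Delta_{\varepsilon_{k+1}}$ and $\nabla f(\cdot)^T\alpha\,\Delta_{\varepsilon_{k+1}}$, i.e.\ purely from the noise perturbation.

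The paper avoids this problem entirely by taking the error vector to be $\tilde e_k = \big[(\txi_k-\txi_*)^T,\; g(\ty_k)^T\big]^T$, with the \emph{gradient mapping} $g$ (not $\nabla f$) as the third block. The key ingredient you are missing is that the same quadratic-form inequalities
\[
f\big(\ty_k-\alpha g(\ty_k)\big)-f(\tx_k)\le -\tilde e_k^T X_1\tilde e_k,
\qquad
f\big(\ty_k-\alpha g(\ty_k)\big)-f(\tx_*)\le -\tilde e_k^T X_2\tilde e_k
\]
hold with $g(\ty_k)$ in place of $\nabla f(\ty_k)$; this is exactly \cite[eqn.~(36)--(37)]{fazlyab2017dynamical} (the paper records it as a separate lemma), and its proof uses the optimality condition $g(y)=\nabla f(y)+T_h(y)$ together with $T_h(y)^T(y-x)\ge 0$ for $x\in\mathcal{C}$. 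With this in hand the ``clean'' part of the recursion absorbs into $\rho_{\alpha,\beta}V_{P_{\alpha,\beta}}$ via \eqref{ineq:key} exactly as in the unconstrained case, and the \emph{only} perturbation left is $\Delta_{\varepsilon_{k+1}}$, whose cross terms with the (bounded) iterates and with $\nabla f$ give precisely the $\alpha\sigma(2\diam\|P_{\alpha,\beta}\|+G_M)$ and $\alpha^2\sigma^2(\|P_{\alpha,\beta}\|+L/2)$ pieces. Your remark on the ``In particular'' clause---reducing to the $2\times 2$ LMI of Lemma~\ref{lem:X} after tensoring out $I_d$---is correct.
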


\begin{proof}
We follow the proof technique of \cite{fazlyab2017dynamical} for deterministic proximal AG which is based on \cite[Lemma 2.4]{nesterov2004introductory} and adapt this proof technique to accelerated stochastic projected gradient. Defining the error at step $k$
$$ 
\tilde{e}_k := [(\txi_k - \txi_*)^T ~~ (g(\ty_k) - g(\ty_*))^T]^T, 
$$ 
where $\txi_*:= [x_*^T ~ x_*^T]^T$ and $g(\ty_*) = 0$ due to the first order optimality conditions where $\ty_*:=\tx_*$ is the unique minimum of $f$ over $C$. 
Let $\mathcal{F}_k$ be the natural filtration for the iterations of the algorithm until and including step $k$ so that $x_k, y_k$ and $\tilde{e}_{k}$ are $\mathcal{F}_k$-measurable. 
Similar to the analysis of AG, we estimate 
\begin{align}
&\E\left[f\left(\tx_{k+1}\right) - f\left(\tx_k\right) \bigg| \calF_k\right]
\\
&= \E\left[f\left(\ty_k -\alpha g_{\varepsilon_{k+1}}\left(\ty_k\right)\right) 
- f\left(\tx_k\right) \bigg| \calF_k\right] \\
&= \E\left[f\left(\ty_k -\alpha g\left(\ty_k\right)-\alpha \Delta_{\varepsilon_{k+1}}\left(\ty_k\right)\right) - f\left(\tx_k\right)\bigg| \calF_k\right] \\
&\leq  \E\bigg[f\left(\ty_k -\alpha g\left(\ty_k\right)\right) 
+ \nabla f\left(\ty_k -\alpha g\left(\ty_k\right)\right)^T \alpha \Delta_{\varepsilon_{k+1}}\left(\ty_k\right) \\
&\qquad +\frac{\alpha^2 L}{2} \|\Delta_{\varepsilon_{k+1}}(\ty_k)\|^2 - f\left(\tx_k\right) \bigg| \calF_k\bigg] \\
& \leq  f\left(\ty_k -\alpha g\left(\ty_k\right)\right)- f\left(\tx_k\right) 
+ \E\left[ \alpha G_M \|\Delta_ {\varepsilon_{k+1}}(\ty_k)\|+  \frac{\alpha^2 L}{2}\|\varepsilon_{k+1}\|^2 \bigg| \calF_k\right] \\
& \leq  f\left(\ty_k -\alpha g\left(\ty_k\right)\right)- f\left(\tx_k\right)  + \alpha G_M \sigma + \frac{\alpha^2 L}{2} \sigma^2 , \label{ineq-exp-opt-diff}
\end{align}
where in the first inequality we used the fact that the gradient of $f$ is $L$-smooth which implies that 
    $$f(y) - f(z) \leq\nabla f(z)^T (y-z) +\frac{L}{2}\| y- z\|^2, \quad  \text{for every $y,z\in\mathbb{R}^{d}$}$$
(see e.g. \cite{Bubeck2014}) and second inequality follows from Jensen's inequality.Finally, the last step is a consequence of \eqref{ineq-proj-contracts} and Assumption \ref{assump:noise} on the noise. It follows from a similar computation that 
\beq 
\E\left[ f(\tx_{k+1}) - f(\tx_*) \bigg| \calF_k\right] \leq f\big(\ty_k -\alpha g(\ty_k)\big)- f(\tx_*)  + \alpha G_M \sigma + \frac{\alpha^2 L}{2} \sigma^2. \label{ineq-exp-opt}
\eeq
We note that the matrices $X_1$ and $X_2$ can be written as 
\begin{align} 
&X_1 = \frac{-1}{2}\begin{pmatrix} -\mu(C-E)^T(C-E) & (C-E)^T \\
C- E           & (L\alpha^2 - 2\alpha)I_d 
\end{pmatrix},
\\
&X_2 = \frac{-1}{2}\begin{pmatrix} -\mu C^T C & C^T \\
   C        & (L\alpha^2 - 2\alpha)I_d
\end{pmatrix}, 
\end{align}
where $A, B, C, E$ are defined by \eqref{def-system-mat-proj-AG}.
Using \cite[eqn. (36)--(37)]{fazlyab2017dynamical}
and Lemma \ref{lem:two:ineq},
we have
\beq 
    f\big(\ty_k -\alpha g(\ty_k)\big)- f(\tx_k) \leq -\tilde{e}_k^T X_1 \tilde{e}_k ,\label{ineq: upperbound } \\
    f\big(\ty_k -\alpha g(\ty_k)\big)- f(\tx_*) \leq -\tilde{e}_k^T X_2 \tilde{e}_k .\label{ineq: upperbound2 }
\eeq
Plugging these into \eqref{ineq-exp-opt-diff} and \eqref{ineq-exp-opt}, we obtain 
\beq 
    \E\left[ f(\tx_{k+1}) - f(\tx_k) \bigg| \calF_k\right] &\leq& -\tilde{e}_k^T X_1 \tilde{e}_k + \alpha G_M \sigma + \frac{\alpha^2 L}{2} \sigma^2 ,\label{ineq-lmi-zero}\\ 
    \E\left[ f(\tx_{k+1}) - f(\tx_*) \bigg| \calF_k\right] &\leq& -\tilde{e}_k^T X_2 \tilde{e}_k + \alpha G_M \sigma + \frac{\sigma^2 L}{2}\sigma^2 .\label{ineq-lmi-one}
\eeq  
It also follows from \eqref{eq-proj-ag-iters-1}-- \eqref{eq-proj-ag-iters-3} and the facts that $A\txi_* = \txi_*$ and $B\tu_* = 0$ that
  \beq  
  \txi_{k+1}-\txi_* =  A \left(\txi_k-\txi_*\right) + B \left(\tu_k - \tu_*\right) + B \Delta_{\varepsilon_{k+1}}(\ty_k) 
                          = \zeta_k  + B \Delta_{\varepsilon_{k+1}}(\ty_k),
  \eeq
where  $$\zeta_k := A \left(\txi_k-\txi_*\right) + B \left(\tu_k - \tu_*\right).$$

For any symmetric positive semi-definite matrix $P_{\alpha,\beta}\in\R^{2d\times 2d}$,
we define the quadratic function 
\begin{equation*}
Q_{P_{\alpha,\beta}}(\txi) = \txi^T P_{\alpha,\beta} \txi.
\end{equation*}
We can estimate that
\begin{align*}
&\E \left[ {Q}_{P_{\alpha,\beta}}\left(\txi_{k+1}\right)  \big| \calF_k \right] 
\\
&= \E \left[ \left(\txi_{k+1}-\txi_*\right)^T P_{\alpha,\beta} \left(\txi_{k+1} - \txi_*\right)  \big| \calF_k \right] \nonumber \\
&=   \zeta_k^T P_{\alpha,\beta} \zeta_k^T 
+\E \left[ 2(\txi_{k+1}-\txi_*)^T P_{\alpha,\beta} B \Delta_{\varepsilon_{k+1}}(\ty_k) +  
  B^T  \Delta_{\varepsilon_{k+1}}(\ty_k)^T P_{\alpha,\beta}  B \Delta_{\varepsilon_{k+1}}(\ty_k)   \big| \calF_k \right] \nonumber \\
&\leq  \tilde{e}_{k}^T \begin{pmatrix} 
A^T P_{\alpha,\beta} A  & A^T P_{\alpha,\beta} B \\
B^T P_{\alpha,\beta} A             &  B^T P_{\alpha,\beta} B
\end{pmatrix} \tilde{e}_{k}  +  \E \bigg[ 2\alpha \diam \cdot\|P_{\alpha,\beta}\| \cdot\| \varepsilon_{k+1}\| + \alpha^2 \|P_{\alpha,\beta}\| \cdot \|\varepsilon_{k+1}\|^2 | \calF_k\bigg] \nonumber \\
&= \tilde{e}_{k}^T \begin{pmatrix} 
A^T P_{\alpha,\beta} A  & A^T P_{\alpha,\beta} B \nonumber \\
B^T P_{\alpha,\beta} A             &  B^T P_{\alpha,\beta} B
\end{pmatrix} \tilde{e}_{k}  + 2\diam\alpha \sigma \| P_{\alpha,\beta}\| + \alpha^2 \sigma^2 \|P_{\alpha,\beta}\|.
\end{align*}
Therefore,  
\beq \E \left[ {Q}_{P_{\alpha,\beta}}\left(\txi_{k+1}\right) -  {Q}_{P_{\alpha,\beta}}\left(\txi_{k}\right)  \Big| \calF_k \right] = \tilde{e}_k^T X_3 \tilde{e}_k + 2\diam\alpha \sigma \| P_{\alpha,\beta}\|+ \alpha^2 \sigma^2 \|P_{\alpha,\beta}\|.
\eeq
Considering the Lyapunov function $V_{P_{\alpha,\beta}}(\txi_k) = f(\tx_k) - f(\tx_*) + \txi_k^T P_{\alpha,\beta} \txi_k$, we have 
\begin{align} 
&V_{P_{\alpha,\beta}}\left(\txi_{k+1}\right) - {\trabsq} V_{P_{\alpha,\beta}}\left(\txi_k\right) 
\\
&= {\trabsq}\left( f\left(\txi_{k+1}\right)-f\left(\txi_*\right)\right) 
+ (1-\trabsq) \left( f\left({\txi}_{k+1}\right) - f\left({\txi}_*\right)\right) 
\\
&\qquad\qquad\qquad
+  Q_{P_{\alpha,\beta}}\left(\txi_{k+1}-\txi_*\right) 
- Q_{P_{\alpha,\beta}}\left(\txi_k-\txi_*\right). 
\end{align}
Taking conditional expectations and inserting \eqref{ineq-lmi-zero}--\eqref{ineq-lmi-one}, 
\begin{align}
&\E \left[V_{P_{\alpha,\beta}}\left({\txi}_{k+1}\right) \Big| \calF_k \right] 
\\
&\leq \trabsq V_{P_{\alpha,\beta}}\left(\txi_k\right) + \tilde{e}_{k}^T  \bigg( -\trabsq X_1 -\left(1-\trabsq\right) X_2 + X_3\bigg) \tilde{e}_{k} \\
&\qquad\qquad
+ 2\mathcal{D}_{\mathcal{C}}\alpha \sigma \| P_{\alpha,\beta}\|+ \alpha^2\sigma^2 \left(\|P_{\alpha,\beta}\| + \frac{L}{2}\right) \\
&\leq \trabsq V_{P_{\alpha,\beta}}\left(\txi_k\right) +\alpha \sigma (2\diam \| P_{\alpha,\beta}\| + G_M) + \alpha^2\sigma^2 \left(\|P_{\alpha,\beta}\| + \frac{L}{2}\right),
\end{align}
which completes the proof.
\end{proof}


\begin{lemma}[\cite{fazlyab2017dynamical}]\label{lem:two:ineq}
Using the notations as in the proof of Lemma \ref{lem:key:projected}, we have
the following two inequalities:
\beq 
    f\big(\ty_k -\alpha g(\ty_k)\big)- f(\tx_k) \leq -\tilde{e}_k^T X_1 \tilde{e}_k , 
    \\
    f\big(\ty_k -\alpha g(\ty_k)\big)- f(\tx_*) \leq -\tilde{e}_k^T X_2 \tilde{e}_k .
\eeq
\end{lemma}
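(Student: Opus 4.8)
The plan is to reduce both inequalities to a single Nesterov-type \emph{gradient-mapping inequality} evaluated at $y=\ty_k$ (this is essentially the content of [eqn.~(36)--(37), \cite{fazlyab2017dynamical}]), and then to specialize it at $x=\tx_k$ and at $x=\tx_*$, rewriting each right-hand side as one of the quadratic forms $-\tilde{e}_k^T X_1\tilde{e}_k$, $-\tilde{e}_k^T X_2\tilde{e}_k$. Concretely, I would first establish that for every $x\in\mathcal{C}$,
\begin{equation}\label{eq-grad-map-ineq}
f\big(\ty_k-\alpha g(\ty_k)\big)-f(x)\;\leq\;g(\ty_k)^T(\ty_k-x)-\frac{\alpha(2-L\alpha)}{2}\|g(\ty_k)\|^2-\frac{\mu}{2}\|\ty_k-x\|^2 .
\end{equation}

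To prove \eqref{eq-grad-map-ineq}, set $T:=\ty_k-\alpha g(\ty_k)=\prox(\ty_k-\alpha\nabla f(\ty_k))$, so that $T-\ty_k=-\alpha g(\ty_k)$. First I would add two standard inequalities: (i) the descent lemma for the $L$-smooth $f$, $f(T)\le f(\ty_k)+\nabla f(\ty_k)^T(T-\ty_k)+\frac{L}{2}\|T-\ty_k\|^2=f(\ty_k)+\nabla f(\ty_k)^T(T-\ty_k)+\frac{L\alpha^2}{2}\|g(\ty_k)\|^2$, and (ii) $\mu$-strong convexity, $f(\ty_k)\le f(x)+\nabla f(\ty_k)^T(\ty_k-x)-\frac{\mu}{2}\|\ty_k-x\|^2$. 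This gives $f(T)-f(x)\le \nabla f(\ty_k)^T(T-x)+\frac{L\alpha^2}{2}\|g(\ty_k)\|^2-\frac{\mu}{2}\|\ty_k-x\|^2$. Then I would invoke the first-order optimality (variational) inequality for the projection, $(\ty_k-\alpha\nabla f(\ty_k)-T)^T(x-T)\le 0$ for all $x\in\mathcal{C}$, which — using $\ty_k-T=\alpha g(\ty_k)$ — rearranges to $\nabla f(\ty_k)^T(T-x)\le g(\ty_k)^T(T-x)$; substituting this in and writing $T-x=(\ty_k-x)-\alpha g(\ty_k)$ yields \eqref{eq-grad-map-ineq}.

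For the bookkeeping step I would use the block forms of $X_1$ and $X_2$ recorded in the proof of Lemma~\ref{lem:key:projected}, the fact that $g(\ty_*)=0$ so that $\tilde{e}_k=[(\txi_k-\txi_*)^T\ \ g(\ty_k)^T]^T$, and the identities $(C-E)(\txi_k-\txi_*)=\ty_k-\tx_k$ and $C(\txi_k-\txi_*)=\ty_k-\ty_*=\ty_k-\tx_*$ (the $\txi_*$-terms cancel since $\ty_*=\tx_*$). A direct expansion then shows $-\tilde{e}_k^T X_1\tilde{e}_k=-\frac{\mu}{2}\|\ty_k-\tx_k\|^2+g(\ty_k)^T(\ty_k-\tx_k)-\frac{\alpha(2-L\alpha)}{2}\|g(\ty_k)\|^2$, and identically with $\tx_k$ replaced by $\tx_*$ for $-\tilde{e}_k^T X_2\tilde{e}_k$. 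These are precisely the right-hand sides of \eqref{eq-grad-map-ineq} for $x=\tx_k$ and $x=\tx_*$, which finishes the proof.

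The main obstacle is the derivation of \eqref{eq-grad-map-ineq}: everything hinges on correctly using the projection's variational inequality to swap the true gradient $\nabla f(\ty_k)$ for the gradient mapping $g(\ty_k)$ in the cross term, which is the only place the constrained setting genuinely differs from the unconstrained AG analysis; the remaining translation into the $X_1,X_2$ quadratic forms is routine algebra. Alternatively, since the statement is attributed to \cite{fazlyab2017dynamical}, one may simply cite [eqn.~(36)--(37), \cite{fazlyab2017dynamical}] directly.
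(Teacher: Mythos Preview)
Your proposal is correct and follows essentially the same approach as the paper: both combine the $L$-smooth descent lemma and $\mu$-strong convexity to obtain an upper bound with the cross term $\nabla f(\ty_k)^T(T-x)$, then use the optimality condition for the projection (you via the variational inequality, the paper via the equivalent subdifferential characterization $g(\ty_k)-\nabla f(\ty_k)\in\partial h(\ty_k-\alpha g(\ty_k))$) to replace $\nabla f(\ty_k)$ by $g(\ty_k)$, and finally identify the resulting expression with the quadratic forms $-\tilde e_k^T X_i\tilde e_k$. Your presentation is in fact cleaner: you isolate the single gradient-mapping inequality \eqref{eq-grad-map-ineq} and specialize, whereas the paper treats the two cases separately and its exposition of the subdifferential step is somewhat terse.
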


\begin{proof}
Recall that f satisfies following inequalities, 
\begin{eqnarray}
f(z)-f(y) \leq \nabla f(y)^T(z-y) +\frac{L}{2}\Vert y-z\Vert^2 ,\\ 
f(y)-f(x) \leq \nabla f(y)^T(y-x) - \frac{\mu}{2} \Vert y-x\Vert^2 .
\end{eqnarray}
Choosing $z=\tilde{y}_k-\alpha g(\tilde{y}_k), \; y=\tilde{y}_k$ and $x=\tilde{x}_k$ yields, 
\begin{equation}\label{ineq: upperbound3}
f(y_k-\alpha g(y_k))-f(x_{k})
\leq \nabla f(y_k)^T \big( y_k-x_k-\alpha g(y_k) \big)
+\frac{L}{2}\Vert \alpha g(y_k )\Vert^2- \frac{\mu}{2}\Vert y_k-x_k\Vert^2 .
\end{equation}
 Additionally let $\partial h(x) := \{ v \in \mathbb{R}^d : h(x)-h(y)\leq v^T(x-y) \forall y \in \mathbb{R}^d\}$ then by optimality condition, $0 \in \partial (\mathcal{P}_C (w))-\frac{1}{\alpha}(\mathcal{P}_C (w)-w)$ (e.g. \cite{FirstOrderMethods} theorem 6.39). In particular there exists a $T_{h}(w) \in \partial h(x)$ such that $g(w)=\nabla f(w)+T_{h}(w)$. Choose $w=y_k$ and note that $y_k=(1+\beta)x_k -\beta x_{k-1}$ and $C$ is a convex set thus $y_k \in C$. So if $T_h(y_k) \in \partial h(y_k)$  then either $0\leq T_h(y_{k})^T(y_k-x)$ or $-\infty \leq T_h(y_k)^T (y_k-x)$ therefore $0\leq T_h(y_k)^T(y_k-x)$ implying that $\nabla f(y)^T (y-z) \leq g(y)^T (y-x)$ for all $x\in \mathbb{R}^d$. Combining this result with \eqref{ineq: upperbound3} we obtain,
\begin{align*}
&f(y_k-\alpha g(y_k))-f(x_{k})
\\
&\leq  \nabla f(y_k)^T \big( y_k-x_k-\alpha g(y_k) \big)
+\frac{L}{2}\alpha^2 \Vert g(y_k)\Vert^2- \frac{\mu}{2}\beta^2\Vert x_k- x_{k-1}\Vert^2 
f(y_k-\alpha g(y_k))-f(x_{k}) 
\\
&\leq  \beta g(y_k)^T(x_k-x_{k-1}) +\left(\frac{L}{2}\alpha^2-\alpha\right) \Vert g(y_k)\Vert^2 
\\ 
&\qquad\qquad
- \frac{\mu}{2}\beta^2\left( \Vert x_{k}-x_*\Vert^2 - 2 (x_{k}-x_*)^T(x_{k-1}-x_*)+ \Vert x_{k-1}-x_*\Vert^2 \right).
\end{align*}
This proves \eqref{ineq: upperbound }. 
Finally, \eqref{ineq: upperbound2 } can also be obtained if we take $x=x_*$ and follow similar steps.
\end{proof} 
%

\begin{lemma}
Given $\alpha=\frac{1}{L}$, $\beta=\frac{\sqrt{\kappa}-1}{\sqrt{\kappa}+1}$, where
$\kappa=L/\mu$, we have
\begin{equation*}
(\tilde{\mathcal{P}}_{\alpha,\beta}V_{P_{\alpha,\beta}})(\tilde{\xi})
\leq\tilde{\gamma} V_{P_{\alpha,\beta}}(\tilde{\xi})+\tilde{K},
\end{equation*}
where
\begin{equation*}
\tilde{\gamma}:=1-\frac{1}{\sqrt{\kappa}},
\qquad
\tilde{K}:=\frac{\sigma}{L} \left(\diam  \mu((1-\sqrt{\kappa})^{2}+\kappa) + G_M\right) + \frac{\sigma^2}{L^{2}} \left(\frac{\mu}{2}((1-\sqrt{\kappa})^{2}+\kappa) + \frac{L}{2}\right).
\end{equation*}
\end{lemma}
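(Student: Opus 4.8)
The strategy is to obtain the claim as a direct specialization of Lemma~\ref{lem:key:projected} to the parameter choice $\alpha=1/L$, $\beta=(\sqrt{\kappa}-1)/(\sqrt{\kappa}+1)$ and $\rho_{\alpha,\beta}=1-1/\sqrt{\kappa}$. First I would invoke the ``in particular'' part of Lemma~\ref{lem:key:projected}: for these values the matrix inequality \eqref{ineq:key} is satisfied by the positive semidefinite matrix
\begin{equation*}
P_{\alpha,\beta}=\frac{\mu}{2}\begin{pmatrix}(1-\sqrt{\kappa})I_d & \sqrt{\kappa}I_d\end{pmatrix}^{T}\begin{pmatrix}(1-\sqrt{\kappa})I_d & \sqrt{\kappa}I_d\end{pmatrix}.
\end{equation*}
Consequently Lemma~\ref{lem:key:projected} yields the drift inequality $(\tilde{\mathcal{P}}_{\alpha,\beta}V_{P_{\alpha,\beta}})(\tilde{\xi})\le\rho_{\alpha,\beta}V_{P_{\alpha,\beta}}(\tilde{\xi})+\tilde{K}_{\alpha,\beta}$ with contraction factor $\rho_{\alpha,\beta}=1-1/\sqrt{\kappa}$, which is exactly the claimed $\tilde{\gamma}$. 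It then only remains to evaluate the additive constant $\tilde{K}_{\alpha,\beta}=\alpha\sigma\bigl(2\diam\|P_{\alpha,\beta}\|+G_M\bigr)+\alpha^2\sigma^2\bigl(\|P_{\alpha,\beta}\|+\tfrac{L}{2}\bigr)$ for this data.

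Second, I would compute the operator norm $\|P_{\alpha,\beta}\|$. Writing $v=\begin{pmatrix}(1-\sqrt{\kappa})I_d & \sqrt{\kappa}I_d\end{pmatrix}^{T}\in\R^{2d\times d}$ we have $P_{\alpha,\beta}=\frac{\mu}{2}vv^{T}$, a symmetric positive semidefinite matrix of rank $d$ whose nonzero eigenvalues coincide with those of $\frac{\mu}{2}v^{T}v=\frac{\mu}{2}\bigl((1-\sqrt{\kappa})^2+\kappa\bigr)I_d$. Hence $\|P_{\alpha,\beta}\|=\frac{\mu}{2}\bigl((1-\sqrt{\kappa})^2+\kappa\bigr)$, and in particular the $(2,2)$ entry of the associated $2\times 2$ matrix equals $\frac{\mu}{2}\kappa=\frac{L}{2}\neq 0$, so the hypotheses needed elsewhere on $\tilde{P}_{\alpha,\beta}$ are met.

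Finally, substituting $\alpha=1/L$ and this value of $\|P_{\alpha,\beta}\|$ into the expression for $\tilde{K}_{\alpha,\beta}$ gives
\begin{equation*}
\tilde{K}_{\alpha,\beta}=\frac{\sigma}{L}\Bigl(\diam\,\mu\bigl((1-\sqrt{\kappa})^2+\kappa\bigr)+G_M\Bigr)+\frac{\sigma^2}{L^2}\Bigl(\frac{\mu}{2}\bigl((1-\sqrt{\kappa})^2+\kappa\bigr)+\frac{L}{2}\Bigr),
\end{equation*}
which is precisely the $\tilde{K}$ in the statement, completing the argument. The only genuinely nontrivial ingredient is the verification of \eqref{ineq:key} at the stated $(\alpha,\beta,\rho,P_{\alpha,\beta})$; owing to the Kronecker structure $X_1=\tilde{X}_1\otimes I_d$, $X_2=\tilde{X}_2\otimes I_d$, $X_3=\tilde{X}_3\otimes I_d$, this collapses to the negative semidefiniteness of a single fixed $3\times 3$ matrix with the numbers plugged in, which is the same computation underlying the deterministic accelerated-rate analysis of Hu--Lessard and of Fazlyab et al.~for (proximal) AG; everything else is the spectral-norm identification above and routine algebraic simplification.
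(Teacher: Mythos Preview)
Your proposal is correct and follows essentially the same route as the paper: invoke Lemma~\ref{lem:key:projected} at the specific $(\alpha,\beta,\rho_{\alpha,\beta},P_{\alpha,\beta})$, compute $\|P_{\alpha,\beta}\|$, and substitute into the formula for $\tilde{K}_{\alpha,\beta}$. The only cosmetic difference is that you identify $\|P_{\alpha,\beta}\|=\tfrac{\mu}{2}\bigl((1-\sqrt{\kappa})^2+\kappa\bigr)$ exactly via the nonzero eigenvalues of $vv^{T}$, whereas the paper bounds it by submultiplicativity $\|P_{\alpha,\beta}\|\le\tfrac{\mu}{2}\|v\|\,\|v^{T}\|$; the two coincide here, so the resulting $\tilde{K}$ is the same.
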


\begin{proof}
Note that 
\begin{equation*}
(\tilde{\mathcal{P}}_{\alpha,\beta} V_{P_{\alpha,\beta}})(\txi)
\leq\trabsq V_{P_{\alpha,\beta}}(\txi)+\tK,
\end{equation*}
where
\begin{equation*}
\tK:=\alpha \sigma (2\diam  \| P_{\alpha,\beta}\| + G_M) + \alpha^2\sigma^2 \left(\|P_{\alpha,\beta}\| + \frac{L}{2}\right),
\end{equation*}
and with $\alpha=\frac{1}{L}$, $\beta=\frac{\sqrt{\kappa}-1}{\sqrt{\kappa}+1}$,
we have
\begin{equation*}
P_{\alpha,\beta}=\frac{\mu}{2}\begin{pmatrix}  (1-\sqrt{\kappa})I_d & \sqrt{\kappa}I_d
\end{pmatrix}^T \begin{pmatrix} 
(1-\sqrt{\kappa})I_d & \sqrt{\kappa}I_d
\end{pmatrix},
\end{equation*}
so that
\begin{equation*}
\Vert P_{\alpha,\beta}\Vert
\leq
\frac{\mu}{2}\left\Vert\begin{pmatrix}  (1-\sqrt{\kappa})I_d & \sqrt{\kappa}I_d
\end{pmatrix}^T\right\Vert 
\cdot
\left\Vert\begin{pmatrix} 
(1-\sqrt{\kappa})I_d & \sqrt{\kappa}I_d
\end{pmatrix}\right\Vert
=\frac{\mu}{2}((1-\sqrt{\kappa})^{2}+\kappa).
\end{equation*}
Hence,
\begin{equation*}
\tilde{K}_{\alpha,\beta}
\leq
\frac{\sigma}{L} \left(\diam  \mu((1-\sqrt{\kappa})^{2}+\kappa) + G_M\right) + \frac{\sigma^2}{L^{2}} \left(\frac{\mu}{2}((1-\sqrt{\kappa})^{2}+\kappa) + \frac{L}{2}\right).
\end{equation*}
\end{proof}

\begin{proof}[Proof of Theorem \ref{thm:general:alpha:beta:projected}]
The proof is similar to the proof of Theorem \ref{thm:general:alpha:beta}
and the proof of \eqref{eqn:difference:AG}. 
We obtain
\begin{equation*}
\mathbb{E}[f(\tilde{x}_{k})]-f(\tilde{x}_{\ast})
\leq
V_{P_{\alpha,\beta}}(\tilde{\xi}_{0})\tilde{\gamma}_{\alpha,\beta}^{k}+\frac{\tilde{K}_{\alpha,\beta}}{1-\tilde{\gamma}_{\alpha,\beta}}.
\end{equation*}
The conclusion then follows from the defintiion of $\tilde{\gamma}_{\alpha,\beta}$
and $\tilde{K}_{\alpha,\beta}$.
\end{proof}

\begin{proof}[Proof of Proposition \ref{prop:general:projected}]
The proof is similar as the proof of Proposition \ref{prop:general}. 
We can take $\tilde{K}\leq\frac{R}{4\sqrt{\kappa}}$, that is,
\begin{equation*}
\frac{\sigma}{L} \left(\diam  \mu((1-\sqrt{\kappa})^{2}+\kappa) + G_M\right) + \frac{\sigma^2}{L^{2}} \left(\frac{\mu}{2}((1-\sqrt{\kappa})^{2}+\kappa) + \frac{L}{2}\right)\leq\frac{R}{4\sqrt{\kappa}},
\end{equation*}
which implies
\begin{equation*}
\sigma\leq
\frac{-b_{1}}{2a_{1}}+\frac{1}{2a_{1}}\sqrt{b_{1}^{2}+a_{1}\frac{R}{\sqrt{\kappa}}},
\end{equation*}
where
\begin{equation*}
a_{1}=\frac{1}{L^{2}} \left(\frac{\mu}{2}((1-\sqrt{\kappa})^{2}+\kappa) + \frac{L}{2}\right),
\qquad
b_{1}=\frac{1}{L} \left(\diam  \mu((1-\sqrt{\kappa})^{2}+\kappa) + G_M\right).
\end{equation*}
As in the proof of Proposition \ref{prop:general}, 
we can take
\begin{equation*}
\tilde{\psi}=\frac{1}{2\sqrt{\kappa}\tilde{K}}.
\end{equation*}

Finally, the proof of \eqref{eqn:difference:AG:projected}
is similar as the proof of \eqref{eqn:difference:projected}.
We obtain
\begin{equation*}
\mathbb{E}[f(\tilde{x}_{k})]-f(\tilde{x}_{\ast})\leq
V_{P_{AG}}(\tilde{\xi}_{0})\tilde{\gamma}^{k}
+\frac{\tilde{K}}{1-\tilde{\gamma}}.
\end{equation*}
The conclusion then follows from the definition of $\tilde{K}$
and $\tilde{\gamma}$.
\end{proof}

\section{Numerical Illustrations}

In this section, we illustrate some of our theoretical results over some simple functions with numerical experiments. On the left panel of Figure \ref{fig: Comparison n=1}, we compare ASG for the quadratic objective $f(x) = x^2/2$ in dimension one with additive i.i.d. Gaussian noise on the gradients for different noise levels $\sigma\in \{0.01,0.1,1,2\}$. The plots show performance with respect to expected suboptimality using $10^4$ sample paths. As expected, the performance deteriorates when $\sigma$ increases. The fact that the performance stabilizes after a certain number of iterations supports the claim that a stationary distribution exists, a claim that was proved in Theorem \ref{thm:alpha:beta}. 
In the middle panel, we repeat the experiment in dimension $d=10$ over the quadratic objective $f(x) =\frac{1}{2} x^T Q x $, 
where $Q$ is a diagonal matrix with diagonal entries $Q_{ii}=1/i$. We observe similar patterns. 
\begin{figure}[h!]
    \centering
    \includegraphics[width=0.33\linewidth]{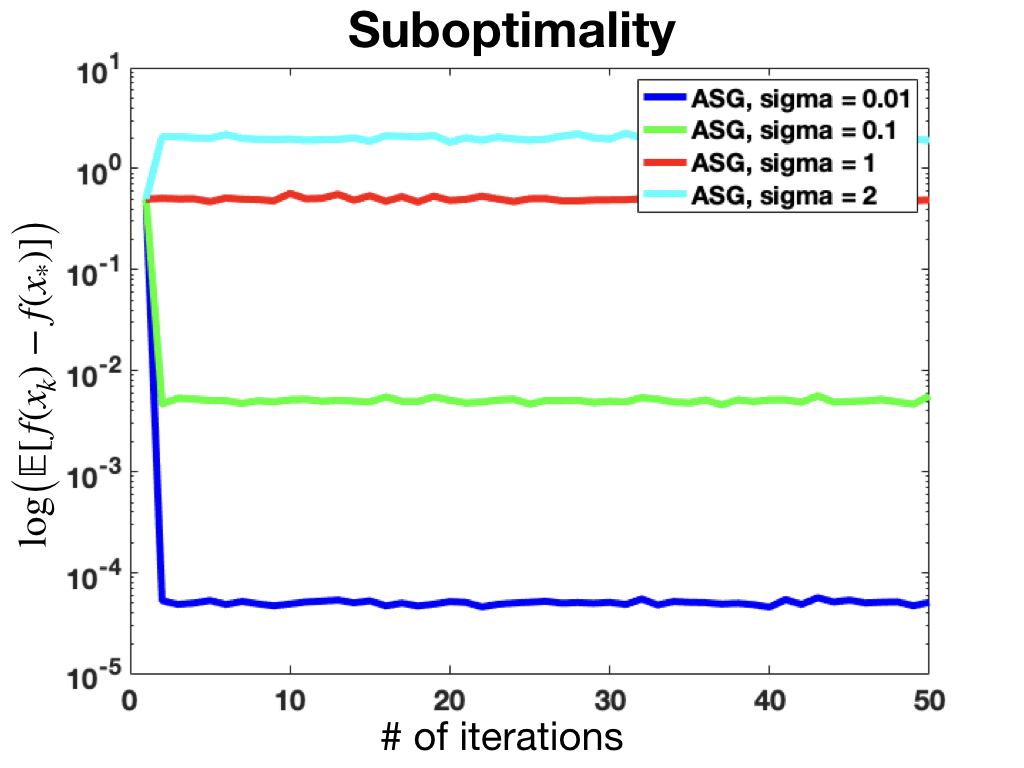}
    \includegraphics[width=0.33\linewidth]{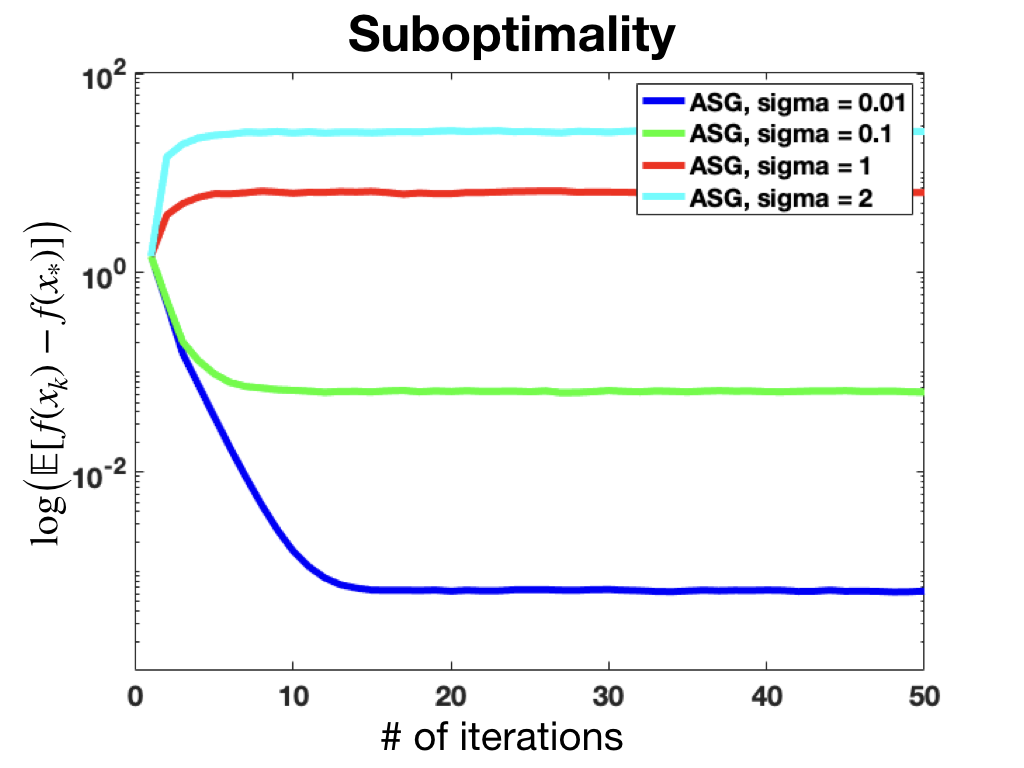}
   \includegraphics[width=0.33\linewidth]{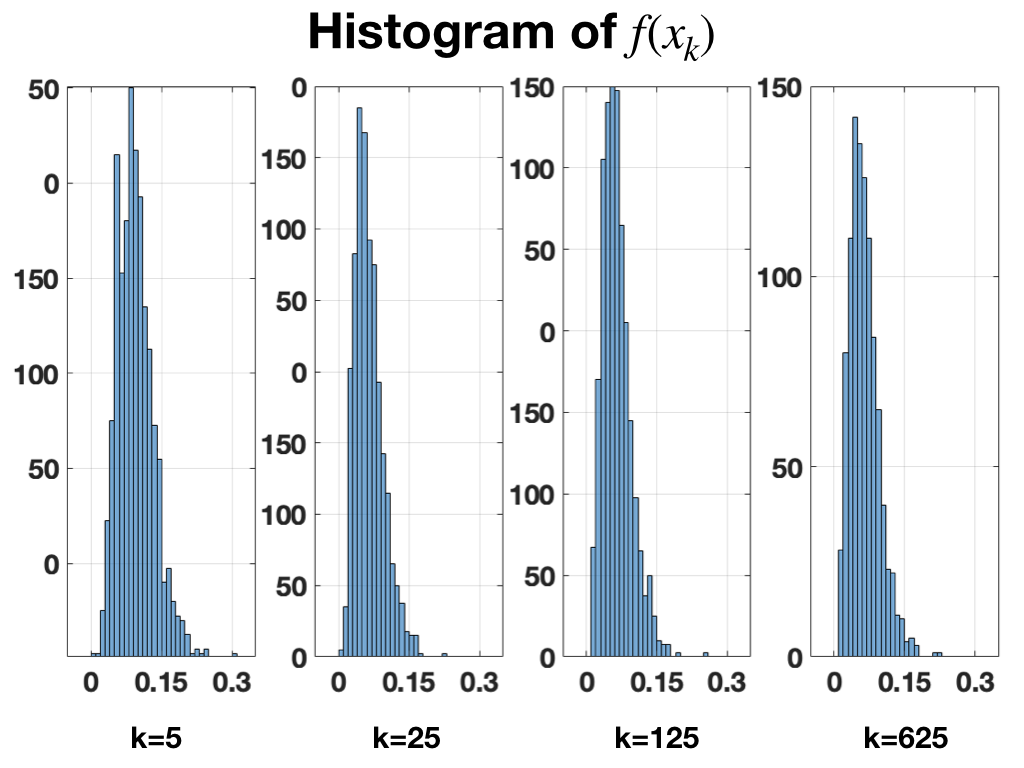}
    \caption{Performance comparison of ASG for different noise levels $\sigma$ on quadratic functions. \emph{Left panel}: $f(x)=\frac{1}{2}x^2$ in dimension one. \emph{Middle panel}: $f(x) = \frac{1}{2}x^T Qx $ in dimension $d=10$. \emph{Right panel}: Histogram of $f(x_k)$ for different values of $k$ where $f(x)=\frac{1}{2} x^TQx$ in dimension $d=10$.}
    \label{fig: Comparison n=1}
\end{figure}

Finally, on the right panel of Figure \ref{fig: Comparison n=1}, we estimate the distribution of $f(x_k)$ for $k \in \{5,25,125,625\}$. For this purpose, we plot the histograms of $f(x_k)$ over $10^4$ sample paths for every fixed $k$. We observe that the histograms for $k=125$ and $625$ are similar, illustrating the fact that ASG admits a stationary distribution.

\end{document}